\documentclass[11pt]{article}
\usepackage{lineno}
\usepackage[english]{babel}
\usepackage[letterpaper,margin=1in,marginparwidth=1.75cm]{geometry}
\usepackage{amsmath}
\usepackage{amssymb}
\usepackage{amsthm}
\usepackage{mathtools}
\usepackage{graphicx}
\usepackage{tikz}
\usetikzlibrary{arrows.meta,positioning,calc,decorations.pathreplacing}
\usepackage[colorlinks=true,allcolors=blue]{hyperref}
\usepackage{authblk}
\newtheorem{theorem}{Theorem}[section]
\newtheorem{lemma}{Lemma}
\newtheorem{claim}{Claim}
\newtheorem{corollary}[theorem]{Corollary}
\theoremstyle{definition}\newtheorem{remark}[theorem]{Remark}\theoremstyle{plain}
\newtheorem{proposition}[theorem]{Proposition}

\newcommand{\cF}{\mathcal F}
\newcommand{\cX}{\mathcal X}
\newcommand{\ct}{\mathsf{CoT}}
\newcommand{\etoe}{\mathsf{e2e}}
\newcommand{\iter}{M}
\newcommand{\VC}{\operatorname{VC}}
\newcommand{\fat}{\operatorname{fat}}
\newcommand{\Pdim}{\operatorname{Pdim}}
\newcommand{\maj}{\operatorname{maj}}
\newcommand{\KL}{\operatorname{KL}}
\newcommand{\kl}{\operatorname{KL}}
\newcommand{\TV}{d_{\mathrm{TV}}}
\newcommand{\Ber}{\operatorname{Bern}}

\title{Stochastic Autoregressive Learning}
\author[1]{Ilan Doron-Arad}
\author[2]{Idan Mehalel}
\author[1]{Elchanan Mossel}
\affil[1]{MIT}
\affil[2]{The Hebrew University}

\newenvironment{informalresult}[3]{%
	\par\medskip
	\begingroup
	\noindent\textbf{#1~\ref{#3} (informal).}\itshape\ignorespaces
}{%
	\par\endgroup\medskip
}

\begin{document}
	\pagenumbering{gobble}
	\date{}
	\maketitle
	
	\begin{abstract}
		Motivated by large language models (LLMs), which generate outputs by iteratively sampling from next-token distributions, we introduce a PAC-learning model for binary stochastic autoregressive learning.
		This generalizes the deterministic autoregressive learning framework of Joshi et al., COLT 2025. In our model, one fixed generator assigns a Bernoulli next-token distribution to every prompt string. Starting from an input prompt, a token is sampled and appended to the prompt; the same generator is then applied again to this expanded prompt; this procedure is repeated for $M$ steps. Three forms of supervision are considered: base one-step samples, chain-of-thought ($\ct$) samples that reveal full random trajectories of length $M$, and end-to-end ($\etoe$) samples that reveal only the final token of length $M$ trajectories. For a generator class, we study the minimum number of samples $m_{\rm base}(\varepsilon),m_{\ct}(\varepsilon), m_{\etoe}(\varepsilon)$, respectively, required to learn the one-step probabilities in the base model, and the final-token probability in the $\ct$ and $\etoe$ models, under squared loss error~$\varepsilon$.

		We show that stochastic autoregressive learning fundamentally differs from the deterministic theory. At scale $\varepsilon$, there is no universal comparison between the three learning tasks: both $m_{\ct}/m_{\rm base}$ and $m_{\etoe}/m_{\ct}$ can be made simultaneously arbitrarily larger than $M/\varepsilon$, the natural analogue for the existing deterministic results. Nevertheless, after altering scales, for every class, $\ct$ learning at scale $\varepsilon$ is upper bounded by base learning at scale $\varepsilon/M^2$, whereas $\etoe$ learning at scale $\varepsilon$ is upper bounded, up to logarithmic factors, by $(M/\varepsilon) \cdot m_{\ct}(\Theta(\varepsilon))$. In other words, $m_{\ct}(\varepsilon) \lesssim m_{\rm base}\left(\Theta\left(\varepsilon/M^2\right)\right)$ and $m_{\etoe}(\varepsilon) \lesssim (M/\varepsilon) \cdot m_{\ct}(\Theta(\varepsilon))$. We also prove that these dependencies and scale changes are essentially necessary. We complement these bounds by studying dimension $d$ logistic functions in our model. First, we prove an improper $\etoe$ upper bound of $\widetilde O(d^2\log M/\varepsilon)$ samples. Second, we show that an efficient proper $\ct$ learner achieves the same asymptotic bound. Finally, we separate $\ct$ from $\etoe$ learning on this class by ruling out such an efficient proper learner for $\etoe$ under the learning parity with noise (LPN) assumption. 
	\end{abstract}
	\newpage
	
	\tableofcontents
	\newpage
	\pagenumbering{arabic}
	\setcounter{page}{1}
	\section{Introduction}

	Large language models (LLMs) are, on an abstract level, {\em stochastic autoregressive generators}: given a prompt (input string), they assign a next-token distribution depending on the context, sample a token from this distribution, append it to the context, and repeat with the expanded context. Thus, even when the task is %end-to-end prediction, the output is produced by an iterated next-token process. 
	not inherently iterative, the output is produced by an iterated next-token process. 
	This viewpoint is central to modern LLMs~\cite{vaswani2017attention,brown2020language,ouyang2022training}.

	A recent line of theoretical work abstracts this process through {\em autoregressive chain-of-thought} learning. This model was introduced in the pioneering work of Joshi, Vardi, Block, Goel, Li, Misiakiewicz, and Srebro~\cite{joshi2025theory}, where a deterministic next-token function is iterated for $M$ steps. The model introduces two supervision settings: 	
	in the {\em chain-of-thought} ($\ct$) model, the learner observes the whole autoregressive trajectory, while in the {\em end-to-end} ($\etoe$) model, the learner observes only the final output token. 
	In both models, the goal is to predict the final output token in a PAC-learning setting. The picture by now is well understood: at a high level, $\ct$ samples reduce the autoregressive task to a problem comparable to the base learning problem of the same function class, whereas $\etoe$ samples can be substantially harder. Recent works give a near complete sample complexity taxonomy relating the base, $\ct$, and $\etoe$ learning problems~\cite{joshi2025theory,hanneke2026sample,doronarad2026online,li2026optimal}.

	This abstraction misses a fundamental feature of modern language models: randomness is essential for language generation.  
	In practice, an LLM does not deterministically choose the next token given the prompt; rather, it specifies a next-token probability distribution. This is not a minor technicality: likelihood training learns probabilities over tokens, sampling turns these probabilities into diverse plausible continuations, and self-consistency aggregates multiple sampled trajectories from the same prompt~\cite{brown2020language,holtzman2019curious,wang2022selfconsistency}.
	%	Hence, replacing the next-token distribution with a deterministic function removes one of the main building blocks of modern LLMs. 
	This also changes the nature of chain-of-thought samples: in the deterministic model, observing a trajectory reveals exact intermediate labels; in a stochastic model, even a full trajectory reveals only one noisy sample from the intermediate probabilities.
	
	Motivated by this, we study a stochastic regression analogue of autoregressive chain-of-thought learning. A generator assigns a probability distribution over the next token at every prompt. Iterating this rule for multiple steps gives a distribution over trajectories. Under squared loss, the target is learning the final-token probability: the stochastic analogue of the deterministic final token.

	\subsection{The model}
	
	This section defines the stochastic autoregressive learning model, the two supervision models $\ct$ and $\etoe$,
	as well as the preliminary tools needed to state our results.
	
	\paragraph{Stochastic autoregressive generators.}
	Let \(\Sigma=\{0,1\}\) be the binary token alphabet, and fix a generation length \(\iter\ge 1\). A
	stochastic next-token generator is a map \(g:\Sigma^\star\to\Delta(\Sigma)\),
	and a stochastic generator class is a set
	\(\cF\subseteq \Delta(\Sigma)^{\Sigma^\star}\), where $\Delta(\Sigma)$ is the set of all distributions over $\Sigma$. 
	Given a prompt \(x\in\Sigma^\star\), the \(\iter\)-step stochastic chain of
	thought \(Z_1,\ldots,Z_\iter\) is generated by iteratively sampling the next token $Z_t$ from the distribution specified by $g(x Z_1 \ldots Z_{t-1})$, i.e., the corresponding distribution of the original prompt concatenated by the trajectory prefix obtained so far. Formally,    
	\[
	Z_1\sim g(x),
	\qquad
	Z_t\mid Z_1,\ldots,Z_{t-1}\sim g(xZ_1\cdots Z_{t-1})\quad 2\le t\le M.
	\]
	The \(\iter\)-step end-to-end output probability is the probability that the last generated token is $1$: 
	\[q_g^{\etoe-\iter}(x):=\Pr_g(Z_\iter=1\mid X=x).\]
	For \(\iter=1\), let \(p_g(s):=q_g^{\etoe-1}(s)=g(s)(1)\) be the one-step probability of output \(1\) at state \(s\). Namely, \(p_g(s)\) is the one-step probability at state \(s\), while
	\(q_g^{\etoe-M}(x)\) is the global final-token probability from prompt \(x\).
	For example, when $M=2$, it holds that 
	$$
	q_g^{\etoe-2}(x)
	=
	p_g(x)p_g(x1)+(1-p_g(x))p_g(x0).$$

	\paragraph{Supervision models.}
	Fix a distribution $P$ on prompts $\Sigma^\star$. %Consider the next supervision models.
	
	\begin{itemize}
		\item end-to-end \(\etoe\): the learner receives i.i.d. samples
		\((X_i,Z_{i,\iter})_{i=1}^n\), where \(X_i\sim P\), and
		\(Z_{i,1},\ldots,Z_{i,\iter}\) are generated autoregressively from an unknown
		\(g_\star\in\cF\). Thus, only the final bit $Z_{i,M}$ of each sample $i$ is given to the learner.
		
		\item  chain-of-thoughts \(\ct\):
		the learner receives the full i.i.d. trajectories
		\((X_i,Z_{i,1},\ldots,Z_{i,\iter})_{i=1}^n\).
	\end{itemize}
	In both models, at test time the learner receives only a fresh \(X\sim P\). Thus, \(\ct\) and \(\etoe\) differ only in the supervision available during training:
	a full autoregressive trajectory versus only the final Bernoulli sample.
	
	The goal is to estimate \(q_{g_\star}^{\etoe-\iter}\) under square loss. This is the stochastic analogue of the deterministic final label~\cite{joshi2025theory}, with the natural regression target for the observed final bit. 
	Note that for
	\(M=1\), both supervision models reduce to the base one-step regression problem.
	We give below the general PAC-learning regression formulation stated according to our setting.
	
	\paragraph{Regression.}
	Let $\mathcal G\subseteq[0,1]^{\Sigma^\star}$. For a specified supervision model and sample size $n$, a learner based on the corresponding $n$ i.i.d. training examples $(\varepsilon,\delta)$-learns a target
	$q_\star\in\mathcal G$ under squared loss if, for every distribution $P$ on $\Sigma^\star$, with
	probability at least $1-\delta$ over those training examples and the learner's
	randomness, it outputs $\widehat q:\Sigma^\star\to[0,1]$ satisfying
	\[
	\mathbb E_{X\sim P}\left[
	\left(\widehat q(X)-q_\star(X)\right)^2
	\right]\le \varepsilon .
	\]
	We say that a single learner $(\varepsilon,\delta)$-learns $\mathcal G$ if this
	holds uniformly for every $q_\star\in\mathcal G$ and every $P$.
	The following makes precise the sample complexity for each supervision model.
	
	\paragraph{Sample complexity.}
	For $0<\delta<1$, define, uniformly over any distribution $P$ on $\Sigma^\star$ and~$g_\star\in\cF$:
	\begin{description}
		\item[$m_{\rm base}^{\cF}(\varepsilon,\delta)$:] the least $n$ for which
		$n$ i.i.d. samples $(X_i,Y_i)$ with $X_i\sim P$ and $Y_i\mid X_i\sim\Ber(p_{g_\star}(X_i))$ suffice
		to $(\varepsilon,\delta)$-learn $p_{g_\star}$. 
				
		\item[$m_{\ct}^{\cF,M}(\varepsilon,\delta)$:] the least $n$ for which $n$ i.i.d. full $\ct$s $(X_i,Z_{i,1},\ldots,Z_{i,M})$ suffice to $(\varepsilon,\delta)$-learn~$q_{g_\star}^{\etoe-M}$. 
				
		\item[$m_{\etoe}^{\cF,M}(\varepsilon,\delta)$:] the least $n$ for which $n$ i.i.d. $\etoe$ samples $(X_i,Z_{i,M})$ suffice to $(\varepsilon,\delta)$-learn $q_{g_\star}^{\etoe-M}$. 
	\end{description} 
	
	By convention,
	zero-sample learners are allowed. If no finite $n$ satisfies the requirement,
	the sample complexity is $+\infty$.  When $\delta=1/3$, we omit it and simply write $m_{\rm base}^{\cF}(\varepsilon)$, $m_{\ct}^{\cF,M}(\varepsilon)$, and $m_{\etoe}^{\cF,M}(\varepsilon)$. 
	We assume henceforth that all function classes satisfy the standard pointwise-measurability convention~\cite{vaartwellner1996weak,vandegeer2000empirical}: Whenever a proof optimizes over an arbitrary real-valued class using the training sample, the same optimization is assumed to be approximable arbitrarily well by optimizing over a countable subcollection. In particular, exact empirical risk minimizers may be replaced by measurable almost-minimizers. This only excludes nonmeasurable pathologies and is automatic for finite and parametric classes.\footnote{For example, for pathological uncountable classes, $\sup_{f\in\mathcal F}|n^{-1}\sum_{i=1}^n f(X_i)-\mathbb E f(X)|$ need not be measurable, so probability statements about it may not be well-defined.}

	\subsection{Our results}
	\label{subsec:our-results}
	
	Our main motivation is to understand the sample complexity of stochastic autoregressive $\etoe$ learning, both compared to $\ct$ learning and compared to the base one-step class. This is analogous to the deterministic line of work~\cite{joshi2025theory,hanneke2026sample,doronarad2026online}, where sample complexity is governed by the VC dimension~\cite{blumer1989learnability}. Showing that both $\ct,\etoe$ learning can sometimes be much easier than learning the base class is trivial, as there are classes for which one-step probabilities are complex yet every trajectory of $M$ steps always yields the same token.\footnote{
		For example, take a complex class $\mathcal{H}$ on length-$L$ inputs. Let $g_f(x)(1)=f(x)$ on length-$L$ prompts, and let $g_f(s)(1)=0$ after any token is appended. The base problem sees $\mathcal{H}$, but for every $M\ge2$ the final token is always $0$.
	} 
	In addition, for every class \(\cF\), \(M\ge 1\), and \(0<\varepsilon,\delta<1\), it holds that
	$
	m_{\ct}^{\cF,M}(\varepsilon,\delta)
	\le
	m_{\etoe}^{\cF,M}(\varepsilon,\delta),
	$
	since a \(\ct\) learner can ignore the intermediate tokens and use only the
	final-token observations.

	Nevertheless, except for these trivial bounds, not much is known. In particular, can learning $\ct$ be as easy as learning the base class, like in the deterministic setting? How much harder can $\etoe$ learning be with respect to $\ct$ learning and the base class? 
	We give answers to these questions below. Then, we study the class of logistic generators in the stochastic autoregressive model and show what can be done for each of the supervision models $\ct$ and $\etoe$.

	\subsection*{general sample-complexity comparisons}

	In the deterministic case, the sample complexity of learning with $\ct$-supervision is independent of $M$ for any VC-class~\cite{hanneke2026sample}, while the analog quantity without $\ct$-supervision can grow linearly with $M$~\cite{joshi2025theory}.
	One might have expected to see similar behavior for the stochastic model.  
	Our first result shows that, unlike in the deterministic setting, the exact-accuracy stochastic model has no universal comparison between base, $\ct$, and $\etoe$ learning. For the same class, we show that both ratios $m_{\ct}/m_{\rm base}$ and $m_{\etoe}/m_{\ct}$ can be made arbitrarily larger than the factor $M/\varepsilon$, which is the natural analogue for the results in the deterministic setting.

	\begin{informalresult}{Theorem}{Unbounded exact-scale separations}{thm:simultaneous-exact-scale-taxonomy}
		~For all sufficiently large \(M\), all sufficiently
		small \(\varepsilon>0\), and every \(A\ge1\) there is a finite stochastic autoregressive class \(\cF\) such that
		\(0<m_{\rm base}^{\cF}(\varepsilon)<\infty\) and
		\[
		\frac{m_{\ct}^{\cF,M}(\varepsilon)}{m_{\rm base}^{\cF}(\varepsilon)}
		>A\frac{M}{\varepsilon},
		\qquad
		\frac{m_{\etoe}^{\cF,M}(\varepsilon)}{m_{\ct}^{\cF,M}(\varepsilon)}
		>A\frac{M}{\varepsilon}.
		\]
	\end{informalresult}
	Thus, at the exact same accuracy scale, neither one-step learning controls
	chain-of-thought learning, nor chain-of-thought learning controls end-to-end learning. As a consequence, meaningful comparisons of the three learning tasks require different scales.  In the next results, we give effectively tight comparisons between the three tasks. A summary is given in Table~\ref{fig:table_1}. We note that Section~\ref{sec:sketches} gives the proof sketches of the main results, and full proofs are given in later sections. We state the results below informally and simplistically. Formal statements including (the standard) logarithmic dependence on the confidence parameter $\delta$ are given in the paper body.
	
	\begin{table}[h]
		\centering
		\renewcommand{\arraystretch}{1.9}
		\begin{tabular}{@{\hspace{1.4em}}c@{\hspace{1.4em}}}
			\hline
			Results \\[0.3em]
			\hline
			\noalign{\vskip 0.45em}
			$\displaystyle \frac{m_{\ct}}{m_{\rm base}}(\varepsilon),\ \frac{m_{\etoe}}{m_{\ct}}(\varepsilon) = \omega\left(\frac{M}{\varepsilon}\right)$ \\[0.9em]
			$\displaystyle m_{\ct}(\varepsilon)\lesssim m_{\rm base}(\varepsilon/M^2)$ \\[0.99em]
			$\displaystyle m_{\etoe}(\varepsilon)\lesssim
			\frac{M \cdot m_{\ct}(c\varepsilon)}{\varepsilon}$ \\[0.99em]
			\hline
		\end{tabular}
		\caption{Informal summary of our general sample complexity results.
			\label{fig:table_1}
		}
	\end{table}
	
	We first upper bound the sample complexity of $\ct$ (at scale $\varepsilon$) by the base class sample complexity at the scale $\varepsilon/M^2$. This upper bound holds for general stochastic classes and is nearly tight in two different aspects: (i) it can be realized by a specific class at the used scale, and  (ii) any qualitatively larger scale in the base class cannot control (to any ratio) $\ct$ learning.

	\begin{informalresult}{Theorem}{Base-to-\(\ct\): upper bound, tightness, and scale optimality}{thm:base-to-cot-full} ~For every class $\cF$, every $M\ge1$, and every
		$0<\varepsilon<1$,
		\[
		m_{\ct}^{\cF,M}(\varepsilon)
		\le
		m_{\rm base}^{\cF}\left(\frac{\varepsilon}{M^2}\right).
		\]
		Moreover, this upper bound is nearly tight:
		\begin{enumerate}
			\item         There is a constant $\kappa>0$ such that for every
			integer $L\ge1$, every $M\ge2$, and all sufficiently small $\varepsilon>0$,
			there is a finite class $\cF$ such that
			$
			m_{\ct}^{\cF,M}(\varepsilon),
			m_{\rm base}^{\cF}\left(\kappa\frac{\varepsilon}{M^2}\right)
			=
			\Theta\left(\frac{LM^2}{\varepsilon}\right)
			$. 
			
			\item For every $0<\beta<1$
			and every $K\ge1$, there is a finite class $\mathcal{F}$ with
			$
			m_{\rm base}^{\cF}\left(\frac{\varepsilon^{1-\beta}}{M^2}\right)=0$
			but at the same time
			$m_{\ct}^{\cF,M}(\varepsilon) =  \Omega\left(\frac{K}{\varepsilon}\right)$.
		\end{enumerate}
	\end{informalresult}
	
	Interestingly, for comparing $\ct$ and $\etoe$, all that is needed is a constant change in the scale. Specifically, we show that $\etoe$ sample complexity is bounded by at most $\frac{M}{\varepsilon}$-times the sample complexity of $\ct$ learning, assuming the latter is strictly positive; this result is realized by some class, and the scale naturally cannot be improved by more than a constant to the exact same scale by Theorem~\ref{thm:simultaneous-exact-scale-taxonomy}. 
	We use $\widetilde{O}$ to suppress logarithmic factors and use $\vee$ for $\max$. 
	
	\begin{informalresult}{Theorem}{A constant-scale \(\ct\)-to-\(\etoe\) comparison}{thm:main_2}
		~There is a constant
		\(c\) such that, for every class \(\cF\),
		\(M\ge2\), and \(0<\varepsilon<1\),
		\[
		m_{\etoe}^{\cF,M}(\varepsilon)
		\le
		\widetilde O\left(
		\frac{M\left(1\vee m_{\ct}^{\cF,M}(c\varepsilon)\right)}
		{\varepsilon}
		\right).
		\]
		In addition, there is a constant \(\rho>0\) such that for every \(N\ge1\),
		every \(M\ge2\), and all sufficiently small \(\varepsilon>0\), there is a finite class \(\cF\) satisfying
		$
		m_{\etoe}^{\cF,M}(\varepsilon)
		=
		\Theta\left(
		\frac{M}{\varepsilon}m_{\ct}^{\cF,M}(\rho\varepsilon)
		\right) = \Theta\left(\frac{N\cdot M}{\varepsilon}\right)$.
	\end{informalresult}
	
	As a corollary of the above two theorems, we can bound $m_{\etoe}$ by an expression of $m_{\rm base}$.

	\begin{informalresult}{Corollary}{Base-to-\(\etoe\) comparison}{cor:base-to-e2e}
		~There is $c>0$ such that for every class $\cF$, every
		\(M\ge2\), and every \(0<\varepsilon<1\) it holds that
		$
		m_{\etoe}^{\cF,M}(\varepsilon)
		\le
		\widetilde O\left(
		\frac{M\left(1\vee m_{\rm base}^{\cF}(c\varepsilon/M^2)\right)}
		{\varepsilon}
		\right),
		$
	\end{informalresult}

	\paragraph{Fat-shattering upper bound for $\etoe$ learning.}
	The results for the deterministic autoregressive model~\cite{joshi2025theory,hanneke2026sample,doronarad2026online} focus primarily on learning dimensions. While we show above that this cannot be imitated precisely for the stochastic model, we do show a result connecting the sample complexity of 
	any $\etoe$ class to the {\em fat shattering} dimension of the base class.\footnote{For completeness, we give the relevant definition of learning dimensions in Section~\ref{sec:prelim}.} The reason we focus on fat shattering rather than pseudo-dimension is that the right analogue for the stochastic setting must depend on the accuracy parameter $\varepsilon$. Namely, we show that accuracy-free analogues, such as pseudo-dimension of the base class or majority over the final token, behave pathologically and do not give a meaningful theory for stochastic autoregressive learning; see Section~\ref{sec:pathologies-other-dimensions}.
	In the following result, we show a nearly tight upper bound on $m_{\etoe}$ in terms of fat-shattering of the base class.

	\begin{informalresult}{Theorem}{Ordinary fat-shattering controls $\etoe$ learning, and the scale is optimal}{thm:e2e-fat-theory}
		~For every fixed $0<\alpha\le1$, there is $c_{\alpha} > 0$ such that
		\[
		m_{\etoe}^{\cF,M}(\varepsilon)
		\le
		\widetilde O_\alpha\left(
		\frac{M^{1+\alpha}\fat_{\cF}(c_\alpha\sqrt{\varepsilon}/M)}
		{\varepsilon}
		\right).
		\]
		Moreover, this upper bound is nearly tight:
		\begin{enumerate}
			\item         For every $d\ge1$, every $M\ge1$, and all sufficiently small
			$\varepsilon>0$, there is a finite class $\cF$ with
			$\fat_{\cF}(c\sqrt{\varepsilon}/M)=\Theta(d)$ and $m_{\etoe}^{\cF,M}(\varepsilon) = \Omega\left(\frac{dM}{\varepsilon}\right)$.
			
			\item For every $0<\beta<1/2$, every $M\ge2$, and every $K\ge1$,
			there is a finite class $\cF$ such that
			$
			\fat_{\cF}(c\varepsilon^{1/2-\beta}/M)=0
			$
			but
			$
			m_{\etoe}^{\cF,M}(\varepsilon)
			=
			\Omega\left(\frac{K}{\varepsilon}\right).
			$
		\end{enumerate}
	\end{informalresult}
	For $\etoe$, the fat-shattering upper bound is tight up to logarithmic factors
	and the arbitrarily small $M^\alpha$ slack. The last part shows that the
	margin scale itself is also optimal: a coarser margin can miss classes with
	arbitrarily large $\etoe$ sample complexity. Whether a better upper bound can be shown for $\ct$ is left for future work (see Section~\ref{sec:future-work}).

	\subsection*{Logistic case study}
	
	The above results give a detailed picture on general worst-case classes. We choose one canonical class whose base class sample complexity is well-known,
	and study its stochastic autoregressive sample complexity under $\ct$ and $\etoe$ supervision. 
	For $s\in\{0,1\}^\star$, let
	$\operatorname{tail}_d(s)\in\{0,1\}^d$ be the last $d$ bits of $s$, padded
	by zeros on the left if $|s|<d$, and let $\sigma(a)=1/(1+e^{-a})$ be the sigmoid, or logistic, function. Define $\mathcal F_{\sigma}(d)$ as the class of all generators whose prediction rule at every input $s$ is taken by applying the logistic function over the inner product of $\operatorname{tail}_d(s)$ and some $d$-dimensional vector $w$: 
	\[
	\mathcal F_{\sigma}(d)
	:=
	\left\{
	g_w ~\mid~w\in\mathbb R^d, ~
	p_{g_w}(s)=\sigma(\langle w,\operatorname{tail}_d(s)\rangle)
	\right\}.
	\]
	That is, the same logistic rule is reused at every step, depending only on the last $d$ tokens and the hidden vector $w$. This class is a finite-memory simplification of the next-token mechanism used by modern autoregressive language models~\cite{vaswani2017attention,brown2020language,ouyang2022training}. For comparison, the base problem for this class satisfies
	$m_{\rm base}^{\mathcal F_\sigma(d)}(\varepsilon,\delta)
	\le \widetilde O((d+\log(1/\delta))/\varepsilon)$,
	by the pseudo-dimension bound for generalized linear classes and the standard
	bounded squared-loss regression guarantee~\cite{anthonybartlett1999neural,haussler1992decision}.

	We show a polynomial sample complexity for $\etoe$ learning of
	the class $\mathcal F_{\sigma}(d)$ and show that with $\ct$ supervision, an efficient {\em proper} algorithm achieves a similar bound. Finally, we rule out such an efficient proper algorithm for $\etoe$ learning under the standard {\em learning parity with noise (LPN)} assumption~\cite{blumkalaiwasserman2003}. A summary of our results for this class is given in Table~\ref{table:2}. A learner is called {\em proper} if it always outputs a generator in the class, and {\em improper} otherwise. For computational statements about $\mathcal F_\sigma(d)$, a proper learner outputs a rational vector $\widehat w$ with polynomially many bits, representing the generator $g_{\widehat w}$. Its running time includes evaluating and approximately sampling from this generator to polynomially small error.

	\begin{table}[h]
		\centering
		\footnotesize
		\renewcommand{\arraystretch}{1.35}
		\begin{tabular}{p{0.17\textwidth}p{0.18\textwidth}p{0.31\textwidth}p{0.13\textwidth}}
			\hline
			Supervision & Learner type & Sample complexity in $d,M$ & Efficiency \\
			\hline
			\(\etoe\) & improper &
			$\widetilde O(d^2\log M / \varepsilon)$ & non-efficient \\
			\(\ct\) & proper &
			$\widetilde O(d^2\log M / \varepsilon)$ & efficient \\
			\(\etoe\) & proper &
			ruled out under LPN & efficient \\
			\hline
		\end{tabular}
		\caption{A qualitative summary of our results for the logistic class. 
			\label{table:2}
		}
	\end{table}

	We present below our result for $\etoe$ learners, using an information-theoretic (inefficient) and improper learner. 
	
	\begin{informalresult}{Theorem}{End-to-end learning for autoregressive logistic regression}{thm:tail-linear-sigmoid-e2e-upper}
		~For every $d\ge1$, $M\ge2$, $0<\varepsilon<1$, and $0<\delta<1$, it holds that
		\[
		m_{\etoe}^{\mathcal F_{\sigma}(d),M}(\varepsilon,\delta)
		\le
		\widetilde O\!\left(
		\frac{d^2\log M+\log(1/\delta)}{\varepsilon}
		\right).
		\]
	\end{informalresult}
	
	Conversely, the following result shows that for $\ct$ this polynomial dependence on $d$ and logarithmic on $M$ can be achieved efficiently and with a proper learner given full $\ct$ supervision. 
	
	\begin{informalresult}{Theorem}{Efficient chain-of-thought learning for autoregressive logistic regression}{thm:tail-linear-sigmoid-CoT-efficient}
		~There is a randomized polynomial-time proper learner that for every $d\ge1$, $M\ge2$, $0<\varepsilon<1$, and $0<\delta<1$ learns $\mathcal F_{\sigma}(d)$ using
		$
		\widetilde O\!\left(
		\frac{d^2\log M+\log(1/\delta)}{\varepsilon}
		\right)$
		full $\ct$ trajectories. 
	\end{informalresult}

	Finally, we separate $\ct$ from $\etoe$ by showing that $\etoe$ cannot admit such a proper efficient learning algorithm under the standard {\em learning parity with noise (LPN)}~\cite{blumkalaiwasserman2003}: no polynomial-time algorithm can predict well a hidden parity rule from examples whose labels have been randomly corrupted.

	\begin{informalresult}{Theorem}{Proper end-to-end hardness for autoregressive logistic regression}{thm:lpn-proper-e2e-hardness-tied-sigmoid}
		~Assume the \textnormal{LPN} prediction assumption holds. Then no proper randomized polynomial-time algorithm can learn
		$\mathcal F_{\sigma}(d)$ with $\etoe$ supervision for all horizons $M\le d$.
	\end{informalresult}

	Thus, autoregressive logistic regression gives a concrete statistical and computational separation between chain-of-thought and proper end-to-end
	learning.

\subsubsection*{Learning objectives beyond square loss.}
Our main objective is squared-loss estimation of the final-token probability, the standard bounded-regression analogue of predicting the final label in the deterministic model. However, we believe that analogous results hold under $L_1$ loss, and we also study KL-based trajectory sampling as explained below.

\paragraph{$L_1$ loss.}
We believe that our techniques can generally be adapted to obtain the following analogous results under $L_1$ loss, although we do not prove these adaptations here. Write $m_{{\rm base},1}$, $m_{\ct,1}$, and $m_{\etoe,1}$ for the corresponding $L_1$ sample complexities. Since one-step errors accumulate linearly along a stochastic trajectory, average one-step $L_1$ error $\varepsilon/M$ should yield, on average over the random prompt, total-variation distance at most $\varepsilon$ between the true and learned length-$M$ trajectory distributions. More specifically, the $L_1$ analogue of Theorem~\ref{thm:simultaneous-exact-scale-taxonomy} should give exact-scale separations arbitrarily larger than $M/\varepsilon^2$; the analogue of Theorem~\ref{thm:base-to-cot-full} should give $m_{\ct,1}(\varepsilon)\le m_{{\rm base},1}(\varepsilon/M)$, with matching complexity $\Theta(LM^2/\varepsilon^2)$ and optimal base accuracy scale $\varepsilon/M$; and the analogue of Theorem~\ref{thm:main_2} should give
$m_{\etoe,1}(\varepsilon)\le\widetilde O(M(1\vee m_{\ct,1}(c\varepsilon))/\varepsilon^2)$,
with a matching lower bound up to logarithmic factors. The analogue of Theorem~\ref{thm:e2e-fat-theory} should give
$\widetilde O_\alpha(M^{1+\alpha}\fat_{\cF}(c_\alpha\varepsilon/M)/\varepsilon^2)$
samples, together with a lower bound matching up to logarithmic factors and a factor of $M^\alpha$. The $L_1$ analogue of Corollary~\ref{cor:base-to-e2e} should be
$m_{\etoe,1}(\varepsilon)\le\widetilde O(M(1\vee m_{{\rm base},1}(c\varepsilon/M))/\varepsilon^2)$.
The analogues of Theorems~\ref{thm:tail-linear-sigmoid-e2e-upper} and~\ref{thm:tail-linear-sigmoid-CoT-efficient} should use $\widetilde O((d^2\log M+\log(1/\delta))/\varepsilon^2)$ samples, while the proper-learning hardness in Theorem~\ref{thm:lpn-proper-e2e-hardness-tied-sigmoid} should be unchanged. We focus on squared loss because it is the standard objective for regression.

\paragraph{KL objective.}
In practice, an LLM works by repeatedly sampling according to approximate one-step conditional probabilities.
Thus, learning the one-step probabilities $p_{g_\star}$ exactly enables sampling an answer to a prompt from the correct autoregressive distribution.
For LLMs, it is more common to use KL divergence rather than $L_1$ or $L_2^2$ distance.
In Section~\ref{sec:kl}, we discuss the implications of our results for estimating one-step probabilities in $\kl$ divergence and for sampling trajectories with small $\kl$ divergence, along with the necessary definitions. The KL-sampling consequence of Theorem~\ref{thm:base-to-cot-full}, the corresponding one-step fat-shattering upper bound, and the KL consequence of the proper logistic learner in Theorem~\ref{thm:tail-linear-sigmoid-CoT-efficient} require the one-step probabilities to be bounded away from $0$ and $1$. In contrast, the KL chain rule, the faithful-sampling consequence of one-step KL control, and all our KL lower bounds are margin-free; the proper-learning hardness in Theorem~\ref{thm:lpn-proper-e2e-hardness-tied-sigmoid} is also unchanged. Moreover, no general trajectory-KL analogue of Theorem~\ref{thm:main_2} or Corollary~\ref{cor:base-to-e2e} is possible, since end-to-end supervision may reveal no information about the trajectory distribution. The margin requirement for the squared-loss-to-KL upper bounds is unavoidable: without a margin, small squared error does not imply small, or even finite, KL divergence. See Section~\ref{sec:kl} for more details.

	\subsection{Related work}
	\label{sec:related_work}
	
	The closest prior work is the deterministic autoregressive PAC model of
	Joshi et al.~\cite{joshi2025theory} as discussed in the introduction. Hanneke et al.~\cite{hanneke2026sample} gave a
	near-complete deterministic sample-complexity taxonomy for this model, and the model was recently generalized to online learning~\cite{doronarad2026online} and multiple
	thinkers~\cite{joshi2026multiplethinkers} (with potentially different $\ct$s but the same final output). Our work differs from all of the above works by studying the stochastic variant of the autoregressive $\ct$ model. 
	A different statistical theory of $\ct$ supervision was developed by
	Altabaa, Montasser, and Lafferty~\cite{altabaa2025CoTinfo}. Their model treats
	$\ct$ as an abstract auxiliary supervision signal, rather than as a trajectory
	generated by an autoregressive next-token process. Chain-of-thought supervision is also conceptually related to learning using
	privileged information~\cite{vapnik2009privileged}, since intermediate states
	are observed during training but not at test time. 
    
    Foster et al.~\cite{foster2024behavior} study a more general model called imitation learning that includes autoregressive generation as a special case. Corollary C.4 in~\cite{foster2024behavior} has better dependence on $d$ than Theorem~\ref{thm:tail-linear-sigmoid-CoT-efficient} for logistic generators with bounded weights and feature norms. Theorem~\ref{thm:tail-linear-sigmoid-CoT-efficient} instead applies to unrestricted weights. Other recent works~\cite{rohatgi2025nexttoken,xu2026autoregressive} study learning the full autoregressive sequence distribution from complete trajectories when the true generator does not necessarily belong to the model class. In contrast, we study realizable learning of final-token probabilities under base, $\ct$, and $\etoe$ supervision. %, giving computational--statistical tradeoffs. 

	Intermediate reasoning traces have been studied extensively in empirical work~\cite{nye2021scratchpads,wei2022chainofthought,kojima2022zeroshot,zelikman2022star,wang2022selfconsistency,yao2023tree,uesato2022process,lightman2023verify}.
	A separate theoretical line studies their computational role in concrete models,
	especially transformers and simple linear predictors trained on chain-of-thought data
\cite{merrill2023expressive,li2024chain,wen2025sparse,malach2024autoregressive,yang2026tight}.
	
	The one-step base class learning problem in our model is learning a Bernoulli
	conditional mean under squared loss: given samples
	\(Y\mid X=x\sim \operatorname{Bern}(p(x))\), learn \(p(x)\) distribution-free.
	This problem was studied in probabilistic concept learning~\cite{kearns1994pconcepts}
	and is covered by decision-theoretic PAC learning~\cite{haussler1992decision}.
	In addition, our dimension upper bound uses the standard scale-sensitive theory
	of real-valued learning through fat-shattering~\cite{bartlett1996fat,alon1997scale,anthonybartlett1999neural}.
	PAC and generalization guarantees have also been studied for probabilistic and
	weighted automata~\cite{clark2004pac,balle2013learning}. These works learn
	automaton-based distributions or string functions. Finally, our computational lower bound for proper end-to-end learning is based
	on noisy parity, in line with standard statistical query and cryptographic hardness tools in
	PAC learning~\cite{kearns1993sq,blumkalaiwasserman2003,kearnsvaliant1994crypto}.

	\paragraph{Organization.}
	Section~\ref{sec:sketches} gives proof sketches for the main results. Section~\ref{sec:prelim} collects the preliminaries. Section~\ref{sec:same_scale} proves the exact-scale lower bound. Section~\ref{sec:base-to-cot} proves the base-to-$\ct$ comparison. Section~\ref{sec:CoT_e2e} proves the $\ct$-to-$\etoe$ comparison. Section~\ref{sec:ordinary-fat-bounds} proves the fat-shattering bounds for $\etoe$ learning. Section~\ref{sec:logistic} proves the results for stochastic logistic autoregressive learning. Section~\ref{sec:pathologies-other-dimensions} explains why several alternative dimension measures are insufficient. Section~\ref{sec:kl} discusses KL loss and trajectory sampling. Section~\ref{sec:future-work} discusses open problems. Appendix~\ref{app:standard-prelim-proofs} contains auxiliary preliminary proofs.

	\section{Proof sketches}
	\label{sec:sketches}

	We give below the main ideas behind the proofs. The sketches suppress constants and logarithmic factors, but keep the high level constructions and the reason each sample-complexity scale appears, while avoiding mathematical technicalities and heavy preliminary machinery.
	Full proofs are given in later sections. 
	We use a few simple terms throughout: a \emph{prompt} is an input string; a \emph{root} is a specially chosen prompt where the construction may depend on a hidden bit or sign; a state \emph{below} a root $r$ is any extension $rz$ with $z\in\Sigma^\star$; and a \emph{block} is a set of roots together with the states below them.

	\paragraph{Proof sketch of Theorem~\ref{thm:simultaneous-exact-scale-taxonomy}.}

	The proof builds a finite class as a union of two subfamilies. The first subfamily is based on a class $\mathcal G_N$, which separates base learning from $\ct$ learning. The second subfamily is based on a class $\mathcal H_B$, which separates $\ct$ learning from $\etoe$ learning. Each class has an associated prompt block, disjoint from the other class's block. 
	In this construction, each root has the form $w1$ for some prefix $w$. For every generator in the class, we set $p_g(w)=0$ at each such prefix. Thus, a trajectory that reaches $w$ cannot move next to the root $w1$, so trajectories starting outside a block cannot accidentally enter one of its roots.

	The associated block of $\mathcal G_N$ contains the roots $r_1,\ldots,r_N$. A generator in $\mathcal G_N$ is indexed by signs $\sigma=(\sigma_1,\ldots,\sigma_N)\in\{-1,1\}^N$. The transitions below $r_j$ are chosen so that
	\begin{equation*}
		q_{g_\sigma}^{\etoe-M}(r_j)
		=
		\frac{1}{2}+\beta\sqrt{\varepsilon}\,\sigma_j,
	\end{equation*}
	where $\beta>1$ is a fixed sufficiently large universal constant. All non-root prompts have $M$-step values independent of $\sigma$. The class $\mathcal G_N$ also contains a fixed generator $g_0$.
	
	The class $\mathcal H_B$ depends on a parameter $B$, chosen at the end. Its associated block contains the roots $x_0,x_1,\ldots,x_d$. The class $\mathcal H_B$ contains a fixed baseline generator $h_0$, and the other generators are indexed by a bit $b\in\{0,1\}$ and a sign vector $s\in\{-1,1\}^d$, where $d\gg BM/\varepsilon$. For every generator $h\ne h_0$, the first two generated tokens from each root $x_0,\ldots,x_d$ are $1$ and then $b$, so one full trajectory for $M\ge2$ from any such root reveals $b$ to a $\ct$ learner. In addition, at $x_0$, the final-token probability is $b$, and at $x_1,\ldots,x_d$, the final-token probabilities are weakly sign-dependent. Namely, for $h=h_{b,s}\ne h_0$, it holds that
	\begin{equation*}
		q_h^{\etoe-M}(x_0)=b,
		\qquad
		q_h^{\etoe-M}(x_j)=\frac{1}{2}+\sqrt{\varepsilon}\,s_j
		\quad\text{ } j = 1,\ldots,d.
	\end{equation*}
	Again, all non-root prompts have $M$-step values independent of the hidden signs.
	The final class is
	\begin{equation*}
		\cF
		=
		\{f_g:g\in\mathcal G_N\}
		\cup
		\{f_h:h\in\mathcal H_B\},
	\end{equation*}
	where $f_g$ agrees with $g$ on the $\mathcal G_N$ prompt block and with the fixed generator $h_0$ on the $\mathcal H_B$ prompt block, while $f_h$ agrees with the fixed generator $g_0$ on the $\mathcal G_N$ prompt block and with $h$ on the $\mathcal H_B$ prompt block.
	
	We now explain the sample complexity bounds. The base problem is easy because the weak signs change one-step probabilities only by order $\sqrt{\varepsilon}$. Thus, at squared-loss accuracy $\varepsilon$, the learner can ignore these signs. Hence, $m_{\rm base}^{\cF}(\varepsilon)=O(1/\varepsilon)$.
	
	We now explain why $m_{\ct}^{\mathcal G_N,M}(\varepsilon)=\Theta(N/\varepsilon)$. We use the uniform prompt distribution on $r_1,\ldots,r_N$. Then, a sample gives useful information about $\sigma_j$ only if it starts at $r_j$, which happens with probability $1/N$, and even then the observation is only weakly biased, by order $\sqrt{\varepsilon}$. Therefore, each sign needs order $1/\varepsilon$ samples to learn effectively, and there are $N$ signs. Thus, fewer than order $N/\varepsilon$ samples leave many signs unknown. Since each wrong sign gives squared loss of order $\beta^2\varepsilon$, and $\beta$ is a large constant, the loss exceeds $\varepsilon$ with constant probability. Hence $m_{\ct}^{\mathcal G_N,M}(\varepsilon)=\Omega(N/\varepsilon)$. Conversely, $O(N/\varepsilon)$ samples suffice by estimating each sign from the trajectories that start at its root.
	
	For $\mathcal H_B$, learning with $\ct$ is easier. For every generator $h\ne h_0$, a full trajectory from any root reveals the bit $b$ deterministically. If the probability that the prompt $X$ is one of $x_0,\ldots,x_d$ is less than a constant multiple of $\varepsilon$, the learner can ignore these roots. Otherwise, $O(1/\varepsilon)$ samples see one of these roots with constant probability, and a full trajectory reveals $b$. The learner then predicts $b$ at $x_0$ and predicts $1/2$ at $x_1,\ldots,x_d$. The loss on $x_1,\ldots,x_d$ is only order $\varepsilon$, since their final-token probabilities are $1/2\pm\sqrt{\varepsilon}$. Therefore $m_{\ct}^{\mathcal H_B,M}(\varepsilon)=O(1/\varepsilon)$.
	
	The $\etoe$ lower bound for $\mathcal H_B$ uses $x_0$ to hide the bit $b$ and $x_1,\ldots,x_d$ to hide many weak signs. Put probability $\eta\asymp \varepsilon/(BM)$ on $x_0$, and spread the remaining probability equally over $x_1,\ldots,x_d$. With fewer than $BM/\varepsilon$ samples, with constant probability no sample starts at $x_0$, so $b$ remains unknown and $x_0$ creates unavoidable loss. Since $d\gg BM/\varepsilon$, most roots $x_1,\ldots,x_d$ are unseen, so their signs remain unknown and contribute about $\varepsilon$ loss. Together, these two losses exceed $\varepsilon$ with probability greater than $1/3$. Hence, $m_{\etoe}^{\mathcal H_B,M}(\varepsilon)>BM/\varepsilon$.
	
	Thus, overall we have the following scales:
	\begin{equation*}
		\begin{array}{c|c|c|c}
			\text{part} & m_{\rm base} & m_{\ct} & m_{\etoe} \\
			\hline
			\mathcal G_N & O(1/\varepsilon) & \Theta(N/\varepsilon) & \text{not used} \\
			\mathcal H_B & O(1/\varepsilon) & O(1/\varepsilon) & >BM/\varepsilon .
		\end{array}
	\end{equation*}
	In the union, the $\mathcal G_N$ block gives $m_{\ct}/m_{\rm base}$ of order $N$, while the $\mathcal H_B$ block gives $m_{\etoe}/m_{\ct}$ of order $(BM/\varepsilon)/(N/\varepsilon)=BM/N$. Choosing $N\asymp AM/\varepsilon$ and $B\asymp AN/\varepsilon$ makes both ratios larger than $AM/\varepsilon$ for arbitrarily large $A$, while keeping $0<m_{\rm base}^{\cF}(\varepsilon)<\infty$.
	
	\paragraph{Proof sketch of Theorem~\ref{thm:base-to-cot-full}.}
	
	The upper bound is a reduction from a full trajectory to a one-step sample. Given a trajectory $(X,Z_1,\ldots,Z_M)$, set $S_t=XZ_1\cdots Z_{t-1}$ for every $t\in\{1,\ldots,M\}$. Choose a random index $T$ uniformly from $\{1,\ldots,M\}$, and set $S=S_T$ and $Y=Z_T$. Then $Y\mid S\sim\Ber(p_{g_\star}(S))$, so these are valid base samples.
	Assume a base learner estimates $p_{g_\star}$ with squared loss at most $\varepsilon/M^2$ on this random state distribution. Let $\widehat p$ be this estimate, and define the generator $\widehat g$ by
	$\widehat g(s)(1)=\widehat p(s)$ and $\widehat g(s)(0)=1-\widehat p(s)$.
	Comparing the true and learned trajectories using the same random variables at each step and applying a union bound over the $M$ possible split times gives
	\[
	|q_{g_\star}^{\etoe-M}(x)-q_{\widehat g}^{\etoe-M}(x)|
	\le
	\mathbb E\left[
	\sum_{t=1}^M |p_{g_\star}(S_t)-\widehat p(S_t)|
	\,\middle|\, X=x
	\right].
	\]

	Squaring the last display, using Jensen for the outer expectation and Cauchy--Schwarz for the sum over times, gives
	
	\begin{equation*}
		\left(q_{g_\star}^{\etoe-M}(x)-q_{\widehat g}^{\etoe-M}(x)\right)^2
		\le
		M\cdot
		\mathbb E\left[
		\sum_{t=1}^M
		\left(p_{g_\star}(S_t)-\widehat p(S_t)\right)^2
		\,\middle|\, X=x
		\right].
	\end{equation*}
	
	Now average over $X$. The term
	$\mathbb E\left[\sum_{t=1}^M (p_{g_\star}(S_t)-\widehat p(S_t))^2\right]$
	is $M$ times the squared loss of $\widehat p$ on the random state $S_T$, as $T$ is uniform on $\{1,\ldots,M\}$. Since the last display already has an outside factor $M$, the final squared loss is at most $M^2$ times this one-step squared loss.
	Thus, one-step accuracy $\varepsilon/M^2$ is enough for $\ct$ accuracy $\varepsilon$.

	\paragraph{Lower bounds.} The tightness construction has two disjoint blocks. The first block has
	$K=LM^2$ roots $u_1,\ldots,u_K$ and each root $u_k$ has a sign $\sigma_k \in \{\pm1\}$. From $u_k$, each step has probability $p_0+\sigma_k\Delta$ of producing the first $1$, where
	$p_0\asymp1/M$ and $\Delta\asymp\sqrt{\varepsilon}/M$. Once the token $1$ appears, the remaining transitions are fixed and no longer depend on $\sigma_k$. Over $M$ steps these
	small one-step changes add up to a final-token separation of order
	$\sqrt{\varepsilon}$ between the two possible different signs of a root. Under the uniform distribution on the $u_k$'s, we have $m_{\ct}^{\cF,M}(\varepsilon)=\Omega(K/\varepsilon)$. 
	The second block has $L$ roots $v_1,\ldots,v_L$ and signs $\tau_\ell \in \{\pm 1\}$. At
	$v_\ell$, the one-step probability is $1/2+\tau_\ell\theta$, where
	$\theta\asymp\sqrt{\varepsilon}/M$, but after the first bit, all later bits are
	forced to $0$. Hence, this block is invisible to the final-token target but hard
	for the base problem at scale $\varepsilon/M^2$. 
	To learn this block a learner of the base class requires $\Theta(LM^2/\varepsilon)$ samples. Thus, the base and $\ct$ complexities match up to constants.

	For scale optimality, use only the first block $u_1,\ldots,u_k$ introduced above. At the
	one-step level, every sign changes probabilities by only
	$\Delta\asymp\sqrt{\varepsilon}/M$, so predicting always the midpoint $p_0$ has squared error
	$O(\varepsilon/M^2)$ pointwise, hence, zero-sample learning at every coarser
	scale such as $\varepsilon^{1-\beta}/M^2$. But the $M$ small changes accumulate
	into final-token separation of order $\sqrt{\varepsilon}$, so $\ct$ learning
	still requires $\Omega(K/\varepsilon)$ trajectories.
	
	\paragraph{Proof sketch of Theorem~\ref{thm:main_2}.}

	The main idea in the proof is to construct an $\etoe$ learner based on the existence of a $\ct$ learner. Note that 
	the resulting $\etoe$ learner still uses only final-token samples.
	For a fixed prompt distribution $P$, the construction starts from a family of final-token predictors $q_1,\ldots,q_L\in \{q_g^{\etoe-M}:g\in\cF\}$ such that, for every $i\ne j$, $$\mathbb E_{X\sim P}[(q_i(X)-q_j(X))^2]\ge C\varepsilon$$ for a sufficiently large constant $C>1$.
	For each $q_i$, choose a generator $g_i\in\cF$ such that $q_i=q_{g_i}^{\etoe-M}$. If a $\ct$ learner succeeds at accuracy $c\varepsilon$, for sufficiently small $0<c<1$, then trajectories generated from $g_i$ identify $q_i$: the learner's output is close to $q_i$, while the output of every other $q_j$ is far from $q_i$ under $P$.
	Now choose a uniformly random index $i\in\{1,\ldots,L\}$ and generate the training data from $g_i$. The only information about $i$ is carried by the generated tokens. With $m$ trajectories of length $M$, there are only $m \cdot M$ such bits. Therefore, by Fano's inequality, the number $L$ of separated predictors cannot be too large: roughly,
	\begin{equation*}
		\log L \lesssim M \cdot m_{\ct}^{\cF,M}(c\varepsilon).
	\end{equation*}
	Thus, $\ct$ learnability limits how many well-separated $\etoe$ predictors the class can induce. From here, a standard regression argument over a finite list of representative $\etoe$ predictors then gives the claimed $\etoe$ upper bound of the theorem. 
	The constant shift in accuracy is what lets the learner identify the correct $q_i$.

	\paragraph{Lower bound.}

	\begin{figure}[h]
		\centering
		\begin{tikzpicture}[scale=1.25,>=Stealth,every node/.style={font=\normalsize}]
			\foreach \j/\lab in {0/{1},1/{2},2/{\cdots},3/{N}} {
				\foreach \i in {0,1,2,3} {
					\node[circle,draw,inner sep=1.4pt] (n\j\i) at (1.15*\i, -0.9*\j) {};
				}
				\draw[->] (n\j0)--(n\j1); \draw[->] (n\j1)--(n\j2); \draw[->] (n\j2)--(n\j3);
				\node[left] at (-0.35,-0.9*\j) {$\lab$};
			}
			\foreach \i/\lab in {0/{1},1/{2},2/{\cdots},3/{M}} {
				\draw[dashed] (1.15*\i,0.35)--(1.15*\i,-3.05);
				\node[above] at (1.15*\i,0.35) {$\lab$};
			}
			\node at (2,-2) {$\sigma_{j,i}$};
		\end{tikzpicture}
		\caption{Illustration of the lower bound construction in Theorem~\ref{thm:main_2} $\ct$ trajectories reveal all coordinates in a row, so accuracy \(\rho\varepsilon\) costs order \(N\) samples. $\etoe$ samples reveal only one weak final bit from one cell, so recovering the \(NM\) signs costs order \(NM/\varepsilon\). The number of columns is just for an illustration and is only $\asymp M$.}
	\end{figure}
	
	The construction here is a grid of roots. There are $N$ rows and $\asymp M$ columns. Each cell $x_{ji}$ contains one weak sign $\sigma_{j,i}$, and the prompt distribution is uniform over all cells. The local transitions are chosen so that the final-token probability at input prompt $x_{ji}$ is $1/2\pm\sqrt{\varepsilon}$ according to the sign of $\sigma_{j,i}$.
	
	A $\ct$ trajectory from row $j$ reveals the intermediate tokens that encode all signs in that row. Therefore, learning at the shifted accuracy $\rho\varepsilon$ costs order $N$ samples: one needs to see the rows, but not separately learn all cells. In contrast, an $\etoe$ sample reveals only the final noisy bit at its starting cell. Thus, the $\etoe$ learner must essentially learn all $\asymp NM$ weak signs, and each sign needs order $1/\varepsilon$ samples. This gives
	\begin{equation*}
		m_{\etoe}^{\cF,M}(\varepsilon)
		=
		\Theta\left(\frac{M}{\varepsilon}m_{\ct}^{\cF,M}(\rho\varepsilon)\right).
	\end{equation*} for a constant $0<\rho < 1$.
	Hence, the $M/\varepsilon$ factor in Theorem~\ref{thm:universal-CoT-e2e-upper-polylog} is unavoidable. 
	
	\paragraph{Proof sketch of Theorem~\ref{thm:e2e-fat-theory}.}

	For the upper bound, 
	Suppose \(x_1,\ldots,x_n\) are \(\gamma\)-fat-shattered by
	\(\cF^{\etoe-\iter} := \{q_g^{\etoe-M} \mid g \in \cF\}\) for some class $\cF$. Then, there exist \(2^n\) witnesses in \(\cF^{\etoe-\iter} \) that are \(2\gamma\)-separated in the \(\ell_\infty\) metric. Therefore, the
	\(\gamma/2\)-covering number of \(\cF^{\etoe-\iter}\) on \(x_1,\ldots,x_n\) is at
	least \(2^n\). To upper-bound this covering number, we cover the one-step class on
	the finite tree
	\[
	U=\{x_i z:i\in[n],\ z\in\{0,1\}^{<\iter}\}
	\]
	at scale \(\eta=\gamma/(2\iter)\). If two generators in $\cF$ differ by at most \(\eta\) on every state in
	\(U\), then a step-by-step coupling shows that their length-\(\iter\)
	trajectories split with probability at most \(\iter\eta=\gamma/2\) by a union bound. Hence,
	their end-to-end probabilities differ by at most \(\gamma/2\). Thus, the
	end-to-end covering number is controlled by the one-step covering number on
	\(U\). 
	Let
	$
	N_{\infty}:=\mathcal N_\infty(\eta,\{p_g:g\in\cF\},U)
	$
	be the minimum number of functions needed to approximate every one-step probability function $p_g$ within $\ell_{\infty}$ error $\eta$ on all points in $U$. The fat-shattering lower bound and the coupling upper bound together give
	$
	2^n\le N_{\infty}.$
	We then use the Rudelson--Vershynin estimate~\cite{rudelson2006combinatorics} to upper bound $\log N_{\infty}$ using $\fat_{\cF}(c_\alpha\eta)$, up to logarithmic factors in $|U|$ and $1/\eta$. Since $|U|\le n2^\iter$, this gives an inequality in $n$, and solving it gives the stated bound.
	
	We now explain why the upper bound can nearly be realized by a specific class. Fix a target dimension parameter $d$. The construction has $d$ disjoint blocks. In block $j$, a parameter $\theta_j$ encodes an $M$-bit sign vector $\sigma_j=(\sigma_{j,1},\ldots,\sigma_{j,M})$. At the one-step level, every state in block $j$ is just a threshold test of $\theta_j$: for a state $u$ with threshold $t_u$, the local probability is $p_g(u)=1/2+a$ if $\theta_j\ge t_u$ and $p_g(u)=1/2-a$ otherwise. Thus, the positive one-step states inside one block are nested, so a fat-shattered set can contain at most one state from each block. This gives $\fat_{\cF}(c\sqrt\varepsilon/M)=\Theta(d)$. The end-to-end behavior is different. Block $j$ contains roots $x_{j,1},\ldots,x_{j,M}$, and the autoregressive tree below $x_{j,i}$ is arranged so that the final-token probability reads the $i$-th encoded sign:
	$$q_g^{\etoe-M}(x_{j,i})=1/2+\Theta(\sqrt\varepsilon)\sigma_{j,i}.$$
	Hence, the induced $\etoe$ problem contains $dM$ independent weak signs. Overall, this gives $m_{\etoe}^{\cF,M}(\varepsilon)=\Omega(dM/\varepsilon)$.
	
	Finally, we show that the margin $\sqrt\varepsilon/M$ cannot be replaced by any slightly larger margin $\varepsilon^{1/2-\beta}/M$. For arbitrarily large $K$, take $K$ roots $r_1,\ldots,r_K$, each with an independent sign $\sigma_k$. Along the all-zero path below $r_k$ (i.e., $r_k 0\ldots0$), set the transition probabilities to be $\approx 1/2+\sigma_k \sqrt\varepsilon/M$ at each of the $M$ steps; outside this path, the generator is fixed.
	Therefore, at every one-step state, the possible probabilities differ by only $O(\sqrt\varepsilon/M)$. For a sufficiently small $\varepsilon$, this is too small to shatter even one point at scale $\varepsilon^{1/2-\beta}/M$.
	Although each one-step change is only of order $\sqrt\varepsilon/M$, the final-token probability depends on all $M$ steps. Therefore, the total change at root $r_k$ is of order $\sqrt\varepsilon$, so the two possible final-token probabilities are separated by $\Theta(\sqrt\varepsilon)$. Thus, the $K$ roots form $K$ independent weak signs and learning them to loss $\varepsilon$ requires $\Omega(K/\varepsilon)$ $\etoe$ samples.

	\paragraph{Proof sketch of Theorem~\ref{thm:tail-linear-sigmoid-e2e-upper}.}
	
	An important property of the logistic class, is that the next-token probability depends only on the last $d$ bits:
	$p_w(s)=\sigma(\langle w,\operatorname{tail}_d(s)\rangle)$.
	Thus, the autoregressive process can be viewed as a finite Markov chain on the possible last-$d$-bit states. The key point we use is description complexity. For a fixed prompt $x$, the final-token probability $q_{g_w}^{\etoe-M}(x)$ can be written, after a change of variables $u_j=e^{w_j}$, as a rational function in $u_1,\ldots,u_d$:
	for a state $y\in\{0,1\}^d$, set $U_y=\prod_{\ell=1}^d u_\ell^{y_\ell}$, where $u_\ell=e^{w_\ell}$. Then the probability of emitting $b\in\{0,1\}$ from state $y$ is
	\[
	\Pr_w(Z=b\mid y)=\frac{U_y^b}{1+U_y}.
	\]
	Thus, for a fixed prompt $x$, the $M$-step final-token probability $q_{g_w}^{\etoe-M}(x)$ is a sum over length-$M$ paths of products of such terms, and is therefore a rational function of $u_1,\ldots,u_d$.
	After the change of variables $u_j=e^{w_j}$, the quantity
	$q_{g_w}^{\etoe-M}(x)$ is a ratio $N_x(u)/D_x(u)$ of two polynomials, with
	$D_x(u)>0$ on the domain $u_j>0$. Hence the threshold test
	$q_{g_w}^{\etoe-M}(x)\ge a$ is equivalent to the single polynomial inequality
	$N_x(u)-aD_x(u)\ge0$. The path expansion gives degree at most $Md2^{d-1}$.
	Applying a result of Goldberg and Jerrum~\cite{goldbergjerrum1995} shows that the 
	% \begin{equation*}
		% \mathcal Q_{d,M}:=\{q_{g_w}^{\etoe-M}:w\in\mathbb R^d\}
		% \end{equation*}
	the pseudo-dimension of the end-to-end class is at most $O(d^2\log M)$. Applying the standard bounded-regression sample bound to the end-to-end class gives
	the desired bound. This learner is improper and not computationally efficient. It is improper because it learns the induced final-token predictor $q_{g_w}^{\etoe-M}$ directly, rather than outputting a generator in the class. It is inefficient because the proof only gives an existence argument for a good predictor.

	\paragraph{Proof sketch of Theorem~\ref{thm:tail-linear-sigmoid-CoT-efficient}.}

	In contrast to $\etoe$ learning, in $\ct$ learning,
	the full trajectory reveals the state at every generated bit:
	\begin{equation*}
		Y_t=\operatorname{tail}_d(XZ_1\cdots Z_{t-1}),
		\quad
		Z_t\mid Y_t\sim\Ber(\sigma(\langle w,Y_t\rangle)).
	\end{equation*}
	
	Thus, 
	given $n$ full trajectories, the learner can treat the $nM$ observed transitions as
	logistic-regression examples. Writing
	$Y_{i,t}=\operatorname{tail}_d(X_iZ_{i,1}\cdots Z_{i,t-1})$, it minimizes
	\begin{equation*}
		\widehat L_n(w)
		=
		\frac{1}{nM}
		\sum_{i=1}^n\sum_{t=1}^M
		\left[
		\log\left(1+\exp(\langle w,Y_{i,t}\rangle)\right)
		-
		Z_{i,t}\langle w,Y_{i,t}\rangle
		\right].
	\end{equation*}
	The optimization is over a polynomially bounded box and can be solved to the required additive accuracy in polynomial time, since $\widehat L_n(w)$ is convex in $w$.
	The proof rounds the true vector $w_\star$ to a finite-precision vector $v_\star$ so that the two generators produce almost the same full trajectories. It is therefore enough to analyze learning in a finite grid of parameters. On this grid, the empirical likelihood minimizer is close to $v_\star$ in trajectory distribution by a standard finite-class likelihood argument. If two generators produce nearly the same full trajectories, then their last-token probabilities are also close. The finite grid has logarithmic size $O(d^2\log(CdM/(\varepsilon\delta)))$ after compression, so the finite-class likelihood argument gives the $\widetilde O(d^2\log M/\varepsilon)$ trajectory bound. 
	The learner is proper because it outputs a generator $g_{\widehat w}$. It is efficient because the likelihood is convex and the compressed parameters have polynomial bit complexity.

	\paragraph{Proof sketch of Theorem~\ref{thm:lpn-proper-e2e-hardness-tied-sigmoid}.}
	
	The reduction starts from LPN examples $(x,y)$, where $x\in\mathbb F_2^m$ and
	$y=\langle a,x\rangle_{\mathbb F_2}\oplus \xi$ for an unknown
	$a\in\mathbb F_2^m$ and noise $\xi\sim\Ber(\eta)$, with fixed $0<\eta<1/2$ bounded away from $1/2$. The LPN prediction assumption is that no algorithm can learn the function 
	$f(x) = \langle a,x\rangle_{\mathbb F_2}$ from these noisy samples . 	
	The reduction maps each $x$ to a binary
	prompt $\phi_m(x)\in\{0,1\}^\star$, and the hidden vector $a$ determines a weight vector
	$w_a\in\mathbb R^d$, where $d$ is polynomial in $m$. The generator $g_{w_a}$ is constructed so that, for
	$M=m+1$,
	\[
	q_{g_{w_a}}^{\etoe-M}(\phi_m(x))
	\approx
	\eta+(1-2\eta)\langle a,x\rangle_{\mathbb F_2}.
	\]
	Specifically, during the first $m$ autoregressive steps, the generator produces bits that behave like the products $a_i x_i$: each such bit is produced by a logistic transition that outputs the desired value with probability very close to $1$. The XOR of these bits is $\langle a,x\rangle_{\mathbb F_2}$. The last step then reads this parity: the final-token probability is close to $\eta$ if $\langle a,x\rangle_{\mathbb F_2}=0$, and close to $1-\eta$ if $\langle a,x\rangle_{\mathbb F_2}=1$. 
	The weights have polynomial bit complexity, so $g_{w_a}$ is polynomial-time constructible.
	
	Now assume an efficient proper $\etoe$ learner exists. On the transformed LPN samples, it outputs a generator $g_{\widehat w}$ with final-token probabilities close to those of $g_{w_a}$. Because the output is a generator, we can run it many times from a fresh prompt $\phi_m(x)$ and estimate $\Pr_{g_{\widehat w}}(Z_M=1\mid \phi_m(x))$. If this estimate is above $1/2$, predict $\langle a,x\rangle_{\mathbb F_2}=1$; otherwise predict $0$. Since the true final-token probability is near $\eta$ in the parity-$0$ case
	and near $1-\eta$ in the parity-$1$ case, thresholding at $1/2$ predicts
	$\langle a,x\rangle_{\mathbb F_2}$ with nontrivial advantage, contradicting the LPN prediction assumption.

	\section{Preliminaries}
	\label{sec:prelim}
	
	Given a stochastic generator class \(\cF\subseteq \Delta(\Sigma)^{\Sigma^\star}\),
	the one-step class induced by \(\cF\) is $$\cF^{\etoe-1}:=\{q_g^{\etoe-1}:g\in\cF\}$$
	and let 
	$
	\cF^{\etoe-M}:=\{q_g^{\etoe-M}:g\in\cF\}$ be its end-to-end class. 
	This section provides preliminary definitions and known preliminary results used in our proofs.

	\subsection*{Learning Dimensions}
	
	We give below fundamental learning dimensions. 
	
	\paragraph{VC dimension.}
	For \(\mathcal A\subseteq\{0,1\}^{\cX}\), a set
	\(x_1,\ldots,x_d\in\cX\) is shattered by \(\mathcal A\) if every labeling of the
	set is realized by some \(h\in\mathcal A\). The VC dimension
	\(\VC(\mathcal A)\) is the largest such \(d\), or infinity.

	\paragraph{Fat-shattering dimension.}
	For $\mathcal A\subseteq[0,1]^{\cX}$ and $\gamma>0$, a set
	$x_1,\ldots,x_d$ is $\gamma$-fat-shattered by $\mathcal A$ if there are
	thresholds $r_1,\ldots,r_d$ such that for every $b\in\{-1,1\}^d$ there is
	$f_b\in\mathcal A$ satisfying
	\[
	f_b(x_i)\ge r_i+\gamma \quad \text{if } b_i=1,
	\qquad
	f_b(x_i)\le r_i-\gamma \quad \text{if } b_i=-1.
	\]
	The largest such $d$ is called the fat-shattering dimension of $\mathcal{A}$ at scale $\gamma$ and is denoted by $\fat_{\mathcal A}(\gamma)$. This is the standard
	scale-sensitive dimension for real-valued distribution-free learning
	\cite{bartlett1996fat,alon1997scale,anthonybartlett1999neural}.

	\paragraph{Pseudo-dimension.}
	The pseudo-dimension \(\Pdim(\mathcal A)\) is the same notion without a margin:
	\(x_1,\ldots,x_d\) are pseudo-shattered if there are thresholds
	\(r_1,\ldots,r_d\) such that every strict pattern above or below these thresholds
	is realized by a function in \(\mathcal A\).
	
	\subsection*{Preliminary Results}
	
	The auxiliary lemmas below are used throughout.
	Proofs of the auxiliary lemmas that are not quoted directly are collected in
	Appendix~\ref{app:standard-prelim-proofs}.
	We use the following standard finite-class fast-rate bound.
	
	\begin{lemma}
		\label{lem:finite-fast-rate}
		There is a constant $C>0$ such that the following holds. Let
		$(X,Y)$ be a joint distribution with $Y\in[0,1]$, let
		$\mathcal A\subseteq[0,1]^{\cX}$ be finite and nonempty, and let
		$f_\star:\cX\to[0,1]$ be
		$f_\star(x):=\mathbb E[Y\mid X=x]$. Let $\alpha\ge0$.
		If \(f^\circ\in\mathcal A\) satisfies
		\(\mathbb E[(f^\circ(X)-f_\star(X))^2]\le\alpha\),
		then, for every \(0<\varepsilon<1\) and \(0<\delta<1/2\),
		$
		C\frac{\log|\mathcal A|+\log(1/\delta)}{\varepsilon}
		$
		samples suffice to output \(\widehat f\) with
		\(\mathbb E[(\widehat f(X)-f_\star(X))^2]\le\alpha+\varepsilon\)
		with probability at least~\(1-\delta\).
	\end{lemma}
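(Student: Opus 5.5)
The plan is to run empirical risk minimization over the finite class $\mathcal A$ and prove a Bernstein-type fast rate by exploiting the curvature of squared loss in the excess-loss variables. For $f\in\mathcal A$ define the excess loss $\ell_f(x,y):=(f(x)-y)^2-(f^\circ(x)-y)^2$. Since $f_\star(x)=\mathbb E[Y\mid X=x]$, the cross term vanishes in conditional expectation, giving
\[
\mathbb E[\ell_f]=\|f-f_\star\|^2-\|f^\circ-f_\star\|^2,
\]
where $\|h\|^2:=\mathbb E[h(X)^2]$. The learner outputs $\widehat f\in\arg\min_{f\in\mathcal A}\frac1n\sum_i (f(X_i)-Y_i)^2$ (finite class, so measurability is not an issue). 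It suffices to show that with probability $1-\delta$ every $f\in\mathcal A$ with $\|f-f_\star\|^2>\alpha+\varepsilon$ has strictly positive empirical excess loss $\frac1n\sum_i\ell_f(X_i,Y_i)>0$; since $\widehat f$ has empirical excess loss at most $0$, it then cannot be such an $f$.

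The key step is a variance bound. Writing $\ell_f=(f-f^\circ)(f+f^\circ-2y)$ with all values in $[0,1]$, we get $|\ell_f|\le 2|f-f^\circ|$. Hence $\ell_f\in[-1,1]$ and $\mathrm{Var}(\ell_f)\le \mathbb E\ell_f^2\le 4\|f-f^\circ\|^2$. The triangle inequality gives $\|f-f^\circ\|^2\le 2\|f-f_\star\|^2+2\alpha$. For a bad $f$, set $D:=\|f-f_\star\|^2>\alpha+\varepsilon$, so $\mathbb E\ell_f=D-\|f^\circ-f_\star\|^2\ge D-\alpha$ and $\mathrm{Var}(\ell_f)\le 8(D+\alpha)$.

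Apply Bernstein's inequality to each fixed bad $f$ and take a union bound over $|\mathcal A|$. This requires
\[
n\gtrsim \frac{(D+\alpha)+(D-\alpha)}{(D-\alpha)^2}\,\log\frac{|\mathcal A|}{\delta}.
\]
The main obstacle is that $\alpha$ may be much larger than $\varepsilon$, and then the ratio $(D+\alpha)/(D-\alpha)^2$ at $D\approx\alpha+\varepsilon$ behaves like $\alpha/\varepsilon^2$, not $1/\varepsilon$. The literal "$\alpha+\varepsilon$" guarantee against a fixed comparator $f^\circ$ therefore needs a different argument when $\alpha\gg\varepsilon$. I would resolve this by splitting into cases.

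\textbf{Case $\alpha\le\varepsilon$.} The ratio above is $O(1/\varepsilon)$ and the bound follows directly.

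\textbf{Case $\alpha>\varepsilon$.} Here I would compare against $f^\circ$ only through the squared-loss identity and use the standard fact that ERM over a finite class in a well-specified bounded regression problem achieves $\|\widehat f-f_\star\|^2\le C'\alpha+\varepsilon$. One can check that the Bernstein computation does give $D\le 3\alpha+\varepsilon$ cleanly. To obtain the sharp constant $1$ in front of $\alpha$, I would replace plain ERM with a two-stage procedure that aggregates or selects on a held-out split (Audibert-style star aggregation, or a hold-out tournament), which yields exactly $\alpha+\varepsilon$ with $O((\log|\mathcal A|+\log(1/\delta))/\varepsilon)$ samples.

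In the paper this lemma is used with $\alpha$ a constant multiple of $\varepsilon$, so the constant-factor version already suffices there. Verifying the exact-constant aggregation bound is the step I expect to require the most care.
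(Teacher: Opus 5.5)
Your argument is sound, but it is organized differently from the paper's. The paper does not run ERM. It applies the high-probability Q-aggregation oracle inequality of Lecu\'e--Rigollet to the dictionary $\mathcal A$, which yields an improper aggregate $\widehat f\in\operatorname{conv}(\mathcal A)$ with $\mathbb E[(\widehat f(X)-Y)^2]\le\min_{f\in\mathcal A}\mathbb E[(f(X)-Y)^2]+C(\log|\mathcal A|+\log(1/\delta))/n$. It then converts this to $\alpha+C(\log|\mathcal A|+\log(1/\delta))/n$ in squared distance to $f_\star$ by the same Pythagorean identity you use. This handles every $\alpha\ge0$ at once.

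Your ERM--Bernstein computation is correct: the bound $\operatorname{Var}(\ell_f)\le 8(D+\alpha)$, the case $\alpha\le\varepsilon$, and the $3\alpha+\varepsilon$ bound all hold. It actually gives a bit more than you claim. If $\alpha\le K\varepsilon$, then for $u=D-\alpha>\varepsilon$ one has $u^2/(u+2\alpha)>u/(1+2K)>\varepsilon/(1+2K)$. So ERM attains exactly $\alpha+\varepsilon$ with $O_K((\log|\mathcal A|+\log(1/\delta))/\varepsilon)$ samples. Every invocation of the lemma in the paper has $\alpha/\varepsilon$ bounded by a universal constant (ratios $a^2/(1-a^2)$, $1$, $1/3$, $0$). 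Hence your elementary, proper learner covers all of them without re-tuning constants.

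For $\alpha\gg\varepsilon$ you end up where the paper starts. You must import an exact-oracle aggregation theorem: Audibert's star algorithm, Lecu\'e--Mendelson's ERM over the convex hull of a preselected subset, or Q-aggregation. Once you do, your own Pythagorean identity gives the full lemma, and the case split becomes superfluous.

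Drop the option of a hold-out tournament that \emph{selects} an element of $\mathcal A$. No selector can give the exact constant in the regime $\alpha\gg\varepsilon$ where you need it. Take constant functions $f_{1,2}=\tfrac12\mp\Delta_0$ and $Y\sim\Ber(\tfrac12+s\tau)$ with unknown $s\in\{-1,1\}$ and $\tau\asymp n^{-1/2}$. The better function has error $\alpha=(\Delta_0-\tau)^2$, and the other has error $\alpha+4\Delta_0\tau$. Since $n$ samples cannot reliably reveal $s$, any procedure with values in $\mathcal A$ needs $n\gtrsim\Delta_0^2/\varepsilon^2$ samples. Improper output in $\operatorname{conv}(\mathcal A)$, as in the star algorithm and the paper's Q-aggregate, is therefore essential for the lemma as stated.
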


	The following standard fat-shattering regression bound follows from a
	distribution-free empirical-net construction, the finite-class aggregation
	bound above, and the $L_2$ entropy theorem of Mendelson and
	Vershynin~\cite[Theorem~1]{mendelsonvershynin2003entropy}.
	
	\begin{lemma}
		\label{lem:fat-regression-upper}
		There are universal constants \(c,C>0\) such that the following holds. Let
		\(\mathcal A\subseteq[0,1]^{\cX}\), and let \((X,Y)\) be a joint distribution
		on \(\cX\times[0,1]\) with regression function
		\(f_\star:\cX\to[0,1]\) defined by
		\(f_\star(x):=\mathbb E[Y\mid X=x]\). Assume \(f_\star\in\mathcal A\).
		Then, for every \(0<\varepsilon<1\) and every \(0<\delta<1/2\), there is a learner that uses
		$
		C\frac{\fat_{\mathcal A}(c\sqrt\varepsilon)\log^C(C/\varepsilon)+\log(1/\delta)}
		{\varepsilon}
		$
		samples and outputs \(\widehat f:\cX\to[0,1]\) satisfying
		\(\mathbb E[(\widehat f(X)-f_\star(X))^2]\le\varepsilon\)
		with probability at least \(1-\delta\).
	\end{lemma}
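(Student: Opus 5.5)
We prove Lemma~\ref{lem:fat-regression-upper} in two stages. First we build, using only data, a small finite class that contains a good approximation of $f_\star$. Then we apply the finite-class fast-rate bound of Lemma~\ref{lem:finite-fast-rate} with $\alpha$ of order $\varepsilon$.

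\emph{Stage 1: the empirical net.} Split the sample into two halves. The first half has $n_1$ points, but we use only its inputs $X_1,\ldots,X_{n_1}$. Let $\mu_{n_1}$ be their empirical measure. Take a minimal $\gamma$-cover $\mathcal A_0$ of $\mathcal A$ in $L_2(\mu_{n_1})$, with $\gamma=c_1\sqrt{\varepsilon}$. By pointwise measurability we may take $\mathcal A_0$ to consist of elements of $\mathcal A$ chosen measurably. Theorem~1 of Mendelson--Vershynin bounds its size: $\log|\mathcal A_0|\le C\,\fat_{\mathcal A}(c\gamma)\log(2/\gamma)$, uniformly over all finite empirical measures. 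Next we transfer this empirical closeness to the true distribution. Choose $f^\circ\in\mathcal A_0$ with $\|f^\circ-f_\star\|_{L_2(\mu_{n_1})}\le\gamma$. We need $\|f^\circ-f_\star\|^2_{L_2(P)}\lesssim\varepsilon$ with probability $1-\delta/2$. For this we apply a uniform ratio-type (localized) deviation bound to the squared-difference class $\{(f-g)^2: f,g\in\mathcal A\}$. This class takes values in $[0,1]$, and its fat-shattering dimension at scale $\Theta(\varepsilon)$ is controlled by $\fat_{\mathcal A}(c\sqrt\varepsilon)$ up to logarithms; we get this by covering $\mathcal A$ in $L_\infty$ on the sample at scale $\Theta(\varepsilon)$, where squaring is Lipschitz. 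The standard relative-deviation inequality then says that, once
\[
n_1\gtrsim\bigl(\fat_{\mathcal A}(c\sqrt\varepsilon)\log^{C}(C/\varepsilon)+\log(1/\delta)\bigr)/\varepsilon ,
\]
every $h$ in that class satisfies $\mathbb E_P h\le 2\,\mathbb E_{\mu_{n_1}}h+\varepsilon/8$. Applying this to $h=(f^\circ-f_\star)^2$ gives $\alpha:=\|f^\circ-f_\star\|^2_{L_2(P)}\le 2\gamma^2+\varepsilon/8\le\varepsilon/2$ for small $c_1$.

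\emph{Stage 2: aggregation.} Conditionally on the first half, $\mathcal A_0$ is a fixed finite class. Apply Lemma~\ref{lem:finite-fast-rate} to it with $\alpha\le\varepsilon/2$, accuracy $\varepsilon/2$, and confidence $\delta/2$, using the second half as independent samples. This needs $C(\log|\mathcal A_0|+\log(2/\delta))/\varepsilon$ samples, which matches the claimed bound after substituting the entropy estimate. It outputs $\widehat f$ with risk $\le\varepsilon$. A union bound over the two failure events completes the proof. The learner's randomness (the sample split) is harmless.

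\emph{Main obstacle.} The delicate step is the empirical-to-population transfer in Stage~1. A naive uniform convergence bound gives a $1/\varepsilon^2$ rate. To keep $1/\varepsilon$ we need the relative (Bernstein-type) deviation bound, which exploits that the functions $h$ are nonnegative and bounded, so their variance is at most their mean. We also need its complexity term expressed through $\fat_{\mathcal A}$ at scale $\sqrt\varepsilon$ rather than $\varepsilon$. I would first try the route through $L_\infty$ covering numbers of $\mathcal A$ on a double sample at scale $c\varepsilon$, bounded by Alon et al.\ / Rudelson--Vershynin in terms of $\fat_{\mathcal A}(c'\varepsilon)$. If that only yields scale $\varepsilon$, I would instead use the $L_2$ entropy of Mendelson--Vershynin together with a local Rademacher / chaining bound for the squared class, which naturally works at scale $\sqrt\varepsilon$. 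Absorbing the resulting polylog factors gives the $\log^C(C/\varepsilon)$ term.
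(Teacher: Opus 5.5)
Your two-stage structure is the same as the paper's. From the inputs of a first block you build an empirical $L_2$ net of $\mathcal A$ whose log-size is $\fat_{\mathcal A}(c\sqrt\varepsilon)$ up to logarithms (Mendelson--Vershynin). You show it is an $O(\sqrt\varepsilon)$-net in $L_2(P)$, and you aggregate on the second block with Lemma~\ref{lem:finite-fast-rate}. The paper uses a maximal $r/4$-separated set instead of a minimal cover, which is immaterial.

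The gap is the step you flag as the main obstacle: the empirical-to-population transfer is asserted, not proved, and the mechanism you give for it is false. You claim that $\{(f-g)^2: f,g\in\mathcal A\}$ has fat-shattering dimension at scale $\Theta(\varepsilon)$ controlled by $\fat_{\mathcal A}(c\sqrt\varepsilon)$, via an $L_\infty$ cover of $\mathcal A$ at scale $\Theta(\varepsilon)$. But such a cover is governed by $\fat_{\mathcal A}(c\varepsilon)$, and the claim itself fails. Take $\mathcal A=\{0\}\cup\{1-2\varepsilon b:\ b\in\{0,1\}^{\cX}\}$:
\begin{itemize}
\item For $\varepsilon<c^2$, every nonzero member takes values in an interval of length $2\varepsilon<2c\sqrt\varepsilon$. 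So no two points are $c\sqrt\varepsilon$-fat-shattered, and $\fat_{\mathcal A}(c\sqrt\varepsilon)\le 1$.
\item Yet the squares $(1-2\varepsilon b)^2\in\{1,(1-2\varepsilon)^2\}$ fat-shatter every finite set at scale $\varepsilon$.
\item Any $L_\infty$ cover of $\mathcal A$ at scale below $\varepsilon$ on $n_1$ points has $2^{n_1}$ elements.
\end{itemize}
For the same reason, a Pollard/Haussler-type relative-deviation inequality gives nothing here. It needs a global $L_1$ cover of the squared class at scale $\Theta(\varepsilon)$, which has size $2^{\Omega(n_1)}$. Your local-Rademacher fallback does not obviously help either. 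A two-sided localized complexity of $\{f-g\}$ is sensitive to the entropy of $\mathcal A$ at all scales down to $\Theta(\varepsilon)$, because Dudley's integral must be truncated at level $O(\varepsilon)$.

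The missing idea is to work with the distance $d_Q(f,g)=(\mathbb E_Q(f-g)^2)^{1/2}$ rather than with squared differences, at the single scale $r\asymp\sqrt\varepsilon$. No ratio or peeling form is needed, since you apply it to one data-dependent pair. The paper proves the following: if $m\gtrsim(\fat_{\mathcal A}(c_0r)\log^{C_0}(C_0/r)+\log(2/\delta))/r^2$, then with probability $1-\delta/2$, $d_S(f,g)\le r/4$ implies $d_P(f,g)\le r$ for all $f,g\in\mathcal A$. The proof runs as follows.
\begin{enumerate}
\item Introduce a ghost sample $S'$. A one-sided multiplicative Bernstein bound reduces failure to the event $d_S(f,g)\le r/4$, $d_{S'}(f,g)>r/\sqrt2$.
\item Condition on the pooled sample and randomize by independent swaps.
\item Cover $\mathcal A$ in $L_2(Q)$, $Q=(P_S+P_{S'})/2$, at radius $r/128$. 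Its log-size is $\lesssim\fat_{\mathcal A}(cr)\log^C(C/r)$ by Mendelson--Vershynin.
\item Since $P_S,P_{S'}\le 2Q$, the triangle inequality moves the event to cover points $u,v$ with $d_S(u,v)<r/3$ and $d_{S'}(u,v)>2r/3$.
\item For a fixed pair, this is a lower-tail event for an average of independent $[0,1]$-valued swapped terms. Its probability is $e^{-cmr^2}$, and a union bound over pairs finishes.
\end{enumerate}
The cover error enters additively in the metric, so only at scale $r\asymp\sqrt\varepsilon$; this is what keeps the entropy at $\fat_{\mathcal A}(c\sqrt\varepsilon)$. The one-sided Chernoff bound keeps the rate at $1/r^2=\Theta(1/\varepsilon)$. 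With $r=\sqrt\varepsilon/2$ and your $\gamma\le r/4$, this gives $\alpha\le\varepsilon/4$, and the rest of your argument goes through.
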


	\paragraph{KL divergence and total variation.}
	Let \(P,Q\) be probability measures on the same measurable space
	\((\Omega,\mathcal G)\). We write \(P\ll Q\) if \(P\) is absolutely continuous with respect to \(Q\),
	meaning that \(Q(A)=0\) implies \(P(A)=0\) for every \(A\in\mathcal G\). If \(P\ll Q\), then \(dP/dQ\) denotes the
	Radon--Nikodym derivative: a \(\mathcal G\)-measurable function satisfying
	$
	P(A)=\int_A \frac{dP}{dQ}\,dQ
	$ for every $A\in\mathcal G$.
	The KL divergence is
	\[
	\KL(P\|Q):=
	\int_{\Omega}
	\log\!\left(\frac{dP}{dQ}\right)dP
	\]
	when \(P\ll Q\), and \(\KL(P\|Q):=\infty\) otherwise. If \(\nu\) is any
	measure on \((\Omega,\mathcal G)\) dominating both \(P\) and \(Q\),  The total
	variation distance is
	\[
	\TV(P,Q):=\sup_{A\in\mathcal G}|P(A)-Q(A)|
	=
	\frac12\int_{\Omega}
	\left|
	\frac{dP}{d\nu}-\frac{dQ}{d\nu}
	\right|d\nu .
	\]
	If $\nu$ dominates both $P$ and $Q$, we use the following convention for squared Hellinger distance:
	\[
	h^2(P,Q):=
	\int_{\Omega}
	\left(
	\sqrt{\frac{dP}{d\nu}}-\sqrt{\frac{dQ}{d\nu}}
	\right)^2d\nu .
	\]
	This definition is independent of the choice of $\nu$. Thus, if $p=dP/d\nu$ and $q=dQ/d\nu$, then $$\int\sqrt{pq}\,d\nu=1-h^2(P,Q)/2.$$
	
	We use the following lemma to summarize known properties of $\KL$ divergence of Bernoulli distributions. 
	\begin{lemma}
		\label{lem:bernoulli-kl}
		There is a universal constant \(C>0\) such that, for every real numbers $u,v$ satisfying 
		\(|u|,|v|\le1/4\), then
		\(\KL(\Ber(1/2+u)\|\Ber(1/2+v))\le C(u-v)^2\).
		Moreover, for every \(\kappa\in(0,1/2)\) there is \(C_\kappa>0\) such
		that, whenever \(p,q\in[\kappa,1-\kappa]\), then
		\(\KL(\Ber(p)\|\Ber(q))\le C_\kappa(p-q)^2\).
		If \(M\ge2\) and \(p,q\in[3/(4M),5/(4M)]\), then
		\(\KL(\Ber(p)\|\Ber(q))\le C M(p-q)^2\).
		If \(M\ge2\) and \(1-p,1-q\in[3/(4M),5/(4M)]\), then
		\(\KL(\Ber(p)\|\Ber(q))\le C M(p-q)^2\).
	\end{lemma}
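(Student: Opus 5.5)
The plan is to derive all four claims from one inequality: the Bernoulli KL divergence is at most the $\chi^2$-divergence, which for Bernoulli laws has the closed form
\[
\KL(\Ber(p)\|\Ber(q))\le \chi^2(\Ber(p)\|\Ber(q))=\frac{(p-q)^2}{q}+\frac{(p-q)^2}{1-q}=\frac{(p-q)^2}{q(1-q)},
\]
valid whenever $q\in(0,1)$ and $p\in[0,1]$. Each case of the lemma then reduces to a lower bound on $q(1-q)$ over the stated range.

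First I establish the inequality. Let $P=\Ber(p)$ and $Q=\Ber(q)$ with $0<q<1$, so that $P\ll Q$. Restrict the expectation to the support of $P$, using the convention $0\log 0=0$. Concavity of $\log$ and Jensen's inequality give
\[
\KL(P\|Q)=\mathbb E_P\!\left[\log\tfrac{dP}{dQ}\right]\le \log \mathbb E_P\!\left[\tfrac{dP}{dQ}\right]=\log\bigl(1+\chi^2(P\|Q)\bigr)\le \chi^2(P\|Q).
\]
Here I use $\mathbb E_P[dP/dQ]=\sum_b P(b)^2/Q(b)=1+\chi^2$. Expanding $\sum_b P(b)^2/Q(b)-1$ over the two outcomes $b\in\{0,1\}$ gives the closed form displayed above.

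Next I check the four cases.
\begin{enumerate}
\item For $p=1/2+u$ and $q=1/2+v$ with $|u|,|v|\le 1/4$, we have $q\in[1/4,3/4]$, so $q(1-q)\ge 3/16$. The bound is then $\tfrac{16}{3}(u-v)^2$.
\item For $p,q\in[\kappa,1-\kappa]$, we have $q(1-q)\ge\kappa(1-\kappa)$, so we may take $C_\kappa=1/(\kappa(1-\kappa))$.
\item For $M\ge2$ and $q\in[3/(4M),5/(4M)]$, we have $q\le 5/8$, so $q(1-q)\ge \tfrac{3}{4M}\cdot\tfrac38=\tfrac{9}{32M}$. The bound is then $\tfrac{32M}{9}(p-q)^2$.
\item The last case follows from the third by relabeling outcomes $0\leftrightarrow1$. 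This relabeling preserves KL, maps $(p,q)$ to $(1-p,1-q)$, and leaves $(p-q)^2$ unchanged.
\end{enumerate}
Taking $C=\max(16/3,32/9)$ covers the first, third and fourth cases with a single universal constant.

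No step is a real obstacle here. The only point needing care is the Jensen step when $p\in\{0,1\}$; in that case the expectation is taken over the support of $P$ alone, and the argument goes through unchanged. Every range in the statement keeps $q$ bounded away from $0$ and $1$, so $P\ll Q$ always holds and the $\chi^2$ bound is finite.
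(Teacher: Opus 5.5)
Your proof is correct and is essentially the paper's argument: both bound the Bernoulli KL by the $\chi^2$-divergence $(p-q)^2/(q(1-q))$ and then lower-bound $q(1-q)$ on each stated range. The only cosmetic difference is that the paper obtains $\KL\le\chi^2$ from the pointwise inequality $\log x\le x-1$, whereas you use Jensen followed by $\log(1+x)\le x$.
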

	The lower bounds below use standard testing reductions based on Assouad's
	lemma and Pinsker's inequality; see, for example,
	Yu~\cite{yu1997assouadfano} and Tsybakov~\cite{tsybakov2009nonparametric} for
	background on these methods. We use
	the exact elementary forms below. Randomized estimators are allowed throughout; their private randomness is treated as an additional observation independent of the unknown parameter.
	
	\begin{lemma}
		\label{lem:assouad-testing}
		Let $d \geq 1$, let \(\theta\) be uniform on \(\{-1,1\}^d\), and let \(\mathbb P_\theta\) be the
		law of the data under \(\theta\).
		Assume, for every coordinate
		\(r\) and every \(\theta\), if \(\theta^{(r)}\) is obtained by flipping only
		\(\theta_r\), then,
		$
		\KL(\mathbb P_\theta\|\mathbb P_{\theta^{(r)}})\le\frac1{50}.
		$
		Then, every estimator $\widehat\theta$ has expected Hamming error \(|\{r:\widehat\theta_r\ne\theta_r\}|\) at least \(9d/20\).
	\end{lemma}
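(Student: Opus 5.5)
The plan is to decompose the expected Hamming error coordinate by coordinate and lower bound each coordinate's error by a two-point testing argument, with Pinsker's inequality converting the KL hypothesis into a total-variation bound. By linearity of expectation,
\[
\mathbb E\,|\{r:\widehat\theta_r\ne\theta_r\}|=\sum_{r=1}^d \Pr(\widehat\theta_r\ne\theta_r),
\]
so it suffices to show $\Pr(\widehat\theta_r\ne\theta_r)\ge 9/20$ for every fixed coordinate $r$.

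First I handle randomized estimators. As the paper prescribes, the estimator's private randomness $U$ is treated as an extra observation independent of $\theta$. The law of (data, $U$) under $\theta$ is then $\mathbb P_\theta\otimes Q_U$. By the chain rule for KL divergence,
\[
\KL(\mathbb P_\theta\otimes Q_U\,\|\,\mathbb P_{\theta^{(r)}}\otimes Q_U)=\KL(\mathbb P_\theta\|\mathbb P_{\theta^{(r)}})\le 1/50 .
\]
Hence, without loss of generality, $\widehat\theta$ is a deterministic function of the augmented observation.

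Next, fix $r$ and condition on the remaining coordinates $\theta_{-r}$. Since $\theta$ is uniform, $\theta_r$ is uniform on $\{-1,1\}$ independently of $\theta_{-r}$. Write $\theta^+$ and $\theta^-$ for the two vectors agreeing with $\theta_{-r}$ and having $\theta_r=+1$ and $\theta_r=-1$, respectively; these form a flip pair $\theta^-=(\theta^+)^{(r)}$. Let $A=\{\widehat\theta_r=-1\}$. Then
\[
\Pr(\widehat\theta_r\ne\theta_r\mid\theta_{-r})
=\tfrac12\bigl(\mathbb P_{\theta^+}(A)+\mathbb P_{\theta^-}(A^c)\bigr)
\ge\tfrac12\bigl(1-\TV(\mathbb P_{\theta^+},\mathbb P_{\theta^-})\bigr),
\]
because $\mathbb P_{\theta^-}(A^c)=1-\mathbb P_{\theta^-}(A)\ge 1-\mathbb P_{\theta^+}(A)-\TV(\mathbb P_{\theta^+},\mathbb P_{\theta^-})$. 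Pinsker's inequality and the hypothesis then give
\[
\TV(\mathbb P_{\theta^+},\mathbb P_{\theta^-})\le\sqrt{\KL(\mathbb P_{\theta^+}\|\mathbb P_{\theta^-})/2}\le\sqrt{1/100}=1/10 .
\]
So the conditional error is at least $\tfrac12\cdot\tfrac{9}{10}=9/20$.

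Finally, averaging over $\theta_{-r}$ gives $\Pr(\widehat\theta_r\ne\theta_r)\ge 9/20$, and summing over the $d$ coordinates yields the bound $9d/20$. The argument is elementary. The only step needing care is the conditioning: the pair $(\theta^+,\theta^-)$ must be exactly a flip pair, so the hypothesis applies to it. It is also enough to use the KL bound in one direction, since Pinsker's inequality holds for either ordering.
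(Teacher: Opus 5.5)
Your proof is correct and follows the paper's argument: fix a coordinate and the remaining signs, lower-bound the two-point testing error by $(1-\TV)/2$, use Pinsker to get $\TV\le 1/10$, then average and sum over coordinates. Your explicit treatment of the private randomness as a product factor that leaves the KL unchanged, and your derivation of the testing inequality, fill in details the paper leaves implicit.
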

	The Assouad lemma gives an expected Hamming lower bound. The next elementary
	lemma converts such an expectation bound into a probability bound.
	
	\begin{lemma}
		\label{lem:hamming-expectation-to-probability}
		Let \(H\) be a random variable with \(0\le H\le d\). If
		\(\mathbb E H\ge 9d/20\), then \(\Pr[H\ge d/16]>1/3\).
	\end{lemma}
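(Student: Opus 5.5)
The idea is a first-moment argument on the complementary event, followed by a short arithmetic check. Write $p:=\Pr[H\ge d/16]$ and split $\mathbb E H$ according to whether $H<d/16$ or $H\ge d/16$. On the first event we bound $H$ by $d/16$, and on the second by the trivial bound $H\le d$. This gives
\[
\mathbb E H \le \frac{d}{16}(1-p) + d\,p = \frac{d}{16} + \frac{15d}{16}\,p .
\]

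Combining with the hypothesis $\mathbb E H\ge 9d/20$ and dividing by $d>0$ (the case $d=0$ is vacuous, since then $d/16=0\le H$ and the probability is $1$), we obtain
\[
\frac{9}{20}\le \frac1{16}+\frac{15}{16}p ,
\qquad\text{so}\qquad
p\ge \frac{16}{15}\left(\frac{9}{20}-\frac{1}{16}\right)=\frac{16}{15}\cdot\frac{31}{80}=\frac{31}{75}.
\]

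Since $31/75>25/75=1/3$, the claim follows, with strict inequality as required. There is no real obstacle here. The only points to handle carefully are using the non-strict threshold $H\ge d/16$ so that the complementary event satisfies $H<d/16$, and confirming that the numeric constants leave room for the strict inequality, which they do with margin.
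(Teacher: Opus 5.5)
Your argument is correct and is essentially the paper's proof: both split $\mathbb E H$ on the event $\{H\ge d/16\}$ and bound $H$ by $d/16$ and by $d$ on the two pieces. The paper drops the factor $1-p$ and argues by contradiction ($\mathbb E H\le d/16+d/3=19d/48<9d/20$), whereas you keep that factor and solve directly, obtaining the slightly sharper bound $p\ge 31/75$.
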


	The preceding two lemmas imply the following probability-form version of Assouad's method.
	
	\begin{lemma}
		\label{lem:assouad-probability}
		Let $D \geq 1$. Let \(\{P_\sigma:\sigma\in\{-1,1\}^D\}\) be a family of distributions. For \(k\in[D]\), let \(\sigma^{\oplus k}\) denote \(\sigma\) with only its \(k\)-th coordinate flipped. Assume
		\(\KL(P_\sigma\|P_{\sigma^{\oplus k}})\le 1/50\) for every
		\(\sigma\in\{-1,1\}^D\) and \(k\in[D]\). 
		Then, for every estimator
		\(\widehat\sigma\) it holds that \(\Pr_{\sigma,W}(d_H(\widehat\sigma(W),\sigma)\ge D/16)\ge 31/80,\)
		where \(\sigma\) is uniform on \(\{-1,1\}^D\), \(W\sim P_\sigma\), and \(d_H\) is
		Hamming distance.
	\end{lemma}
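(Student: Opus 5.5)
The plan is to combine Lemma~\ref{lem:assouad-testing} with Lemma~\ref{lem:hamming-expectation-to-probability}, and to account for the averaging over the uniform prior on $\sigma$. Let $W\sim P_\sigma$, let $\widehat\sigma=\widehat\sigma(W)$ be any (possibly randomized) estimator, and set $H:=d_H(\widehat\sigma(W),\sigma)$. The randomization is over both the uniform $\sigma$ and $W$, and private randomness is absorbed into $W$ as the paper's convention permits. Then $H$ is a random variable with $0\le H\le D$.

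First, the hypothesis $\KL(P_\sigma\|P_{\sigma^{\oplus k}})\le 1/50$ for every $\sigma$ and $k$ is exactly the hypothesis of Lemma~\ref{lem:assouad-testing} with $d=D$ and $\mathbb P_\theta=P_\theta$. That lemma therefore gives $\mathbb E_{\sigma,W}[H]\ge 9D/20$. I read the expected Hamming error there as averaged over the uniform prior, which is the standard Assouad form; if it is stated for the worst $\theta$ instead, one uses the Bayes version underlying its proof. Second, Lemma~\ref{lem:hamming-expectation-to-probability}, applied to $H$ on the joint probability space, gives $\Pr[H\ge D/16]>1/3$.

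The only gap is between $1/3$ and the claimed $31/80$, so I would redo the conversion quantitatively instead of quoting the lemma's stated constant. Set $p:=\Pr[H\ge D/16]$. Since $H\le D$ on that event and $H<D/16$ otherwise,
\[
\frac{9D}{20}\le \mathbb E H\le pD+(1-p)\frac{D}{16}.
\]
This gives $p\ge (9/20-1/16)/(15/16)=\frac{31/80}{15/16}=31/75\ge 31/80$. The conclusion follows with room to spare; equivalently, the crude bound $\mathbb E H\le pD+D/16$ gives exactly $p\ge 9/20-1/16=31/80$, which is evidently where the stated constant comes from.

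The main (mild) obstacle is to make sure the two lemmas are applied on the same probability space: Assouad's bound must be the Bayes-average over uniform $\sigma$, and the estimator's randomness must be included. Once $H$ is defined jointly over $(\sigma,W)$, everything else is the one-line Markov-type computation above.
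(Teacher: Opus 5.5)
Your proposal is correct and follows the paper's proof: the paper also applies Lemma~\ref{lem:assouad-testing} to get $\mathbb E H\ge 9D/20$; that bound is already a Bayes average over uniform $\sigma$, so your reading of it is right. The paper then uses the crude bound $\mathbb E H\le D/16+D\Pr[H\ge D/16]$ to obtain exactly $31/80$, and your sharper split giving $31/75$ is a harmless refinement.
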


	\paragraph{Roots and blockers.}
	In the proofs below, we often specify a family of generators
	\(\{g_\theta:\theta\in\Theta\}\) for some set $\Theta$. Here \(\theta\) is only an index selecting one
	generator from the family. A {\em root} is a prompt \(r\) where the generator is allowed
	to depend on this index \(\theta\). The {\em blocker} of \(r\) is the prefix \(b_r\)
	such that \(r=b_r1\). By setting \(p_{g_\theta}(b_r)=0\) for every
	\(\theta \in \Theta\), a trajectory that reaches \(b_r\) is forced to output \(0\), and
	therefore cannot accidentally enter the root \(r=b_r1\).
	If \(\alpha:\Theta\to A\) is any map, saying that a quantity \(B_\theta\)
	depends on \(\theta\) {\em only through} \(\alpha(\theta)\) means that if
	$
	\alpha(\theta)=\alpha(\theta')$ then
	$
	B_\theta=B_{\theta'}$ for $\theta,\theta' \in \Theta$. 
	When \(\alpha\) is constant, this means that \(B_\theta\) is independent of
	\(\theta\). The next lemma formalizes a useful property: under some conditions, the blockers prevent trajectories from entering a root unless they start at the root.

	\begin{lemma}
		\label{lem:blocked-roots-common-tails}
		Let \(M\ge2\), let \(\{g_\theta:\theta\in\Theta\}\) be stochastic generators, for some set $\Theta$.
		Let \(R\) be a finite set of roots. For every \(r\in R\), let \(b_r\) be the
		blocker of $r$ (satisfying $r = b_r 1$). Assume that the roots have the same length and
		are distinct, and that \(p_{g_\theta}(b_r)=0\) for every \(r\in R\) and every
		\(\theta\). Let \(\alpha:\Theta\to A\) be any map. Assume also
		that the following three conditions hold.
		\begin{enumerate}
			\item If a prompt is not a root, not a strict prefix of a root, and not a strict
			descendant of a root, then, for every \(1\le t\le M\), the value
			\(q_{g_\theta}^{\etoe-t}(x)\) depends on \(\theta\) only through
			\(\alpha(\theta)\).
			\item If a prompt is a strict prefix of a root, then its one-step probability
			\(p_{g_\theta}(x)\) depends on \(\theta\) only through \(\alpha(\theta)\).
			\item For every \(r\in R\) and every nonempty suffix \(u\), the \(M\)-step
			final-token probability \(q_{g_\theta}^{\etoe-M}(ru)\) depends on \(\theta\) only
			through \(\alpha(\theta)\).
		\end{enumerate}
		Then, for every prompt \(x\notin R\), the value
		\(q_{g_\theta}^{\etoe-M}(x)\) depends on \(\theta\) only through
		\(\alpha(\theta)\).
	\end{lemma}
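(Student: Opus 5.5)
The plan is to argue by strong induction on the horizon, using the one-step recursion
\[
q_{g_\theta}^{\etoe-t}(x)=p_{g_\theta}(x)\,q_{g_\theta}^{\etoe-(t-1)}(x1)+(1-p_{g_\theta}(x))\,q_{g_\theta}^{\etoe-(t-1)}(x0),
\qquad q_{g_\theta}^{\etoe-1}=p_{g_\theta},
\]
which follows by conditioning on \(Z_1\). Fix \(x\notin R\). By the hypothesis that all roots are distinct and have the same length \(\ell\), there are three exhaustive cases for \(x\): it is unrelated to every root (neither a strict prefix nor a strict descendant of any root); it is a strict descendant \(ru\) of some root \(r\), with \(u\) nonempty; or it is a strict prefix of at least one root. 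Unrelated prompts are handled directly by Condition~1 with \(t=M\). Strict descendants \(ru\) are handled directly by Condition~3. So all the work lies in prompts that are strict prefixes of roots.

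For strict prefixes, I would prove a stronger claim by downward induction on \(|x|\) from \(\ell-1\) to \(0\). The claim is that for every strict prefix \(x\) of a root, every horizon \(t\) with \(\ell-|x|\le t\le M\) (actually for all \(1\le t\le M\)), and every child \(xb\), the quantity \(q^{\etoe-t}(x)\) depends only on \(\alpha(\theta)\). Horizons \(t<M\) must be tracked too, because the children are reached with a reduced horizon \(t-1\).

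The first step in carrying this out is to expand \(q^{\etoe-t}(x)\) by the recursion. Condition~2 guarantees that \(p_{g_\theta}(x)\) depends only on \(\alpha\), so it remains to control \(q^{\etoe-(t-1)}(xb)\) for \(b\in\{0,1\}\). Each child \(xb\) falls into one of the following cases.
\begin{enumerate}
\item \(xb\) is a strict prefix of a root. Here the induction hypothesis applies.
\item \(xb\) is unrelated to all roots. Here Condition~1 applies at horizon \(t-1\); this needs \(t-1\ge1\), and \(t=1\) is trivial anyway.
\item \(xb=r\) is itself a root. Then \(b=1\) and \(x=b_r\), so \(p_{g_\theta}(x)=0\) and the term involving \(q(r)\) is multiplied by zero and drops out. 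The distinct-equal-length assumption ensures no other root sits at \(x0\) in a conflicting way: if \(x0\) were also a root, it would need its own blocker \(x\), which forces \(x0=x1\) — impossible.
\end{enumerate}
One subtlety is that a strict prefix cannot have a child that is a strict descendant of a root, since its length is at most \(\ell\).

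The main obstacle I expect is that induction over the horizon needs Condition~3 (or something like it) at horizons \(t<M\) for descendants of roots, but Condition~3 is only stated for horizon \(M\). I would avoid this by organizing the recursion so that descendants are never reached. From a strict prefix, trajectories can enter a root only through its blocker, which happens with probability zero. Hence the trajectory from a strict prefix stays among prefixes and unrelated prompts, whose values at all horizons are controlled by Condition~1, the induction hypothesis, and Condition~2. The blocking step is exactly where \(p_{g_\theta}(b_r)=0\) is used, and verifying that it kills every path into \(R\) is the crux of the argument. A coupling phrasing also works: the law of the trajectory from \(x\) never charges \(R\).
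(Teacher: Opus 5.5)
Your proposal is correct and follows essentially the same route as the paper. In both, the case split sends unrelated prompts to Condition~1 and strict descendants to Condition~3. For strict prefixes, both use the one-step recursion: Condition~2 controls $p_{g_\theta}(x)$, each child is either a strict prefix or unrelated (Condition~1 at horizon $t-1$), and a root child $x1$ is killed by $p_{g_\theta}(b_r)=0$. The only cosmetic difference is that the paper inducts on the horizon $t$ over all strict prefixes at once, whereas you induct downward on $|x|$ while carrying all horizons $1\le t\le M$; both orderings work.
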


	\section{Simultaneous exact-scale lower bound}
	\label{sec:same_scale}
	
	We first fix constants used throughout this section. For odd $L$, define
	\[
	\phi_L(\theta)
	:=
	\Pr\{\operatorname{Bin}(L,1/2+\theta)\ge (L+1)/2\}.
	\]
	Since $L$ is odd, $\phi_L(-\theta)=1-\phi_L(\theta)$ whenever
	$1/2\pm\theta\in[0,1]$. Then $\phi_L(0)=1/2$, $\phi_L$ is differentiable at $0$, and
	\[
	\phi_L'(0)
	=
	L\binom{L-1}{(L-1)/2}2^{-(L-1)}.
	\]
	This derivative tends to infinity with $L$. Choose universal constants
	$a,\alpha,\beta>0$ such that $a^2<1$, $\alpha>1$, equivalently $\alpha^2>1$ for the zero-sample lower bound below, and $\beta>4$, so that $\beta^2/16>1$. Choose one fixed odd integer $L$
	large enough and then choose $\varepsilon_0>0$ small enough so that, for every
	$0<\varepsilon\le\varepsilon_0$, there are numbers
	$\theta_\varepsilon^{(1)}$ and $\theta_\varepsilon^{(\beta)}$ satisfying
	\[
	\phi_L(\theta_\varepsilon^{(1)})=\frac12+\sqrt\varepsilon,
	\qquad
	\phi_L(\theta_\varepsilon^{(\beta)})=\frac12+\beta\sqrt\varepsilon,
	\]
	and in addition
	\[
	c\sqrt\varepsilon
	\le
	|\theta_\varepsilon^{(1)}|,
	|\theta_\varepsilon^{(\beta)}|
	\le
	a\sqrt\varepsilon
	\]
	for a universal constant $c>0$. This follows from the inverse function theorem
	at $0$, after fixing $L$, because $\phi_L'(0)$ can be made arbitrarily large.
	Decrease $\varepsilon_0$, if necessary, so that all transition probabilities
	below lie in $[0,1]$, and also so that $\varepsilon_0\le 1/32$. Put
	\(M_0:=L+3\).
	
	\begin{lemma}
		\label{lem:positive-base-CoT-block}
		There are universal constants $c,C>0$ such that, for every integer $N\ge1$,
		every $M\ge M_0$, and every $0<\varepsilon\le\varepsilon_0$, there is a finite
		stochastic generator class $\mathcal G_N$, with a distinguished fixed element
		$g_0\in\mathcal G_N$, such that
		\[
		m_{\rm base}^{\mathcal G_N}(\varepsilon)\le C/\varepsilon,
		\qquad
		m_{\ct}^{\mathcal G_N,M}(\varepsilon)\le CN/\varepsilon, \qquad m_{\ct}^{\mathcal G_N,M}(\varepsilon)\ge cN/\varepsilon
		\]
		% and
		% \(m_{\ct}^{\mathcal G_N,M}(\varepsilon)\ge cN/\varepsilon\).
	\end{lemma}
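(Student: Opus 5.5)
The plan is an explicit construction built on the majority map $\phi_L$ fixed above. We take $N$ roots $r_1,\ldots,r_N$ of equal length, each with its blocker $b_{r_j}$ (so $r_j=b_{r_j}1$ and $p_g(b_{r_j})=0$ for every generator). For $\sigma\in\{-1,1\}^N$ the generator $g_\sigma$ is defined below each root $r_j$ as follows. In the first $L$ steps from $r_j$ (states $r_jz$ with $|z|<L$) it emits independent bits with probability $1/2+\sigma_j\theta_\varepsilon^{(\beta)}$. At step $L+1$ (states $r_jz$ with $|z|=L$) it outputs deterministically the majority of $z$. At every later state it deterministically copies the previous token. Every other prompt gets a fixed, $\sigma$-independent rule, say constant $0$. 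Since $M\ge M_0=L+3$, the final token from $r_j$ equals the majority bit. Hence $q^{\etoe-M}_{g_\sigma}(r_j)=\phi_L(\sigma_j\theta^{(\beta)}_\varepsilon)=\tfrac12+\beta\sqrt\varepsilon\,\sigma_j$, using the symmetry $\phi_L(-\theta)=1-\phi_L(\theta)$. Let $\mathcal G_N=\{g_\sigma\}$ and let the distinguished element be $g_0:=g_{(1,\ldots,1)}$. By Lemma~\ref{lem:blocked-roots-common-tails}, applied with $\alpha$ constant, every non-root prompt has an $M$-step value independent of $\sigma$. Conditions 1 and 2 are immediate. For condition 3, any state strictly below $r_j$ either still sits in the i.i.d.\ phase, where the remaining horizon forces a deterministic tail, or is already in the deterministic phase. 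I will check this condition carefully by choosing the post-majority rule so that all descendants with a nonempty suffix have $\sigma$-free $M$-step values. If that fails, I will instead route the deterministic tail so that the copied bit is read only from the root's own horizon.

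\emph{Base bound.} Every one-step probability lies in $\{1/2\pm\theta^{(\beta)}_\varepsilon\}$ at sign-dependent states and is known exactly elsewhere. The learner therefore predicts $1/2$ at the sign-dependent states and the fixed value at all other states. Its pointwise squared error is at most $|\theta^{(\beta)}_\varepsilon|^2\le a^2\varepsilon<\varepsilon$. This uses zero samples, so $m_{\rm base}^{\mathcal G_N}(\varepsilon)\le C/\varepsilon$ trivially.

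\emph{$\ct$ upper bound.} The learner takes $n=CN/\varepsilon$ trajectories. For each root $r_j$ that is hit, it takes the majority of the first $L$ revealed bits, pooled over all trajectories from $r_j$, as $\widehat\sigma_j$ (with $\widehat\sigma_j:=1$ if $r_j$ is not hit). It outputs $\widehat q(r_j)=\tfrac12+\beta\sqrt\varepsilon\,\widehat\sigma_j$ and the known values off the roots. Let $p_j=P(r_j)$. The number of hits at $r_j$ is about $np_j$, and each hit gives $L$ bits with bias at least $c\sqrt\varepsilon$. Hoeffding together with a Chernoff bound on the hit count give $\Pr[\widehat\sigma_j\ne\sigma_j]\le 2e^{-c'np_j\varepsilon}$. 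The expected loss is then
\[
4\beta^2\varepsilon\sum_j p_j\Pr[\widehat\sigma_j\ne\sigma_j]\le 8\beta^2\varepsilon\, N\max_{x\ge0}xe^{-c'n\varepsilon x}\le \frac{8\beta^2\varepsilon}{e c' C}.
\]
This is at most $\varepsilon/3$ for large $C$, and Markov's inequality gives loss at most $\varepsilon$ with probability at least $2/3$.

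\emph{$\ct$ lower bound.} Let $P$ be uniform on the roots, take $n=cN/\varepsilon$, and apply Lemma~\ref{lem:assouad-probability} with $D=N$. Flipping $\sigma_k$ changes only the law of trajectories that start at $r_k$, and only through the $L$ i.i.d.\ bits. By Lemma~\ref{lem:bernoulli-kl}, one such trajectory contributes KL at most $LC(2a\sqrt\varepsilon)^2$. Hence the total KL is at most $(n/N)\cdot 4CLa^2\varepsilon=4cCLa^2\le 1/50$ for small $c$, since $L$ is a fixed constant. Given any output $\widehat q$, define $\widehat\sigma_j:=\operatorname{sign}(\widehat q(r_j)-1/2)$. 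Whenever $\widehat\sigma_j\ne\sigma_j$ we have $|\widehat q(r_j)-q(r_j)|\ge\beta\sqrt\varepsilon$. By Lemma~\ref{lem:assouad-probability}, with probability at least $31/80>1/3$ at least $N/16$ signs are wrong, and then the loss is at least $\beta^2\varepsilon/16>\varepsilon$. This proves $m_{\ct}^{\mathcal G_N,M}(\varepsilon)\ge cN/\varepsilon$.

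The main obstacle is the bookkeeping that makes non-root prompts, including intermediate descendants, $\sigma$-free so that Lemma~\ref{lem:blocked-roots-common-tails} applies. The statistical parts are routine.
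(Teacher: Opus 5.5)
Your overall plan matches the paper's: majority-of-$L$ amplification, a zero-sample base learner, per-root Hoeffding plus the $xe^{-\lambda x}$ bound for the $\ct$ upper bound, and Assouad on the uniform law over roots for the lower bound. But the construction as written has a real defect, exactly where you flagged it. With the tail rule ``majority at suffix length $L$, then copy forever,'' condition~3 of Lemma~\ref{lem:blocked-roots-common-tails} fails. Take a strict descendant $r_ju$ with $1\le|u|<L$. The next $L-|u|$ tokens are still i.i.d.\ $\Ber(\tfrac12+\sigma_j\theta_\varepsilon^{(\beta)})$, and the majority of $u$ together with these tokens is then copied up to time $M$. Hence
\[
q^{\etoe-M}_{g_\sigma}(r_ju)=\Pr\Bigl[\,k_u+\operatorname{Bin}\bigl(L-|u|,\tfrac12+\sigma_j\theta_\varepsilon^{(\beta)}\bigr)\ge \tfrac{L+1}{2}\Bigr],
\]
where $k_u$ is the number of ones in $u$. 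For $u=0$ this takes different values for $\sigma_j=\pm1$, since $L\ge3$. Your justification misses that the ``deterministic tail'' is a function of bits not yet drawn. So the claim that every non-root prompt has a $\sigma$-free value is false. Your $\ct$ learner outputs ``the known values off the roots,'' and such values do not exist on these prompts. For example, with $P$ uniform on $r_10,\ldots,r_N0$, your risk bound says nothing. The base bound and the $\ct$ lower bound are unaffected.

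The missing idea is to time the readout by the root's own horizon, so that the $M$-th token generated from any strict descendant lands in a fixed region. The paper reads $\mathbf 1[\sum_{\ell=1}^L z_\ell\ge (L+1)/2]$ exactly at $|z|=M-1$, and sets transitions to $1/2$ for $L\le|z|<M-1$ and for $|z|\ge M$. From $r_ju$ with $u\neq\emptyset$, the $M$-th step then occurs at suffix length $|u|+M-1\ge M$, so every descendant has value $1/2$. In your scheme, the same effect comes from copying only at suffix lengths $L+1,\ldots,M-1$ and fixing all transitions at suffix length $\ge M$ (this uses $M\ge L+3$). With that change, the rest of your argument goes through unchanged.

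Alternatively, you could keep your construction and have the $\ct$ learner estimate $\sigma_j$ from all trajectories starting in $\{r_ju:|u|<L\}$; each such trajectory reveals at least one biased bit. The learner then predicts the binomial tails above. A wrong sign costs only $O(L\theta_\varepsilon^{(\beta)})=O(\sqrt\varepsilon)$ there, so the same $xe^{-\lambda x}$ bound still gives risk $O(N/n)$.

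The paper also includes an extra root $r_0$ with bias $\tfrac12+\alpha\sqrt\varepsilon\,\tau$, $\alpha>1$. It is irrelevant to this lemma as stated, but it is what later guarantees $m^{\cF}_{\rm base}(\varepsilon)>0$ in Theorem~\ref{thm:simultaneous-exact-scale-taxonomy}. That proof also uses that only the roots carry sign-dependent $M$-step values.
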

	
	\begin{proof}
		The construction is as follows. Choose distinct binary strings $u,v_1,\ldots,v_N$, all of the same length, and
		set
		$
		r_0:=u1$
		and $r_j:=v_j1$ for every $1\le j\le N$.
		The strings $u,v_1,\ldots,v_N$ are blockers for the roots
		$r_0,r_1,\ldots,r_N$.
		A generator is indexed by $\tau\in\{-1,1\}$ and
		$\sigma\in\{-1,1\}^N$. Set
		\[
		p_{g_{\tau,\sigma}}(u)=0,
		\qquad
		p_{g_{\tau,\sigma}}(v_j)=0\quad 1\le j\le N.
		\]
		At the base root set
		\(
		p_{g_{\tau,\sigma}}(r_0)
		=
		\frac12+\alpha\sqrt\varepsilon\,\tau ,
		\)
		and set $p_{g_{\tau,\sigma}}(r_0z)=1/2$ for every nonempty suffix $z$.
		For each $j\in[N]$, set
		\[
		p_{g_{\tau,\sigma}}(r_jz)
		=
		\frac12+\theta_\varepsilon^{(\beta)}\sigma_j
		\]
		for every suffix $z$ with $0\le |z|<L$. Set
		$p_{g_{\tau,\sigma}}(r_jz)=1/2$ for $L\le |z|<M-1$. For every suffix $z$
		with $|z|=M-1$, set
		\[
		p_{g_{\tau,\sigma}}(r_jz)
		=
		\mathbf 1\!\left[\sum_{\ell=1}^L z_\ell\ge (L+1)/2\right].
		\]
		For $|z|\ge M$, set $p_{g_{\tau,\sigma}}(r_jz)=1/2$. On all remaining
		states, set the transition probability to $1/2$. Let
		\[
		\mathcal G_N
		:=
		\{g_{\tau,\sigma}:\tau\in\{-1,1\},\sigma\in\{-1,1\}^N\}.
		\]
		Let $g_0$ be any fixed element of $\mathcal G_N$.
		
		Starting from $r_j$, the first $L$ generated bits are independent Bernoulli
		variables with mean $1/2+\theta_\varepsilon^{(\beta)}\sigma_j$, and the
		$M$-th transition is their majority. The displayed value follows from the
		definition of $\theta_\varepsilon^{(\beta)}$ when $\sigma_j=1$ and from
		$\phi_L(-\theta)=1-\phi_L(\theta)$ when $\sigma_j=-1$. Therefore
		\[
		q_{g_{\tau,\sigma}}^{\etoe-M}(r_j)
		=
		\frac12+\beta\sqrt\varepsilon\,\sigma_j .
		\]
		By Lemma~\ref{lem:blocked-roots-common-tails}, applied to the root set
		$\{r_0,r_1,\ldots,r_N\}$
		with the same fixed value at every root, every non-root prompt
		has $M$-step value independent of $(\tau,\sigma)$. Indeed, the roots are
		blocked; strict-prefix transitions are fixed; outside rooted subtrees all
		transitions are fixed; and from any strict descendant $rz$, $z\ne\emptyset$, of
		a root $r\in\{r_0,r_1,\ldots,r_N\}$, the final transition after $M$ additional
		steps occurs at suffix length at least $M$, where the transition is fixed to
		$1/2$. Also $q_{g_{\tau,\sigma}}^{\etoe-M}(r_0)=1/2$, because after the first
		transition from $r_0$ all descendant transitions are $1/2$. Thus the only
		parameter-dependent $M$-step values are at $r_1,\ldots,r_N$.
		
		We prove the base upper bound. For each $\widehat\tau\in\{-1,1\}$, use the
		one-step predictor that predicts $1/2+\alpha\sqrt\varepsilon\,\widehat\tau$ at
		$r_0$, predicts $1/2$ at all weak states $r_jz$ with $0\le |z|<L$,
		predicts the displayed majority value at $r_jz$ with $|z|=M-1$, predicts
		$0$ on the blockers, and predicts the fixed transition probability elsewhere.
		For $\widehat\tau=\tau$, the only possible errors are at weak states, and each
		has squared error at most
		\((\theta_\varepsilon^{(\beta)})^2\le a^2\varepsilon\).
		Thus, under every one-step input distribution, one of these two predictors has
		squared loss at most $a^2\varepsilon$. Since $a^2<1$, apply
		Lemma~\ref{lem:finite-fast-rate} with excess accuracy
		$(1-a^2)\varepsilon$ to get
		\(m_{\rm base}^{\mathcal G_N}(\varepsilon)\le C/\varepsilon\).
		
		We prove the $\ct$ upper bound. The learner predicts the fixed $M$-step value
		outside $r_1,\ldots,r_N$. For each $j$, it estimates $\sigma_j$ from the
		trajectories whose prompt is $r_j$. If $N_j$ is the number of such
		trajectories, then, conditional on $N_j$, the learner observes $LN_j$
		independent Bernoulli samples with mean
		$1/2+\theta_\varepsilon^{(\beta)}\sigma_j$. Since
		$(\theta_\varepsilon^{(\beta)})^2\ge c\varepsilon$, Hoeffding's inequality
		gives
		\[
		\Pr[\widehat\sigma_j\ne\sigma_j\mid N_j]
		\le
		\exp(-c\varepsilon N_j).
		\]
		The learner predicts $1/2+\beta\sqrt\varepsilon\,\widehat\sigma_j$ at $r_j$.    
		Let \(\mathcal S\) denote the training sample, and let \(\widehat q\) be the learner's output.
		Then, for every prompt distribution \(P\),
		\[
		\mathbb E_{\mathcal S}\mathbb E_{X\sim P}
		\left[
		\left(\widehat q(X)-q_{g_{\tau,\sigma}}^{\etoe-M}(X)\right)^2
		\right]
		\le
		C\varepsilon
		\sum_{j=1}^N P(r_j)\mathbb E[\exp(-c\varepsilon N_j)].
		\]
		Since \(N_j\sim\operatorname{Bin}(n,P(r_j))\),
		\[
		\mathbb E[\exp(-c\varepsilon N_j)]
		\le
		\exp(-c'\varepsilon nP(r_j)).
		\]
		Therefore,
		\[
		\mathbb E_{\mathcal S}\mathbb E_{X\sim P}
		\left[
		\left(\widehat q(X)-q_{g_{\tau,\sigma}}^{\etoe-M}(X)\right)^2
		\right]
		\le
		C\varepsilon\sum_{j=1}^N P(r_j)\exp(-c'\varepsilon nP(r_j)).
		\]
		Using \(xe^{-\lambda x}\le1/(e\lambda)\), with \(x=P(r_j)\) and
		\(\lambda=c'\varepsilon n\), gives
		\[
		\mathbb E_{\mathcal S}\mathbb E_{X\sim P}
		\left[
		\left(\widehat q(X)-q_{g_{\tau,\sigma}}^{\etoe-M}(X)\right)^2
		\right]
		\le
		\frac{CN}{n}.
		\]
		
		Taking $n=CN/\varepsilon$, and increasing $C$, makes this expectation at most
		$\varepsilon/3$. Markov's inequality gives squared loss at most $\varepsilon$
		with probability at least $2/3$. Hence
		\(m_{\ct}^{\mathcal G_N,M}(\varepsilon)\le CN/\varepsilon\).
		
		We prove the $\ct$ lower bound. Put the prompt distribution uniformly on
		$r_1,\ldots,r_N$, draw $\sigma$ uniformly from $\{-1,1\}^N$, and fix
		$\tau$. Flipping $\sigma_j$ changes one full trajectory only when
		$X=r_j$, an event of probability $1/N$. Conditional on $X=r_j$, the only
		changed conditional laws are the first $L$ Bernoulli transitions. By the chain
		rule for KL and Lemma~\ref{lem:bernoulli-kl}, one trajectory has adjacent KL at
		most $C\varepsilon/N$. Therefore $n$ trajectories have adjacent KL at most
		$Cn\varepsilon/N$. If $n\le cN/\varepsilon$, this is at most the constant
		required in Lemma~\ref{lem:assouad-testing}. Lemmas~\ref{lem:assouad-testing}
		and~\ref{lem:hamming-expectation-to-probability} imply that, with probability
		larger than $1/3$, every estimator makes at least $N/16$ sign errors after
		decreasing $c$, if necessary.
		
		Decode a regression estimate by setting $\widehat\sigma_j=1$ iff
		$\widehat q(r_j)\ge1/2$. Each wrong sign contributes at least
		$\beta^2\varepsilon$ to the squared loss at its root. Under the uniform prompt
		distribution, on the event of at least $N/16$ sign errors, the squared loss is
		at least $\beta^2\varepsilon/16$. Since $\beta^2/16>1$, this is larger than
		$\varepsilon$. Therefore the average failure probability over the random signs,
		the sample, and the learner randomness is greater than $1/3$, so there exists at
		least one fixed sign vector for which the learner fails with probability greater
		than $1/3$. Hence no learner with $n\le cN/\varepsilon$ can satisfy the uniform
		PAC guarantee, and
		\(m_{\ct}^{\mathcal G_N,M}(\varepsilon)\ge cN/\varepsilon\).
	\end{proof}
	
	The second block keeps the base and \(\ct\) problems small while making the $\etoe$ problem large.
	
	\begin{lemma}
		\label{lem:base-friendly-endpoint-block}
		There are universal constants $c,C>0$ such that, for every $B\ge1$, every
		$M\ge M_0$, and every $0<\varepsilon\le\varepsilon_0$, there is a finite
		stochastic generator class $\mathcal H_B$ and a distinguished inactive generator
		$h_0\in\mathcal H_B$ such that
		\[
		m_{\rm base}^{\mathcal H_B}(\varepsilon)\le C/\varepsilon,
		\qquad
		m_{\ct}^{\mathcal H_B,M}(\varepsilon)\le C/\varepsilon, \qquad m_{\etoe}^{\mathcal H_B,M}(\varepsilon)>BM/\varepsilon
		\]
		% and
		% \(m_{\etoe}^{\mathcal H_B,M}(\varepsilon)>BM/\varepsilon\).
	\end{lemma}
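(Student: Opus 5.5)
The plan is to realize the construction from the proof sketch. We use a blocked root set $\{x_0,x_1,\ldots,x_d\}$ with $d$ chosen huge at the end ($d\gg B^2M^2/\varepsilon^3$ will do), and rely on Lemma~\ref{lem:blocked-roots-common-tails} to confine all parameter dependence of the $M$-step values to the roots. We take distinct equal-length blockers $b_{x_j}$ with $p(b_{x_j})=0$ for every generator, and set all transitions off the rooted subtrees to $1/2$.

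The generators are $h_{b,s}$ for $b\in\{0,1\}$ and $s\in\{-1,1\}^d$, defined as follows.
\begin{itemize}
\item From every root $x_j$, the first transition is $p(x_j)=1$ and the second is $p(x_j1)=b$.
\item At $x_0$, all later transitions copy the second token, i.e.\ $p(x_01bz)=b$. Hence $q^{\etoe-M}(x_0)=b$.
\item At $x_j$ with $j\ge1$, the next $L$ transitions are $1/2+\theta^{(1)}_\varepsilon s_j$. Intermediate transitions are $1/2$. The $M$-th transition, at suffix length $M-1$, is the majority of those $L$ bits. Transitions at suffix length $\ge M$ are $1/2$.
\end{itemize}
This fits because $M\ge M_0=L+3$. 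By the choice of $\theta^{(1)}_\varepsilon$ and the symmetry of $\phi_L$, this gives $q^{\etoe-M}(x_j)=1/2+\sqrt\varepsilon\,s_j$. The inactive generator $h_0$ sets $p(x_j)=0$ and $1/2$ everywhere below. So its first token from any root is $0$, and $q^{\etoe-M}_{h_0}(x_j)=1/2$ for all $j$. Conditions 1--3 of Lemma~\ref{lem:blocked-roots-common-tails} hold with constant $\alpha$, exactly as in Lemma~\ref{lem:positive-base-CoT-block}: from a strict descendant of a root, the $M$-th transition lies at suffix length $\ge M$, where it is fixed. Therefore only the roots carry parameter-dependent $M$-step values.

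For the base bound, we take the three one-step predictors indexed by $\{h_0\}\cup\{b=0,1\}$. Each predicts the exact deterministic/majority transitions and the value $1/2$ at the weak states. The correct one errs only at weak states, by $(\theta^{(1)}_\varepsilon)^2\le a^2\varepsilon$. Lemma~\ref{lem:finite-fast-rate} with excess $(1-a^2)\varepsilon$ then gives $m_{\rm base}^{\mathcal H_B}(\varepsilon)\le C/\varepsilon$.

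For $\ct$, the learner draws $n=C/\varepsilon$ trajectories. If some trajectory starts at a root, its first two tokens reveal whether the target is $h_0$ and, if not, the bit $b$. The learner then predicts the exact value at $x_0$ and $1/2$ at $x_1,\ldots,x_d$, for loss at most $\varepsilon P(\{x_1,\ldots,x_d\})\le\varepsilon$. If no root appears, it predicts $1/2$ at all roots, with loss at most $P(x_0)/4+\varepsilon P(R)$. When $P(R)\le2\varepsilon$ this is at most $\varepsilon/2+2\varepsilon^2\le\varepsilon$, using $\varepsilon\le1/32$. When $P(R)>2\varepsilon$, the no-root event has probability at most $e^{-2C}<1/3$. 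Either way the learner succeeds with probability $2/3$, without needing to know $P$.

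The $\etoe$ lower bound is the main obstacle, because the losses must strictly exceed $\varepsilon$. The prompt distribution puts $\eta=\varepsilon/(BM)$ on $x_0$ and $(1-\eta)/d$ on each $x_j$. We draw $b$ and $s$ uniformly and take $n\le BM/\varepsilon$ samples.
\begin{itemize}
\item With probability $(1-\eta)^n\ge e^{-1}\cdot(1-o(1))$, no sample hits $x_0$. The final bits from the other roots are independent of $b$ given $s$, so the posterior of $b$ stays uniform. Any prediction at $x_0$ then has conditional expected loss $\ge\eta/4$.
\item Since $d\gg n$, the number of roots $x_j$ with at least one sample is at most $n$. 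Each unseen $s_j$ keeps a uniform posterior, so the expected loss there is at least $\varepsilon$, with equality at prediction $1/2$.
\end{itemize}
The expected loss restricted to this event is therefore at least $\eta/4+\varepsilon(1-\eta)(1-n/d)$. This exceeds $\varepsilon(1+\eta/8)$ once $d\ge 8n/\eta$.

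Converting this expectation into a failure probability above $1/3$ needs care. I would condition on "$x_0$ unseen" and use concentration: the unseen-signs loss is a sum of many independent bounded terms of size $O(\varepsilon/d)$, so it concentrates at its mean up to $o(\eta\varepsilon)$ when $d$ is huge. Meanwhile the $x_0$ loss is $\ge\eta/4$ with probability $1/2$ over $b$, since no single prediction can be within $1/2$ of both $0$ and $1$. The combined failure probability is then about $e^{-1}/2\cdot(1-o(1))$. This is below $1/3$, so the step needs strengthening. My first attempt would be to place probability $\eta'=\varepsilon/(cBM)$ with small $c$ on $x_0$, so that the miss probability $(1-\eta')^{n}$ approaches $1$. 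Combining a success probability close to $1/2$ with a second source of failure is likely not enough. The intended repair is to randomize $b$ and note that, conditionally on missing $x_0$, the learner fails whenever its $x_0$ prediction is on the wrong side of $1/2$, which happens with probability exactly $1/2$. To push the failure probability past $1/3$, I would additionally count failures where the prediction at $x_0$ lies in the middle, i.e.\ where the $x_0$ loss is $\ge\eta/16$ for both values of $b$, and take the worst fixed $(b,s)$ at the end. This would give $m_{\etoe}^{\mathcal H_B,M}(\varepsilon)>BM/\varepsilon$.
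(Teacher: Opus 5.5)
Your construction, the base bound, and the $\ct$ bound follow the paper's proof. There is one slip. At $x_0$ the copy rule $p(x_01bz)=b$ must be confined to suffix lengths $\le M-1$. Otherwise the non-root prompt $x_01$ has $M$-step value $b$ under $h_{b,s}$ but $1/2$ under $h_0$, which violates condition~3 of Lemma~\ref{lem:blocked-roots-common-tails}. The paper uses $b$ only at suffix lengths $1$ and $M-1$ below $x_0$, and $1/2$ at all other lengths, in particular at all lengths $\ge M$.

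The genuine gap is the final step of the $\etoe$ lower bound. With anchor mass $\eta=\varepsilon/(BM)$, no accounting of middle predictions can push the failure probability above $1/3$ at $n=\lfloor BM/\varepsilon\rfloor$ (the only sample size that needs to be ruled out). Consider the learner that reads $b$ off the deterministic final bit if $x_0$ was sampled, and otherwise predicts a uniformly random bit at $x_0$; at every hidden root it predicts $1/2$. Its loss is exactly $(1-\eta)\varepsilon+\eta\,\mathbf 1[\text{wrong guess}]$. So it fails with probability $(1-\eta)^n/2\le e^{-n\eta}/2$, which is about $1/(2e)<1/3$ for every $(b,s)$. It never predicts in the middle, so your repair has nothing to count: this prompt distribution is simply not hard enough.

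The missing step is the idea you wrote down and then discarded, but with the constant on the correct side. Your $\varepsilon/(cBM)$ increases the anchor mass; instead take $\eta=c_\eta\varepsilon/(BM)$ with a small universal $c_\eta$. Then for every $n\le\lceil BM/\varepsilon\rceil$, the event $E$ that $x_0$ is never sampled has probability at least $1-n\eta\ge1-\rho_0$.

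On $E$ the data are independent of $b$, so the anchor loss is at least $\eta/4$ with probability at least $1/2$. Your concentration step (Chebyshev over the unsampled hidden roots, with $d\ge C\varepsilon n_0/\eta$ and $d\ge C\varepsilon/\eta^2$) gives unsampled hidden loss at least $\varepsilon-\eta/8$ with conditional probability at least $1-\rho_0$. Hence the total loss exceeds $\varepsilon+\eta/8$ with probability at least $(1-\rho_0)^2/2>1/3$ once $\rho_0<1/10$.

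Your concern that a failure probability close to $1/2$ is likely not enough is unfounded: $1/2>1/3$, and no second source of failure is needed. Averaging over the uniformly random $(b,s)$ then yields a fixed bad parameter, as you intended and as the paper does.
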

	
	\begin{proof}
		The construction is as follows. Choose universal constants $\rho_0,c_\eta,C_d>0$ as follows. First choose
		$\rho_0\in(0,1/10)$ so that $(1-\rho_0)^2/2>1/3$. Then choose
		$c_\eta>0$ so small that $2c_\eta\le\rho_0$. Finally choose $C_d$ large
		enough for the estimates below, and also so that $C_d\ge32$. Define
		$
		\eta:=c_\eta\varepsilon/(BM)$ and
		$n_0:=\lceil BM/\varepsilon\rceil$. 
		Choose an integer $d$ satisfying
		$
		d\ge C_d\varepsilon n_0/\eta
		$ and 
		$d\ge C_d\varepsilon/\eta^2$. 
		
		Choose distinct binary strings $w_0,w_1,\ldots,w_d$, all of the same length, and
		set
		\[
		x_0:=w_01,
		\qquad
		x_j:=w_j1\quad 1\le j\le d.
		\]
		The strings $w_0,w_1,\ldots,w_d$ are blockers for the roots
		$x_0,x_1,\ldots,x_d$.
		
		For $b\in\{0,1\}$ and $s\in\{\pm1\}^d$, define an active generator
		$h_{b,s}$ as follows. Set
		\[
		p_{h_{b,s}}(w_j)=0\quad 0\le j\le d,
		\qquad
		p_{h_{b,s}}(x_j)=1\quad 0\le j\le d.
		\]
		Thus every active root first emits the selector bit $1$. For every root $r$
		and every suffix $z$ with $|z|=1$, set
		\(p_{h_{b,s}}(rz)=b\).
		Thus, one full trajectory from any active root reveals $b$ to a $\ct$ learner.
		For the anchor root $x_0$, set all nonterminal transitions after the
		deterministic $b$-transition to $1/2$, except that for every suffix $z$
		with $|z|=M-1$, set
		\(p_{h_{b,s}}(x_0z)=b\).
		For every suffix $z$ below $x_0$ with $|z|\ge M$, set
		$p_{h_{b,s}}(x_0z)=1/2$. Then
		\(q_{h_{b,s}}^{\etoe-M}(x_0)=b\).
		
		For each hidden root $x_j$, $1\le j\le d$, set
		\[
		p_{h_{b,s}}(x_jz)
		=
		\frac12+\theta_\varepsilon^{(1)}s_j
		\]
		for every suffix $z$ with $2\le |z|<L+2$. Set all other nonterminal
		transitions below $x_j$ to $1/2$, except that for every suffix $z$ with
		$|z|=M-1$, set
		\[
		p_{h_{b,s}}(x_jz)
		=
		\mathbf 1\!\left[\sum_{\ell=3}^{L+2}z_\ell\ge (L+1)/2\right].
		\]
		For $|z|\ge M$, set $p_{h_{b,s}}(x_jz)=1/2$. By the definition of
		$\theta_\varepsilon^{(1)}$,
		\[
		q_{h_{b,s}}^{\etoe-M}(x_j)
		=
		\frac12+\sqrt\varepsilon\,s_j .
		\]
		
		On every remaining state, except the blockers, set the transition probability to
		$1/2$. Define the inactive generator $h_0$ by
		\[
		p_{h_0}(x_j)=0\quad 0\le j\le d,
		\qquad
		p_{h_0}(w_j)=0\quad 0\le j\le d,
		\]
		and $p_{h_0}(y)=1/2$ on every remaining state. Thus, for $h_0$, the selector
		transition at every root is inactive. Let
		\[
		\mathcal H_B
		:=
		\{h_0\}\cup\{h_{b,s}:b\in\{0,1\},s\in\{\pm1\}^d\}.
		\]
		
		We record the fixed non-root property. For every non-root prompt
		$y\notin\{x_0,x_1,\ldots,x_d\}$, the value $q_h^{\etoe-M}(y)$ is the same
		for all $h\in\mathcal H_B$. This follows from
		Lemma~\ref{lem:blocked-roots-common-tails}, applied with the same fixed value at every root.
		The roots are blocked. If a prompt is outside the rooted subtrees and is not a
		strict prefix of a root, all future transitions are fixed. Strict-prefix
		one-step transitions are fixed: they are $0$ at blockers and $1/2$ at other
		strict prefixes. Finally, from a strict descendant $rz$, $z\ne\emptyset$, the
		final transition after $M$ additional steps occurs at suffix length at least
		$M$, where every transition is fixed to $1/2$.
		
		We prove the base upper bound. Use three one-step predictors: one exact predictor
		for $h_0$, and one active predictor for each $b\in\{0,1\}$. The active
		$b$-predictor matches all active selector states, deterministic $b$-states,
		fixed states, blockers, and terminal majority states; at weak hidden-sign states
		it predicts $1/2$. If the truth is $h_{b,s}$, this predictor makes errors
		only at weak hidden-sign states, each of squared size
		\((\theta_\varepsilon^{(1)})^2\le a^2\varepsilon\).
		Thus, under every one-step input distribution, some predictor in this
		three-element class has squared loss at most $a^2\varepsilon$. Since
		$a^2<1$, apply Lemma~\ref{lem:finite-fast-rate} with excess accuracy
		$(1-a^2)\varepsilon$ to get
		\(m_{\rm base}^{\mathcal H_B}(\varepsilon)\le C/\varepsilon\).
		
		We prove the $\ct$ upper bound. Let $R:=\{x_0,x_1,\ldots,x_d\}$. The learner
		uses $n=C/\varepsilon$ trajectories. If it sees a trajectory with prompt in
		$R$ and first token $1$, it declares the block active, reads the deterministic
		second token $b$, predicts $b$ at $x_0$, predicts $1/2$ at each hidden
		root $x_j$, and predicts the common fixed $M$-step value at every non-root
		prompt. If it never sees an active selector, it predicts the inactive values of
		$h_0$.
		
		If the truth is $h_0$, the active selector never appears and the learner is
		exact. If the truth is $h_{b,s}$, let $p:=P(R)$. If an active selector is
		observed, the only possible loss is at hidden roots, where the learner predicts
		$1/2$, and this loss is
		\[
		\varepsilon\sum_{j=1}^d P(x_j)\le\varepsilon .
		\]
		If no active selector is observed, the learner predicts the inactive values of
		$h_0$. Since all non-root $M$-step values are common across $\mathcal H_B$,
		the learner may be wrong only on roots, so the loss is at most $p$. If
		$p\le\varepsilon$, this is at most $\varepsilon$. 
		Let \(E_{\rm sel}\) be the event that no sample contains an active selector.
		If $p>\varepsilon$, then
		\[
		\Pr[E_{\rm sel}]
		=
		(1-p)^n
		\le
		\exp(-np)
		\le
		\exp(-C).
		\]
		Increasing $C$ makes this probability at most $1/3$. Hence
		\(m_{\ct}^{\mathcal H_B,M}(\varepsilon)\le C/\varepsilon\).
		
		We prove the $\etoe$ lower bound. It is enough to condition on an arbitrary
		value of the learner's private random seed, because the argument below is
		uniform in that value. Let $P$ be supported on $x_0,x_1,\ldots,x_d$, with
		\[
		P(x_0)=\eta,
		\qquad
		P(x_j)=(1-\eta)/d\quad 1\le j\le d.
		\]
		Draw $b$ uniformly from $\{0,1\}$ and $s$ uniformly from $\{\pm1\}^d$. In
		the $\etoe$ model, a sample from $x_0$ is $\Ber(b)$, and a sample from
		$x_j$ is
		\[
		\Ber\left(\frac12+\sqrt\varepsilon\,s_j\right).
		\]
		
		Fix $n\le n_0$. Let $E$ be the event that no sample has prompt $x_0$. Since
		$n_0\le2BM/\varepsilon$ and $\eta=c_\eta\varepsilon/(BM)$,
		\(\Pr(E)=(1-\eta)^n\ge1-n\eta\ge1-\rho_0\).
		Condition on $E$, on the sampled hidden indices, and on the observed hidden
		tokens. Let $U\subseteq[d]$ be the set of hidden indices that were not sampled.
		Since $n\le n_0$ and $d\ge C_d\varepsilon n_0/\eta$,
		\[
		\frac{|U|}{d}\ge1-\frac{n}{d}\ge1-\frac{\eta}{C_d\varepsilon}.
		\]
		For $j\in U$, the sign $s_j$ is still an independent Rademacher variable. If
		$q_j\in[0,1]$ is the learner's fixed prediction at $x_j$ under the
		conditioning, then
		\[
		\mathbb E_{s_j}
		\left[
		\left(q_j-\left(\frac12+\sqrt\varepsilon\,s_j\right)\right)^2
		\right]
		=
		\left(q_j-\frac12\right)^2+\varepsilon
		\ge\varepsilon .
		\]
		Thus the conditional expected hidden loss over unsampled coordinates is at least
		\[
		\frac{1-\eta}{d}|U|\varepsilon
		\ge
		(1-\eta)\left(1-\frac{\eta}{C_d\varepsilon}\right)\varepsilon .
		\]
		Expanding the right-hand side gives
		\[
		(1-\eta)\left(1-\frac{\eta}{C_d\varepsilon}\right)\varepsilon
		\ge
		\varepsilon-\eta\varepsilon-\frac{\eta}{C_d}.
		\]
		Since $0<\varepsilon\le\varepsilon_0\le1/32$ and $C_d\ge32$, the last display
		is at least
		\(\varepsilon-\eta/16\).
		
		For each unsampled coordinate $j$, define its hidden-loss contribution and its
		centered version by
		\[
		L_j:=\frac{1-\eta}{d}
		\left(q_j-\left(\frac12+\sqrt\varepsilon\,s_j\right)\right)^2,
		\qquad
		\Delta_j:=L_j-\mathbb E_{s_j}L_j .
		\]
		For fixed $q_j\in[0,1]$,
		\[
		|L_j(+1)-L_j(-1)|
		=
		\frac{1-\eta}{d}\,4|q_j-1/2|\sqrt\varepsilon
		\le
		\frac{2(1-\eta)\sqrt\varepsilon}{d}.
		\]
		Therefore
		\[
		|\Delta_j|\le C(1-\eta)\sqrt\varepsilon/d,
		\qquad
		\sum_{j\in U}\operatorname{Var}(\Delta_j)
		\le
		\sum_{j\in U}\mathbb E\Delta_j^2
		\le
		C|U|\varepsilon/d^2
		\le
		C\varepsilon/d .
		\]
		Let \(L_{\rm uns}\) denote the contribution to the loss from unsampled hidden roots, and let \(\mathcal D_{\rm samp}\) denote the sampled hidden data.
		Since $d\ge C_d\varepsilon/\eta^2$, increasing $C_d$ gives
		\[
		\Pr\left[
		L_{\rm uns}<\varepsilon-\eta/8
		\,\middle|\,
		E,\mathcal D_{\rm samp}
		\right]
		\le
		\rho_0
		\]
		by Chebyshev's inequality.
		
		On $E$, the data are independent of $b$. Hence, conditional on $E$ and the
		hidden data, the learner's prediction $q_0\in[0,1]$ at $x_0$ is fixed before
		the uniform bit $b$ is drawn. For every $q_0\in[0,1]$, with probability at
		least $1/2$ over $b$,
		\(
		(q_0-b)^2\ge1/4.
		\)
		On this event the anchor contribution to squared loss is at least $\eta/4$.
		
		Therefore, with probability at least $(1-\rho_0)^2/2>1/3$, the hidden loss is
		at least $\varepsilon-\eta/8$ and the anchor loss is at least $\eta/4$, so
		the total loss is larger than $\varepsilon$. This probability is averaged over
		the random parameter $(b,s)$, the sample, and the learner randomness. Hence
		there exists at least one fixed parameter value for which the learner fails with
		probability greater than $1/3$, so no $\etoe$ learner with $n\le n_0$ samples
		can satisfy the uniform PAC guarantee. Since
		$n_0=\lceil BM/\varepsilon\rceil$, this proves
		\(
		m_{\etoe}^{\mathcal H_B,M}(\varepsilon)>BM/\varepsilon.
		\)
	\end{proof}
	
	We now combine the two blocks into a single class.
	
	\begin{theorem}
		\label{thm:simultaneous-exact-scale-taxonomy}
		There is a universal integer $M_0$ and a universal constant
		$\varepsilon_0>0$ such that the following holds. For every $A\ge1$, every
		$M\ge M_0$, and every $0<\varepsilon\le\varepsilon_0$, there is a finite
		binary-token stochastic autoregressive class $\cF$ such that
		\(
		0<m_{\rm base}^{\cF}(\varepsilon)<\infty
		\)
		and
		\[
		\frac{m_{\ct}^{\cF,M}(\varepsilon)}
		{m_{\rm base}^{\cF}(\varepsilon)}
		>
		A\frac{M}{\varepsilon},
		\qquad
		\frac{m_{\etoe}^{\cF,M}(\varepsilon)}
		{m_{\ct}^{\cF,M}(\varepsilon)}
		>
		A\frac{M}{\varepsilon}.
		\]
	\end{theorem}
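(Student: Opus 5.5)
We take $M_0$ and $\varepsilon_0$ from the constants fixed at the start of this section, and combine the classes of Lemma~\ref{lem:positive-base-CoT-block} and Lemma~\ref{lem:base-friendly-endpoint-block} on disjoint prompt blocks. We choose the blockers and roots of $\mathcal G_N$ and of $\mathcal H_B$ so that the two rooted families are disjoint and neither block's roots or blockers are prefixes of the other's. For example, we prefix all strings of the first block by $00$ and all strings of the second block by $01$, padded to a common length. We then define $f_g$ for $g\in\mathcal G_N$ to agree with $g$ on the $\mathcal G_N$ block and with $h_0$ elsewhere. Symmetrically, $f_h$ for $h\in\mathcal H_B$ agrees with $h$ on the $\mathcal H_B$ block and with $g_0$ elsewhere. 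Finally, $\cF:=\{f_g\}\cup\{f_h\}$, which is finite.

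\emph{Upper bounds on the union.} For the base problem, the one-step class of $\cF$ is contained in the set of functions obtained by pairing an element of the finite predictor list for one block with an element of the list for the other block. These are the two-element list from the proof of Lemma~\ref{lem:positive-base-CoT-block} and the three-element list from Lemma~\ref{lem:base-friendly-endpoint-block}. Some pair has loss at most $a^2\varepsilon$ under any input distribution, since $g_0$ and $h_0$ themselves lie in the respective lists. Lemma~\ref{lem:finite-fast-rate} then gives $m_{\rm base}^{\cF}(\varepsilon)\le C/\varepsilon$. Positivity $m_{\rm base}^{\cF}(\varepsilon)>0$ holds because a zero-sample learner cannot handle the $r_0$ root. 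The $\pm\alpha\sqrt\varepsilon$ ambiguity there has squared error $\alpha^2\varepsilon>\varepsilon$ for one of the two values of $\tau$ under the point mass at $r_0$; this is exactly why $\alpha>1$ was chosen.

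For $\ct$, we run both block learners on the trajectories starting in their respective blocks. The $\mathcal G_N$ learner uses $O(N/\varepsilon)$ samples, and the selector test of Lemma~\ref{lem:base-friendly-endpoint-block} uses $O(1/\varepsilon)$ samples. Trajectories never cross between blocks, thanks to blockers and the fact that non-root values are fixed (Lemma~\ref{lem:blocked-roots-common-tails}). Allocating accuracy $\varepsilon/2$ to each block and confidence $1/6$ to each, with constant-factor boosting via the median-of-estimates or via repeated Markov arguments, gives $m_{\ct}^{\cF,M}(\varepsilon)\le C'N/\varepsilon$. The constant-factor degradation in accuracy is absorbed because both lemmas are proved with slack, or because we rescale $\beta$, $\varepsilon$, $n$.

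\emph{Lower bounds on the union.} Restricting $\cF$ to the subfamily $\{f_g\}$ and using the uniform prompt distribution on $r_1,\ldots,r_N$ reproduces the Assouad argument of Lemma~\ref{lem:positive-base-CoT-block} verbatim. Hence $m_{\ct}^{\cF,M}(\varepsilon)\ge cN/\varepsilon$, because a learner for $\cF$ is in particular a learner for this sub-problem. Likewise, restricting to $\{f_h\}$ with the distribution of Lemma~\ref{lem:base-friendly-endpoint-block} gives $m_{\etoe}^{\cF,M}(\varepsilon)>BM/\varepsilon$. We then get
\[
\frac{m_{\ct}}{m_{\rm base}}\ge \frac{cN/\varepsilon}{C/\varepsilon}=\frac{c}{C}N,\qquad
\frac{m_{\etoe}}{m_{\ct}}>\frac{BM/\varepsilon}{C'N/\varepsilon}=\frac{BM}{C'N}.
\]
Choosing $N:=\lceil (2C/c)AM/\varepsilon\rceil$ and then $B:=\lceil 2C'AN/\varepsilon\rceil$ makes both ratios exceed $AM/\varepsilon$.

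The main obstacle is the $\ct$ upper bound for the union. We must ensure that learning each block separately at accuracy roughly $\varepsilon/2$ still costs only $O(N/\varepsilon)$. The $\mathcal H_B$ learner's residual loss on hidden roots is already close to $\varepsilon\,P(\{x_1,\ldots,x_d\})$, leaving little room. We would handle this by noting that the total loss is
\[
\varepsilon P(\text{hidden roots})+\bigl(\text{$\mathcal G_N$-block loss}\bigr)+\bigl(\text{selector failure loss}\bigr),
\]
and by bounding the $\mathcal G_N$-block loss by $\varepsilon\,P(\mathcal G_N\text{ block})/3$ in expectation, with the expectation computation of Lemma~\ref{lem:positive-base-CoT-block} weighted by the block mass. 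The selector failure loss is handled as before. Since the masses of the two blocks sum to at most $1$, the total is at most $\varepsilon$ with probability at least $2/3$.
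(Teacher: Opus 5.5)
Your lower bounds, base upper bound, positivity argument, and final choice of $N$ and $B$ are fine. One small imprecision: $g_0$ is not literally in the two-element list of Lemma~\ref{lem:positive-base-CoT-block}, but it is within $a^2\varepsilon$ of one member at every state, which is all the pairing argument needs.

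\textbf{The gap: the $\ct$ upper bound for the union.} The second ratio rests on this bound, and a learner that always runs the $\mathcal G_N$ sign estimator does not achieve it. Your repair $\mathbb E L_G\le\varepsilon P(\mathcal G_N\text{ block})/3$ is false. The computation in Lemma~\ref{lem:positive-base-CoT-block} gives only the additive bound $\mathbb E L_G\le C\varepsilon\sum_j P(r_j)e^{-c'\varepsilon nP(r_j)}\le CN/n$. When the block mass is spread thinly, the actual loss is of order $\beta^2\varepsilon P(\text{block})$, not a small fraction of $\varepsilon P(\text{block})$.

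Here is a concrete failure. Let the truth be $g_0\oplus h_{b,s}$, and let $P$ put mass $1-\varepsilon$ on the hidden roots $x_1,\ldots,x_d$ and mass $\varepsilon$ uniformly on $r_1,\ldots,r_N$.
\begin{itemize}
\item With $n=O(N/\varepsilon)$, each $r_j$ is visited $O(1)$ times in expectation. So each majority vote is wrong with probability $1/2-O(\sqrt\varepsilon)$, and by concentration over the $N$ roots, $L_G\approx 2\beta^2\varepsilon^2$.
\item The selector is seen with overwhelming probability, so the hidden-root loss is exactly $\varepsilon(1-\varepsilon)$.
\item The total is therefore about $\varepsilon+(2\beta^2-1)\varepsilon^2>\varepsilon$.
\end{itemize}
Rescaling does not allow an $\varepsilon/2$ split for $\mathcal H_B$ either. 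The $\etoe$ lower bound of Lemma~\ref{lem:base-friendly-endpoint-block} needs the unlearned hidden loss to be at least $\varepsilon-\eta/8$ with $\eta=c_\eta\varepsilon/(BM)$, so the hidden roots must consume essentially the whole budget.

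\textbf{The missing idea.} $\cF$ is a union of alternatives, not a product, and the learner must exploit this.
\begin{itemize}
\item If some training trajectory from an $\mathcal H_B$ root has first token $1$, the truth must be $g_0\oplus h_{b,s}$. Then output $g_0$'s exact final-token values on the whole $\mathcal G_N$ subtree (zero loss there), $b$ at $x_0$, $1/2$ at the hidden roots, and the common values elsewhere. The loss is at most $\varepsilon\sum_jP(x_j)\le\varepsilon$.
\item Only when no active selector is seen do you run the $\mathcal G_N$ estimator, and you output $h_0$ on the $\mathcal H_B$ subtree.
\end{itemize}
With $n=CN/\varepsilon$, $\Pr[L_G>\varepsilon/2]\le1/3$ under every $P$. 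The cases then close as follows.
\begin{itemize}
\item If the truth is $g\oplus h_0$, the selector never appears and the loss is $L_G$.
\item If the truth is $g_0\oplus h_{b,s}$ and $p:=P(\mathcal H_B\text{ roots})\le\varepsilon/2$, either branch has loss at most $\varepsilon$ on the event $L_G\le\varepsilon/2$. The no-selector branch costs at most $L_G+p$.
\item If $p>\varepsilon/2$, the selector is missed with probability at most $e^{-np}\le e^{-CN/2}$.
\end{itemize}
This gives $m_{\ct}^{\cF,M}(\varepsilon)\le CN/\varepsilon$ with no accuracy splitting, boosting, or rescaling, which is how the paper argues.
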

	
	\begin{proof}
		Choose a sufficiently large universal constant $C_*$. Set
		$
		N:=\left\lceil C_* A M/\varepsilon\right\rceil,
		$
		$
		T:=N/\varepsilon,
		$ and
		$
		B:=\lceil C_*AT\rceil$. 
		Construct $\mathcal G_N$ and $\mathcal H_B$ on disjoint rooted subtrees. Let $S_G$ be the union of all
		blockers and all depth-$M$ rooted subtrees used in the construction of
		$\mathcal G_N$, and define $S_H$ similarly for $\mathcal H_B$. Choose the
		underlying binary strings so that $S_G\cap S_H=\emptyset$. Let
		$g_0\in\mathcal G_N$ and $h_0\in\mathcal H_B$ be their distinguished elements.
		For $g\in\mathcal G_N$ and $h\in\mathcal H_B$, let $g\oplus h$ be the
		generator that agrees with $g$ on $S_G$, agrees with $h$ on $S_H$, and uses the
		fixed default transition probability $1/2$ on all remaining states. Define
		\[
		\cF
		:=
		\{g\oplus h_0:g\in\mathcal G_N\}
		\cup
		\{g_0\oplus h:h\in\mathcal H_B\}.
		\]
		This is a union of alternatives.
		
		The one-step upper bounds for $\mathcal G_N$ and $\mathcal H_B$ give at most
		five improper one-step predictors for $\cF$ with approximation error at most
		$a^2\varepsilon$: the two predictors from the $\mathcal G_N$ block combined
		with the fixed one-step predictor for $h_0$, and the three predictors from the
		$\mathcal H_B$ block combined with the fixed one-step predictor for $g_0$.
		Since $a^2<1$, applying
		Lemma~\ref{lem:finite-fast-rate} with excess accuracy $(1-a^2)\varepsilon$ gives
		\(
		m_{\rm base}^{\cF}(\varepsilon)\le C/\varepsilon .
		\)
		Also $m_{\rm base}^{\cF}(\varepsilon)>0$: in the
		$\mathcal G_N$-alternative, two generators have one-step probabilities
		$1/2+\alpha\sqrt\varepsilon$ and $1/2-\alpha\sqrt\varepsilon$ at $r_0$.
		Since $\alpha>1$, no zero-sample learner can be within squared error
		$\varepsilon$ of both with probability $2/3$.
		
		We prove the $\ct$ upper bound. Use $n=CN/\varepsilon$ samples. The learner
		first checks whether some trajectory in the $\mathcal H_B$-root set has first
		token $1$. If so, it declares the $\mathcal H_B$-alternative active, reads
		the deterministic second token $b$, predicts $g_0$ on the
		$\mathcal G_N$-subtree, predicts $b$ at the anchor $x_0$, predicts
		$1/2$ at each hidden root $x_j$, and predicts the common fixed $M$-step
		value at every non-root prompt in the $\mathcal H_B$-subtree. If no active
		$\mathcal H_B$-selector is observed, it runs the explicit $\mathcal G_N$
		estimator from Lemma~\ref{lem:positive-base-CoT-block} on the original sample
		and predicts $h_0$ on the $\mathcal H_B$-subtree.
		
		We record the direct risk bound for the $\mathcal G_N$ estimator under the
		original prompt distribution. Let $P_j:=P(r_j)$, and let $N_j$ be the number
		of sampled trajectories with prompt $r_j$. Conditional on $N_j$, the learner
		observes $LN_j$ independent Bernoulli samples with mean
		$1/2+\theta_\varepsilon^{(\beta)}\sigma_j$. Hence
		\[
		\Pr[\widehat\sigma_j\ne\sigma_j\mid N_j]
		\le
		\exp(-c\varepsilon N_j).
		\]
		Since $N_j\sim\operatorname{Bin}(n,P_j)$,
		\[
		\mathbb E[\exp(-c\varepsilon N_j)]
		\le
		\exp(-c'\varepsilon nP_j).
		\]
		If $L_G$ denotes the squared-loss contribution from the $\mathcal G_N$-roots,
		then
		\[
		\mathbb E L_G
		\le
		C\varepsilon\sum_{j=1}^N P_j\exp(-c'\varepsilon nP_j)
		\le
		CN/n,
		\]
		where the last inequality uses $xe^{-\lambda x}\le1/(e\lambda)$. Increasing
		the universal constant in $n=CN/\varepsilon$, Markov's inequality gives
		\(\Pr[L_G>\varepsilon/2]\le1/3\).
		
		If the truth is $g\oplus h_0$, then an active $\mathcal H_B$-selector is
		never observed. The learner uses the $\mathcal G_N$ estimator, the
		$\mathcal H_B$-part is predicted exactly, and the total loss is $L_G$. Thus
		the learner succeeds with probability at least $2/3$.
		
		If the truth is $g_0\oplus h_{b,s}$, let $p$ be the prompt mass of the
		$\mathcal H_B$-root set. If $p\le\varepsilon/2$, then on the event
		$L_G\le\varepsilon/2$, the no-selector branch has total loss at most
		$\varepsilon$: the $\mathcal G_N$-root loss is at most $\varepsilon/2$,
		the possible $\mathcal H_B$-root loss is at most $p\le\varepsilon/2$, and
		all non-root $\mathcal H_B$-values are common and predicted correctly. If an
		active selector is observed instead, the active branch has zero loss on the
		$\mathcal G_N$-subtree, zero loss at the anchor, zero loss on non-root prompts,
		and hidden-root loss at most
		\[
		\varepsilon\sum_{j=1}^dP(x_j)\le\varepsilon.
		\]
		Therefore, when $p\le\varepsilon/2$, the learner succeeds with probability at
		least $2/3$.
		
		It remains to consider $p>\varepsilon/2$. Under $h_{b,s}$, every prompt in
		the $\mathcal H_B$-root set produces first token $1$ deterministically, so an
		active selector is missed only if no training prompt lies in that root set. Let $E'$ be the event that no active $\mathcal H_B$ selector is observed.
		Hence,
		\[
		\Pr\left[E'\right]
		\le
		(1-p)^n
		\le
		\exp(-np)
		\le
		\exp(-CN/2).
		\]
		Since $N\ge1$, increasing $C$ makes this probability at most $1/3$. On the
		complementary event, the active branch has loss at most $\varepsilon$, as
		shown above. Therefore the learner succeeds with probability at least $2/3$
		also in this case. Hence
		\(m_{\ct}^{\cF,M}(\varepsilon)\le CN/\varepsilon\).
		
		For the $\ct$ lower bound, restrict to the subclass
		$\{g\oplus h_0:g\in\mathcal G_N\}$ and to prompt distributions supported on
		the $\mathcal G_N$-subtree. Lemma~\ref{lem:positive-base-CoT-block} gives
		\(m_{\ct}^{\cF,M}(\varepsilon)\ge cN/\varepsilon\).
		For the $\etoe$ lower bound, restrict to the subclass
		$\{g_0\oplus h:h\in\mathcal H_B\}$ and to prompt distributions supported on
		the $\mathcal H_B$-subtree. Lemma~\ref{lem:base-friendly-endpoint-block} gives
		\(
		m_{\etoe}^{\cF,M}(\varepsilon)>BM/\varepsilon .
		\)
		By the choice of $N$ and by taking $C_*$ large enough,
		\[
		\frac{m_{\ct}^{\cF,M}(\varepsilon)}
		{m_{\rm base}^{\cF}(\varepsilon)}
		\ge
		\frac{cN/\varepsilon}{C/\varepsilon}
		>
		A\frac{M}{\varepsilon}.
		\]
		By the choice of $B$ and by taking $C_*$ large enough,
		\[
		\frac{m_{\etoe}^{\cF,M}(\varepsilon)}
		{m_{\ct}^{\cF,M}(\varepsilon)}
		>
		\frac{BM/\varepsilon}{CN/\varepsilon}
		=
		\frac{BM}{CN}
		>
		A\frac{M}{\varepsilon}.
		\]
		The theorem follows.
	\end{proof}

	\section{\texorpdfstring{Base-to-\(\ct\) comparison}{Base-to-CoT comparison}}
	\label{sec:base-to-cot}
	
	This section proves the sharp comparison from one-step learning to
	chain-of-thought learning. The learners are improper under the regression
	convention in Section~\ref{subsec:our-results}: they may output arbitrary
	predictors with values in $[0,1]$.
	
	\begin{theorem}[Base-to-$\ct$: upper bound, tightness, and scale optimality]
		\label{thm:base-to-cot-full}
		The following statements hold.
		
		\begin{enumerate}
			\item For every stochastic generator class $\cF$, every $M\ge1$, every
			$0<\varepsilon<1$, and every $0<\delta<1$,
			\begin{equation*}
				m_{\ct}^{\cF,M}(\varepsilon,\delta)
				\le
				m_{\rm base}^{\cF}\left(\frac{\varepsilon}{M^2},\delta\right).
			\end{equation*}
			
			\item There are universal constants $\kappa,c,C>0$ such that for every integer
			$L\ge1$, every $M\ge2$, and all sufficiently small $\varepsilon>0$, there is a
			finite binary-token stochastic generator class $\cF$ satisfying
			\begin{equation*}
				c\frac{LM^2}{\varepsilon}
				\le
				m_{\ct}^{\cF,M}(\varepsilon)
				\le
				C\frac{LM^2}{\varepsilon}, \qquad c\frac{LM^2}{\varepsilon}
				\le
				m_{\rm base}^{\cF}\left(\kappa\frac{\varepsilon}{M^2}\right)
				\le
				C\frac{LM^2}{\varepsilon}.
			\end{equation*}
			
			% \begin{equation*}
				% c\frac{LM^2}{\varepsilon}
				% \le
				% m_{\rm base}^{\cF}\left(\kappa\frac{\varepsilon}{M^2}\right)
				% \le
				% C\frac{LM^2}{\varepsilon}.
				% \end{equation*}
			%Thus the upper bound in the first item is realized up to universal constants.
			
			\item Fix $0<\beta<1$. There is a universal constant $c_0>0$ such that for every
			$K\ge1$, every $M\ge2$, and all sufficiently small $\varepsilon>0$, there is a
			finite binary-token stochastic generator class $\cF$ such that
			$
			m_{\rm base}^{\cF}\left(\frac{\varepsilon^{1-\beta}}{M^2}\right)=0
			$
			but
			$
			m_{\ct}^{\cF,M}(\varepsilon)\ge c_0\frac{K}{\varepsilon}$. 
			% Therefore, the scale $\varepsilon/M^2$ cannot in general be replaced by any
			% asymptotically larger scale, such as $\varepsilon^{1-\beta}/M^2$.
		\end{enumerate}
	\end{theorem}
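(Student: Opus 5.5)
I will prove the three items separately, following the sketch in Section~\ref{sec:sketches}.

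\textbf{Item 1 (reduction).} Given $n$ full trajectories $(X_i,Z_{i,1},\ldots,Z_{i,M})$, draw independent uniform indices $T_i\in[M]$. Set $S_i=X_iZ_{i,1}\cdots Z_{i,T_i-1}$ and $Y_i=Z_{i,T_i}$. The pairs $(S_i,Y_i)$ are i.i.d.\ with $S_i\sim P'$, where $P'$ is the law of $S_T$ induced by $P$, $g_\star$ and $T$. Moreover $Y_i\mid S_i\sim\Ber(p_{g_\star}(S_i))$, since the next token depends only on the current state. Run the base learner at accuracy $\varepsilon/M^2$ and confidence $\delta$ on these pairs; it is valid for every distribution, in particular $P'$. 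Let $\widehat g$ be the generator with one-step probability $\widehat p$, and output $q_{\widehat g}^{\etoe-M}$; this is allowed because learners may be improper and computationally unbounded.

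For the error bound, couple the two chains step by step with shared uniforms. The final tokens can differ only if the chains split at some step $t$ while agreeing before it. This gives $|q_{g_\star}^{\etoe-M}(x)-q_{\widehat g}^{\etoe-M}(x)|\le\mathbb E[\sum_t|p_{g_\star}(S_t)-\widehat p(S_t)|\mid X=x]$, where $S_t$ follows the true chain. Square this and apply Jensen and Cauchy--Schwarz to get a factor $M$. Averaging over $X$ gives $M\cdot M\,\mathbb E_{S\sim P'}(p_{g_\star}(S)-\widehat p(S))^2\le\varepsilon$. The only care needed is that the coupling prefix is the true-chain state, which is what makes the $P'$ error the right quantity.

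\textbf{Item 2 (tightness).} I would use two blocks on disjoint blocked roots, handled through Lemma~\ref{lem:blocked-roots-common-tails}.

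\emph{Block U.} Take $K=LM^2$ roots $u_k$ with signs $\sigma_k$. Along the all-zero path below $u_k$, the probability of emitting $1$ is $p_0+\sigma_k\Delta$, with $p_0=1/M$ and $\Delta=c_1\sqrt\varepsilon/M$. After the first $1$, the remaining tokens are forced to $1$. Then $q(u_k)=1-(1-p_0-\sigma_k\Delta)^M$, and the two signs are separated by $\Theta(M\Delta)=\Theta(\sqrt\varepsilon)$.

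\emph{Block V.} Take $L$ roots $v_\ell$ with one-step probability $1/2+\tau_\ell\theta$, where $\theta=c_2\sqrt\varepsilon/M$, and all later tokens forced to $0$. This block does not affect final tokens.

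The four bounds then go as follows.
\begin{itemize}
\item $\ct$ lower bound: put $P$ uniform on the $u_k$. A trajectory reveals about one Bernoulli of KL order $M\Delta^2\cdot M=\Theta(\varepsilon)$, using Lemma~\ref{lem:bernoulli-kl}'s $1/M$ regime over the roughly $M$ steps before the first $1$. The adjacent KL is therefore $O(n\varepsilon/K)$. Lemmas~\ref{lem:assouad-testing} and~\ref{lem:hamming-expectation-to-probability} give $\Omega(K/\varepsilon)$, after tuning $c_1$ so that each wrong sign costs more than $16\varepsilon$.
\item $\ct$ upper bound: per-root estimation with the $xe^{-\lambda x}$ trick, as in Lemma~\ref{lem:positive-base-CoT-block}, gives $O(K/\varepsilon)$.
\item Base lower bound at scale $\kappa\varepsilon/M^2$: put $P$ uniform on the $v_\ell$. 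There are $L$ signs, each costing $\theta^2>16\kappa\varepsilon/M^2$ when wrong, and per-sample KL is $O(\theta^2/L)$. Assouad gives $\Omega(LM^2/\varepsilon)$.
\item Base upper bound: the class is finite, with $2^{K+L}$ elements. Lemma~\ref{lem:finite-fast-rate} gives only $(K+L)M^2/\varepsilon$. Instead I would use per-state estimation with the same $xe^{-\lambda x}$ bound, summed over the $O(KM)$ signed path states plus the $L$ roots.
\end{itemize}

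The main obstacle is the base upper bound. A single sign of $u_k$ appears at $M$ states, and naive counting gives $KM\cdot M^2/\varepsilon$. To fix this, I would estimate each sign $\sigma_k$ by pooling all states below $u_k$. For error $\kappa\varepsilon/M^2$, a wrong sign costs $\Delta^2$ per state, so it suffices to learn signs whose total mass exceeds about $\varepsilon/(M^2\Delta^2)$. This should give $O(K/\varepsilon+LM^2/\varepsilon)=O(LM^2/\varepsilon)$, after choosing $\kappa$ relative to $c_1$. If this fails, I would reparametrize block U so that the $M$ states of one root share a single weak state, which keeps the one-step complexity at $O(K/\varepsilon)$.

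\textbf{Item 3 (scale optimality).} Use block U alone. The constant predictor $p_0$ on the path states, with the fixed values elsewhere, has pointwise squared error at most $\Delta^2=c_1^2\varepsilon/M^2$. This is at most $\varepsilon^{1-\beta}/M^2$ for small $\varepsilon$, so the zero-sample learner succeeds. The $\ct$ lower bound $\Omega(K/\varepsilon)$ is the one from Item 2.
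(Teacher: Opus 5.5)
The genuine gap is the base upper bound in Item~2. Your Item~1, Item~3 and the two lower bounds in Item~2 are essentially the paper's arguments. You rightly flag this upper bound as the main obstacle, but the pooling estimate $O(K/\varepsilon)$ you give for the $U$-block is wrong.

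In the base model, a sample at a state $u_k0^t$ is a single $\Ber(1/M\pm\Delta)$ observation. Its adjacent KL is of order $M\Delta^2=\Theta(c_1^2\varepsilon/M)$. The $\Theta(\varepsilon)$ per-sample information in your $\ct$ lower bound came from the roughly $M$ transitions of a whole trajectory, which a base sample never sees. So actually learning the $K$ signs from base samples costs order $KM/\varepsilon=LM^3/\varepsilon$ in the worst case.

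This is not slack in the analysis. If $\kappa<c_1^2/16$, take the uniform distribution on the roots $u_k$, decode $\widehat\sigma_k$ by comparing $\widehat p(u_k)$ with $p_0$, and apply Lemma~\ref{lem:assouad-probability}. This gives $m_{\rm base}^{\cF}(\kappa\varepsilon/M^2)=\Omega(LM^3/\varepsilon)$, which contradicts the tightness claim. Your instinct to ``choose $\kappa$ relative to $c_1$'' is the right lever, but it does not work through pooling.

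The missing idea is the one you already use in Item~3: predict the midpoint $p_0$ at every sign-dependent state of the $U$-block. This costs at most $\Delta^2=c_1^2\varepsilon/M^2$ pointwise and uses zero samples. Fix the constants in this order:
\begin{enumerate}
\item $c_1$, large enough for the $\ct$ lower bound;
\item $\kappa>2c_1^2$;
\item $c_2$ with $c_2^2>16\kappa$.
\end{enumerate}
With this order the $U$-block never needs to be learned, and the base problem reduces to the $L$ signs $\tau$. Consider the $2^L$ predictors that use $p_0$ on the $U$-block, the correct fixed values off the blocks, and arbitrary values $1/2\pm\theta$ at the $v_\ell$. One of them has approximation error at most $\kappa\varepsilon/(2M^2)$, and Lemma~\ref{lem:finite-fast-rate} with excess $\kappa\varepsilon/(2M^2)$ gives $O(LM^2/\varepsilon)$.

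A smaller slip concerns your $\ct$ upper bound. As you describe block $U$, the all-zero path carries $p_0+\sigma_k\Delta$, so the descendants $u_k0^t$ with $1\le t\le M-1$ still have $\sigma_k$-dependent final-token values. Hence condition~3 of Lemma~\ref{lem:blocked-roots-common-tails} fails, and a per-root $\ct$ estimator that predicts fixed values off the roots is incorrect. Either pool over each subtree, or do as the paper does: the final-token class does not depend on $\tau$, so it has at most $2^K$ members, and Lemma~\ref{lem:finite-fast-rate} gives $m_{\ct}^{\cF,M}(\varepsilon)\le CK/\varepsilon$.
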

	
	\begin{proof}
		We prove the three statements in order.
		
		\paragraph{Upper bound.}
		Fix a prompt distribution $P$ and a target generator $g_\star\in\cF$. Given one
		full trajectory $(X,Z_1,\ldots,Z_M)$, choose $T$ uniformly from
		$\{1,\ldots,M\}$, independently of the trajectory, and set
		$S=XZ_1\cdots Z_{T-1}$ and $Y=Z_T$. Conditional on $S$, the bit $Y$ has law
		$\operatorname{Bern}(p_{g_\star}(S))$. Across independent trajectories, the
		pairs $(S,Y)$ are i.i.d. from some distribution on states. Hence a base learner
		can be applied to these pairs.
		
		Let $\widehat p$ be the output of a base learner at accuracy $\varepsilon/M^2$.
		If necessary, replace $\widehat p$ by its clipping to $[0,1]$; this cannot
		increase squared loss against a target in $[0,1]$. Define $\widehat g$ by
		$\widehat g(s)(1)=\widehat p(s)$, and set
		$\widehat q=q_{\widehat g}^{\etoe-M}$. We show that if $\widehat p$ has one-step
		squared loss at most $\varepsilon/M^2$, then $\widehat q$ has final-token squared
		loss at most $\varepsilon$.
		
		Fix a prompt $x$. Couple the $g_\star$-trajectory and the $\widehat g$-trajectory
		step by step. As long as the two trajectories are at the same state $S_t$, the
		probability that they split at step $t$ is at most
		$|p_{g_\star}(S_t)-\widehat p(S_t)|$. Therefore
		\begin{equation*}
			|q_{g_\star}^{\etoe-M}(x)-q_{\widehat g}^{\etoe-M}(x)|
			\le
			\mathbb E\left[
			\sum_{t=1}^M |p_{g_\star}(S_t)-\widehat p(S_t)|
			\,\middle|\, X=x
			\right].
		\end{equation*}
		By Jensen's inequality and Cauchy--Schwarz,
		\begin{equation*}
			|q_{g_\star}^{\etoe-M}(x)-q_{\widehat g}^{\etoe-M}(x)|^2
			\le
			M\,
			\mathbb E\left[
			\sum_{t=1}^M
			(p_{g_\star}(S_t)-\widehat p(S_t))^2
			\,\middle|\, X=x
			\right].
		\end{equation*}
		Averaging over $X\sim P$ gives
		\begin{equation*}
			\mathbb E_{X\sim P}
			\left[
			(q_{g_\star}^{\etoe-M}(X)-q_{\widehat g}^{\etoe-M}(X))^2
			\right]
			\le
			M^2\,
			\mathbb E_S
			\left[
			(p_{g_\star}(S)-\widehat p(S))^2
			\right],
		\end{equation*}
		where $S$ is the state obtained by choosing a random time from a random
		$g_\star$-trajectory. Thus one-step squared loss at most $\varepsilon/M^2$
		implies final-token squared loss at most $\varepsilon$, with the same confidence.
		This proves the first statement.
		
		\paragraph{Tightness.}
		Fix an integer $L\ge1$ and $M\ge2$. Let $K=LM^2$. We choose two prefix-disjoint
		root families $u_1,\ldots,u_K$ and $v_1,\ldots,v_L$, and we block all strict
		prefixes so that no root can be reached from outside its own block. A generator
		is indexed by two sign vectors $\sigma\in\{-1,1\}^K$ and
		$\tau\in\{-1,1\}^L$.

		Choose universal constants $a,\kappa,b>0$ in the following order. First set
		$p_0:=1/M$. Then choose $a$ large enough for the $\ct$ loss lower bound below.
		Then choose $\kappa>2a^2$. Finally choose $b$ large enough for the base loss
		lower bound below. Set $\Delta=a\sqrt{\varepsilon}/M$ and
		$\theta=b\sqrt{\varepsilon}/M$. For all sufficiently small $\varepsilon$,
		$p_0\pm\Delta\in[3/(4M),5/(4M)]$ and $1/2\pm\theta\in[0,1]$.
		
		On the first block, starting from root $u_k$, as long as all previous generated
		bits are $0$, the next bit is $1$ with probability $p_0+\sigma_k\Delta$. Once a
		$1$ has appeared, all later bits are $1$ deterministically. Hence
		\begin{equation*}
			q_{g_{\sigma,\tau}}^{\etoe-M}(u_k)
			=
			1-(1-p_0-\sigma_k\Delta)^M.
		\end{equation*}
		The derivative of $p\mapsto 1-(1-p)^M$ is $M(1-p)^{M-1}$. 
		Since $p_0=1/M$ and $\Delta=o(1/M)$, there is a universal constant $c_1>0$
		such that this derivative is at least $c_1M$ on
		$[p_0-\Delta,p_0+\Delta]$.
		By the mean-value theorem, the two possible values of
		$q_{g_{\sigma,\tau}}^{\etoe-M}(u_k)$ are separated by at least
		$2c_1a\sqrt{\varepsilon}$.
		
		On the second block, set $p_{g_{\sigma,\tau}}(v_\ell)=1/2+\tau_\ell\theta$.
		After the first generated bit from $v_\ell$, all later generated bits are $0$
		deterministically. Thus $q_{g_{\sigma,\tau}}^{\etoe-M}(v_\ell)=0$ for every
		$\ell$ and every sign vector. All states outside the two blocks are fixed
		independently of $\sigma$ and $\tau$.
		
		We first prove the lower bound for $\ct$. Put the prompt distribution uniformly
		on $u_1,\ldots,u_K$. The signs $\tau$ are irrelevant under this distribution.
		Changing one coordinate $\sigma_k$ changes the trajectory law only when the
		prompt is $u_k$, which has probability $1/K$. Conditional on $X=u_k$, the two
		trajectory laws differ only through transitions with probabilities
		$p_0+\Delta$ and $p_0-\Delta$ before the first success. 
		By the chain rule for KL divergence and Lemma~\ref{lem:bernoulli-kl}, each
		changed transition contributes at most $C_1M\Delta^2$ to the KL divergence, and
		there are at most $M$ such transitions. Hence this conditional KL divergence is
		at most $C_1M^2\Delta^2$. Since $\Delta=a\sqrt{\varepsilon}/M$, this is at most
		$C_2\varepsilon$.
		Therefore the
		one-sample adjacent KL divergence is at most $C_2\varepsilon/K$. For
		$n<cK/\varepsilon$, the $n$-sample adjacent KL is bounded by the constant needed
		in Lemma~\ref{lem:assouad-testing}. Hence every estimator of
		$\sigma_1,\ldots,\sigma_K$ has Hamming error at least a constant fraction of $K$
		with probability greater than $1/3$, by Lemma~\ref{lem:assouad-testing} and
		Lemma~\ref{lem:hamming-expectation-to-probability}.
		
		Given any final-token predictor $\widehat q$, estimate $\sigma_k$ by
		thresholding $\widehat q(u_k)$ halfway between the two possible values of
		$q_{g_{\sigma,\tau}}^{\etoe-M}(u_k)$. If this estimated sign is wrong, then the
		squared error at $u_k$ is at least $c_3a^2\varepsilon$ for a universal constant
		$c_3>0$. Since the prompt distribution is uniform on the $K$ roots, a constant
		fraction of wrong signs gives squared loss at least a constant multiple of
		$a^2\varepsilon$. By the choice of $a$, this is larger than $\varepsilon$.
		Therefore $m_{\ct}^{\cF,M}(\varepsilon)\ge cK/\varepsilon=cLM^2/\varepsilon$.
		
		For the upper bound on $\ct$, the induced final-token class depends only on
		$\sigma_1,\ldots,\sigma_K$, and therefore has at most $2^K$ distinct functions.
		Applying Lemma~\ref{lem:finite-fast-rate} to this finite final-token class gives
		$m_{\ct}^{\cF,M}(\varepsilon)\le CK/\varepsilon=CLM^2/\varepsilon$.
		
		We now prove the base lower bound at scale $\kappa\varepsilon/M^2$. Put the
		state distribution uniformly on $v_1,\ldots,v_L$. Under this distribution, the
		base problem is exactly the problem of learning the $L$ signs
		$\tau_1,\ldots,\tau_L$ from Bernoulli observations with means
		$1/2+\tau_\ell\theta$. Changing one coordinate changes the one-sample KL
		divergence by at most $C_4\theta^2/L$, by Lemma~\ref{lem:bernoulli-kl}. Since
		$\theta=b\sqrt{\varepsilon}/M$, this is at most $C_5\varepsilon/(LM^2)$. For
		$n<cLM^2/\varepsilon$, the $n$-sample adjacent KL is bounded by the constant
		needed in Lemma~\ref{lem:assouad-testing}. Thus every estimator of
		$\tau_1,\ldots,\tau_L$ makes a constant fraction of sign mistakes with
		probability greater than $1/3$.
		
		Given any base predictor $\widehat p$, estimate $\tau_\ell$ by thresholding
		$\widehat p(v_\ell)$ at $1/2$. If this estimated sign is wrong, then
		$|\widehat p(v_\ell)-p_{g_{\sigma,\tau}}(v_\ell)|\ge\theta$, so the squared
		error at $v_\ell$ is at least $\theta^2$. Under the uniform distribution on the
		$v_\ell$'s, a constant fraction of wrong signs gives base squared loss at least
		a constant multiple of $b^2\varepsilon/M^2$. By the choice of $b$, this is larger
		than $\kappa\varepsilon/M^2$. Hence
		$m_{\rm base}^{\cF}(\kappa\varepsilon/M^2)\ge cLM^2/\varepsilon$.
		
		It remains to prove the base upper bound. Fix an arbitrary state distribution.
		Consider the finite class of one-step predictors that use the midpoint value
		$p_0$ on every sign-dependent state in the first block, use the correct fixed
		values outside the two blocks, and choose arbitrary signs on the second block.
		This finite class has size at most $2^L$. For the true generator
		$g_{\sigma,\tau}$, the member with the correct $\tau$ has squared error at most
		$\Delta^2=a^2\varepsilon/M^2$ at every sign-dependent state in the first block
		and zero error elsewhere. Since $\kappa>2a^2$, this approximation error is at
		most $\kappa\varepsilon/(2M^2)$.
		
		Applying Lemma~\ref{lem:finite-fast-rate} to this finite class, with target
		excess risk $\kappa\varepsilon/(2M^2)$ and confidence $2/3$, gives a base learner
		using at most $CLM^2/\varepsilon$ samples and achieving total base squared loss
		at most $\kappa\varepsilon/M^2$. Therefore
		$m_{\rm base}^{\cF}(\kappa\varepsilon/M^2)\le CLM^2/\varepsilon$. This proves
		the tightness statement.
		
		\paragraph{Scale optimality.}
		Fix $0<\beta<1$. 
		Choose roots $r_1,\ldots,r_K$ with distinct blockers $b_1,\ldots,b_K$, where $r_k=b_k1$.
		A generator
		is indexed by $\sigma\in\{-1,1\}^K$. Let $\Delta=a\sqrt{\varepsilon}/M$ and
		$p_0=1/M$, where $a$ is the universal constant used above. For all sufficiently
		small $\varepsilon$, $p_0\pm\Delta\in[3/(4M),5/(4M)]$.
		Starting from root $r_k$, as long as all previous generated bits are $0$, the
		next bit is $1$ with probability $p_0+\sigma_k\Delta$. Once a $1$ has appeared,
		all later bits are $1$ deterministically. All states outside these root blocks
		are fixed independently of $\sigma$. As above,
		\begin{equation*}
			q_{g_\sigma}^{\etoe-M}(r_k)=1-(1-p_0-\sigma_k\Delta)^M,
		\end{equation*}
		and the two possible values of $q_{g_\sigma}^{\etoe-M}(r_k)$ are separated by at
		least a universal constant times $a\sqrt{\varepsilon}$.
		
		At the one-step level, every sign-dependent probability differs from its midpoint
		by at most $\Delta$. The zero-sample base predictor that uses these midpoint
		values has pointwise squared error at most $\Delta^2=a^2\varepsilon/M^2$. Since
		$a^2\varepsilon/M^2<\varepsilon^{1-\beta}/M^2$ for all sufficiently small
		$\varepsilon$, the base problem at scale $\varepsilon^{1-\beta}/M^2$ has sample
		complexity zero.
		
		For the $\ct$ lower bound, put the prompt distribution uniformly on
		$r_1,\ldots,r_K$. The same KL calculation and Assouad argument used in the first
		block of the tightness construction imply that fewer than $cK/\varepsilon$
		trajectories leave a constant fraction of signs wrong with probability greater
		than $1/3$. Thresholding any final-token predictor at the midpoint of the two
		possible values converts prediction into sign estimation. A wrong sign causes
		squared error of order $a^2\varepsilon$ at its root. By the choice of $a$, a
		constant fraction of wrong signs forces final-token squared loss larger than
		$\varepsilon$. Therefore
		$m_{\ct}^{\cF,M}(\varepsilon)\ge c_0K/\varepsilon$.
	\end{proof}

	\section{\texorpdfstring{Upper bound on \(\etoe\) via \(\ct\)}{Upper bound on $\etoe$ via $\ct$}}
	\label{sec:CoT_e2e}
	
	This section proves a universal upper comparison between \(\ct\) and \(\etoe\)
	sample complexities and then gives a shifted-scale construction showing that
	the comparison is tight up to logarithmic factors.
	
	We use the following decoding form of Fano's inequality. For random variables \(U,W\), write \(I(U;W)\) for their mutual information, defined by
	\[
	I(U;W):=\KL(\mathcal L(U,W)\,\|\,\mathcal L(U)\otimes \mathcal L(W)),
	\]
	where \(\mathcal L(U)\) and \(\mathcal L(W)\) denote the marginal laws. For
	discrete \(W\), conditional Shannon entropy is
	\(H(W\mid U):=\mathbb E[-\log \Pr(W\mid U)]\).
	
	\begin{lemma}
		\label{lem:fano-decoding}
		See Yu~\cite{yu1997assouadfano} and Tsybakov~\cite{tsybakov2009nonparametric}. Let \(V\) be uniform on \([L]\), let \(Y\) be any observation, and let
		\(\widehat V(Y)\in[L]\). If the decoder is randomized, include its private
		random seed in \(Y\). If \(\Pr[\widehat V=V]\ge2/3\), then
		\(I(V;Y)\ge \frac23\log L-\log 2\).
	\end{lemma}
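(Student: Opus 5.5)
The plan is to prove the decoding form of Fano's inequality by combining the data-processing inequality with the standard Fano entropy bound. First, I will reduce to a deterministic decoder. By hypothesis, any private randomness of $\widehat V$ has been absorbed into $Y$, so $\widehat V$ is a measurable function of $Y$. Since $V\to Y\to\widehat V$ is a Markov chain, data processing for KL divergence, applied to the pushforward $(v,y)\mapsto(v,\widehat V(y))$ of both the joint law and the product of marginals, gives
\[
I(V;Y)\ge I(V;\widehat V).
\]
This holds for general $Y$ because mutual information is defined as a KL divergence, and KL divergence does not increase under measurable maps.

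Next, I will lower bound $I(V;\widehat V)$, where both variables are discrete on $[L]$. Here
\[
I(V;\widehat V)=H(V)-H(V\mid\widehat V)=\log L-H(V\mid\widehat V),
\]
since $V$ is uniform. Let $E=\mathbf 1[\widehat V\ne V]$ and $p_e=\Pr[E=1]\le 1/3$. The classical Fano argument expands $H(V,E\mid\widehat V)$ in two ways and uses $H(E\mid V,\widehat V)=0$. Given $E=1$ and $\widehat V$, the variable $V$ takes at most $L-1$ values, while given $E=0$ it is determined. This yields
\[
H(V\mid\widehat V)\le H(E)+p_e\log(L-1)\le \log 2+p_e\log L .
\]
Combining the two displays gives $I(V;Y)\ge(1-p_e)\log L-\log 2\ge\frac23\log L-\log2$, as required.

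The only step needing care is the first one, when $Y$ is not discrete. There, the paper's definition of mutual information as a KL divergence has to be reconciled with the entropy identity used in the second step. I would handle this by never using entropies of $Y$: data processing is applied purely at the level of KL divergence, and entropy identities are used only for the discrete pair $(V,\widehat V)$, where the KL definition and $H(V)-H(V\mid\widehat V)$ agree. The case $L=1$ is trivial, since the right-hand side is then negative.
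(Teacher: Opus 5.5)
Your proof is correct and is the standard argument: the paper gives no proof of its own and simply cites the classical Fano inequality of Yu and Tsybakov. That inequality is proved exactly as you do, by data processing from $(V,Y)$ down to the discrete pair $(V,\widehat V)$ and then the entropy bound $H(V\mid\widehat V)\le\log 2+p_e\log(L-1)$, which rearranges to $I(V;Y)\ge(1-p_e)\log L-\log 2$.
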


	\begin{theorem}
		\label{thm:universal-CoT-e2e-upper-polylog}
		There are universal constants \(c,C>0\) such that, for every stochastic
		generator class \(\cF\), every \(M\ge2\),
		every \(0<\varepsilon<1\), and every \(0<\delta<1/2\),
		\[
		m_{\etoe}^{\cF,M}(\varepsilon,\delta)
		\le
		C\frac{
			M\left(1\vee m_{\ct}^{\cF,M}(c\varepsilon,1/3)\right)
			\log^C(C/\varepsilon)
			+\log(1/\delta)}
		{\varepsilon}.
		\]
		% Consequently, at constant confidence,
		% \[
		% m_{\etoe}^{\cF,M}(\varepsilon)
		% \le
		% C\frac{
			% 	M\left(1\vee m_{\ct}^{\cF,M}(c\varepsilon,1/3)\right)
			% 	\log^C(C/\varepsilon)}
		% {\varepsilon}.
		% \]
	\end{theorem}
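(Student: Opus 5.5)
The plan is to turn $\ct$ learnability into a bound on the fat-shattering dimension of the end-to-end class $\cF^{\etoe-M}$ at scale $\Theta(\sqrt\varepsilon)$. We then apply the distribution-free regression bound of Lemma~\ref{lem:fat-regression-upper} to $\cF^{\etoe-M}$, which uses only $\etoe$ samples $(X,Z_M)$. This is valid because $\mathbb E[Z_M\mid X]=q_{g_\star}^{\etoe-M}(X)\in\cF^{\etoe-M}$, so the target is realizable. With $c_0$ the constant of that lemma, it gives
\[
m_{\etoe}^{\cF,M}(\varepsilon,\delta)\le C\,\frac{\fat_{\cF^{\etoe-M}}(c_0\sqrt\varepsilon)\log^C(C/\varepsilon)+\log(1/\delta)}{\varepsilon}.
\]
So it suffices to prove
\[
\fat_{\cF^{\etoe-M}}(c_0\sqrt\varepsilon)\le C'M\bigl(1\vee m_{\ct}^{\cF,M}(c\varepsilon,1/3)\bigr)
\]
for a suitable small universal $c$.

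\emph{Step 1 (from shattering to a packing).} Suppose $x_1,\ldots,x_d$ are $\gamma$-fat-shattered by $\cF^{\etoe-M}$, with $\gamma=c_0\sqrt\varepsilon$. By Gilbert--Varshamov, pick $L\ge 2^{d/8}$ sign patterns $b^{(1)},\ldots,b^{(L)}\in\{\pm1\}^d$ with pairwise Hamming distance at least $d/4$. Let $q_i=q_{g_i}^{\etoe-M}$ be the corresponding witnesses, and let $P$ be uniform on $\{x_1,\ldots,x_d\}$. Whenever $b^{(i)}$ and $b^{(j)}$ disagree at $x_k$, the witnesses differ there by at least $2\gamma$. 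Hence
\[
\mathbb E_P[(q_i-q_j)^2]\ge \tfrac14\cdot 4\gamma^2=c_0^2\varepsilon \quad\text{for all } i\ne j.
\]
Note that $P$ depends only on the shattered set, which is harmless because the $\ct$ guarantee is uniform over all $P$.

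\emph{Step 2 (Fano).} Set $c:=c_0^2/16$ and $n:=m_{\ct}^{\cF,M}(c\varepsilon,1/3)$. Draw $V$ uniformly from $[L]$ and generate $n$ full $\ct$ trajectories from $g_V$ under $P$. Run the $\ct$ learner on them to obtain $\widehat q$, and decode $\widehat V:=\arg\min_i\|\widehat q-q_i\|_{L_2(P)}$. With probability at least $2/3$ we have $\|\widehat q-q_V\|\le\sqrt{c\varepsilon}=c_0\sqrt\varepsilon/4$. Every other $q_j$ is at distance at least $c_0\sqrt\varepsilon$ from $q_V$, so by the triangle inequality decoding is correct, giving $\Pr[\widehat V=V]\ge2/3$. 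The observation $Y$ consists of the prompts, the learner's seed, and the $nM$ generated bits. The prompts and the seed are independent of $V$, so $I(V;Y)\le H(\text{bits})\le nM\log2$. Lemma~\ref{lem:fano-decoding} then gives
\[
\tfrac23\cdot\tfrac d8\log 2-\log2\le nM\log 2,
\]
that is, $d\le 12(nM+1)\le 24M(1\vee n)$. When $n=0$ the same computation gives $d\le12$, which is covered by the $1\vee$.

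\emph{Step 3.} Substituting the bound on $d$ into the displayed regression bound yields the theorem, with $\log(1/\delta)$ entering additively from Lemma~\ref{lem:fat-regression-upper}.

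The main obstacle I expect is bookkeeping rather than conceptual. The constant $c_0$ in Lemma~\ref{lem:fat-regression-upper} must be matched to the separation of the packing and to the $\ct$ accuracy $c\varepsilon$, so that nearest-neighbor decoding really works. I also need the information bound to count only the generated bits and not the prompts; this holds because the prompt law $P$ is fixed independently of $V$. If a subtlety arises with the lemma requiring $\delta<1/2$, I would check that Fano uses only the constant-confidence $\ct$ learner, so the $\delta$ dependence comes solely from the final $\etoe$ regression step.
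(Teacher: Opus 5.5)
Your proposal is correct and follows essentially the paper's proof. Both arguments take a Gilbert--Varshamov packing of a fat-shattered set under the uniform prompt law, apply Fano's inequality with the $\ct$ learner as decoder and the $nM\log 2$ bound on the information in the generated bits, and then invoke Lemma~\ref{lem:fat-regression-upper} on $\cF^{\etoe-M}$. The only differences are cosmetic: you merge the paper's two claims (the general packing bound and the fat-shattering bound) into one computation, and you set the shattering scale equal to the lemma's constant, which removes the paper's monotonicity step.
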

	
	\begin{proof}
		All universal constants below are fixed as they are introduced. The argument is
		only needed in the nontrivial case in which the CoT-sample complexity term
		ultimately appearing on the right-hand side is finite; if that term is infinite,
		the desired inequality is immediate. Thus, whenever a displayed bound below
		contains $m_{\ct}^{\cF,M}(\cdot)$, we may assume that the particular
		sample-complexity term in that bound is finite.

		We first prove a packing consequence of \(\ct\)-learnability.
		
		\begin{claim}
			\label{claim:CoT-packing-bound}
			There are universal constants \(c_0,C_0>0\) such that the following holds. Let
			\(0<r<1\), let \(P\) be any probability measure on prompts, and let
			\(q_1,\ldots,q_L\in\cF^{\etoe-M}\) satisfy
			\(\mathbb E_{X\sim P}[(q_i(X)-q_j(X))^2]>r\)
			for every \(i\ne j\). Then
			\[
			\log L
			\le
			C_0M\left(1\vee m_{\ct}^{\cF,M}(c_0r)\right).
			\]
		\end{claim}
		
		\begin{proof}
			For each \(i\in[L]\), choose \(g_i\in\cF\) such that
			\(q_{g_i}^{\etoe-M}=q_i\). Let
			\(m:=m_{\ct}^{\cF,M}(c_0r)\).
			If \(m=\infty\), the displayed bound is immediate. Hence assume \(m<\infty\).
			Let \(V\) be uniform on \([L]\). Conditional on \(V=i\), draw \(m\) full
			trajectories from \(g_i\), with prompts distributed according to \(P\). Write
			\(X_{1:m}\) for the \(m\) sampled prompts and \(Z_{1:m,1:M}\) for the
			corresponding \(m\times M\) token array. If the \(\ct\)-learner is randomized,
			let \(U\) be its private random seed; if it is deterministic, let \(U\) be
			constant. In both cases \(U\) is independent of \(V\) and of the sampled
			trajectories.
			
			Choose \(c_0>0\) small enough. By the definition of
			\(m_{\ct}^{\cF,M}(c_0r)\), the \(\ct\)-learner outputs \(\widehat q\) satisfying
			\(\mathbb E_{X\sim P}[(\widehat q(X)-q_V(X))^2]\le c_0r\)
			with probability at least \(2/3\). On this event, \(q_V\) is the unique packing
			element within squared \(L_2(P)\)-distance \(c_0r\) of \(\widehat q\). Indeed,
			for every \(j\ne V\),
			\[
			(q_V(X)-q_j(X))^2
			\le
			2(\widehat q(X)-q_j(X))^2
			+
			2(\widehat q(X)-q_V(X))^2 .
			\]
			Taking expectation and rearranging gives
			\[
			\mathbb E[(\widehat q(X)-q_j(X))^2]
			\ge
			\frac12\mathbb E[(q_V(X)-q_j(X))^2]
			-
			\mathbb E[(\widehat q(X)-q_V(X))^2]
			>
			\left(\frac12-c_0\right)r .
			\]
			Since \(c_0\) is chosen small enough, the last quantity is larger than \(c_0r\).
			Thus \(V\) can be decoded from \((X_{1:m},Z_{1:m,1:M},U)\) with success
			probability at least \(2/3\): run the \(\ct\)-learner and choose the unique
			packing element within squared \(L_2(P)\)-distance \(c_0r\) of its output.
			The hypotheses of Lemma~\ref{lem:fano-decoding} hold because \(V\) is uniform
			on \([L]\), and the decoder is a function of the displayed observation,
			including the private seed \(U\). This decoder is only an
			information-theoretic device; it may use the packing elements and the
			population \(L_2(P)\)-distances.
			
			By Lemma~\ref{lem:fano-decoding},
			\[
			\frac23\log L-\log2
			\le
			I(V;X_{1:m},Z_{1:m,1:M},U).
			\]
			Since \(U\) is independent of \(V\) and of the trajectory data,
			\[
			I(V;X_{1:m},Z_{1:m,1:M},U)
			=
			I(V;X_{1:m},Z_{1:m,1:M}).
			\]
			The prompts are drawn from \(P\) independently of \(V\), so
			\[
			I(V;X_{1:m},Z_{1:m,1:M})
			=
			I(V;Z_{1:m,1:M}\mid X_{1:m}).
			\]
			Conditional on the prompts, the token array contains \(Mm\) binary random
			variables. Therefore
			\[
			I(V;Z_{1:m,1:M}\mid X_{1:m})
			\le
			H(Z_{1:m,1:M}\mid X_{1:m})
			\le
			Mm\log2 .
			\]
			The first inequality is the general bound \(I(A;B\mid C)\le H(B\mid C)\),
			and the second uses that a binary array of size \(Mm\) has entropy at most
			\(Mm\log2\).
			Combining the preceding displays gives
			\[
			\log L
			\le
			C_0M(1\vee m),
			\]
			after increasing the universal constant \(C_0\). This proves the claim.
		\end{proof}
		
		We next convert the packing bound into a fat-shattering bound.
		
		\begin{claim}
			\label{claim:fat-from-CoT}
			There are universal constants \(c_1,c_2,C_1>0\) such that for every class
			\(\cF\), every \(M\ge2\), and every \(0<\varepsilon<1\),
			\[
			\fat_{\cF^{\etoe-M}}(c_1\sqrt\varepsilon)
			\le
			C_1M\left(1\vee m_{\ct}^{\cF,M}(c_2\varepsilon)\right).
			\]
		\end{claim}
		
		\begin{proof}
			We first prove that every finite set fat-shattered by \(\cF^{\etoe-M}\) at scale
			\(c_1\sqrt\varepsilon\) has the claimed size. Let
			\(x_1,\ldots,x_d\) be \(c_1\sqrt\varepsilon\)-fat-shattered by \(\cF^{\etoe-M}\).
			If \(d=0\), there is nothing to prove. There are thresholds
			\(r_1,\ldots,r_d\) such that for every \(B\subseteq[d]\), there is
			\(q_B\in\cF^{\etoe-M}\) satisfying
			\[
			i\in B \Rightarrow q_B(x_i)\ge r_i+c_1\sqrt\varepsilon,
			\qquad
			i\notin B \Rightarrow q_B(x_i)\le r_i-c_1\sqrt\varepsilon .
			\]
			
			By the classical Gilbert packing bound~\cite{gilbert1952comparison}, there is a set
			\(\mathcal B\subseteq2^{[d]}\) with \(|\mathcal B|\ge\exp(c d)\) and
			\(|B\triangle B'|\ge cd\)
			for all distinct \(B,B'\in\mathcal B\), where \(c>0\) is universal. This follows
			by the greedy algorithm: a Hamming ball with Hamming distance at most \(cd\) has size at most
			\(\exp(Cc\log(1/c)d)\), and choosing \(c>0\) sufficiently small makes this at
			most \(\exp((\log2)d/2)\).
			
			Choose \(c_1>0\) small enough that \(c c_1^2<1\). Since
			\(0<\varepsilon<1\), this ensures that the packing scale used below is in
			\((0,1)\), as required by Claim~\ref{claim:CoT-packing-bound}.
			Let \(P\) be the uniform distribution on \(x_1,\ldots,x_d\). If
			\(B,B'\in\mathcal B\) are distinct, then for every
			\(i\in B\triangle B'\),
			\(|q_B(x_i)-q_{B'}(x_i)|\ge2c_1\sqrt\varepsilon\).
			Therefore
			\[
			\mathbb E_{X\sim P}[(q_B(X)-q_{B'}(X))^2]
			\ge
			\frac{|B\triangle B'|}{d}\cdot4c_1^2\varepsilon
			\ge
			c c_1^2\varepsilon .
			\]
			Thus \(\{q_B:B\in\mathcal B\}\) is an \(L_2(P)\)-packing at squared scale
			\(c c_1^2\varepsilon\). Applying Claim~\ref{claim:CoT-packing-bound} with
			\(r=c c_1^2\varepsilon\) gives
			\[
			\log|\mathcal B|
			\le
			C_0M\left(1\vee m_{\ct}^{\cF,M}(c_0c c_1^2\varepsilon)\right).
			\]
			Put \(c_2:=c_0c c_1^2\). If
			\(m_{\ct}^{\cF,M}(c_2\varepsilon)=\infty\), the claim is immediate. Otherwise,
			using \(\log|\mathcal B|\ge cd\), we obtain
			\[
			d
			\le
			C_1M\left(1\vee m_{\ct}^{\cF,M}(c_2\varepsilon)\right),
			\]
			where \(C_1\) is universal.
			
			This holds for every finite \(c_1\sqrt\varepsilon\)-fat-shattered set. Since the
			right-hand side is finite in the remaining case, an infinite
			\(\fat_{\cF^{\etoe-M}}(c_1\sqrt\varepsilon)\) would give arbitrarily large finite
			shattered sets with sizes bounded by the same finite quantity, a contradiction.
			Thus \(\fat_{\cF^{\etoe-M}}(c_1\sqrt\varepsilon)\) is finite and satisfies the same
			bound. This proves the claim.
		\end{proof}
		
		Now apply Lemma~\ref{lem:fat-regression-upper} to the class \(\cF^{\etoe-M}\). In
		the \(\etoe\) model, the observed final token satisfies
		\(\mathbb E[Z_M\mid X=x]=q_{g_\star}^{\etoe-M}(x)\),
		so this is a well-specified bounded squared-loss regression problem with target
		in \(\cF^{\etoe-M}\). Therefore
		\[
		m_{\etoe}^{\cF,M}(\varepsilon,\delta)
		\le
		C\frac{
			\fat_{\cF^{\etoe-M}}(c\sqrt\varepsilon)\log^C(C/\varepsilon)+\log(1/\delta)}
		{\varepsilon}.
		\]
		Choose \(c_1\le c\), where \(c\) is the constant in
		Lemma~\ref{lem:fat-regression-upper}. By monotonicity of fat-shattering in the
		scale,
		\[
		\fat_{\cF^{\etoe-M}}(c\sqrt\varepsilon)
		\le
		\fat_{\cF^{\etoe-M}}(c_1\sqrt\varepsilon).
		\]
		Claim~\ref{claim:fat-from-CoT} gives
		\[
		\fat_{\cF^{\etoe-M}}(c\sqrt\varepsilon)
		\le
		C_1M\left(1\vee m_{\ct}^{\cF,M}(c_2\varepsilon)\right).
		\]
		Substituting this into the regression bound gives
		\[
		m_{\etoe}^{\cF,M}(\varepsilon,\delta)
		\le
		C\frac{
			M\left(1\vee m_{\ct}^{\cF,M}(c_2\varepsilon)\right)\log^C(C/\varepsilon)
			+\log(1/\delta)}
		{\varepsilon}.
		\]
		Renaming \(c_2\) as \(c\), and increasing \(C\) if necessary, proves the first
		display. Taking \(\delta=1/3\) and absorbing the constant \(\log3\) proves the
		constant-confidence display.
	\end{proof}

	\subsection{Tightness of the shifted-scale comparison}
	
	We next show that the comparison is tight at the shifted scale.
	
	\begin{theorem}
		\label{thm:shifted-scale-tightness}
		There are universal constants $\rho,c,C>0$, with $\rho<1$, and $\varepsilon_0>0$ such that the following holds. For every $N\ge 1$, every $M\ge 2$, and every $0<\varepsilon\le \varepsilon_0$, there is a finite binary-token stochastic autoregressive class $\cF$ such that
		\[
		cN
		\le
		m_{\ct}^{\cF,M}(\rho\varepsilon)
		\le
		CN
		\]
		and
		\[
		c\frac{NM}{\varepsilon}
		\le
		m_{\etoe}^{\cF,M}(\varepsilon)
		\le
		C\frac{NM}{\varepsilon}.
		\]
	\end{theorem}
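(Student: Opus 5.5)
We realize the grid construction sketched in Section~\ref{sec:sketches}. Set $K:=M-1\ge1$, so $K\asymp M$. Fix a universal constant $\beta>4$, so that $\beta^2/16>1$, and take $\varepsilon_0$ small enough that $1/2\pm\beta\sqrt\varepsilon\in[1/4,3/4]$. Choose distinct blockers $w_{j,i}$ of a common length and roots $x_{j,i}:=w_{j,i}1$, for $j\in[N]$ and $i\in[K]$. A generator $g_\sigma$ is indexed by $\sigma\in\{-1,1\}^{N\times K}$ and is defined as follows.
\begin{itemize}
\item $p_{g_\sigma}(w_{j,i})=0$ at every blocker.
\item Below any root $x_{j,i}$, for $0\le|z|<K$, the transition deterministically emits $(1+\sigma_{j,|z|+1})/2$. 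Thus the first $K$ tokens spell out the whole row $\sigma_{j,\cdot}$.
\item At suffix length $|z|=K=M-1$, set $p_{g_\sigma}(x_{j,i}z)=1/2+\beta\sqrt\varepsilon\,(2z_i-1)$.
\item All transitions at suffix length $\ge M$, and all remaining states, are set to $1/2$.
\end{itemize}
Then $q^{\etoe-M}_{g_\sigma}(x_{j,i})=1/2+\beta\sqrt\varepsilon\,\sigma_{j,i}$. Lemma~\ref{lem:blocked-roots-common-tails}, applied with $\alpha$ constant, shows that every non-root prompt has an $M$-step value independent of $\sigma$. The three hypotheses of that lemma are checked exactly as in Lemma~\ref{lem:positive-base-CoT-block}: strict descendants of a root reach suffix length $\ge M$ before their final transition.

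\textbf{$\etoe$ bounds.} The induced class $\cF^{\etoe-M}$ is finite, with at most $2^{NK}$ members. Lemma~\ref{lem:finite-fast-rate} with $\alpha=0$ therefore gives $m_{\etoe}\le CNK/\varepsilon\le CNM/\varepsilon$.

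For the lower bound, put $P$ uniform on the $NK$ roots. An $\etoe$ sample from $x_{j,i}$ is $\Ber(1/2+\beta\sqrt\varepsilon\,\sigma_{j,i})$, and every other sample law is independent of $\sigma$. Hence flipping one coordinate costs one-sample KL at most $C\beta^2\varepsilon/(NK)$, by Lemma~\ref{lem:bernoulli-kl}. For $n\le cNK/\varepsilon$, Lemma~\ref{lem:assouad-probability} shows that thresholding $\widehat q(x_{j,i})$ at $1/2$ yields at least $NK/16$ wrong signs with probability greater than $1/3$. Each wrong sign costs at least $\beta^2\varepsilon$, so the loss is at least $\beta^2\varepsilon/16>\varepsilon$. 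Averaging over $\sigma$ then gives a fixed hard $\sigma$.

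\textbf{$\ct$ upper bound at $\rho\varepsilon$.} Any single trajectory whose prompt lies in row $j$ reveals $\sigma_{j,\cdot}$ exactly, so the row can then be predicted without error. Non-root prompts are predicted exactly as well. On unseen rows the learner predicts $1/2$, at cost $\beta^2\varepsilon\cdot P(\text{row }j)$. Let $\pi_j$ be the mass of row $j$. The expected loss is at most
\[
\beta^2\varepsilon\sum_j\pi_j(1-\pi_j)^n\le\beta^2\varepsilon N/(en).
\]
Taking $n=CN/\rho$ together with Markov's inequality gives loss at most $\rho\varepsilon$ with probability $2/3$. Since $\rho$ is a fixed universal constant, this is $O(N)$ samples.

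\textbf{$\ct$ lower bound.} This is the step I expect to need the most care. We must handle small $N$ while keeping $\rho$ universal. Take $P$ uniform on all roots and $\sigma$ uniform. With $n\le cN$ samples, the expected number of seen rows is at most $cN$, so by Markov's inequality at least $N/2$ rows are unseen with probability at least $1-2c$. On an unseen row, the signs are independent Rademacher variables, whatever the learner outputs. The argument of Lemma~\ref{lem:base-friendly-endpoint-block} then gives expected conditional loss at least $\beta^2\varepsilon/2$. The loss is bounded by $\beta^2\varepsilon\cdot 4$ pointwise, so a reverse-Markov (Paley--Zygmund type) bound gives loss at least $\beta^2\varepsilon/8$ with constant probability. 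When $N$ is small this is the zero-sample regime, $cN<1$, and the same argument applies with every row unseen.

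The delicate point is that this constant probability must exceed $1/3$. If the crude estimate falls short, I would instead use a Chebyshev argument over the $\ge NK/2$ independent unseen cells when $NK$ is large. When $NK$ is bounded, I would use that each unseen cell has error at least $\beta^2\varepsilon\ge\varepsilon$ with probability $1/2$ independently. Choosing $\rho$ small makes the threshold $\rho\varepsilon$ easy to exceed. Combining $cN\le m_{\ct}(\rho\varepsilon)\le CN$ with the $\etoe$ bounds proves the theorem.
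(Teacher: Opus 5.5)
Your proof is correct. For the $\etoe$ bounds and the non-root analysis it matches the paper, but for the $\ct$ part you take a genuinely different route. The paper encodes each row noisily: below every root, the first $M-1$ transitions are $\Ber(1/2+\gamma\sigma_{j,t})$ for a fixed $\gamma<1/2$, and the last one is $1/2+\eta_\varepsilon(2z_i-1)$ with $\eta_\varepsilon=b\sqrt\varepsilon/(2\gamma)$, so the sign enters through $\mathbb E Z_i$. This keeps all trajectory KLs finite. The $\ct$ lower bound is then a direct application of the KL form of Assouad: take $P$ uniform on $x_{1,1},\ldots,x_{N,1}$, and the adjacent one-trajectory KL is $O(1/N)$. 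The price is a majority-vote/Hoeffding step in the $\ct$ upper bound. Your deterministic encoding makes the $\ct$ upper bound trivial, since one trajectory reveals a whole row. However, it makes adjacent trajectory laws have infinite KL, so Lemmas~\ref{lem:assouad-testing} and~\ref{lem:assouad-probability} cannot be invoked as stated. That is what forces your separate unseen-row argument.

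On the step you flagged, the reverse-Markov estimate cannot succeed. Each unseen cell has mean loss at least $\beta^2\varepsilon$ but maximal loss $4\beta^2\varepsilon$, so the resulting probability is capped near $1/4<1/3$. Moreover, the bound $4\beta^2\varepsilon$ holds only after you clip predictions to $[1/2-\beta\sqrt\varepsilon,1/2+\beta\sqrt\varepsilon]$, which only lowers the loss and which you should state explicitly. Your fallback is therefore necessary, not optional.

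The fallback works if you phrase it through counts:
\begin{itemize}
\item At most $n\le cN$ rows are seen, and this holds deterministically, so no Markov step is needed.
\item Conditional on the data and the learner's seed, each of the $m\ge(1-c)NK$ unseen cells independently has loss at least $\beta^2\varepsilon$ with probability at least $1/2$.
\item Loss above $\rho\varepsilon$ therefore follows once more than $\theta m$ cells are bad, where $\theta=\rho/((1-c)\beta^2)$.
\item This event has probability at least $1-2^{-m}\ge1/2$ if $\theta m<1$.
\item Otherwise, Chebyshev gives probability at least $1-\theta/(4(1/2-\theta)^2)$.
\end{itemize}

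A cleaner alternative avoids the case split altogether. Flipping any single $\sigma_{j,i}$ changes the $n$-trajectory law by total variation at most $n/N$, and the proof of Lemma~\ref{lem:assouad-testing} uses only $\TV\le1/10$. So for $n\le N/10$ you get expected Hamming error at least $9NK/20$. Lemma~\ref{lem:hamming-expectation-to-probability} then gives at least $NK/16$ wrong signs with probability greater than $1/3$. The loss is then at least $\beta^2\varepsilon/16>\varepsilon>\rho\varepsilon$, exactly as in your $\etoe$ lower bound.
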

	
	\begin{proof}
		We first record the consequence of Lemmas~\ref{lem:assouad-testing} and~\ref{lem:hamming-expectation-to-probability} used below. There is a universal constant $a_{\rm Ham}>0$ such that the following holds. If a hypercube family has $D$ signs and every adjacent pair has KL divergence at most $1/50$, then every estimator of the signs has Hamming error at least $a_{\rm Ham}D$ with probability greater than $1/3$.
		
		Fix a universal number $\gamma\in(0,1/2)$. Choose a universal number $b>0$ large enough so that $a_{\rm Ham}b^2>1$. Fix a universal number $\rho\in(0,1)$ such that $\rho<a_{\rm Ham}b^2$. Then choose $\varepsilon_0>0$ small enough so that, for every $0<\varepsilon\le\varepsilon_0$, $b\sqrt\varepsilon\le 1/4$ and $b\sqrt\varepsilon/(2\gamma)\le 1/2$. Since the construction uses one hidden sign for each nonfinal transition, write $d:=M-1$ and put $\eta_\varepsilon:=b\sqrt\varepsilon/(2\gamma)$. Then $d\ge 1$, $\eta_\varepsilon\le 1/2$, and all transition probabilities defined below belong to $[0,1]$.
		
		The construction is as follows. Choose distinct binary strings $w_{j,i}$, for $j\in[N]$ and $i\in[d]$, all of the same length. Set $x_{j,i}:=w_{j,i}1$. The strings $w_{j,i}$ are blocking states and the strings $x_{j,i}$ are roots. All roots have the same length and are distinct, so their rooted descendant subtrees are pairwise disjoint.
		
		A generator is indexed by $\sigma\in\{-1,1\}^{N\times d}$. Define $g_\sigma$ as follows. First set $p_{g_\sigma}(w_{j,i})=0$ for every $j\in[N]$ and $i\in[d]$. For every root $x_{j,i}$ and every suffix $z$ with $0\le |z|\le d-1$, set
		\[
		p_{g_\sigma}(x_{j,i}z)
		=
		\frac12+\gamma\sigma_{j,|z|+1}.
		\]
		For every root $x_{j,i}$ and every suffix $z$ with $|z|=d$, set
		\[
		p_{g_\sigma}(x_{j,i}z)
		=
		\frac12+\eta_\varepsilon(2z_i-1).
		\]
		On every remaining state, set $p_{g_\sigma}(s)=1/2$. Let $\cF:=\{g_\sigma:\sigma\in\{-1,1\}^{N\times d}\}$.
		
		We first compute the $M$-step end-to-end values.
		
		\begin{claim}
			\label{claim:shifted-values}
			For every $j\in[N]$ and $i\in[d]$, $q_{g_\sigma}^{\etoe-M}(x_{j,i})=1/2+b\sqrt\varepsilon\,\sigma_{j,i}$. For every prompt outside the roots $x_{j,i}$, the value $q_{g_\sigma}^{\etoe-M}(x)$ is independent of $\sigma$.
		\end{claim}
		
		\begin{proof}
			Fix $j\in[N]$ and $i\in[d]$. Starting from $x_{j,i}$, for each $t\in[d]$, the $t$-th generated token satisfies $Z_t\sim \Ber(1/2+\gamma\sigma_{j,t})$. Indeed, before time $t$, the current suffix has length $t-1$, and the transition rule at suffix length $t-1$ uses the sign $\sigma_{j,t}$. At time $M=d+1$, the current suffix has length $d$. Therefore
			\[
			\Pr(Z_M=1\mid Z_1,\ldots,Z_d)
			=
			\frac12+\eta_\varepsilon(2Z_i-1).
			\]
			Taking expectation gives
			\[
			q_{g_\sigma}^{\etoe-M}(x_{j,i})
			=
			\mathbb E\left[\frac12+\eta_\varepsilon(2Z_i-1)\right]
			=
			\frac12+\eta_\varepsilon(2\mathbb E Z_i-1).
			\]
			Since $\mathbb E Z_i=1/2+\gamma\sigma_{j,i}$, this gives
			\[
			q_{g_\sigma}^{\etoe-M}(x_{j,i})
			=
			\frac12+\eta_\varepsilon\cdot 2\gamma\sigma_{j,i}
			=
			\frac12+b\sqrt\varepsilon\,\sigma_{j,i},
			\]
			by the definition of $\eta_\varepsilon$.
			
			It remains to prove the non-root statement. We apply Lemma~\ref{lem:blocked-roots-common-tails} to the roots $x_{j,i}$, with blockers $w_{j,i}$, and with no retained parameter. The strict-prefix transitions are fixed: the blocker $w_{j,i}$ has transition probability $0$, and every other strict prefix has transition probability $1/2$. If a prompt is not a root, not a strict prefix of a root, and not a strict descendant of a root, then it cannot enter any root by appending future tokens, because entering a root by appending tokens would require the current prompt to be a strict prefix of that root. Hence all transitions seen during the next $M$ steps are fixed, and every $t$-step value with $t\le M$ is independent of $\sigma$.
			
			Now consider a strict descendant $x_{j,i}u$, where $u\ne\emptyset$. If $|u|\le d$, then the special transition at suffix length $d$ occurs before the $M$-th generated token, because the process starts with a nonempty suffix and $M=d+1$. After that special transition, every later transition is fixed to $1/2$. Thus the $M$-th generated token has probability $1/2$. If $|u|>d$, then all transitions are fixed to $1/2$ from the start. Therefore every strict descendant of a root has parameter-independent $M$-step value. Lemma~\ref{lem:blocked-roots-common-tails} gives the claim for every non-root prompt.
		\end{proof}
		
		We next prove the $\ct$ upper bound.
		
		\begin{claim}
			\label{claim:shifted-CoT-upper}
			There is a universal constant $C>0$ such that $m_{\ct}^{\cF,M}(\rho\varepsilon)\le CN$.
		\end{claim}
		
		\begin{proof}
			Let $P$ be any prompt distribution and let $g_\sigma\in\cF$ be the true generator. For $j\in[N]$, write $R_j:=\{x_{j,i}:i\in[d]\}$. The learner does the following. On every non-root prompt, it outputs the fixed value from Claim~\ref{claim:shifted-values}. For each $j$, it collects all training trajectories whose prompt lies in $R_j$. Let $N_j$ be the number of such trajectories. From each such trajectory, the learner observes, for every $t\in[d]$, one Bernoulli sample with mean $1/2+\gamma\sigma_{j,t}$. These observations are independent across training trajectories. Therefore, for every $t\in[d]$, the learner estimates $\sigma_{j,t}$ by majority vote over these $N_j$ observations, and guesses arbitrarily if $N_j=0$. It then outputs $\widehat q(x_{j,i})=1/2+b\sqrt\varepsilon\,\widehat\sigma_{j,i}$ on every root $x_{j,i}$.
			
			By Hoeffding's inequality, there is a universal constant $\kappa>0$, depending only on $\gamma$, such that
			\[
			\Pr(\widehat\sigma_{j,i}\ne\sigma_{j,i}\mid N_j)
			\le
			\exp(-\kappa N_j)
			\]
			for every $j$ and $i$. Since $N_j\sim\operatorname{Bin}(n,P(R_j))$,
			\[
			\mathbb E[\exp(-\kappa N_j)]
			=
			(1-P(R_j)+P(R_j)e^{-\kappa})^n
			\le
			\exp(-\kappa' nP(R_j))
			\]
			for a universal constant $\kappa'>0$. The last inequality follows from $1-u\le e^{-u}$, with $u=P(R_j)(1-e^{-\kappa})$.
			
			The squared error is zero on non-root prompts. On a root $x_{j,i}$, a wrong sign gives squared error $(2b\sqrt\varepsilon)^2$, and a correct sign gives squared error $0$. Therefore the expected risk is at most
			\[
			4b^2\varepsilon
			\sum_{j=1}^N
			\sum_{i=1}^d
			P(x_{j,i})\exp(-\kappa' nP(R_j))
			=
			4b^2\varepsilon
			\sum_{j=1}^N
			P(R_j)\exp(-\kappa' nP(R_j)).
			\]
			For every $u\ge0$, $u e^{-\kappa'nu}\le C_0/n$, where $C_0$ is universal. Hence $\mathbb E[\text{risk}]\le C_1b^2\varepsilon N/n$. Choose $n=C_2N$, with $C_2$ universal and large enough, so that the last display is at most $\rho\varepsilon/3$. Markov's inequality gives risk at most $\rho\varepsilon$ with probability at least $2/3$. This proves the claim.
		\end{proof}
		
		We now prove the matching $\ct$ lower bound.
		
		\begin{claim}
			\label{claim:shifted-CoT-lower}
			There is a universal constant $c>0$ such that $m_{\ct}^{\cF,M}(\rho\varepsilon)\ge cN$.
		\end{claim}
		
		\begin{proof}
			Let $P$ be the uniform distribution on $x_{1,1},\ldots,x_{N,1}$. Draw the signs $\sigma_{1,1},\ldots,\sigma_{N,1}$ independently and uniformly from $\{-1,1\}$, and fix all other signs.
			
			Consider two parameters that differ only in the sign $\sigma_{j,1}$. One full trajectory has different law only when $X=x_{j,1}$, an event of probability $1/N$. Conditional on this event, the only conditional transition that depends on the changed sign is the first generated bit, whose law changes between $\Ber(1/2+\gamma)$ and $\Ber(1/2-\gamma)$. All later conditional laws are identical once the generated prefix is fixed. This includes the last transition, which may depend on the realized first bit but is the same function of the realized prefix under both parameters. Therefore, by the chain rule for KL divergence, one trajectory has adjacent KL at most $C_0/N$, where $C_0$ is universal. Hence $n$ trajectories have adjacent KL at most $C_0n/N$. If $n\le c_0N$, with $c_0>0$ universal and small enough, the adjacent KL is at most $1/50$ for every edge. By the Assouad consequence stated at the beginning of the proof, every estimator of the signs $\sigma_{1,1},\ldots,\sigma_{N,1}$ has Hamming error at least $a_{\rm Ham}N$ with probability greater than $1/3$.
			
			Given any regression estimate $\widehat q$, decode $\widehat\sigma_{j,1}=1$ iff $\widehat q(x_{j,1})\ge 1/2$. If $\widehat\sigma_{j,1}\ne\sigma_{j,1}$, then Claim~\ref{claim:shifted-values} implies
			\[
			\left(\widehat q(x_{j,1})-q_{g_\sigma}^{\etoe-M}(x_{j,1})\right)^2
			\ge
			b^2\varepsilon.
			\]
			Thus, on the event that the Hamming error is at least $a_{\rm Ham}N$, the squared-loss risk under $P$ is at least $a_{\rm Ham}b^2\varepsilon$. Since $\rho<a_{\rm Ham}b^2$, the average failure probability over the random signs, the sample, and the learner randomness is greater than $1/3$. Therefore there exists at least one fixed value of the signs for which the learner fails with probability greater than $1/3$. Hence no learner using $n\le c_0N$ samples can satisfy the uniform PAC guarantee at accuracy $\rho\varepsilon$, and the claim follows.
		\end{proof}
		
		Claims~\ref{claim:shifted-CoT-upper} and~\ref{claim:shifted-CoT-lower} give $m_{\ct}^{\cF,M}(\rho\varepsilon)=\Theta(N)$.
		
		We next prove the end-to-end lower bound.
		
		\begin{claim}
			\label{claim:shifted-e2e-lower}
			There is a universal constant $c>0$ such that $m_{\etoe}^{\cF,M}(\varepsilon)\ge cNM/\varepsilon$.
		\end{claim}
		
		\begin{proof}
			Let $P$ be the uniform distribution on all roots $x_{j,i}$, where $j\in[N]$ and $i\in[d]$. Draw all signs $\sigma_{j,i}$ independently and uniformly from $\{-1,1\}$.
			
			In the $\etoe$ model, Claim~\ref{claim:shifted-values} shows that, conditional on $X=x_{j,i}$, the observed final token has law $\Ber(1/2+b\sqrt\varepsilon\,\sigma_{j,i})$. Consider two parameters that differ only in the sign $\sigma_{j,i}$. One sample has different law only when $X=x_{j,i}$, an event of probability $1/(Nd)$. Conditional on this event, the Bernoulli parameter changes from $1/2+b\sqrt\varepsilon$ to $1/2-b\sqrt\varepsilon$. Since $b\sqrt\varepsilon\le1/4$, Lemma~\ref{lem:bernoulli-kl} gives conditional KL at most $C_0b^2\varepsilon$. Therefore one sample has adjacent KL at most $C_0b^2\varepsilon/(Nd)$. For $n$ samples, the adjacent KL is at most $C_0b^2n\varepsilon/(Nd)$. If $n\le c_1Nd/\varepsilon$, with $c_1>0$ universal and small enough, the adjacent KL is at most $1/50$ for every edge. By the Assouad consequence stated at the beginning of the proof, every estimator of the $Nd$ signs has Hamming error at least $a_{\rm Ham}Nd$ with probability greater than $1/3$.
			
			Given any regression estimate $\widehat q$, decode $\widehat\sigma_{j,i}=1$ iff $\widehat q(x_{j,i})\ge 1/2$. If $\widehat\sigma_{j,i}\ne\sigma_{j,i}$, then Claim~\ref{claim:shifted-values} gives
			\[
			\left(\widehat q(x_{j,i})-q_{g_\sigma}^{\etoe-M}(x_{j,i})\right)^2
			\ge
			b^2\varepsilon.
			\]
			Thus, on the event that the Hamming error is at least $a_{\rm Ham}Nd$, the squared-loss risk under $P$ is at least $a_{\rm Ham}b^2\varepsilon$. Since $a_{\rm Ham}b^2>1$, this risk is larger than $\varepsilon$. Therefore the average failure probability over the random signs, the sample, and the learner randomness is greater than $1/3$, so there exists at least one fixed value of the signs for which the learner fails with probability greater than $1/3$. Hence no learner using $n\le c_1Nd/\varepsilon$ samples can satisfy the uniform PAC guarantee at accuracy $\varepsilon$. Since $d=M-1$ and $M\ge2$, we have $d\ge M/2$. Hence $m_{\etoe}^{\cF,M}(\varepsilon)\ge cNM/\varepsilon$ for a universal constant $c>0$.
		\end{proof}
		
		Finally we prove the end-to-end upper bound.
		
		\begin{claim}
			\label{claim:shifted-e2e-upper}
			There is a universal constant $C>0$ such that $m_{\etoe}^{\cF,M}(\varepsilon)\le CNM/\varepsilon$.
		\end{claim}
		
		\begin{proof}
			By Claim~\ref{claim:shifted-values}, the end-to-end class $\cF^{\etoe-M}$ has at most $2^{Nd}$ distinct functions. Indeed, the only parameter-dependent $M$-step values occur on the roots, and these values are indexed by the $Nd$ signs $\sigma_{j,i}$. Applying Lemma~\ref{lem:finite-fast-rate} to the finite class $\cF^{\etoe-M}$, with approximation error $\alpha=0$, gives an end-to-end learner using at most
			\[
			C_0\frac{\log|\cF^{\etoe-M}|+1}{\varepsilon}
			\le
			C_1\frac{Nd+1}{\varepsilon}
			\]
			samples. Since $N\ge1$, $M\ge2$, and $d=M-1$, we have $Nd+1\le 2NM$. Therefore $m_{\etoe}^{\cF,M}(\varepsilon)\le CNM/\varepsilon$.
		\end{proof}
		
		Combining Claims~\ref{claim:shifted-e2e-lower} and~\ref{claim:shifted-e2e-upper} gives $m_{\etoe}^{\cF,M}(\varepsilon)=\Theta(NM/\varepsilon)$. Combining this with $m_{\ct}^{\cF,M}(\rho\varepsilon)=\Theta(N)$ gives
		\[
		m_{\etoe}^{\cF,M}(\varepsilon)
		=
		\Theta\!\left(
		\frac{M}{\varepsilon}
		m_{\ct}^{\cF,M}(\rho\varepsilon)
		\right).
		\]
		This completes the proof.
	\end{proof}
	
	\subsection{Summary}
	By Theorem~\ref{thm:universal-CoT-e2e-upper-polylog} and Theorem~\ref{thm:shifted-scale-tightness}, we immediately obtain the following theorem, presented informally in the introduction:

	\begin{theorem}
		\label{thm:main_2}
		There are universal constants \(c,C>0\) such that, for every stochastic
		generator class \(\cF\), every \(M\ge2\),
		every \(0<\varepsilon<1\), and every \(0<\delta<1/2\),
		\[
		m_{\etoe}^{\cF,M}(\varepsilon,\delta)
		\le
		C\frac{
			M\left(1\vee m_{\ct}^{\cF,M}(c\varepsilon,1/3)\right)
			\log^C(C/\varepsilon)
			+\log(1/\delta)}
		{\varepsilon}.
		\]
		Moreover, there are universal constants $\rho,c,C>0$, with $\rho<1$, and $\varepsilon_0>0$ such that the following holds. For every $N\ge 1$, every $M\ge 2$, and every $0<\varepsilon\le \varepsilon_0$, there is a finite binary-token stochastic autoregressive class $\cF$ such that
		$
		cN
		\le
		m_{\ct}^{\cF,M}(\rho\varepsilon)
		\le
		CN
		$
		and
		$
		c\frac{NM}{\varepsilon}
		\le
		m_{\etoe}^{\cF,M}(\varepsilon)
		\le
		C\frac{NM}{\varepsilon}
		$
	\end{theorem}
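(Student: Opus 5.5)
This theorem is a direct combination of two results already proved, so the plan is simply to read off each part from the corresponding earlier theorem and check that the constants and quantifiers line up.

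For the first display, I would invoke Theorem~\ref{thm:universal-CoT-e2e-upper-polylog} verbatim. It holds with universal constants $c,C>0$ for every class $\cF$, every $M\ge2$, every $0<\varepsilon<1$ and every $0<\delta<1/2$, and its conclusion is exactly the displayed inequality with $m_{\ct}^{\cF,M}(c\varepsilon,1/3)$. The conventions match: when the $\ct$ term is infinite the bound is vacuous, and the factor $1\vee m_{\ct}$ covers zero-sample $\ct$ learners. So nothing new is needed.

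For the second part, I would invoke Theorem~\ref{thm:shifted-scale-tightness}. It supplies universal $\rho\in(0,1)$, $c,C>0$ and $\varepsilon_0>0$ such that, for every $N\ge1$, $M\ge2$ and $0<\varepsilon\le\varepsilon_0$, the grid class built there satisfies $cN\le m_{\ct}^{\cF,M}(\rho\varepsilon)\le CN$ and $cNM/\varepsilon\le m_{\etoe}^{\cF,M}(\varepsilon)\le CNM/\varepsilon$. These are exactly the claimed two-sided bounds.

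The only point requiring care is that the constants $c,C$ in the two parts are named identically although they come from different theorems. I would resolve this by taking $c$ to be the minimum and $C$ the maximum of the respective constants. For the upper bound, replacing $c$ by something smaller can only increase $m_{\ct}(c\varepsilon)$, since $m_{\ct}$ is nonincreasing in its accuracy argument; this keeps the inequality valid. Replacing $C$ by something larger likewise preserves every inequality. I expect no real obstacle beyond this bookkeeping. Combining the two parts also yields $m_{\etoe}=\Theta\bigl(\tfrac{M}{\varepsilon}m_{\ct}(\rho\varepsilon)\bigr)$ on the tightness family, which is the informal statement.
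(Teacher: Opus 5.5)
Your proposal is correct and matches the paper, which obtains this theorem immediately by combining Theorem~\ref{thm:universal-CoT-e2e-upper-polylog} for the upper bound and Theorem~\ref{thm:shifted-scale-tightness} for the tightness family. Your reconciliation of $c,C$ via min/max is harmless but not needed, since the second part carries its own existential quantifier over $\rho,c,C$.
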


	As a corollary of Theorem~\ref{thm:base-to-cot-full} and Theorem~\ref{thm:universal-CoT-e2e-upper-polylog}, we have the following.
	
	\begin{corollary}[Base-to-$\etoe$ comparison]
		\label{cor:base-to-e2e}
		There are universal constants $c,C>0$ such that, for every stochastic
		generator class $\cF$, every $M\ge2$,
		every $0<\varepsilon<1$, and every $0<\delta<1/2$,
		\[
		m_{\etoe}^{\cF,M}(\varepsilon,\delta)
		\le
		C\frac{
			M\left(1\vee m_{\rm base}^{\cF}\left(c\varepsilon/M^2,1/3\right)\right)
			\log^C(C/\varepsilon)
			+\log(1/\delta)}
		{\varepsilon}.
		\]
	\end{corollary}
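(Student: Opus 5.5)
The plan is to chain the two earlier comparisons: Theorem~\ref{thm:universal-CoT-e2e-upper-polylog} bounds $m_{\etoe}$ in terms of $m_{\ct}$ at a constant-shifted scale, and the first item of Theorem~\ref{thm:base-to-cot-full} bounds $m_{\ct}$ by $m_{\rm base}$ at scale divided by $M^2$. Let $c',C'$ be the constants of Theorem~\ref{thm:universal-CoT-e2e-upper-polylog}. For every $M\ge2$, $0<\varepsilon<1$, and $0<\delta<1/2$ it gives
\[
m_{\etoe}^{\cF,M}(\varepsilon,\delta)
\le
C'\frac{M\left(1\vee m_{\ct}^{\cF,M}(c'\varepsilon,1/3)\right)\log^{C'}(C'/\varepsilon)+\log(1/\delta)}{\varepsilon}.
\]

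Next, apply item~1 of Theorem~\ref{thm:base-to-cot-full} at accuracy $c'\varepsilon$ and confidence $1/3$. We may take $c'<1$ after shrinking, since sample complexity is monotone in the accuracy and a smaller $c'$ only makes the bound weaker. This gives
\[
m_{\ct}^{\cF,M}(c'\varepsilon,1/3)\le m_{\rm base}^{\cF}\left(c'\varepsilon/M^2,1/3\right).
\]
Item~1 holds for every $M\ge1$, every $0<\varepsilon<1$, and every $0<\delta<1$, so it applies here. The map $t\mapsto 1\vee t$ is monotone, so substituting this inequality into the previous display yields the corollary with $c:=c'$ and $C:=C'$.

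Only two small points need care. First, the infinite case: if $m_{\rm base}^{\cF}(c\varepsilon/M^2,1/3)=\infty$, the bound is trivial. Second, the confidence parameter must be preserved as $1/3$ through the reduction, which item~1 does since it keeps $\delta$ unchanged. Neither point is a real obstacle, so the proof is just this two-line composition.
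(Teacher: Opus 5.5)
Your proposal is correct and matches the paper's proof: both compose Theorem~\ref{thm:universal-CoT-e2e-upper-polylog} with item~1 of Theorem~\ref{thm:base-to-cot-full} at confidence $1/3$ and use monotonicity of $t\mapsto 1\vee t$. Your extra remarks, on shrinking the constant $c'$ so that the accuracy $c'\varepsilon$ stays in $(0,1)$ and on the infinite case, fill in details the paper leaves implicit.
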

	
	\begin{proof}
		By Theorem~\ref{thm:universal-CoT-e2e-upper-polylog},
		\[
		m_{\etoe}^{\cF,M}(\varepsilon,\delta)
		\le
		C\frac{
			M\left(1\vee m_{\ct}^{\cF,M}(c\varepsilon,1/3)\right)
			\log^C(C/\varepsilon)
			+\log(1/\delta)}
		{\varepsilon}.
		\]
		By Theorem~\ref{thm:base-to-cot-full},
		\[
		m_{\ct}^{\cF,M}(c\varepsilon,1/3)
		\le
		m_{\rm base}^{\cF}\left(c\varepsilon/M^2,1/3\right),
		\]
		after changing the constant $c$ if needed. Combined, the proof follows.
	\end{proof}

	\section{Fat-shattering bounds for end-to-end learning}
	\label{sec:ordinary-fat-bounds}
	
	Write $p_g(s):=g(s)(1)$, and write
	$\fat_{\cF}(\rho):=\fat_{\{p_g:g\in\cF\}}(\rho)$. This section proves the
	ordinary fat-shattering theory for end-to-end learning. The main point is that
	the $M$-step end-to-end probability is Lipschitz, with constant $M$, in the
	one-step transition probabilities along the depth-$M$ autoregressive tree.
	
	We use the following finite-domain $L_\infty$ entropy consequence of
	Rudelson and Vershynin~\cite[Theorem~4.4]{rudelson2006combinatorics}. The
	statement below is written in the present paper's normalization of
	fat-shattering: their possible constant-factor choices in the shattering margin
	and in the covering scale are absorbed into $c_\alpha$ and $C_\alpha$. For a
	class $\mathcal A\subseteq[0,1]^{\cX}$, a finite set $S\subseteq\cX$, and
	$\eta>0$, let $\mathcal N_\infty(\eta,\mathcal A,S)$ denote the smallest
	number of $\ell_\infty(S)$-balls of scale $\eta$ needed to cover the
	restrictions $\{f|_S:f\in\mathcal A\}$.
	
	\begin{theorem}
		\label{thm:ss-fat}
		This is Theorem~4.4 of Rudelson and Vershynin~\cite{rudelson2006combinatorics}, in the present normalization. For every \(0<\alpha\le1\), there are constants \(C_\alpha>0\) and
		\(0<c_\alpha\le1/2\) such that, for every
		\(\mathcal A\subseteq[0,1]^{\cX}\), finite \(S\subseteq\cX\) with \(|S|=N\),
		and \(0<\eta\le1/2\),
		\[
		\log \mathcal N_\infty(\eta,\mathcal A,S)
		\le
		C_\alpha \fat_{\mathcal A}(c_\alpha\eta)
		\log\left(\frac{2eN}{\fat_{\mathcal A}(c_\alpha\eta)\eta}\right)
		\log^\alpha\left(\frac{2eN}{\fat_{\mathcal A}(c_\alpha\eta)}\right),
		\]
		whenever \(1\le \fat_{\mathcal A}(c_\alpha\eta)\le N\). If
		\(\fat_{\mathcal A}(c_\alpha\eta)=0\), then
		\(\mathcal N_\infty(\eta,\mathcal A,S)\le1\).
	\end{theorem}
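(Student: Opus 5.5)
We would obtain the statement by restricting to the finite set $S$ and translating Rudelson--Vershynin's Theorem~4.4 into this paper's conventions. No new combinatorics is needed; the work is bookkeeping on the normalizations.

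\emph{Restriction and centering.} Fix $\mathcal A$, $S$ with $|S|=N$, and $\eta\in(0,1/2]$. Let $\mathcal A_S:=\{f|_S:f\in\mathcal A\}$, viewed as a subset of $[0,1]^N$. Put $\mathcal A_S':=\{f-\tfrac12:f\in\mathcal A_S\}\subseteq[-\tfrac12,\tfrac12]^N$, which lies in the unit $\ell_\infty$ ball as their theorem requires. Translation changes neither $\ell_\infty$ covering numbers nor fat-shattering dimensions, since thresholds shift along with the functions. Moreover, any set $\gamma$-fat-shattered by $\mathcal A_S$ is $\gamma$-fat-shattered by $\mathcal A$. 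Hence $v_S:=\fat_{\mathcal A_S}(\gamma)\le\fat_{\mathcal A}(\gamma)$ for every $\gamma$.

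\emph{Applying Theorem~4.4 and matching normalizations.} Their result bounds $\log\mathcal N_\infty(t,\mathcal A_S',S)$ by $C v\log(2N/(vt))\log^\alpha(2N/v)$, where $v$ is the fat-shattering dimension at scale $c t$ in their convention. We would check two conventions. First, their shattering condition may be one-sided ($f\ge r+\gamma$ versus $f\le r$), whereas here it is $f\ge r+\gamma$ versus $f\le r-\gamma$. A set shattered at margin $\gamma$ in our sense is shattered at margin $2\gamma$ in theirs, and conversely their margin $2\gamma$ gives our margin $\gamma$ after recentering the threshold. So the two notions agree up to a factor $2$ in scale. Second, their covering may be by balls of radius $t$ or of diameter $t$, which is again a factor-$2$ rescaling of $t$. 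Both factors are absorbed into $c_\alpha$ and $C_\alpha$; the inner factor $2$ versus $2e$ only helps.

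Next we replace $v_S$ by $v:=\fat_{\mathcal A}(c_\alpha\eta)$. The map $u\mapsto u\log(2eN/(u\eta))\log^\alpha(2eN/u)$ is nondecreasing for $1\le u\le N$ and $\eta\le1/2$: the derivative of $u\log(a/u)$ is $\log(a/u)-1\ge0$ when $a/u\ge e$, and the $\log^\alpha$ factor is handled the same way, with a possible constant loss absorbed into $C_\alpha$. Hence the bound with $v_S$ implies the bound with $v$ whenever $1\le v\le N$. If $v_S=0<v$, the zero case below gives covering number $1$, and the claimed inequality holds trivially.

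\emph{Zero case.} Suppose $\fat_{\mathcal A}(c_\alpha\eta)=0$. Then for each $x\in S$ we have $\sup_f f(x)-\inf_f f(x)\le 2c_\alpha\eta\le\eta$. Otherwise, taking $r$ to be the midpoint of the range at $x$, the single point $x$ would be $c_\alpha\eta$-shattered; strictness is handled by the $\sup$/$\inf$ slack, using functions nearly attaining the extremes. The single center equal to the pointwise midpoint then lies within $\eta$ of every restriction, so $\mathcal N_\infty\le1$.

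\emph{Main obstacle.} The only delicate step is pinning down the exact normalization in Rudelson--Vershynin: which margin convention, radius versus diameter, and the range $[-1,1]$ versus $[0,1]$. Each of these costs at most a constant rescaling of the scale, which is why the statement leaves $c_\alpha$ and $C_\alpha$ unspecified.
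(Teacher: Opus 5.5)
Your proposal is correct and matches the paper, which gives no argument beyond citing Rudelson--Vershynin's Theorem~4.4 and absorbing their margin and covering-radius conventions into $c_\alpha$ and $C_\alpha$. Your bookkeeping is exactly what that absorption requires: restriction to $S$, the factor-$2$ margin and radius conversions, quasi-monotonicity in the dimension to pass from $\fat_{\mathcal A_S}$ to $\fat_{\mathcal A}$, and the direct zero case. One small slip is harmless: their scale is $c\alpha t$ rather than $ct$, which is fine since $c_\alpha$ may depend on $\alpha$.
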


	The following result bounds the fat shattering of the end-to-end class by the fat shattering dimension of the base class, with a finer scale. 
	
	\begin{theorem}
		\label{thm:fat-controls-e2e}
		For every \(0<\alpha\le1\), there are constants \(C_\alpha,c_\alpha>0\) such
		that, for every stochastic generator class \(\cF\), every \(\iter\ge1\), and
		every \(0<\gamma\le1\),
		\[
		\fat_{\cF^{\etoe-\iter}}(\gamma)
		\le
		C_\alpha
		\fat_{\cF}\!\left(\frac{c_\alpha\gamma}{2\iter}\right)
		\left(
		\iter+\log\left(\frac{16\iter}{\gamma}\right)
		+\log\left(1+\fat_{\cF}\!\left(\frac{c_\alpha\gamma}{2\iter}\right)\right)
		\right)^{1+\alpha}.
		\]
	\end{theorem}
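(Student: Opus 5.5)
Let $x_1,\ldots,x_n$ be $\gamma$-fat-shattered by $\cF^{\etoe-\iter}$ with thresholds $r_1,\ldots,r_n$ and witnesses $q_B$, $B\subseteq[n]$; the goal is to bound $n$. First, I will show that the witnesses are pairwise far in $\ell_\infty$ on $\{x_1,\ldots,x_n\}$. For distinct $B,B'$, pick $i\in B\triangle B'$; then $|q_B(x_i)-q_{B'}(x_i)|\ge2\gamma$. Hence no $\ell_\infty$-ball of radius $\gamma/2$ contains two witnesses, so $\mathcal N_\infty(\gamma/2,\cF^{\etoe-\iter},\{x_i\})\ge2^n$. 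Next, I will let $U=\{x_iz:i\in[n],\,z\in\{0,1\}^{<\iter}\}$, so that $|U|\le n2^\iter$, and set $\eta=\gamma/(2\iter)$.

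The key Lipschitz step reuses the coupling from the upper bound in Theorem~\ref{thm:base-to-cot-full}. Suppose $g,g'\in\cF$ satisfy $|p_g(s)-p_{g'}(s)|\le\eta$ for all $s\in U$. Every state visited before the $\iter$-th token from $x_i$ lies in $U$, so the coupled trajectories split with probability at most $\iter\eta=\gamma/2$. Therefore $|q_g^{\etoe-\iter}(x_i)-q_{g'}^{\etoe-\iter}(x_i)|\le\gamma/2$. A minimal $\eta$-cover of $\{p_g|_U\}$ consists of arbitrary functions, not necessarily members of the class, so I will use the standard fix. In each nonempty cover cell, pick a representative generator; any two generators in the same cell are within $2\eta$ on $U$. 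Alternatively, I can cover at scale $\eta/2$ and pay only a constant. Either way, each cell maps into a $\gamma$-ball, or a $\gamma/2$-ball after adjusting constants, in end-to-end values, and such a ball contains at most one witness. Adjusting the constants absorbed into $c_\alpha$ (cover at $\eta/2$), this gives $2^n\le\mathcal N_\infty(\eta/2,\{p_g\},U)$.

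I will then apply Theorem~\ref{thm:ss-fat} with $\mathcal A=\{p_g:g\in\cF\}$, $S=U$, $N=|U|\le n2^\iter$, and scale $\eta/2\le1/2$. Write $f:=\fat_\cF(c_\alpha\gamma/(2\iter))$, renaming $c_\alpha$ to absorb the factor $1/2$. If $f=0$, the cover has size one, so $n=0$. If $f>n2^\iter$, replace $U$ by a superset: pad $U$ with a shattered set, or simply note the bound is monotone. Alternatively, this case is trivial since then $n\le f$. Otherwise, with both logarithms bounded by $\log(2eN/(f\eta))$, the theorem gives
\[
n\log2\le C_\alpha f\,\log^{1+\alpha}\!\left(\frac{4e\,n2^\iter\cdot 2\iter}{f\gamma}\right).
\]
Solving this self-referential inequality is the main, though routine, obstacle. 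The right side is $C_\alpha f(\iter+\log(16\iter/\gamma)+\log(n/f)+O(1))^{1+\alpha}$. I will write $n=fu$. Then $u\lesssim(A+\log u)^{1+\alpha}$, where $A=\iter+\log(16\iter/\gamma)$. This forces $u\le C'_\alpha A^{1+\alpha}$, since $\log u\lesssim\log A$, which is dominated by $A$ for $A\ge1$. That yields $n\le C_\alpha fA^{1+\alpha}$. The extra $\log(1+f)$ term in the statement only adds slack and covers edge cases, for example when bounding $\log(1/f)$-type terms in the regime $f>N$ handled by monotonicity. I will verify that this case analysis covers $f\ge1$ with $f$ either $\le N$ or $>N$, the latter giving $n\le N$ trivially.
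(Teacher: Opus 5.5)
Your proposal is correct and follows the paper's proof: a $2^n$ lower bound on the $\gamma/2$-cover of the end-to-end class on $\{x_1,\ldots,x_n\}$, a coupling over the depth-$M$ tree $U$ at scale $\eta=\gamma/(2M)$, Theorem~\ref{thm:ss-fat}, and inversion of the resulting implicit inequality. The differences are minor. The paper does not need your representative or half-scale fix, because it runs the finite-tree process directly with the cover centers $v$ as transition probabilities. Also, by keeping $\log(n/f)$ rather than $\log n$, you get the bound without the $\log(1+f)$ term, which the paper incurs by applying its log-inversion claim with $a=C_\alpha f$.
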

	
	\begin{proof}
		Fix \(0<\alpha\le1\), \(\iter\ge1\), and \(0<\gamma\le1\). Let
		\(\mathcal A:=\{p_g:g\in\cF\}\), and let
		\(\Delta:=\fat_{\mathcal A}(c_\alpha\gamma/(2\iter))\), where \(c_\alpha\) is the
		constant from Theorem~\ref{thm:ss-fat}. Thus
		\(\Delta=\fat_{\cF}(c_\alpha\gamma/(2\iter))\). If \(\Delta=\infty\), there is nothing
		to prove, so assume \(\Delta<\infty\). It suffices to show that every finite set
		that is \(\gamma\)-fat-shattered by \(\cF^{\etoe-\iter}\) has size at most the
		claimed upper bound.
		
		Let \(x_1,\ldots,x_n\in\{0,1\}^\star\) be \(\gamma\)-fat-shattered by
		\(\cF^{\etoe-\iter}\). If \(n=0\), there is nothing to prove. Otherwise, there
		exist thresholds \(r_1,\ldots,r_n\in\mathbb R\) such that for every
		\(B\subseteq[n]\), there is a generator \(g_B\in\cF\) satisfying
		\(q_{g_B}^{\etoe-\iter}(x_i)\ge r_i+\gamma\) for \(i\in B\), and
		\(q_{g_B}^{\etoe-\iter}(x_i)\le r_i-\gamma\) for \(i\notin B\). Thus, for
		distinct \(B,B'\subseteq[n]\), the two vectors
		\[
		\left(q_{g_B}^{\etoe-\iter}(x_1),\ldots,
		q_{g_B}^{\etoe-\iter}(x_n)\right)
		\quad\text{and}\quad
		\left(q_{g_{B'}}^{\etoe-\iter}(x_1),\ldots,
		q_{g_{B'}}^{\etoe-\iter}(x_n)\right)
		\]
		are \(2\gamma\)-separated in \(\ell_\infty\). Therefore
		\[
		2^n
		\le
		\mathcal N_\infty\left(
		\frac{\gamma}{2},
		\cF^{\etoe-\iter},
		\{x_1,\ldots,x_n\}
		\right).
		\]
		
		We now upper-bound this covering number by covering the one-step probability
		class on the finite autoregressive trees rooted at \(x_1,\ldots,x_n\). Define
		\(U:=\{x_i z:i\in[n],\ z\in\{0,1\}^{<\iter}\}\). Then
		\(n\le |U|\le n(2^\iter-1)\le n2^\iter\). Set
		\(\eta:=\gamma/(2\iter)\). Let \(\mathcal V\) be a minimal \(\eta\)-cover of
		the one-step probability class \(\mathcal A\) on \(U\). Thus each
		\(v\in\mathcal V\) is a function \(v:U\to[0,1]\), and for every
		\(g\in\cF\), there is \(v\in\mathcal V\) such that
		\(\sup_{u\in U}|p_g(u)-v(u)|\le \eta\).
		
		For each \(v\in\mathcal V\), define
		\(Q_v=(Q_v(x_1),\ldots,Q_v(x_n))\in[0,1]^n\) as follows. Starting from each
		\(x_i\), run the same autoregressive process as before, but use the transition
		probability \(v(x_i z)\) whenever the current generated prefix is
		\(z\in\{0,1\}^{<\iter}\). Let \(Q_v(x_i)\) be the probability that this
		finite-tree process outputs \(1\) at time \(\iter\).
		
		We claim that \(\{Q_v:v\in\mathcal V\}\) is a \(\gamma/2\)-cover of
		\(\cF^{\etoe-\iter}\) on \(\{x_1,\ldots,x_n\}\). Indeed, fix \(g\in\cF\), and
		choose \(v\in\mathcal V\) such that
		\(\sup_{u\in U}|p_g(u)-v(u)|\le \eta\). For each \(i\in[n]\), couple the true
		autoregressive process generated by \(g\) from \(x_i\) with the finite-tree
		process generated by \(v\). As long as the two generated prefixes agree, the
		next-token Bernoulli parameters differ by at most \(\eta\), so the probability
		that the coupled processes split at that step is at most \(\eta\). By the union
		bound over the \(\iter\) steps, the probability that the two trajectories ever
		split is at most \(\iter\eta\). Hence their final bits differ with probability
		at most \(\iter\eta\), and therefore
		\[
		\left|
		q_g^{\etoe-\iter}(x_i)-Q_v(x_i)
		\right|
		\le
		\iter\eta
		=
		\frac{\gamma}{2}.
		\]
		Since this holds for every \(i\in[n]\), the claim follows. Therefore
		\[
		\mathcal N_\infty\left(
		\frac{\gamma}{2},
		\cF^{\etoe-\iter},
		\{x_1,\ldots,x_n\}
		\right)
		\le
		|\mathcal V|
		=
		\mathcal N_\infty(\eta,\mathcal A,U).
		\]
		Combining this with the lower bound above gives
		\(2^n\le \mathcal N_\infty(\eta,\mathcal A,U)\).
		
		Now apply Theorem~\ref{thm:ss-fat} to the one-step probability class \(\mathcal A\) on
		the finite set \(U\). Since \(c_\alpha\eta=c_\alpha\gamma/(2\iter)\), the
		relevant fat-shattering dimension is exactly \(\Delta\).
		
		If \(\Delta=0\), then Theorem~\ref{thm:ss-fat} gives
		\(\mathcal N_\infty(\eta,\mathcal A,U)\le1\). Thus \(2^n\le1\), so \(n=0\), a
		contradiction to the assumption that \(n\ge1\). Hence no nonempty finite set can
		be \(\gamma\)-fat-shattered by \(\cF^{\etoe-\iter}\), and the claimed bound
		holds.
		
		It remains to consider the case \(\Delta\ge1\). If \(\Delta>|U|\), then
		\(n\le |U|<\Delta\), and the claimed bound follows immediately after increasing
		the constant \(C_\alpha\). Hence we may assume \(1\le \Delta\le |U|\). By
		Theorem~\ref{thm:ss-fat},
		\[
		\log
		\mathcal N_\infty(\eta,\mathcal A,U)
		\le
		C_\alpha \Delta
		\log\left(\frac{2e|U|}{\Delta\eta}\right)
		\log^\alpha\left(\frac{2e|U|}{\Delta}\right).
		\]
		Since \(2^n\le \mathcal N_\infty(\eta,\mathcal A,U)\), \(|U|\le n2^\iter\), and
		\(\eta=\gamma/(2\iter)\), we get
		\[
		n
		\le
		C_\alpha \Delta
		\left(
		\log n+\iter+\log\left(\frac{16\iter}{\gamma}\right)
		\right)^{1+\alpha}.
		\]
		Let \(A:=\iter+\log(16\iter/\gamma)\). Then
		\(n\le C_\alpha \Delta(\log n+A)^{1+\alpha}\). 
		We use the following technical claim. 
		\begin{claim}
			\label{lem:elementary-log-inversion}
			For every \(p\ge1\), there is a constant \(C_p>0\) such that the following
			holds. If \(a\ge1\), \(A\ge1\), and \(n\ge1\) satisfy
			\(n\le a(\log n+A)^p\),
			then
			\(n\le C_p a(A+\log a)^p\).
		\end{claim}

		\begin{proof}
			Put \(B:=A+\log a\). Write \(n=aB^p y\). Since \(B\ge1\),
			\[
			\log n+A
			=B+p\log B+\log y
			\le B\left(1+\frac pe+\log y\right),
			\]
			where we used \(\log B\le B/e\). Choose \(C_p\) so large that
			\(1+p/e+\log y<y^{1/p}\) for every \(y>C_p\). If \(y>C_p\), then
			\(a(\log n+A)^p<aB^p y=n\),
			contradicting \(n\le a(\log n+A)^p\). Hence \(y\le C_p\), which is exactly
			\(n\le C_p a(A+\log a)^p\).
		\end{proof}

		Applying
		Claim~\ref{lem:elementary-log-inversion} with \(p=1+\alpha\) and
		\(a=C_\alpha\Delta\), and absorbing constants into \(C_\alpha\), gives
		\[
		n
		\le
		C_\alpha \Delta
		\left(
		\iter+\log\left(\frac{16\iter}{\gamma}\right)
		+\log(1+\Delta)
		\right)^{1+\alpha}.
		\]
		Since this holds for every finite \(\gamma\)-fat-shattered set
		\(x_1,\ldots,x_n\), the result follows.
	\end{proof}

	We have the following as an immediate corollary. 
	\begin{corollary}
		\label{cor:fat-controls-e2e-sample}
		For every \(0<\alpha\le1\), there are constants \(c_\alpha,C_\alpha>0\) such
		that, for every \(\iter\ge1\), every stochastic generator class \(\cF\), every \(0<\varepsilon<1\), and every
		\(0<\delta<1/2\),
		\[
		m_{\etoe}^{\cF,\iter}(\varepsilon,\delta)
		\le
		\widetilde O_\alpha\left(
		\frac{
			\fat_{\cF}(c_\alpha\sqrt{\varepsilon}/\iter)\,
			\iter^{1+\alpha}
			+\log(1/\delta)
		}{\varepsilon}
		\right),
		\]
		where \(\widetilde O_\alpha\) hides factors polylogarithmic in \(\iter\),
		\(1/\varepsilon\), and
		\(1+\fat_{\cF}(c_\alpha\sqrt\varepsilon/\iter)\), with constants depending on
		\(\alpha\).
	\end{corollary}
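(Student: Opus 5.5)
The plan is to combine Theorem~\ref{thm:fat-controls-e2e} with the generic fat-shattering regression bound, Lemma~\ref{lem:fat-regression-upper}, applied directly to the end-to-end class $\cF^{\etoe-\iter}$. In the $\etoe$ model the observed final token satisfies $\mathbb E[Z_\iter\mid X=x]=q_{g_\star}^{\etoe-\iter}(x)$, and the target lies in $\cF^{\etoe-\iter}$. So this is a realizable bounded squared-loss regression problem, and Lemma~\ref{lem:fat-regression-upper} gives
\[
m_{\etoe}^{\cF,\iter}(\varepsilon,\delta)\le C\frac{\fat_{\cF^{\etoe-\iter}}(c\sqrt\varepsilon)\log^C(C/\varepsilon)+\log(1/\delta)}{\varepsilon},
\]
where $c,C$ are the universal constants of that lemma. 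The $\etoe$ learner is simply the learner of Lemma~\ref{lem:fat-regression-upper} run on the pairs $(X_i,Z_{i,\iter})$.

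Next, I would bound the end-to-end fat-shattering dimension by invoking Theorem~\ref{thm:fat-controls-e2e} with $\gamma:=c\sqrt\varepsilon\le1$. Its scale on the base class is $c_\alpha\gamma/(2\iter)=(c_\alpha c/2)\sqrt\varepsilon/\iter$. After renaming, I would take $c_\alpha$ in the corollary to be $c_\alpha c/2$. Write $\Delta:=\fat_{\cF}(c_\alpha\sqrt\varepsilon/\iter)$. Theorem~\ref{thm:fat-controls-e2e} then gives
\[
\fat_{\cF^{\etoe-\iter}}(c\sqrt\varepsilon)\le C_\alpha\Delta\bigl(\iter+\log(16\iter/(c\sqrt\varepsilon))+\log(1+\Delta)\bigr)^{1+\alpha}.
\]

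To extract the $\iter^{1+\alpha}$ factor, I would use the elementary inequality $(a+b+d)^{1+\alpha}\le 3^{1+\alpha}(a^{1+\alpha}+b^{1+\alpha}+d^{1+\alpha})$, or simply bound the sum by $\iter\cdot(1+\log(16\iter/(c\sqrt\varepsilon))+\log(1+\Delta))$ using $\iter\ge1$. Either way the bracket is at most $\iter^{1+\alpha}$ times a factor polylogarithmic in $\iter$, $1/\varepsilon$ and $1+\Delta$. Substituting this into the regression bound, and absorbing $\log^C(C/\varepsilon)$ and the remaining log factors into $\widetilde O_\alpha$, yields the claimed bound. The case $\Delta=\infty$ is vacuous.

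No real obstacle remains. The only care needed is the bookkeeping of constants: matching the scale $c\sqrt\varepsilon$ from Lemma~\ref{lem:fat-regression-upper} with the $\gamma\le1$ requirement and with the constant $c_\alpha$ from Theorem~\ref{thm:fat-controls-e2e}, including monotonicity of $\fat$ in the scale if the constants are rounded down. I also need to confirm that the log factors hidden by $\widetilde O_\alpha$ are exactly those listed in the statement.
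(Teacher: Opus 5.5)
Your proposal is correct and follows the paper's own proof: apply Lemma~\ref{lem:fat-regression-upper} to the realizable class $\cF^{\etoe-\iter}$, then bound its fat-shattering dimension at scale $\Theta(\sqrt\varepsilon)$ using Theorem~\ref{thm:fat-controls-e2e}. Your added bookkeeping just makes explicit what the paper leaves implicit: using $\iter\ge1$ to factor out $\iter^{1+\alpha}$, and shrinking $c$ if needed (by monotonicity of $\fat$) so that $\gamma\le1$.
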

	
	\begin{proof}
		Apply Lemma~\ref{lem:fat-regression-upper} to the class
		\(\mathcal A=\cF^{\etoe-\iter}\). Theorem~\ref{thm:fat-controls-e2e}, with
		\(\gamma=\Theta(\sqrt{\varepsilon})\), gives
		\[
		\fat_{\cF^{\etoe-\iter}}(c\sqrt{\varepsilon})
		\le
		\widetilde O_\alpha\left(
		\fat_{\cF}(c_\alpha\sqrt{\varepsilon}/\iter)\,
		\iter^{1+\alpha}
		\right).
		\]
		Substituting this into Lemma~\ref{lem:fat-regression-upper} gives the claim.
	\end{proof}

	We now prove the matching lower bound and the optimality of the
	$\sqrt\varepsilon/M$ margin scale.
	
	\begin{theorem}[Tight $\etoe$ lower bound from ordinary fat-shattering]
		\label{thm:e2e-fat-lower}
		There are universal constants $c,c'>0$ and $\varepsilon_0>0$ such that the
		following holds for every $d\ge1$, every $M\ge1$, and every
		$0<\varepsilon\le\varepsilon_0$. There is a stochastic generator class
		$\cF_{\etoe}$ such that
		$\fat_{\cF_{\etoe}}(c\sqrt{\varepsilon}/M)=\Theta(d)$ and
		\begin{equation*}
			m_{\etoe}^{\cF_{\etoe},M}(\varepsilon)\ge c'\frac{dM}{\varepsilon}.
		\end{equation*}
	\end{theorem}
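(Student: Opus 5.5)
I will follow the construction outlined in the proof sketch: $d$ disjoint blocks, each carrying one scalar parameter $\theta_j$ that encodes an $M$-bit sign vector $\sigma_j\in\{-1,1\}^M$. Concretely, identify $\sigma_j$ with an integer $k_j\in\{0,\ldots,2^M-1\}$ and let $\theta_j:=k_j$. Every one-step state $u$ in block $j$ carries a threshold $t_u$, and we set $p_g(u)=1/2+a$ if $\theta_j\ge t_u$ and $1/2-a$ otherwise, with $a=\Theta(\sqrt\varepsilon/M)$ chosen so that each $\etoe$ value moves by $\Theta(\sqrt\varepsilon)$. Block $j$ has roots $x_{j,1},\ldots,x_{j,M}$ with distinct blockers. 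The key step is to design the depth-$M$ tree below $x_{j,i}$ so that the final-token probability reads the $i$-th sign. The natural design follows a binary-search/decoding tree. Every state is a threshold test $[\theta_j\ge t]$, so an indicator such as ``bit $i$ of $k_j$ equals $1$'' is a union of intervals. It is read along the path by steps whose thresholds depend on the prefix generated so far. Each step only tilts probability by $\pm a$, and the tilts should accumulate additively, as in the scale-optimality construction of Theorem~\ref{thm:base-to-cot-full}.

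\textbf{Plan for the tree.} The depth-$\ell$ states below $x_{j,i}$ locate $k_j$ relative to dyadic thresholds, in the manner of a noisy binary search. The thresholds at depth $\ell$ are chosen conditional on the prefix, and the prefix records which dyadic interval has been ``guessed''. The final transition then outputs a deterministic function of the prefix, similar to the majority readouts in Section~\ref{sec:same_scale}, so that the expected final bit equals $1/2+\Theta(Ma)\sigma_{j,i}$ up to constants. I expect this gadget to be the main obstacle. A cleaner alternative, which I would try first, is to take the final-transition states themselves as threshold states: along a fixed all-zero path below $x_{j,i}$, set all $M$ transitions to threshold tests $[\theta_j\ge t_{i,\ell}]$, with thresholds alternating around the intervals where bit $i$ flips. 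The sign then contributes a signed sum of $M$ tilts of size $a$. Whichever design is used, the goal is $q^{\etoe-M}(x_{j,i})=1/2+b\sqrt\varepsilon\,\sigma_{j,i}$, with all dependence on the other signs cancelling, verified by the blocker Lemma~\ref{lem:blocked-roots-common-tails}.

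\textbf{Fat-shattering.} For the upper bound $\fat_{\cF_{\etoe}}(c\sqrt\varepsilon/M)\le d$, observe that within a block the sets $\{\theta_j:\theta_j\ge t_u\}$ are nested, and each $p_g(u)$ takes only two values, $1/2\pm a$. Hence two states $u,u'$ in the same block cannot be fat-shattered at any positive margin: the pattern requiring the larger threshold to be high and the smaller one low is impossible. So a shattered set contains at most one state per block. States outside the blocks are fixed and cannot be shattered. For the lower bound, we pick in each block a state whose threshold splits the range of $\theta_j$. Independent choices of the $\theta_j$ then shatter these $d$ states at margin $a\ge c\sqrt\varepsilon/M$.

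\textbf{Sample lower bound.} This step follows the pattern of Claim~\ref{claim:shifted-e2e-lower}. Take $P$ uniform on the $dM$ roots and draw all $dM$ signs uniformly. Each $\etoe$ sample is $\Ber(1/2+b\sqrt\varepsilon\,\sigma_{j,i})$ at the root it starts from, so flipping one sign has one-sample KL at most $Cb^2\varepsilon/(dM)$ by Lemma~\ref{lem:bernoulli-kl}. Lemma~\ref{lem:assouad-probability} then forces a constant fraction of sign errors when $n\le c'dM/\varepsilon$. Decoding by thresholding $\widehat q(x_{j,i})$ at $1/2$ turns this into squared loss at least $a_{\rm Ham}b^2\varepsilon>\varepsilon$ once $b$ is chosen large, and we choose $\varepsilon_0$ small enough that all probabilities stay in $[0,1]$.
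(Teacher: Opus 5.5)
Your proposal has a genuine gap: the gadget you flag as the main obstacle cannot exist under the constraints you impose, so the proposal does not prove the theorem.

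Suppose every $\theta_j$-dependent state below $x_{j,i}$ has probability $1/2\pm a$ with $a=\kappa\sqrt\varepsilon/M$, switching at a single threshold, and all other states are fixed (for example, a deterministic final readout). When $\theta_j$ crosses a threshold $t_u$, the value $q_g^{\etoe-M}(x_{j,i})$ changes by at most $2a$ times the probability of reaching $u$. Every path probability is within a factor $(1+2a)^M\le e^{2\kappa}$ of its value in the neutral process where all threshold states are set to $1/2$. A length-$M$ trajectory visits at most $M$ threshold states. Hence $\theta_j\mapsto q_g^{\etoe-M}(x_{j,i})$ has total variation at most $2aMe^{2\kappa}=O(\sqrt\varepsilon)$ over all of $\mathbb R$.

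On the other hand, list $\{-1,1\}^M$ in increasing order of $\theta_j$. Consecutive vectors differ in some coordinate, so some coordinate $i$ flips at least $(2^M-1)/M$ times. Each flip must move $q_g^{\etoe-M}(x_{j,i})$ by $2b\sqrt\varepsilon$, which is impossible once $M$ exceeds a constant depending on $\kappa,b$. Your all-zero-path variant fails for simpler reasons. With near-$1/2$ transitions the $\ell$-th tilt is damped by roughly $2^{-\ell}$. Also, $M$ thresholds give at most $M$ breakpoints in $\theta_j$, whereas a low-order bit of $k_j$ changes about $2^M$ times.

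The obstruction is not specific to reading single bits. The same $O(\sqrt\varepsilon)$ variation bound holds at every prompt of a block. So as $\theta_j$ sweeps $\mathbb R$, each prompt alternates between $\ge r+c\sqrt\varepsilon$ and $\le r-c\sqrt\varepsilon$ only $O(1)$ times. Each block therefore $c\sqrt\varepsilon$-shatters only $O(1)$ prompts of $\cF^{\etoe-M}$, and Lemma~\ref{lem:fat-regression-upper} gives $m_{\etoe}^{\cF,M}(\varepsilon)=\widetilde O(d/\varepsilon)$ for every class of this type. Tilts of size $\Theta(\sqrt\varepsilon/M)$ that ``accumulate additively'' only produce bounded-variation functions of $\theta_j$, never $M$ independent weak signs per block.

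The missing idea is that the bound $\fat\le d$ never needed small one-step perturbations. It only needs the positive sets of the states in a block to be nested or laminar, and that holds equally for deterministic transitions. The paper's proof uses exactly this:
\begin{itemize}
\item From $x_{j,t}$, the first $t-1$ transitions have probability $0$ or $1$ and write $\bar\sigma_{j,<t}$ exactly.
\item The state $x_{j,t}\bar\sigma_{j,<t}$ has probability $1/2+a\sqrt\varepsilon\,\sigma_{j,t}$, with no $1/M$ factor.
\item Off-path states have probability $0$, and every later transition copies the last bit.
\end{itemize}
Hence $q_{g_\sigma}^{\etoe-M}(x_{j,t})=1/2+a\sqrt\varepsilon\,\sigma_{j,t}$ exactly. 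Each state's positive set is a prefix cylinder of $\sigma_j$, and these cylinders are laminar. So no two states of one block lie in a common fat-shattered set at any scale, while the roots $x_{j,1}$ give $\fat\ge d$ at every scale $\le a\sqrt\varepsilon$.

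Your scalar-threshold framework can be repaired the same way. Use deterministic ($0/1$) threshold tests performing an exact binary search on $k_j$, then one $\Theta(\sqrt\varepsilon)$-biased threshold step, then copying. With that gadget, your nestedness argument for the fat-shattering dimension and your Assouad argument go through essentially as written.
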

	
	\begin{proof}
		Choose a universal constant $a>0$ so large that $a^2/16>1$.
		Choose $\varepsilon_0>0$ small enough so that
		$1/2+a\sqrt\varepsilon\le1$ and $1/2-a\sqrt\varepsilon\ge0$ for every
		$0<\varepsilon\le\varepsilon_0$.
		
		Choose prefix-free roots
		$\{x_{j,t}:j\in[d],t\in[M]\}\subseteq\{0,1\}^\star$ so that their full
		descendant subtrees are pairwise disjoint. For
		$\sigma=(\sigma_{j,t})_{j\in[d],t\in[M]}\in\{-1,1\}^{dM}$, write
		$\bar\sigma_{j,<r}\in\{0,1\}^{r-1}$ for the word obtained from
		$\sigma_{j,1},\ldots,\sigma_{j,r-1}$ by sending $1$ to $1$ and $-1$ to $0$.
		Define $g_\sigma$ below $x_{j,t}$ as follows. If $|u|<t-1$, set
		$p_{g_\sigma}(x_{j,t}u)=1\{\sigma_{j,|u|+1}=1\}$ when
		$u=\bar\sigma_{j,<|u|+1}$, and set $p_{g_\sigma}(x_{j,t}u)=0$ otherwise. If
		$|u|=t-1$, set
		$p_{g_\sigma}(x_{j,t}u)=1/2+a\sqrt\varepsilon\,\sigma_{j,t}$ when
		$u=\bar\sigma_{j,<t}$, and set $p_{g_\sigma}(x_{j,t}u)=0$ otherwise. If
		$|u|\ge t$, set $p_{g_\sigma}(x_{j,t}u)$ equal to the last bit of $u$. On all
		remaining states set the probability to zero. Let
		$\cF_{\etoe}:=\{g_\sigma:\sigma\in\{-1,1\}^{dM}\}$.
		
		Starting from $x_{j,t}$, the first $t-1$ tokens are deterministically
		$\bar\sigma_{j,<t}$. This follows by induction on the time index: before
		time $r<t$, the current prefix is $\bar\sigma_{j,<r}$, and the next-token
		probability is $1\{\sigma_{j,r}=1\}$. Thus the next bit is exactly the
		binary encoding of $\sigma_{j,r}$. At time $t$, the state is
		$x_{j,t}\bar\sigma_{j,<t}$, so the generated bit has mean
		$1/2+a\sqrt\varepsilon\,\sigma_{j,t}$. After time $t$, the transition
		probability is the last generated bit, so all later tokens copy the time-$t$
		token. Hence the final token equals the time-$t$ token, and therefore
		\begin{equation*}
			q_{g_\sigma}^{\etoe-M}(x_{j,t})
			=
			1/2+a\sqrt\varepsilon\,\sigma_{j,t}.
		\end{equation*}
		
		We next compute the ordinary fat dimension. Fix $j$. Call the union of the
		subtrees rooted at $x_{j,1},\ldots,x_{j,M}$ the $j$-th block. For a state
		$s$ in this block and a threshold $r$, define $A_s^+(r,\rho)$ to be the set
		of $\sigma_j\in\{-1,1\}^M$ for which
		$p_{g_\sigma}(s)\ge r+\rho$. Suppose that $s$ belongs to a fat-shattered set
		at scale $\rho$ with threshold $r$. Since both labels must be realizable at
		$s$, there is some $\sigma_j$ with $p_{g_\sigma}(s)\le r-\rho$. Because
		$p_{g_\sigma}(s)\ge0$, this implies $r\ge\rho$. Hence if
		$p_{g_\sigma}(s)=0$, then $p_{g_\sigma}(s)<r+\rho$, so such a $\sigma_j$ is
		not in $A_s^+(r,\rho)$.
		
		Now inspect the definition of $g_\sigma$. A positive value at a pre-time
		state forces a fixed hidden prefix and possibly the next hidden sign. A
		positive value at the time-$t$ state forces the prefix
		$\bar\sigma_{j,<t}$, and, depending on the threshold, may also force
		$\sigma_{j,t}=1$. A post-time copy state has value determined by the last
		bit of the state, so its positive set is either empty or all of
		$\{-1,1\}^M$. Therefore $A_s^+(r,\rho)$ is either empty, all of
		$\{-1,1\}^M$, or a prefix cylinder $\{\sigma_j:w\preceq\sigma_j\}$.
		
		If $s$ belongs to a fat-shattered set, then $A_s^+(r,\rho)$ is neither empty
		nor all of $\{-1,1\}^M$, because otherwise one of the two labels at $s$
		could not be realized. Thus every state from the $j$-th block that belongs
		to a fat-shattered set has a nonempty proper prefix-cylinder positive set.
		Prefix cylinders are laminar: for any two such cylinders $A$ and $B$, either
		$A\cap B=\emptyset$, $A\subseteq B$, or $B\subseteq A$. If
		$A\cap B=\emptyset$, then no $\sigma_j$ can make both states positive. Thus
		the labeling in which both states are positive is impossible. If
		$A\subseteq B$, then every $\sigma_j$ that makes the state with positive set
		$A$ positive also makes the state with positive set $B$ positive. Thus the
		labeling in which the first state is positive and the second state is
		negative is impossible. The case $B\subseteq A$ is symmetric. Hence no two
		states from the same block can both belong to a fat-shattered set. Every
		state outside these blocks has parameter-independent one-step value $0$, so
		no such state can belong to a nonempty fat-shattered set at any positive
		scale. Since there are $d$ blocks, $\fat_{\cF_{\etoe}}(\rho)\le d$ for every
		$\rho>0$.
		
		For the reverse inequality, the $d$ states $x_{j,1}$, $j\in[d]$, are
		shattered at every scale $\rho\le a\sqrt\varepsilon$, using threshold $1/2$.
		Indeed, for any labels $b\in\{-1,1\}^d$, set $\sigma_{j,1}=b_j$ for all
		$j$, and choose the remaining coordinates of $\sigma$ arbitrarily. Then
		\begin{equation*}
			b_j=1\Rightarrow p_{g_\sigma}(x_{j,1})=1/2+a\sqrt\varepsilon\ge1/2+\rho
		\end{equation*}
		and
		\begin{equation*}
			b_j=-1\Rightarrow p_{g_\sigma}(x_{j,1})=1/2-a\sqrt\varepsilon\le1/2-\rho .
		\end{equation*}
		Since $c\sqrt\varepsilon/M\le a\sqrt\varepsilon$ after decreasing $c$ and
		using $M\ge1$, we get
		$\fat_{\cF_{\etoe}}(c\sqrt\varepsilon/M)\ge d$. Together with the upper bound
		$\fat_{\cF_{\etoe}}(\rho)\le d$, this gives
		$\fat_{\cF_{\etoe}}(c\sqrt\varepsilon/M)=d$.
		
		Let $P$ be uniform on the $dM$ prompts $x_{j,t}$, and draw $\sigma$ uniformly
		from $\{-1,1\}^{dM}$. In the $\etoe$ model, a sample from $x_{j,t}$ is one
		Bernoulli observation with mean $1/2+a\sqrt\varepsilon\,\sigma_{j,t}$. If
		$\sigma'$ differs from $\sigma$ only at coordinate $(j,t)$, then one $\etoe$
		sample differs only when the prompt is $x_{j,t}$. Therefore
		\begin{equation*}
			\KL(P_\sigma\|P_{\sigma'})
			=
			\frac1{dM}
			\KL\!\left(
			\Ber(1/2+a\sqrt\varepsilon\,\sigma_{j,t})\,\middle\|\,
			\Ber(1/2-a\sqrt\varepsilon\,\sigma_{j,t})
			\right)
			\le
			C_a\frac{\varepsilon}{dM}.
		\end{equation*}
		For $n$ i.i.d. samples,
		\begin{equation*}
			\KL(P_\sigma^{\otimes n}\|P_{\sigma'}^{\otimes n})
			=
			n\KL(P_\sigma\|P_{\sigma'})
			\le
			C_a n\varepsilon/(dM).
		\end{equation*}
		Choose $c_{\etoe}>0$ so small that $C_ac_{\etoe}\le1/50$. If
		$n\le c_{\etoe}dM/\varepsilon$, then the adjacent KL condition in
		Lemma~\ref{lem:assouad-probability} holds with $D=dM$.
		
		For any predictor $\widehat q$, define the induced sign estimator by
		$\widehat\sigma_{j,t}=1$ if $\widehat q(x_{j,t})\ge1/2$, and
		$\widehat\sigma_{j,t}=-1$ otherwise. If
		$\widehat\sigma_{j,t}\ne\sigma_{j,t}$, then
		\begin{equation*}
			(\widehat q(x_{j,t})-q_{g_\sigma}^{\etoe-M}(x_{j,t}))^2
			\ge a^2\varepsilon .
		\end{equation*}
		If $d_H(\widehat\sigma,\sigma)\ge dM/16$, then, since $P$ is uniform on
		the $dM$ prompts,
		\begin{equation*}
			\mathbb E_{X\sim P}
			(\widehat q(X)-q_{g_\sigma}^{\etoe-M}(X))^2
			=
			\frac1{dM}\sum_{j=1}^d\sum_{t=1}^M
			(\widehat q(x_{j,t})-q_{g_\sigma}^{\etoe-M}(x_{j,t}))^2
			\ge
			a^2\varepsilon/16
			>
			\varepsilon .
		\end{equation*}
		By Lemma~\ref{lem:assouad-probability}, this event has probability at least
		$31/80>1/3$ whenever $n\le c_{\etoe}dM/\varepsilon$. Therefore, for every
		learner using at most $c_{\etoe}dM/\varepsilon$ samples, the average over
		$\sigma$ of the failure probability is larger than $1/3$. Hence there exists
		some $\sigma$ for which the failure probability is larger than $1/3$, so no
		such learner can guarantee squared risk at most $\varepsilon$ with
		probability at least $2/3$ uniformly over $\cF_{\etoe}$. This proves
		$m_{\etoe}^{\cF_{\etoe},M}(\varepsilon)\ge c_{\etoe}dM/\varepsilon$. Taking
		$c'=c_{\etoe}$ completes the proof.
	\end{proof}
	
	\begin{theorem}[Scale optimality of the $\etoe$ fat-shattering margin]
		\label{thm:e2e-fat-scale-optimality}
		Fix $0<\beta<1/2$ and $c>0$. There is a universal constant
		$c_0>0$ such that the following holds. For every $M\ge2$, every $K\ge1$,
		and all sufficiently small $\varepsilon>0$, there is a finite binary-token
		stochastic generator class $\cF$ such that
		\begin{equation*}
			\fat_{\cF}\left(c\frac{\varepsilon^{1/2-\beta}}{M}\right)=0
		\end{equation*}
		and
		\begin{equation*}
			m_{\etoe}^{\cF,M}(\varepsilon)\ge c_0\frac{K}{\varepsilon}.
		\end{equation*}
		Consequently, the scale $\sqrt{\varepsilon}/M$ in
		Corollary~\ref{cor:fat-controls-e2e-sample} cannot be replaced, in general,
		by the coarser scale $\varepsilon^{1/2-\beta}/M$.
	\end{theorem}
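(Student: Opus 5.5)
I would use the construction from the proof sketch of Theorem~\ref{thm:e2e-fat-theory}: $K$ independent weak signs, each spread evenly over an all-zero path of length $M$. Choose roots $r_1,\ldots,r_K$ with distinct blockers $b_k$ ($r_k=b_k1$, $p_{g_\sigma}(b_k)=0$), all of the same length, so their subtrees are disjoint. For $\sigma\in\{-1,1\}^K$ set
\[
p_{g_\sigma}(r_k0^{t})=\tfrac12+\sigma_k\Delta \quad\text{for } 0\le t\le M-1,\qquad \Delta:=a\sqrt\varepsilon/M .
\]
Every other state below $r_k$, and every state outside the blocks, gets probability $0$, independent of $\sigma$. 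In particular, once a $1$ is emitted, all later tokens are $0$.

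The final token is $1$ exactly when the first $M-1$ tokens are $0$ and the $M$-th is $1$. Hence
\[
q_{g_\sigma}^{\etoe-M}(r_k)=\left(\tfrac12-\sigma_k\Delta\right)^{M-1}\left(\tfrac12+\sigma_k\Delta\right),
\]
which is about $2^{-M}$. That is exponentially small, so this choice is bad. I would replace it by a generator whose final token directly aggregates the walk. The cleanest option is a parity-style copy: at each step output $1$ with probability $\tfrac12+\sigma_k\Delta$, but let the transition at state $r_kz$ depend only on the parity of $z$. Concretely, set
\[
p(r_kz)=\tfrac12+\sigma_k\Delta\,(-1)^{|z|_1}.
\]
Then $Z_t\oplus(\text{parity so far})$ has a fixed bias, and the running parity $Y_t$ evolves as a Markov chain. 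Under this rule $\mathbb E(-1)^{Y_t}$ moves additively by about $2\sigma_k\Delta$ per step (more precisely, the parity $Y_t$ satisfies $\Pr(Y_{t+1}=1\mid Y_t)=\tfrac12+\sigma_k\Delta$ for both values, so it does not accumulate).

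To get an additive drift I would instead track a counter. The simplest design is again the one already used in the scale-optimality part of Theorem~\ref{thm:base-to-cot-full}: $p(r_k0^t)=p_0+\sigma_k\Delta$ with $p_0=1/M$, and once a $1$ has appeared all later bits are $1$. Then
\[
q_{g_\sigma}^{\etoe-M}(r_k)=1-(1-p_0-\sigma_k\Delta)^M ,
\]
and that proof shows the two values are separated by $\ge c_1a\sqrt\varepsilon$. This is the construction I commit to.

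\textbf{Fat-shattering.} The only $\sigma$-dependent one-step values are at $r_k0^t$, where the two possible values differ by $2\Delta=2a\sqrt\varepsilon/M$. All other states have $\sigma$-independent values, so none of them can lie in a shattered set at any positive scale. A state $r_k0^t$ can be shattered at scale $\rho$ only if $2\rho\le 2\Delta$, i.e.\ $c\varepsilon^{1/2-\beta}/M\le a\sqrt\varepsilon/M$. This fails once $\varepsilon^{\beta}<c/a$, giving $\fat_{\cF}(c\varepsilon^{1/2-\beta}/M)=0$ for all sufficiently small $\varepsilon$.

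\textbf{Lower bound.} Take $P$ uniform on $r_1,\ldots,r_K$ and $\sigma$ uniform. An $\etoe$ sample from $r_k$ is $\Ber(q_\pm)$. Both values are $1-(1-1/M)^M\pm O(\sqrt\varepsilon)$, hence bounded away from $0$ and $1$ (they lie in roughly $[1/2,1-e^{-1}]$ for $M\ge2$). Lemma~\ref{lem:bernoulli-kl} therefore bounds the conditional KL by $C a^2\varepsilon$, up to a constant controlled by the upper bound $|q_+-q_-|\le 2M\Delta=2a\sqrt\varepsilon$ from the derivative. The one-sample adjacent KL is then at most $Ca^2\varepsilon/K$. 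For $n\le c_0K/\varepsilon$ Lemma~\ref{lem:assouad-probability} applies.

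Decoding by thresholding $\widehat q(r_k)$ at the midpoint, each wrong sign costs at least $(c_1a)^2\varepsilon$. With at least $K/16$ errors the risk is at least $c_1^2a^2\varepsilon/16>\varepsilon$ once $a$ is chosen large; the separation constant $c_1$ does not depend on $a$ because $\Delta=o(1/M)$. Averaging over $\sigma$ then yields a fixed $\sigma$ at which the learner fails with probability $>1/3$.

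\textbf{Main obstacle.} The delicate step is keeping the separation at $\Theta(a\sqrt\varepsilon)$ uniformly in $M\ge2$ while $q$ stays bounded away from $\{0,1\}$. This requires $a\sqrt\varepsilon\ll1$ so that $p_0\pm\Delta\in[3/(4M),5/(4M)]$, which is exactly what "sufficiently small $\varepsilon$" (depending on $a$, $c$, $\beta$) supplies.
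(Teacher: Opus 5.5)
Your proof is correct. The construction you commit to is the paper's, and the difference lies in the information-theoretic lower bound.

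\textbf{Construction and fat-shattering.} The paper also puts success probability $\theta+\sigma_j\lambda$ at every step of the all-zero path below each root, with $\lambda=a\sqrt\varepsilon/M$. It also makes the region after the first $1$ absorbing at $1$, and it proves $\fat_{\cF}(c\varepsilon^{1/2-\beta}/M)=0$ by the same pointwise-diameter argument you give. The only cosmetic difference is $\theta=1/(4M)$ in place of your $p_0=1/M$.

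\textbf{Lower bound.} The paper bounds the per-sample mutual information $I(\sigma;X,Z_M)\le C\varepsilon$ and tensorizes this to $I(\sigma;S_n)\le Cn\varepsilon$. It then combines this with a separately proved hypercube rate-distortion (Fano) bound, Lemma~\ref{lem:hypercube-rate-distortion}. To use it, the paper arranges the half-separation to be at least $10\sqrt\varepsilon$, so that success forces Hamming error at most $K/100$.

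You instead bound the adjacent KL by $Ca^2\varepsilon/K$ and invoke Assouad in the form of Lemma~\ref{lem:assouad-probability}. You then turn $K/16$ sign errors into risk exceeding $\varepsilon$ by taking $a$ large. This is the same template the paper itself uses for Theorem~\ref{thm:e2e-fat-lower}. Your route is shorter and is fully covered by lemmas already in the paper: it needs only a pairwise KL estimate, not a mutual-information bound plus an extra rate-distortion lemma. The paper's route argues directly from the success event and needs only a bound on the total information about $\sigma$.

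\textbf{Cleanup for a write-up.} None of the following affect the argument.
\begin{itemize}
\item Delete the two abandoned constructions.
\item State explicitly that the states $r_k0^t1z$ have probability $1$; your first paragraph sets every other state below $r_k$ to $0$.
\item Correct the range of $q_\pm$. They lie in $[1-e^{-3/4},\,55/64]$, not roughly $[1/2,1-e^{-1}]$; at $M=2$ the midpoint value is $3/4$. All you need is that they are bounded away from $0$ and $1$.
\item Fix a constant. If the separation is $c_1a\sqrt\varepsilon$, a wrong sign costs $(c_1a)^2\varepsilon/4$; the paper's bound is $2c_1a\sqrt\varepsilon$, which gives your stated cost.
\end{itemize}
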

	
	\begin{proof}
		Fix $M\ge2$ and $K\ge1$. Choose prefix-free roots
		$x_1,\ldots,x_K\in\{0,1\}^\star$ so that their full descendant subtrees are
		pairwise disjoint. Let
		\begin{equation*}
			\theta=\frac{1}{4M}
			\qquad\text{and}\qquad
			\lambda=\frac{a\sqrt{\varepsilon}}{M},
		\end{equation*}
		where $a>0$ is a sufficiently large universal constant, fixed below. We assume
		throughout that $\varepsilon$ is sufficiently small so that $\lambda\le\theta/4$.
		
		For each sign vector $\sigma\in\{-1,1\}^K$, define a generator $g_\sigma$ as
		follows. For every $j\in[K]$ and every $t=0,\ldots,M-1$, set
		\begin{equation*}
			p_{g_\sigma}(x_j0^t)=\theta+\sigma_j\lambda .
		\end{equation*}
		If a trajectory starting from $x_j$ emits a $1$ before time $M$, then it
		enters a sign-independent absorbing region in which all one-step probabilities
		are equal to $1$. On all remaining states, set the one-step probability to
		$0$. Let
		\begin{equation*}
			\cF=\{g_\sigma:\sigma\in\{-1,1\}^K\}.
		\end{equation*}
		This is a finite stochastic generator class.
		
		We first show that the one-step class has zero fat-shattering dimension at
		the coarser scale. For every state $s$, the set
		$\{p_{g_\sigma}(s):\sigma\in\{-1,1\}^K\}$ is either a singleton or is
		contained in an interval of length $2\lambda$. Since
		\begin{equation*}
			\frac{\lambda}{c\varepsilon^{1/2-\beta}/M}
			=
			\frac{a}{c}\varepsilon^\beta,
		\end{equation*}
		we have $\lambda<c\varepsilon^{1/2-\beta}/M$ for all sufficiently small
		$\varepsilon$. Thus the pointwise diameter of the one-step class is strictly
		smaller than $2c\varepsilon^{1/2-\beta}/M$. Hence no single point is
		$c\varepsilon^{1/2-\beta}/M$-fat-shattered, and therefore
		\begin{equation*}
			\fat_{\cF}\left(c\frac{\varepsilon^{1/2-\beta}}{M}\right)=0.
		\end{equation*}
		
		We now prove the end-to-end lower bound. Starting from $x_j$, the final token
		is $1$ if and only if at least one of the $M$ trials on the all-zero chain
		emits $1$. Hence
		\begin{equation*}
			q_{g_\sigma}^{\etoe-M}(x_j)
			=
			1-(1-\theta-\sigma_j\lambda)^M .
		\end{equation*}
		Let
		\begin{equation*}
			q_+=1-(1-\theta-\lambda)^M,
			\qquad
			q_-=1-(1-\theta+\lambda)^M,
			\qquad
			\Delta=\frac{q_+-q_-}{2}.
		\end{equation*}
		By the mean value theorem,
		\begin{equation*}
			q_+-q_-
			=
			2M\lambda(1-\theta-\xi)^{M-1}
		\end{equation*}
		for some $|\xi|\le\lambda$. Since $\lambda\le\theta/4$ and
		$\theta=1/(4M)$, the term $(1-\theta-\xi)^{M-1}$ is bounded below by a
		universal positive constant. Thus, after choosing the universal constant $a$
		large enough, we have
		\begin{equation*}
			\Delta\ge 10\sqrt{\varepsilon}.
		\end{equation*}
		Also, for all sufficiently small $\varepsilon$, both $q_+$ and $q_-$ are
		bounded away from $0$ and $1$ by universal constants.
		
		Let $P$ be the uniform distribution on $\{x_1,\ldots,x_K\}$. We prove a
		lower bound under this fixed prompt distribution. Draw $\sigma$ uniformly from
		$\{-1,1\}^K$, and generate the end-to-end sample from $g_\sigma$. Conditional
		on $X=x_j$, the final bit satisfies
		\begin{equation*}
			Z_M\sim
			\begin{cases}
				\operatorname{Ber}(q_+), & \sigma_j=1,\\
				\operatorname{Ber}(q_-), & \sigma_j=-1.
			\end{cases}
		\end{equation*}
		Each sample carries only $O(\varepsilon)$ information about $\sigma$. Indeed,
		$X$ is independent of $\sigma$, and conditional on $X=x_j$ the observation
		depends only on the single bit $\sigma_j$. Since $q_+$ and $q_-$ are bounded
		away from $0$ and $1$ and $|q_+-q_-|=2\Delta=O(\sqrt{\varepsilon})$, there is
		a universal constant $C>0$ such that
		\begin{equation*}
			I(\sigma;X,Z_M)\le C\varepsilon .
		\end{equation*}
		Writing $S_n:=((X_i,Z_{i,M}))_{i=1}^n$, conditional independence given $\sigma$ and entropy subadditivity give $I(\sigma;S_n)\le nI(\sigma;X,Z_M)\le Cn\varepsilon$.
		The learner's private randomness is independent of $\sigma$, so including it
		cannot increase the information about $\sigma$ beyond this bound.
		
		Suppose a learner succeeds with probability at least $2/3$ using $n$ samples.
		From its output $\widehat q$, define an estimator $\widehat\sigma$ by
		thresholding at the midpoint $(q_++q_-)/2$:
		\begin{equation*}
			\widehat\sigma_j=
			\begin{cases}
				1, & \widehat q(x_j)\ge (q_++q_-)/2,\\
				-1, & \widehat q(x_j)< (q_++q_-)/2.
			\end{cases}
		\end{equation*}
		If $\widehat\sigma_j\ne\sigma_j$, then
		\begin{equation*}
			|\widehat q(x_j)-q_{g_\sigma}^{\etoe-M}(x_j)|\ge\Delta .
		\end{equation*}
		Hence, on the event that the squared loss under $P$ is at most
		$\varepsilon$,
		\begin{equation*}
			\frac{1}{K}\sum_{j=1}^K{\bf 1}\{\widehat\sigma_j\ne\sigma_j\}
			\le
			\frac{\varepsilon}{\Delta^2}
			\le
			\frac{1}{100}.
		\end{equation*}
		Thus, with probability at least $2/3$, the estimator $\widehat\sigma$ has
		Hamming error at most $K/100$.
		
		We use the following standard
		Fano rate-distortion bound on the uniform hypercube.
		\begin{lemma}[Hypercube rate-distortion]
			\label{lem:hypercube-rate-distortion}
			Let $\sigma$ be uniform on $\{-1,1\}^K$, let $S$ be any observation, and let
			$\widehat\sigma=\widehat\sigma(S)$ be any estimator. If
			\[
			\Pr\left[d_H(\widehat\sigma,\sigma)\le K/100\right]\ge 2/3,
			\]
			then
			\[
			I(\sigma;S)\ge cK
			\]
			for a universal constant $c>0$.
		\end{lemma}
		
		\begin{proof}
			Let $h(t)=-t\log t-(1-t)\log(1-t)$ be the binary entropy function, with natural
			logarithms, and let
			\[
			E=\{d_H(\widehat\sigma,\sigma)\le K/100\}.
			\]
			Assume first that $K\ge100$. By the entropy decomposition,
			\[
			H(\sigma\mid S)
			\le
			H(E\mid S)+\Pr(E)H(\sigma\mid S,E)+\Pr(E^c)H(\sigma\mid S,E^c).
			\]
			For every fixed value of $S$, on the event $E$ the vector $\sigma$ belongs to a
			Hamming ball of radius $K/100$ around $\widehat\sigma(S)$. The size of such a
			ball is at most $\exp(h(1/100)K)$. Also,
			$H(E\mid S)\le \log 2$ and $H(\sigma\mid S,E^c)\le K\log 2$. Since
			$\Pr(E^c)\le1/3$, we get
			\[
			H(\sigma\mid S)
			\le
			\log 2+h(1/100)K+\frac13 K\log 2.
			\]
			Therefore
			\[
			I(\sigma;S)
			=
			H(\sigma)-H(\sigma\mid S)
			\ge
			\left(\frac23\log 2-h(1/100)\right)K-\log 2.
			\]
			Since $\frac23\log 2>h(1/100)$, the last display is at least $c_1K$ for all
			$K\ge100$, after choosing a sufficiently small universal constant $c_1>0$.
			
			It remains to handle $1\le K<100$. In this case $K/100<1$, so the event
			$d_H(\widehat\sigma,\sigma)\le K/100$ implies $\widehat\sigma=\sigma$. Hence
			$\Pr(\widehat\sigma=\sigma)\ge2/3$. By the ordinary Fano inequality,
			\[
			I(\sigma;S)
			\ge
			K\log 2-h(1/3)-\frac13\log(2^K-1).
			\]
			For each fixed $1\le K<100$, this lower bound is positive. Since there are only
			finitely many such $K$, it is at least $c_2K$ for a universal constant
			$c_2>0$. Taking $c=\min\{c_1,c_2\}$ completes the proof.
		\end{proof}
		
		By the above lemma, any
		estimator of $\sigma\in\{-1,1\}^K$ that has Hamming error at most $K/100$
		with probability at least $2/3$ must satisfy
		\begin{equation*}
			I(\sigma;S_n)\ge c_1K
		\end{equation*}
		for a universal constant $c_1>0$. Combining this with
		$I(\sigma;S_n)\le Cn\varepsilon$ gives
		\begin{equation*}
			n\ge c_0\frac{K}{\varepsilon}
		\end{equation*}
		for a universal constant $c_0>0$. Since the lower bound holds for the fixed
		prompt distribution $P$, it holds for the distribution-free $\etoe$ sample
		complexity. This proves the theorem.
	\end{proof}
	
	\begin{theorem}[Ordinary fat-shattering theory for $\etoe$ learning]
		\label{thm:e2e-fat-theory}
		For end-to-end learning, ordinary fat-shattering gives the following three
		statements.
		\begin{enumerate}
			\item For every fixed $0<\alpha\le1$, there is a constant $c_\alpha>0$ such
			that for every class $\cF$, every $M\ge1$, every $0<\varepsilon<1$, and every
			$0<\delta<1/2$,
			\begin{equation*}
				m_{\etoe}^{\cF,M}(\varepsilon,\delta)
				\le
				\widetilde O_\alpha\left(
				\frac{M^{1+\alpha}\fat_{\cF}(c_\alpha\sqrt\varepsilon/M)+\log(1/\delta)}
				{\varepsilon}
				\right).
			\end{equation*}
			\item There are universal constants $c,c'>0$ such that for every $d\ge1$,
			every $M\ge1$, and all sufficiently small $\varepsilon>0$, there is a finite
			class $\cF$ with $\fat_{\cF}(c\sqrt\varepsilon/M)=\Theta(d)$ and
			\begin{equation*}
				m_{\etoe}^{\cF,M}(\varepsilon)\ge c'\frac{dM}{\varepsilon}.
			\end{equation*}
			\item There is a universal constant $c_0>0$ such that for every
			$0<\beta<1/2$ and every $c>0$, the scale $\sqrt\varepsilon/M$ cannot be
			replaced by $\varepsilon^{1/2-\beta}/M$: for every $M\ge2$, every $K\ge1$,
			and all sufficiently small $\varepsilon>0$, there is a finite class $\cF$
			with
			\begin{equation*}
				\fat_{\cF}\left(c\frac{\varepsilon^{1/2-\beta}}{M}\right)=0
			\end{equation*}
			and
			\begin{equation*}
				m_{\etoe}^{\cF,M}(\varepsilon)\ge c_0\frac{K}{\varepsilon}.
			\end{equation*}
		\end{enumerate}
	\end{theorem}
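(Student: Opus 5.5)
This theorem is a summary, so I will prove it by assembling the three results proved just above it rather than by a new argument.

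\emph{Item~1.} This is Corollary~\ref{cor:fat-controls-e2e-sample} restated. The one check is that the $\widetilde O_\alpha$ notation absorbs the same quantities in both places. The corollary hides polylogarithmic factors in $M$, $1/\varepsilon$ and $1+\fat_{\cF}(c_\alpha\sqrt\varepsilon/M)$, with constants depending on $\alpha$, and this is exactly the convention of the statement.

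For completeness, recall how the corollary arises.
\begin{enumerate}
\item Apply Theorem~\ref{thm:fat-controls-e2e} with $\gamma=c\sqrt\varepsilon$, where $c$ is the constant of Lemma~\ref{lem:fat-regression-upper}. This bounds $\fat_{\cF^{\etoe-M}}(c\sqrt\varepsilon)$ by $C_\alpha\fat_{\cF}(c_\alpha c\sqrt\varepsilon/(2M))\,(M+\log(16M/(c\sqrt\varepsilon))+\log(1+\fat))^{1+\alpha}$.
\item The parenthesis is $O(M+\log(1/\varepsilon)+\log(1+\fat))$. Its $(1+\alpha)$-th power is therefore $M^{1+\alpha}$ times polylogarithmic factors. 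The cross terms are polynomials in logarithms times at most $M^{1+\alpha}$, which is harmless.
\item Rename $c_\alpha c/2$ as $c_\alpha$.
\item Plug the result into Lemma~\ref{lem:fat-regression-upper}. This is legitimate because $\etoe$ learning is well-specified bounded regression: $\mathbb E[Z_M\mid X]=q_{g_\star}^{\etoe-M}(X)\in\cF^{\etoe-M}$.
\end{enumerate}

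\emph{Item~2.} This is Theorem~\ref{thm:e2e-fat-lower}, taken with the same constants $c,c'$ and threshold $\varepsilon_0$. That theorem gives $\fat_{\cF}(c\sqrt\varepsilon/M)=d$ exactly, which is in particular $\Theta(d)$. One point needs care. The class $\cF_{\etoe}$ is indexed by $\{-1,1\}^{dM}$, so it has $2^{dM}$ generators and is finite as the statement requires.

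\emph{Item~3.} This is Theorem~\ref{thm:e2e-fat-scale-optimality}. The order of quantifiers needs attention here: the statement asks for a universal $c_0$ that works for every $\beta$ and every $c$. In that proof, $c_0$ comes only from the information bound $I(\sigma;S_n)\le Cn\varepsilon$ and from Lemma~\ref{lem:hypercube-rate-distortion}. Both depend on universal constants alone: $a$, which is chosen independently of $\beta$ and $c$, and the constant lower bound on $(1-\theta-\xi)^{M-1}$. The parameters $\beta$ and $c$ enter only through the requirement that $\varepsilon$ be small enough that $\lambda<c\varepsilon^{1/2-\beta}/M$, that is, $\varepsilon^\beta<c/a$. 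This smallness is permitted by the phrase ``all sufficiently small $\varepsilon$.''

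The only real obstacle is this bookkeeping: tracking which constants may depend on $\alpha$, $\beta$ and $c$, and confirming that the logarithmic factors fit inside $\widetilde O_\alpha$. Everything substantive has already been proved in the cited results.
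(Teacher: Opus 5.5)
Your proposal is correct and follows the paper's proof exactly: the theorem is assembled directly from Corollary~\ref{cor:fat-controls-e2e-sample}, Theorem~\ref{thm:e2e-fat-lower}, and Theorem~\ref{thm:e2e-fat-scale-optimality}. Your added bookkeeping is accurate and fills in details the paper leaves implicit. It covers three points: that $\cF_{\etoe}$ has $2^{dM}$ elements and so is finite, that the constant $c_0$ in the scale-optimality result is independent of $\beta$ and $c$, and that $(M+\log(\cdot))^{1+\alpha}\le M^{1+\alpha}(1+\log(\cdot))^{1+\alpha}$ fits inside $\widetilde O_\alpha$.
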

	
	\begin{proof}
		The upper bound is Corollary~\ref{cor:fat-controls-e2e-sample}. The matching
		lower bound is Theorem~\ref{thm:e2e-fat-lower}. The scale-optimality
		statement is Theorem~\ref{thm:e2e-fat-scale-optimality}.
	\end{proof}

	\section{Case study: stochastic logistic autoregressive learning}
	\label{sec:logistic}
	We now study a dimension-$d$ autoregressive logistic regression family. The same logistic rule is reused at every autoregressive step, and the target weights are unrestricted.
	
	\paragraph{Autoregressive logistic regression class.}
	For $d\ge1$ and $s\in\{0,1\}^\star$, let
	$\operatorname{tail}_d(s)\in\{0,1\}^d$ be the vector of the last $d$ bits of
	$s$, padded by zeros on the left if $|s|<d$. Let
	$\mathcal F_{\sigma}(d)$ be the class of stochastic generators
	$\{g_w:w\in\mathbb R^d\}$ defined by
	\[
	p_{g_w}(s)
	=
	g_w(s)(1)
	=
	\sigma(\langle w,\operatorname{tail}_d(s)\rangle),
	\qquad
	\sigma(a):=\frac1{1+e^{-a}} .
	\]
	Thus the log-odds of the next token are a linear function of the last $d$ bits of the current autoregressive state. This is the binary softmax, equivalently logistic or sigmoid, case.
	
	\subsection{An explicit non-efficient statistical $\etoe$ learner}
	
	The first positive result is a distribution-free end-to-end sample-complexity bound. It is not meant to be computationally efficient; it learns the induced final-token predictor directly.
	
	Let
	\[
	\mathcal Q_{d,M}:=\{q_{g_w}^{\etoe-M}:w\in\mathbb R^d\}
	\subseteq[0,1]^{\{0,1\}^\star}.
	\]
	The following lemma is the structural reason the $\etoe$ statistical complexity is only logarithmic in the horizon.
	
	\begin{lemma}
		\label{lem:logistic-e2e-pdim}
		There is a universal constant $C>0$ such that, for every $d\ge1$ and $M\ge1$,
		\[
		\Pdim(\mathcal Q_{d,M})
		\le
		C d^2\log(C(M+2)).
		\]
	\end{lemma}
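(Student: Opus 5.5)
Following the proof sketch of Theorem~\ref{thm:tail-linear-sigmoid-e2e-upper}, the plan is to bound the pseudo-dimension by counting sign patterns of polynomials in the variables $u_\ell=e^{w_\ell}$. Suppose $x_1,\ldots,x_n$ are pseudo-shattered by $\mathcal Q_{d,M}$ with thresholds $a_1,\ldots,a_n$. Then, as $w$ ranges over $\mathbb R^d$, all $2^n$ patterns $(\mathbf 1[q_{g_w}^{\etoe-M}(x_i)> a_i])_{i}$ occur; strict versus nonstrict inequalities are harmless because shattering is realized by finitely many witnesses and we may perturb thresholds. Substituting $u_\ell=e^{w_\ell}$ is a bijection $\mathbb R^d\to(0,\infty)^d$. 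The task is therefore to express each test as the sign of a polynomial in $u$ of controlled degree, and then to invoke a Warren/Goldberg--Jerrum type bound.

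The first step is the rational-function representation. For a state $y\in\{0,1\}^d$, put $U_y=\prod_\ell u_\ell^{y_\ell}$, so that $\Pr_w(Z=b\mid y)=U_y^b/(1+U_y)$. Since the process depends only on the tail, $q_{g_w}^{\etoe-M}(x)$ is a sum over $z\in\{0,1\}^M$ with $z_M=1$ of products of $M$ such factors, evaluated at the states $y_t=\operatorname{tail}_d(xz_1\cdots z_{t-1})$. I would take the common denominator $D(u)=\prod_{y\in\{0,1\}^d}(1+U_y)^{M}$. This is positive on $(0,\infty)^d$ and has degree at most $M\sum_y|y|=Md2^{d-1}$.

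Each path term multiplied by $D$ is a polynomial. Indeed, a factor $(1+U_y)$ is cancelled once for each visit to $y$, and there are at most $M$ visits, so the remaining degree is at most $M d2^{d-1}$ plus the numerator degree $\le Md$. Hence $q(x)=N_x(u)/D(u)$ with $\deg N_x,\deg D\le Md(2^{d-1}+1)$. Because $D$ is positive, $q(x)>a$ if and only if $N_x(u)-aD(u)>0$, which is a single polynomial test of degree $\Delta\le Md2^{d}$. Note that $\log \Delta=O(d+\log M)$.

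Next comes the counting step. By Warren's theorem (equivalently, Goldberg--Jerrum~\cite{goldbergjerrum1995} applied to a program whose only operation is one polynomial sign test), the number of sign patterns of $n$ polynomials of degree $\le\Delta$ in $d$ real variables is at most $(8e\Delta n/d)^d$. Shattering requires $2^n\le(8e\Delta n/d)^d$, so $n\le C d\log(\Delta n/d)\le Cd(d+\log M+\log n)$, and inverting the logarithm gives $n\le Cd^2\log(C(M+2))$. Restricting $u$ to the positive orthant only decreases the count, so the bound still applies.

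The main obstacle is this degree bookkeeping. Warren's bound is only logarithmic in degree, which is exactly why the exponential degree $2^d$ turns into the $d^2$ in the statement; any careless common denominator with degree doubly exponential in $d$ or polynomial in $2^M$ would break the bound. So I would check carefully that the denominator uses each state's factor at most $M$ times, rather than a product over all $2^M$ paths.
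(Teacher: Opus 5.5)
Your proposal is correct and follows essentially the same route as the paper: the substitution $u_\ell=e^{w_\ell}$, the common denominator $\prod_{y}(1+U_y)^M$ justified by the at-most-$M$-visits count, a single polynomial threshold test of degree $O(Md2^d)$, and a Warren/Goldberg--Jerrum count whose logarithmic dependence on the degree turns $d\log(Md2^d)$ into $O(d^2\log(M+2))$. The differences are cosmetic. Your degree bound $Md2^d$ is slightly looser than the paper's $Md2^{d-1}$, which the paper gets from $\ell_y\le m_y$, and this does not matter. When you invoke Warren's sign-pattern bound directly, note that it needs at least as many polynomials as variables ($n\ge d$); the case $n<d$ is trivial.
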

	
	\begin{proof}
		Write $u_j=e^{w_j}$ and $u^y=\prod_{j:y_j=1}u_j$ for $y\in\{0,1\}^d$. Then, for every $y\in\{0,1\}^d$,
		\[
		\sigma(\langle w,y\rangle)=\frac{u^y}{1+u^y},
		\quad
		1-\sigma(\langle w,y\rangle)=\frac{1}{1+u^y}.
		\]
		Fix a prompt $x$. The value $q_{g_w}^{\etoe-M}(x)$ is the sum, over all continuations $z_1,\ldots,z_M\in\{0,1\}$ with $z_M=1$, of the probability of that continuation. Along such a continuation, the state before step $t$ is $y_t=\operatorname{tail}_d(xz_1\cdots z_{t-1})$, and the factor contributed by step $t$ is $(u^{y_t})^{z_t}/(1+u^{y_t})$. Therefore every continuation term is a rational function whose denominator is a product of $M$ factors of the form $1+u^y$.
		
		Use the positive common denominator
		\[
		D(u)=\prod_{y\in\{0,1\}^d}(1+u^y)^M.
		\]
		This denominator clears every continuation term, because no state $y$ can be visited more than $M$ times in a length-$M$ continuation. Its degree is
		\[
		\deg D
		=
		M\sum_{y\in\{0,1\}^d}|y|
		=
		Md2^{d-1},
		\]
		since each coordinate is equal to $1$ in exactly $2^{d-1}$ vectors in $\{0,1\}^d$.
		
		We next check the degree after clearing denominators. Fix one continuation. For each $y\in\{0,1\}^d$, let $m_y$ be the number of times this continuation visits state $y$, and let $\ell_y$ be the number of those visits at which the emitted token is $1$. Then $0\le \ell_y\le m_y\le M$. The numerator of the continuation term has degree $\sum_y \ell_y |y|$. After multiplying by $D(u)$, the remaining factor from clearing denominators is $\prod_y(1+u^y)^{M-m_y}$, which has degree $\sum_y(M-m_y)|y|$. Hence the cleared continuation term has degree at most
		\[
		\sum_y \ell_y |y|+\sum_y(M-m_y)|y|
		\le
		\sum_y M|y|
		=
		Md2^{d-1}.
		\]
		Summing over continuations does not increase degree. Thus $D(u)q_{g_w}^{\etoe-M}(x)$ is a polynomial of degree at most $Md2^{d-1}$.
		
		Now fix a threshold $a\in\mathbb R$. Since $D(u)>0$ for every $u_1,\ldots,u_d>0$, the inequality $q_{g_w}^{\etoe-M}(x)>a$ is equivalent, on the parameter domain $u_1,\ldots,u_d>0$, to
		\[
		D(u)q_{g_w}^{\etoe-M}(x)-aD(u)>0.
		\]
		The left-hand side is a polynomial in $u_1,\ldots,u_d$ of degree at most $Md2^{d-1}$.
		
		The pseudo-dimension of $\mathcal Q_{d,M}$ is the VC dimension of its subgraph class. The preceding paragraph shows that every subgraph membership test for $\mathcal Q_{d,M}$ is represented, after the change of variables $u_j=e^{w_j}$, by one polynomial inequality in $d$ real parameters, of degree at most $Md2^{d-1}$. Restricting the parameter domain to $u_1,\ldots,u_d>0$ cannot increase the number of labelings. Therefore the Goldberg--Jerrum bound for semialgebraic classes defined by polynomial inequalities implies
		\[
		\Pdim(\mathcal Q_{d,M})
		\le
		Cd\log(CMd2^d),
		\]
		for a universal constant $C$; see Goldberg and Jerrum~\cite{goldbergjerrum1995}.
		
		It remains only to simplify this expression. Since $d\ge1$ and $M\ge1$,
		\[
		d\log(CMd2^d)
		\le
		Cd\bigl(\log(M+2)+\log(d+1)+d\bigr)
		\le
		Cd^2\log(C(M+2)),
		\]
		after increasing the universal constant $C$. Hence
		\[
		\Pdim(\mathcal Q_{d,M})
		\le
		C d^2\log(C(M+2)).
		\]
	\end{proof}

	\begin{theorem}
		\label{thm:tail-linear-sigmoid-e2e-upper}
		There is a universal constant $C>0$ such that, for every $d\ge1$, $M\ge1$, $0<\varepsilon<1$, and $0<\delta<1/2$,
		\[
		m_{\etoe}^{\mathcal F_{\sigma}(d),M}(\varepsilon,\delta)
		\le
		C\frac{d^2\log(C(M+2))\log^C(C/\varepsilon)+\log(1/\delta)}{\varepsilon}.
		\]
		The learner is improper and need not run in polynomial time.
	\end{theorem}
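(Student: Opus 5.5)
The plan is to treat $\etoe$ learning of $\mathcal F_\sigma(d)$ as a well-specified bounded regression problem over the induced final-token class $\mathcal Q_{d,M}$. We then feed the pseudo-dimension bound of Lemma~\ref{lem:logistic-e2e-pdim} into the fat-shattering regression bound of Lemma~\ref{lem:fat-regression-upper}.

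First, fix a prompt distribution $P$ and a target $g_{w_\star}\in\mathcal F_\sigma(d)$. In the $\etoe$ model each sample is $(X_i,Z_{i,M})$ with $Z_{i,M}\in\{0,1\}$. Moreover, $\mathbb E[Z_{i,M}\mid X_i=x]=q_{g_{w_\star}}^{\etoe-M}(x)$ by the definition of $q^{\etoe-M}$. So the regression function is $f_\star=q_{g_{w_\star}}^{\etoe-M}\in\mathcal Q_{d,M}\subseteq[0,1]^{\{0,1\}^\star}$, and the hypotheses of Lemma~\ref{lem:fat-regression-upper} hold with $\mathcal A=\mathcal Q_{d,M}$. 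That lemma yields a learner using
\[
C\frac{\fat_{\mathcal Q_{d,M}}(c\sqrt\varepsilon)\log^C(C/\varepsilon)+\log(1/\delta)}{\varepsilon}
\]
samples, with squared risk at most $\varepsilon$ and confidence $1-\delta$.

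Second, bound the fat-shattering dimension at every scale by the pseudo-dimension. A $\gamma$-fat-shattered set, with its thresholds $r_i$ and witnesses $f_b$, is pseudo-shattered by the same thresholds, since the inequalities $f_b(x_i)\ge r_i+\gamma$ and $f_b(x_i)\le r_i-\gamma$ are strict with respect to $r_i$. Hence $\fat_{\mathcal Q_{d,M}}(\gamma)\le\Pdim(\mathcal Q_{d,M})\le Cd^2\log(C(M+2))$ for every $\gamma>0$, by Lemma~\ref{lem:logistic-e2e-pdim}. Substituting this into the display and enlarging $C$ gives the stated bound. It holds for every $M\ge1$, including the case $M=1$, where it reduces to the base problem.

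Third, the two qualifiers in the statement. The learner is improper because Lemma~\ref{lem:fat-regression-upper} outputs an arbitrary $[0,1]$-valued predictor, built from an empirical net and finite-class aggregation, rather than a generator $g_{\widehat w}$. No running-time claim is made, since constructing that net over $\mathcal Q_{d,M}$ is not efficient.

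The only point I expect to need care is the measurability convention. Lemma~\ref{lem:fat-regression-upper} optimizes over $\mathcal Q_{d,M}$, which is parametrized continuously by $w\in\mathbb R^d$. The map $w\mapsto q_{g_w}^{\etoe-M}(x)$ is continuous for each $x$, so restricting to rational $w$ gives a countable subclass, and the pointwise-measurability assumption of the paper applies. Beyond that, the proof is a direct composition of the two earlier results.
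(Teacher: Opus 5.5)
Your proposal is correct and follows the paper's proof: it applies Lemma~\ref{lem:fat-regression-upper} to the well-specified regression class $\mathcal Q_{d,M}$, and bounds $\fat_{\mathcal Q_{d,M}}(\gamma)$ at every scale by $\Pdim(\mathcal Q_{d,M})\le Cd^2\log(C(M+2))$ using Lemma~\ref{lem:logistic-e2e-pdim}. Your additional checks are accurate details that the paper leaves implicit: that $\mathbb E[Z_M\mid X]$ lies in the class, the fat-versus-pseudo-dimension comparison, and the countable dense subclass used for measurability.
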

	
	\begin{proof}
		By Lemma~\ref{lem:logistic-e2e-pdim}, $\Pdim(\mathcal Q_{d,M})\le C d^2\log(C(M+2))$. For every scale $\gamma>0$, $\fat_{\mathcal Q_{d,M}}(\gamma)\le\Pdim(\mathcal Q_{d,M})$. Applying Lemma~\ref{lem:fat-regression-upper} to the real-valued class $\mathcal Q_{d,M}$ gives the displayed sample bound. The learner returned by that lemma is a bounded regression predictor for the final-token probability, not necessarily a generator from $\mathcal F_{\sigma}(d)$.
	\end{proof}
	
	\subsection{An efficient proper $\ct$ learner}
	
	We next give an efficient proper chain-of-thought learner for the same logistic regression class. Unlike the preceding $\etoe$ learner, this learner outputs a generator from the original logistic class and runs in polynomial time.
	
	\begin{lemma}
		\label{lem:finite-precision-logistic-compression}
		There is a universal constant $C>0$ such that the following holds. For every $d\ge1$, every $0<\eta<1/2$, and every $w\in\mathbb R^d$, there is a rational vector $v\in\mathbb R^d$ whose coordinates have binary descriptions of length at most $C d\log(Cd/\eta)$ and such that, for every $y\in\{0,1\}^d$,
		\[
		\left|
		\sigma(\langle w,y\rangle)-\sigma(\langle v,y\rangle)
		\right|
		\le \eta
		\]
		and
		\[
		h^2\left(\Ber(\sigma(\langle w,y\rangle)),\Ber(\sigma(\langle v,y\rangle))\right)
		\le \eta .
		\]
		Moreover, for every prompt $x$ and every horizon $M\ge1$, if $\mathbb P_{w,x}^M$ and $\mathbb P_{v,x}^M$ are the laws of the length-$M$ continuations generated from $x$ by $g_w$ and $g_v$, then
		\[
		h^2(\mathbb P_{w,x}^M,\mathbb P_{v,x}^M)\le M\eta .
		\]
	\end{lemma}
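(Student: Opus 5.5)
The plan is to first reduce both one-step requirements to a single additive requirement, then construct $v$ by an LP vertex argument, and finally lift the one-step Hellinger bound to trajectories through Hellinger affinities.

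\textbf{Reducing the one-step requirements.} For $p,q\in[0,1]$ we have $(\sqrt p-\sqrt q)^2\le|p-q|$, and likewise for $1-p,1-q$. Hence
\[
h^2(\Ber(p),\Ber(q))\le 2|p-q|.
\]
So it suffices to find $v$ with $|\sigma(\langle w,y\rangle)-\sigma(\langle v,y\rangle)|\le\eta/2$ for every $y\in\{0,1\}^d$.

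\textbf{Constructing $v$.} Write $a_y:=\langle w,y\rangle$ and set $T:=\log(4/\eta)$, so that $\sigma(-T)\le\eta/4$ and $1-\sigma(T)\le\eta/4$. Split $\{0,1\}^d$ into moderate states, with $|a_y|\le T$, and saturated states, with $|a_y|>T$.

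1. For each moderate $y$, let $r_y$ be $a_y$ rounded to the grid $(\eta/4)\mathbb Z$. Impose $|\langle v,y\rangle-r_y|\le\eta/4$. Since $\sigma$ is $1/4$-Lipschitz, this gives $|\sigma(\langle v,y\rangle)-\sigma(a_y)|\le\frac14\cdot\frac{\eta}{2}\le\eta/2$.
2. For each saturated $y$ with $a_y>T$, impose $\langle v,y\rangle\ge T'$. For $a_y<-T$, impose $\langle v,y\rangle\le -T'$. Here $T'$ is the rational $\lceil T\rceil\le T+1$. Then both $\sigma$ values lie in $[1-\eta/4,1]$, respectively $[0,\eta/4]$, so they differ by at most $\eta/4$. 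To make $w$ feasible, I will instead classify with threshold $T'$ (i.e.\ treat $|a_y|\le T'$ as moderate); this only makes the moderate range slightly larger and changes nothing in the estimates.

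This gives a polyhedron $\mathcal P\subseteq\mathbb R^d$ cut out by at most $2^{d+1}$ inequalities. Every constraint row has coefficients in $\{0,1\}$, and every right-hand side is a rational of bit length $O(\log(T/\eta))=O(\log(1/\eta))$. The polyhedron is nonempty because $w\in\mathcal P$.

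Take a point of a minimal face of $\mathcal P$. Such a point is the solution of a system $A'v=b'$ of $d$ linearly independent tight constraints, after restricting to the lineality complement if necessary, or equivalently setting free coordinates in a basis to $0$. By Cramer's rule, each coordinate is a ratio of two determinants. The denominator is a determinant of a $0/1$ matrix, which by Hadamard is at most $d^{d/2}$. The numerator is bounded by $d^{d/2}$ times the maximal right-hand side numerator. After clearing the common grid denominator, each coordinate therefore has description length $O(d\log d+d\log(1/\eta))\le Cd\log(Cd/\eta)$. This step is the main technical point. The hard part is making sure the vertex/minimal-face argument is stated correctly when $\mathcal P$ contains lines, and that the rounding of $a_y$ to a grid keeps $w$ strictly inside the constraints.

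\textbf{Trajectory bound.} Use the affinity $\rho(P,Q)=\sum\sqrt{PQ}=1-h^2/2$. Conditioning on the prefix $z_{<t}$, the step-$t$ laws are $\Ber(\sigma(\langle w,y_t\rangle))$ and $\Ber(\sigma(\langle v,y_t\rangle))$, with $y_t=\operatorname{tail}_d(xz_{<t})$. By the one-step bound, their conditional affinity is at least $1-\eta/2$, uniformly in the prefix. Factoring the joint affinity over the last step gives $\rho_t\ge(1-\eta/2)\rho_{t-1}$. Hence
\[
\rho_M\ge(1-\eta/2)^M\ge 1-M\eta/2,
\]
that is, $h^2(\mathbb P^M_{w,x},\mathbb P^M_{v,x})\le M\eta$.
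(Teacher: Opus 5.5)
Your route is the paper's: the same three families of constraints ($\langle v,y\rangle\ge T$, $\langle v,y\rangle\le -T$, or $|\langle v,y\rangle-r_y|\le\alpha$) with $w$ as the feasibility witness, a bit-complexity bound for a rational feasible point, the inequality $h^2(\Ber(p),\Ber(q))\le 2|p-q|$, and the product of conditional Hellinger affinities. The paper cites Schrijver, Ch.~10 for the bit-complexity step. You unpack it directly via a vertex, Cramer's rule and Hadamard's bound, and that works.

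One step fails as written: using $\eta/4$ as both the grid spacing and the slack. The lemma allows an arbitrary real $\eta\in(0,1/2)$.
\begin{itemize}
\item If $\eta$ is irrational, then $r_y\in(\eta/4)\mathbb Z$ and the right-hand sides $r_y\pm\eta/4$ are irrational, so the vertex $v$ need not be rational at all.
\item If $\eta$ is rational, the bit length of these right-hand sides is governed by the denominator of $\eta$, not by $\log(1/\eta)$.
\end{itemize}
So your claim that every right-hand side has bit length $O(\log(1/\eta))$ is false, and the common denominator you clear need not be small.

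The fix is what the paper does: take a rational $\alpha$ of controlled length. For example, $\alpha:=2^{-\lceil\log_2(4/\eta)\rceil}\in(\eta/8,\eta/4]$, used as both grid spacing and slack. Then the common denominator is $2^{O(\log(1/\eta))}$, the estimate $|\langle v,y\rangle-a_y|\le 2\alpha\le\eta/2$ still gives your one-step bounds, and your Cramer--Hadamard count yields coordinates of length $O(d\log d+\log(1/\eta))\le Cd\log(Cd/\eta)$.

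Your two stated worries are not real obstacles. First, $w$ only needs to be feasible, not interior, since $|a_y-r_y|\le\alpha$. Second, every $y\in\{0,1\}^d$, in particular each unit vector $e_j$, contributes a row $\pm y^\top$. So the constraint matrix has rank $d$, the polyhedron $\mathcal P$ is pointed, and a vertex exists without any lineality-space adjustment.
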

	
	\begin{proof}
		Choose $T$ and $\alpha$ so that $T$ is a positive integer, $e^{-T}\le \eta/C$, and $\alpha\le \eta/C$, where $C$ is a sufficiently large universal constant. It is enough to take $T\le C\log(C/\eta)$ and $\alpha$ rational with binary description length at most $C\log(C/\eta)$.
		
		Define
		\[
		\begin{aligned}
			A_+&=\{y\in\{0,1\}^d:\langle w,y\rangle\ge T\},\\
			A_-&=\{y\in\{0,1\}^d:\langle w,y\rangle\le -T\},\\
			A_0&=\{y\in\{0,1\}^d:|\langle w,y\rangle|<T\}.
		\end{aligned}
		\]
		For each $y\in A_0$, choose a rational number $r_y$ on the grid $\alpha\mathbb Z$ such that $|\langle w,y\rangle-r_y|\le \alpha$. Since $|\langle w,y\rangle|<T$, each $r_y$ has binary description length at most $C\log(C/\eta)$.
		
		Consider the following linear system in the unknown vector $v\in\mathbb R^d$:
		\[
		\langle v,y\rangle\ge T \quad (y\in A_+),
		\quad
		\langle v,y\rangle\le -T \quad (y\in A_-),
		\quad
		|\langle v,y\rangle-r_y|\le \alpha \quad (y\in A_0).
		\]
		This system is feasible. Indeed, the original vector $w$ satisfies the first two families of inequalities by the definitions of $A_+$ and $A_-$, and it satisfies the last family because $|\langle w,y\rangle-r_y|\le \alpha$ for every $y\in A_0$.
		
		The coefficients of this system belong to $\{-1,0,1\}$, and all right-hand sides have binary descriptions of length at most $C\log(C/\eta)$. By the standard bit-complexity bound for feasible rational linear systems in $d$ variables, there is a rational feasible solution $v$ whose coordinates have binary descriptions of length at most
		\[
		C d\bigl(\log(C/\eta)+\log(Cd)\bigr)
		\le
		C d\log(Cd/\eta);
		\]
		see Schrijver~\cite[Chapter 10]{schrijver1986theory}.
		
		We now prove the probability and Hellinger bounds. Fix $y\in\{0,1\}^d$ and write $p=\sigma(\langle w,y\rangle)$ and $q=\sigma(\langle v,y\rangle)$.
		
		First suppose $y\in A_+$. Then $\langle w,y\rangle\ge T$ and $\langle v,y\rangle\ge T$. Hence $p\ge 1-e^{-T}$ and $q\ge 1-e^{-T}$. Therefore $|p-q|\le e^{-T}\le \eta/C$. Also, for Bernoulli laws, $h^2(\Ber(p),\Ber(q))\le 2|p-q|$, so $h^2(\Ber(p),\Ber(q))\le \eta$ after increasing $C$.
		
		The case $y\in A_-$ is identical. Indeed, $\langle w,y\rangle\le -T$ and $\langle v,y\rangle\le -T$, so $p\le e^{-T}$ and $q\le e^{-T}$. Hence $|p-q|\le e^{-T}\le \eta/C$, and again $h^2(\Ber(p),\Ber(q))\le \eta$.
		
		It remains to consider $y\in A_0$. By the constraints and by the choice of $r_y$,
		\[
		|\langle v,y\rangle-\langle w,y\rangle|
		\le
		|\langle v,y\rangle-r_y|+|r_y-\langle w,y\rangle|
		\le
		2\alpha .
		\]
		The logistic map is $1$-Lipschitz, so $|p-q|\le 2\alpha\le \eta/C$. Since $h^2(\Ber(p),\Ber(q))\le 2|p-q|$, increasing $C$ gives $|p-q|\le \eta$ and $h^2(\Ber(p),\Ber(q))\le \eta$.
		
		Finally fix a prompt $x$ and a horizon $M$. At every history $xz_1\cdots z_{t-1}$, the two next-token Bernoulli laws under $g_w$ and $g_v$ have squared Hellinger distance at most $\eta$, because their covariate is some $y\in\{0,1\}^d$. Therefore the Hellinger affinity of each conditional transition is at least $1-\eta/2$. Multiplying these conditional affinities along the trajectory gives affinity at least $(1-\eta/2)^M$ for the two length-$M$ trajectory laws. Hence
		\[
		h^2(\mathbb P_{w,x}^M,\mathbb P_{v,x}^M)
		\le
		2\left(1-(1-\eta/2)^M\right)
		\le
		M\eta .
		\]
		This proves the lemma.
	\end{proof}
	
	\begin{lemma}
		\label{lem:finite-likelihood-net-hellinger}
		Fix $n\ge1$, $0<\delta<1$, and $\gamma\ge0$. Let $\mathcal V$ be a finite set. For each $v\in\mathcal V$, let $P_v$ be a probability distribution with density $p_v$ with respect to a common measure. Assume that $P_\star=P_{v_\star}$ for some $v_\star\in\mathcal V$, and let $Y_1,\ldots,Y_n$ be i.i.d. from $P_\star$. Suppose that $\widehat v\in\mathcal V$ satisfies
		\[
		\frac1n\sum_{i=1}^n -\log p_{\widehat v}(Y_i)
		\le
		\inf_{u\in\mathcal V}\frac1n\sum_{i=1}^n -\log p_u(Y_i)+\gamma .
		\]
		Then, with probability at least $1-\delta$,
		\[
		h^2(P_{\widehat v},P_\star)
		\le
		C\left(
		\gamma+
		\frac{\log|\mathcal V|+\log(1/\delta)}{n}
		\right),
		\]
		where $C>0$ is a universal constant.
	\end{lemma}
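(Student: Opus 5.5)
The plan is the standard Hellinger analysis of (approximate) maximum likelihood over a finite class, run through the Bhattacharyya affinity with a Chernoff bound. Fix $u\in\mathcal V$ and set
\[
\rho(u):=\int\sqrt{p_u p_\star}=1-h^2(P_u,P_\star)/2 .
\]
The quantity to control is the half log-likelihood ratio
\[
S_u:=\frac12\sum_{i=1}^n\log\frac{p_u(Y_i)}{p_\star(Y_i)} .
\]
By independence,
\[
\mathbb E\exp(S_u)=\rho(u)^n\le \exp\!\bigl(-n h^2(P_u,P_\star)/2\bigr).
\]
Points where $p_\star=0$ occur with probability zero, and where $p_u=0$ the term is $-\infty$, which only helps. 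Markov's inequality then gives, for any threshold $t$,
\[
\Pr[S_u\ge -t]\le e^{t}\,e^{-n h^2(P_u,P_\star)/2}.
\]

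Next I would define the bad set $\mathcal B:=\{u\in\mathcal V:\ h^2(P_u,P_\star)>\tau\}$ with
\[
\tau:=C\bigl(\gamma+(\log|\mathcal V|+\log(1/\delta))/n\bigr).
\]
I take the threshold $t:=n\gamma/2$. This is the right choice because the approximate-minimizer hypothesis, compared against $u=v_\star$, gives
\[
\sum_i\log p_{\widehat v}(Y_i)\ge\sum_i\log p_\star(Y_i)-n\gamma,
\]
that is, $S_{\widehat v}\ge -n\gamma/2$. A union bound over $\mathcal B$ then yields
\[
\Pr\bigl[\exists u\in\mathcal B:\ S_u\ge -n\gamma/2\bigr]\le |\mathcal V|\,e^{n\gamma/2}\,e^{-n\tau/2}.
\]
With $C\ge 2$ (take $C=4$, say), we have $n\tau/2\ge n\gamma+\log|\mathcal V|+\log(1/\delta)$, so the right-hand side is at most $\delta e^{-n\gamma/2}\le\delta$. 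Hence, with probability at least $1-\delta$, no bad $u$ can satisfy the minimizer condition, so $\widehat v\notin\mathcal B$. That is exactly $h^2(P_{\widehat v},P_\star)\le\tau$.

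The main subtlety is that $\widehat v$ is data-dependent, which is why the argument uses a union bound over the finite class rather than a fixed-$u$ bound. The remaining measure-theoretic points need care but no new ideas. First, densities may vanish, and I would handle this with the convention $\log 0=-\infty$. Second, the inequality $1-x\le e^{-x}$ is what turns $\rho(u)^n$ into the exponential bound. I expect no real obstacle beyond keeping the constant $C$ universal, which the calculation above already achieves.
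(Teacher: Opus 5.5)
Your proposal is correct and follows the same route as the paper's proof. Both compare the approximate-minimizer condition against $v_\star$, apply Markov's inequality to the square-root likelihood ratio so that the Bhattacharyya affinity appears, use $1-t\le e^{-t}$, and take a union bound over the Hellinger-far members of $\mathcal V$. The only difference is the constant: the paper takes $4$, and your calculation correctly shows that $C=2$ already suffices.
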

	
	\begin{proof}
		Since $P_\star=P_{v_\star}$ and $v_\star\in\mathcal V$, the assumed empirical inequality implies
		\[
		\frac1n\sum_{i=1}^n -\log p_{\widehat v}(Y_i)
		\le
		\frac1n\sum_{i=1}^n -\log p_\star(Y_i)+\gamma .
		\]
		Equivalently,
		\[
		\sum_{i=1}^n \log\frac{p_{\widehat v}(Y_i)}{p_\star(Y_i)}
		\ge
		-n\gamma .
		\]
		
		Fix $v\in\mathcal V$ and define $L_v=\prod_{i=1}^n p_v(Y_i)/p_\star(Y_i)$, with any value assigned to the ratio on the set where $p_\star=0$. This choice is irrelevant under $P_\star$. If $\sum_{i=1}^n \log(p_v(Y_i)/p_\star(Y_i))\ge -n\gamma$, then $L_v\ge e^{-n\gamma}$. Therefore, by Markov's inequality,
		\[
		P_\star^{\otimes n}(L_v\ge e^{-n\gamma})
		\le
		e^{n\gamma/2}\mathbb E_{P_\star^{\otimes n}}\sqrt{L_v}.
		\]
		The expectation factorizes, and
		\[
		\mathbb E_{P_\star^{\otimes n}}\sqrt{L_v}
		=
		\left(\int \sqrt{p_vp_\star}\right)^n
		=
		\left(1-\frac12 h^2(P_v,P_\star)\right)^n.
		\]
		Using $1-t\le e^{-t}$ gives
		\[
		P_\star^{\otimes n}(L_v\ge e^{-n\gamma})
		\le
		\exp\left(\frac{n\gamma}{2}-\frac{n}{2}h^2(P_v,P_\star)\right).
		\]
		
		Set $a=\log|\mathcal V|+\log(1/\delta)$. If $h^2(P_v,P_\star)>4(\gamma+a/n)$, then
		\[
		\frac{n\gamma}{2}-\frac{n}{2}h^2(P_v,P_\star)
		<
		-a .
		\]
		Hence, for each such $v$,
		\[
		P_\star^{\otimes n}(L_v\ge e^{-n\gamma})
		\le
		e^{-a}
		=
		\frac{\delta}{|\mathcal V|}.
		\]
		By a union bound, with probability at least $1-\delta$, no $v\in\mathcal V$ with $h^2(P_v,P_\star)>4(\gamma+a/n)$ satisfies $\sum_{i=1}^n \log(p_v(Y_i)/p_\star(Y_i))\ge -n\gamma$.
		
		On this event, $\widehat v$ cannot have $h^2(P_{\widehat v},P_\star)>4(\gamma+a/n)$, because we already proved that $\widehat v$ satisfies the displayed likelihood-ratio inequality. Therefore
		\[
		h^2(P_{\widehat v},P_\star)
		\le
		4\left(\gamma+
		\frac{\log|\mathcal V|+\log(1/\delta)}{n}
		\right).
		\]
		This proves the lemma after replacing $4$ by the universal constant $C$.
	\end{proof}
	
	\paragraph{$\ct$ likelihood learner.}
	In this theorem the generated tokens are encoded as $0$ and $1$. For every $w\in\mathbb R^d$, every prompt $x$, and every history $z_1,\ldots,z_{t-1}\in\{0,1\}$, the conditional law is
	\[
	\Pr_w(Z_t=1\mid X=x,Z_1=z_1,\ldots,Z_{t-1}=z_{t-1})
	=
	\sigma(\langle w,\operatorname{tail}_d(xz_1\cdots z_{t-1})\rangle).
	\]
	Thus, if $Y_{i,t}:=\operatorname{tail}_d(X_iZ_{i,1}\cdots Z_{i,t-1})$, then $Y_{i,t}\in\{0,1\}^d$ and $\|Y_{i,t}\|_2\le\sqrt d$.
	
	Given full trajectories $(X_i,Z_{i,1},\ldots,Z_{i,M})_{i=1}^N$, define
	\[
	\widehat L_N(w)
	:=
	\frac1N\sum_{i=1}^N\sum_{t=1}^M
	\left[
	\log(1+\exp(\langle w,Y_{i,t}\rangle))-Z_{i,t}\langle w,Y_{i,t}\rangle
	\right].
	\]
	The learner is defined as follows.
	\[
	\begin{array}{ll}
		\textbf{Input:} & d,M,\varepsilon,\delta,\text{ full trajectories}.\\
		\textbf{ 1:} & N:=\left\lceil C\frac{d^2\log(CdM/(\varepsilon\delta))+\log(1/\delta)}{\varepsilon}\right\rceil.\\
		\textbf{ 2:} & \text{Use only the first }N\text{ trajectories}.\\
		\textbf{ 3:} & \eta_0:=c\delta^2/(MN).\\
		\textbf{ 4:} & B:=\left\lceil C d\log(CdMN/(\varepsilon\delta))\right\rceil,\quad R:=2^B.\\
		\textbf{ 5:} & \rho:=c\varepsilon/(Md),\quad k:=\left\lceil\log_2(1/\rho)\right\rceil,\quad \rho_0:=2^{-k}.\\
		\textbf{ 6:} & \mathcal W:=\{w\in\mathbb R^d: |w_j|\le R\text{ for all }j\}.\\
		\textbf{ 7:} & \text{Compute an additive }c\varepsilon\text{-minimizer }\bar w\text{ of }\widehat L_N(w)\text{ over }\mathcal W.\\
		\textbf{ 8:} & \text{Round each coordinate of }\bar w\text{ to a nearest point of }\rho_0\mathbb Z\cap[-R,R].\\
		\textbf{Output:} & g_{\widehat w},\text{ where }\widehat w\text{ is the rounded vector.}
	\end{array}
	\]
	Here $c>0$ is a sufficiently small universal constant and $C>0$ is a sufficiently large universal constant. Since $k=\lceil\log_2(1/\rho)\rceil$, we have $\rho/2\le \rho_0\le \rho$.
	
	\begin{theorem}
		\label{thm:tail-linear-sigmoid-CoT-efficient}
		There is a randomized algorithm with running time polynomial in the input size, $d$, $M$, $1/\varepsilon$, and $\log(1/\delta)$ such that the following holds. For every $d\ge1$, $M\ge1$, $0<\varepsilon<1$, $0<\delta<1/2$, prompt distribution $P$, and target $w_\star\in\mathbb R^d$, the algorithm outputs a generator $g_{\widehat w}\in\mathcal F_{\sigma}(d)$, where $\widehat w$ is a rational vector of polynomial bit complexity, whose induced $M$-step end-to-end predictor satisfies
		\[
		\mathbb E_{X\sim P}
		\left[
		(q_{g_{\widehat w}}^{\etoe-M}(X)-q_{g_{w_\star}}^{\etoe-M}(X))^2
		\right]
		\le
		\varepsilon
		\]
		with probability at least $1-\delta$. It suffices to take
		$
		C\frac{d^2\log(CdM/(\varepsilon\delta))+\log(1/\delta)}{\varepsilon}$
		full trajectories.
	\end{theorem}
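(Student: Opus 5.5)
The plan is to analyze the learner defined just before the statement by comparing it with a finite grid of parameters, so that Lemma~\ref{lem:finite-likelihood-net-hellinger} applies to the full-trajectory laws. By Lemma~\ref{lem:finite-precision-logistic-compression} with $\eta=\eta_0$, the target $w_\star$ can be replaced by a rational $v_\star$ of bit length at most $Cd\log(Cd/\eta_0)$. Every coordinate of $v_\star$ then has absolute value at most $R=2^B$, so $v_\star\in\mathcal W$. Moreover, the trajectory laws of $g_{w_\star}$ and $g_{v_\star}$ from every prompt satisfy $h^2\le M\eta_0$. Summing over $N$ trajectories, the $N$-sample product laws under $w_\star$ and $v_\star$ are within total variation $\sqrt{2NM\eta_0}\le\delta/4$ by the choice $\eta_0=c\delta^2/(MN)$. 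Hence we may pretend the data are generated by $g_{v_\star}$, losing only $\delta/4$ in confidence, and pay an additional $M\eta_0$ in the final comparison.

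The finite class will be $\mathcal V:=\rho_0\mathbb Z^d\cap[-R,R]^d$, together with $v_\star$ itself. This gives $\log|\mathcal V|\le d\log(2R/\rho_0+2)=O(d^2\log(CdMN/(\varepsilon\delta)))$, which after solving for $N$ is $O(d^2\log(CdM/(\varepsilon\delta)))$.

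The key step is to show that the rounded output $\widehat w$ is an approximate likelihood minimizer over $\mathcal V$. The convex objective $\widehat L_N$ (times $1/M$ per step, i.e.\ $\frac1{NM}$ normalization up to the factor $M$) is $M\sqrt d$-Lipschitz in $\ell_\infty$-perturbations of size $\rho_0$ per coordinate, since each term has gradient bounded by $\|Y_{i,t}\|_1\le d$. Rounding therefore changes $\widehat L_N$ by at most $Md\rho_0\le c\varepsilon$. Since $\bar w$ is a $c\varepsilon$-minimizer over $\mathcal W\supseteq\mathcal V$, $\widehat w$ is a $2c\varepsilon$-minimizer of the negative log-likelihood over $\mathcal V$. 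Conditioning on prompts cancels, because $P$ does not depend on $w$. Lemma~\ref{lem:finite-likelihood-net-hellinger} with $\gamma=2c\varepsilon$ then gives
\[
\mathbb E_X h^2(\mathbb P^M_{\widehat w,X},\mathbb P^M_{v_\star,X})\le C\bigl(c\varepsilon+(\log|\mathcal V|+\log(1/\delta))/N\bigr)\le\varepsilon/8,
\]
using that the Hellinger distance of the joint laws of $(X,Z)$ equals the $P$-average of the conditional Hellinger distances.

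To convert this to the target, we bound the final-token gap pointwise: $(q_{\widehat w}(x)-q_{v_\star}(x))^2\le\TV^2\le h^2(\mathbb P^M_{\widehat w,x},\mathbb P^M_{v_\star,x})$. The comparison of $v_\star$ with $w_\star$ adds $M\eta_0$, and the two pieces are combined via $(a+b)^2\le2a^2+2b^2$. Altogether the loss is at most $\varepsilon$ with probability $1-\delta$.

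Efficiency follows because $\widehat L_N$ is convex, its gradient is computable, and the box has polynomial bit size, so the ellipsoid method (or projected gradient descent, with the Lipschitz constant $Md\sqrt d$ and diameter $R\sqrt d$, where $\log R$ is polynomial) finds a $c\varepsilon$-minimizer in polynomial time. Evaluating $\sigma$ to polynomial precision adds negligible error.

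The main obstacle I expect is the approximate-minimizer step. Unbounded weights mean $\widehat L_N$ is only Lipschitz, not smooth with bounded curvature, and the additive optimization error must translate into the $\gamma$ of Lemma~\ref{lem:finite-likelihood-net-hellinger} under the correct normalization. The lemma's $\gamma$ is per trajectory, whereas a $c\varepsilon$ error in $\widehat L_N$ is already per trajectory, so the normalization should match. I would still double-check that rounding to $\rho_0=c\varepsilon/(Md)$ keeps the per-trajectory loss change below $c\varepsilon$.
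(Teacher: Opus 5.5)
Your proposal is correct and follows essentially the same route as the paper: compress $w_\star$ to a rational $v_\star\in[-R,R]^d$ at cost $\delta/4$ in total variation. Then use the rounding bound $Md\rho_0\le c\varepsilon$ to show that $\widehat w$ is a $2c\varepsilon$-approximate likelihood minimizer over a finite class containing $v_\star$, and apply Lemma~\ref{lem:finite-likelihood-net-hellinger}. (The $\ell_\infty$-Lipschitz constant of $\widehat L_N$ is $Md$, not $M\sqrt d$, but your conclusion $Md\rho_0\le c\varepsilon$ is right.) Your grid class $(\rho_0\mathbb Z^d\cap[-R,R]^d)\cup\{v_\star\}$ and your pointwise conversion $(q-q')^2\le\TV^2\le h^2$ are harmless substitutes for the paper's bit-length class and its data-processing/Bernoulli step.

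One correction: drop the projected-gradient alternative. Its iteration count grows polynomially in the box diameter $2^B\sqrt d$ itself, which is superpolynomial in $d$. Only the ellipsoid method, whose cost depends on $\log R$, gives polynomial time here.
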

	
	\begin{proof}
		If the learner receives more than $N$ trajectories, it discards all but the first $N$. Thus it is enough to analyze the case of exactly $N$ trajectories. Let the unknown generator be $g_{w_\star}$. For $w\in\mathbb R^d$, let $P_w$ denote the joint law of one $\ct$ sample $(X,Z_1,\ldots,Z_M)$, where $X\sim P$ and the trajectory is generated by $g_w$.
		
		By Lemma~\ref{lem:finite-precision-logistic-compression}, applied with accuracy $\eta_0$, there is a rational vector $v_\star$ whose coordinates have binary descriptions of length at most $C d\log(Cd/\eta_0)$. Since $\eta_0=c\delta^2/(MN)$, $0<\varepsilon<1$, and $0<\delta<1/2$, the quantity $\log(Cd/\eta_0)=\log(CdMN/\delta^2)$ is bounded by $C\log(CdMN/(\varepsilon\delta))$ after increasing $C$. Hence the bit length of $v_\star$ is at most $B$. The same lemma gives, for every prompt $x$,
		\[
		h^2(\mathbb P_{w_\star,x}^M,\mathbb P_{v_\star,x}^M)
		\le
		CM\eta_0 .
		\]
		The prompt marginal is the same under $P_{w_\star}$ and $P_{v_\star}$, so integrating over $X\sim P$ gives
		\[
		h^2(P_{w_\star},P_{v_\star})\le CM\eta_0 .
		\]
		Therefore, for the $N$-sample laws,
		\[
		h^2(P_{w_\star}^{\otimes N},P_{v_\star}^{\otimes N})
		\le
		CNM\eta_0
		\le
		c\delta^2 .
		\]
		Since total variation is bounded by a universal constant times Hellinger distance, decreasing $c$ if necessary gives
		\[
		\TV(P_{w_\star}^{\otimes N},P_{v_\star}^{\otimes N})\le \delta/4 .
		\]
		
		Now analyze the algorithm under the compressed, well-specified law $P_{v_\star}$. Let $\mathcal V$ be the finite set of all rational vectors in $[-R,R]^d$ whose coordinates have binary descriptions of length at most $CB$. Then $v_\star\in\mathcal V$. A rounded coordinate has denominator $2^k$ and numerator magnitude at most $2^{B+k}$, so its binary description length is $O(B+k)$. Since $k=\lceil\log_2(1/\rho)\rceil\le CB$ after increasing $C$, we have $\widehat w\in\mathcal V$. Hence
		\[
		\log|\mathcal V|\le CdB .
		\]
		
		For fixed observed trajectories, $\widehat L_N(w)$ is convex in $w$. Each summand $\log(1+\exp(\langle w,Y_{i,t}\rangle))-Z_{i,t}\langle w,Y_{i,t}\rangle$ is $\|Y_{i,t}\|_2$-Lipschitz in $w$, and $\|Y_{i,t}\|_2\le\sqrt d$. Hence $\widehat L_N(w)$ is $M\sqrt d$-Lipschitz. By construction of the coordinate rounding,
		\[
		\|\widehat w-\bar w\|_2\le \sqrt d\,\rho_0\le \sqrt d\,\rho .
		\]
		Therefore
		\[
		\widehat L_N(\widehat w)
		\le
		\widehat L_N(\bar w)+Md\rho
		\le
		\widehat L_N(\bar w)+c\varepsilon .
		\]
		Since $\bar w$ is an additive $c\varepsilon$-minimizer over $\mathcal W$ and $\mathcal V\subseteq\mathcal W$,
		\[
		\widehat L_N(\bar w)
		\le
		\inf_{w\in\mathcal W}\widehat L_N(w)+c\varepsilon
		\le
		\inf_{u\in\mathcal V}\widehat L_N(u)+c\varepsilon .
		\]
		Combining the last two displays gives
		\[
		\widehat L_N(\widehat w)
		\le
		\inf_{u\in\mathcal V}\widehat L_N(u)+C\varepsilon .
		\]
		
		The negative log-likelihood of the full trajectories under $g_w$, divided by $N$, is $\widehat L_N(w)$ plus the prompt-marginal contribution, which is independent of $w$ with respect to any common dominating measure. Therefore it cancels in all empirical likelihood comparisons. Thus, under $P_{v_\star}$, the preceding display is exactly the approximate empirical likelihood inequality required by Lemma~\ref{lem:finite-likelihood-net-hellinger}, with $\gamma=C\varepsilon$, for the finite family $\{P_u:u\in\mathcal V\}$. Under $P_{v_\star}$ this family is well specified, because $P_{v_\star}\in\{P_u:u\in\mathcal V\}$.
		
		Applying Lemma~\ref{lem:finite-likelihood-net-hellinger}, with failure probability $\delta/2$, gives
		\[
		h^2(P_{\widehat w},P_{v_\star})
		\le
		C\left(
		\varepsilon+
		\frac{\log|\mathcal V|+\log(1/\delta)}{N}
		\right)
		\]
		with probability at least $1-\delta/2$ under $P_{v_\star}^{\otimes N}$. Since $\log|\mathcal V|\le CdB$ and
		\[
		B\le C d\log(CdMN/(\varepsilon\delta)),
		\]
		we now bound the logarithmic dependence on $N$. Put
		\[
		A:=CdM/(\varepsilon\delta).
		\]
		By the definition of $N$, after increasing $C$,
		\[
		N
		\le
		\frac{C}{\varepsilon}
		\left(d^2\log A+\log(1/\delta)\right).
		\]
		Since $d\ge1$, $M\ge1$, $0<\varepsilon<1$, and $0<\delta<1/2$, this implies
		\[
		\log(CdMN/(\varepsilon\delta))
		\le
		C\log A
		\]
		after increasing the universal constant $C$. Hence
		\[
		\log|\mathcal V|
		\le
		CdB
		\le
		C d^2\log A .
		\]
		On the other hand, the definition of $N$ also gives, after increasing $C$,
		\[
		N
		\ge
		\frac{C}{\varepsilon}
		\left(d^2\log A+\log(1/\delta)\right).
		\]
		Therefore,
		\[
		\frac{\log|\mathcal V|+\log(1/\delta)}{N}
		\le
		c\varepsilon
		\]
		after choosing the constants so that the universal constant in the numerator is dominated by the universal constant in the definition of $N$. Therefore, after choosing the universal constant $c>0$ sufficiently small,
		\[
		h^2(P_{\widehat w},P_{v_\star})\le c\varepsilon
		\]
		with probability at least $1-\delta/2$ under $P_{v_\star}^{\otimes N}$.
		
		If the convex optimizer is randomized, include its random seed as an additional independent variable. The seed has the same law under $P_{w_\star}^{\otimes N}$ and $P_{v_\star}^{\otimes N}$, so the total variation distance between the joint laws of samples and seed is still at most $\delta/4$. Therefore the same event holds with probability at least $1-\delta$ under the true samples from $P_{w_\star}^{\otimes N}$.
		
		On this event,
		\[
		h^2(P_{\widehat w},P_{w_\star})
		\le
		C h^2(P_{\widehat w},P_{v_\star})+C h^2(P_{v_\star},P_{w_\star})
		\le
		Cc\varepsilon .
		\]
		The first inequality follows from the triangle inequality for Hellinger distance together with $(a+b)^2\le 2a^2+2b^2$: for any laws $P,Q,R$, $h^2(P,R)\le 2h^2(P,Q)+2h^2(Q,R)$.
		The second inequality uses the previous display and $h^2(P_{v_\star},P_{w_\star})\le CM\eta_0\le c\varepsilon$, where the last inequality follows from $N\ge C/\varepsilon$ and $\delta<1$. We choose the universal constant $c>0$ small enough that $Cc\varepsilon\le\varepsilon/C$ for the universal constant needed below.
		
		It remains to pass from trajectory law to final-token squared loss. Let $Q_{\widehat w}$ and $Q_\star$ be the joint laws of $(X,Z_M)$ induced by $g_{\widehat w}$ and $g_{w_\star}$. By data processing for Hellinger distance,
		\[
		h^2(Q_{\widehat w},Q_\star)\le h^2(P_{\widehat w},P_{w_\star}) .
		\]
		Since the two laws have the same marginal distribution $P$ on $X$,
		\[
		h^2(Q_{\widehat w},Q_\star)
		=
		\mathbb E_{X\sim P}
		\left[
		h^2\left(
		\Ber(q_{g_{\widehat w}}^{\etoe-M}(X)),
		\Ber(q_{g_{w_\star}}^{\etoe-M}(X))
		\right)
		\right].
		\]
		For Bernoulli distributions, $(p-q)^2\le C h^2(\Ber(p),\Ber(q))$. Therefore, by the choice of $c$ in the previous paragraph,
		\[
		\mathbb E_{X\sim P}
		\left[
		(q_{g_{\widehat w}}^{\etoe-M}(X)-q_{g_{w_\star}}^{\etoe-M}(X))^2
		\right]
		\le
		C h^2(Q_{\widehat w},Q_\star)
		\le
		\varepsilon .
		\]
		
		Finally, the running time is polynomial. The feasible set $\mathcal W=[-R,R]^d$ is a rational box with description length $O(dB)$, contains a Euclidean ball of radius $1$, and is contained in a Euclidean ball of radius $2^B\sqrt d$. The empirical objective admits polynomial-time weak evaluation and subgradient oracles to any required polynomial precision. Indeed, $\log(1+\exp(a))$ and $\sigma(a)$ can be evaluated to polynomial accuracy in time polynomial in the bit length of $a$ and the requested precision by using the stable identities $\log(1+\exp(a))=\max\{a,0\}+\log(1+\exp(-|a|))$ and $\sigma(a)=1/(1+\exp(-a))$. Therefore the ellipsoid method, equivalently the standard weak convex optimization theorem, computes an additive $c\varepsilon$-minimizer in time polynomial in the input size, $d$, $M$, $B$, and $\log(1/\varepsilon)$; see Nesterov~\cite{nesterov2004introductory}. Since $B\le C d\log(CdMN/(\varepsilon\delta))$ and $N$ is polynomial in $d$, $M$, $1/\varepsilon$, and $\log(1/\delta)$, this is polynomial in the claimed parameters. The rounding step is polynomial-time. The output is the explicit rational vector $\widehat w$, so evaluating the next-token probabilities, and sampling from them to any desired polynomial precision, takes polynomial time.
	\end{proof}
	
	\subsection{An efficient-proper-learning hardness result}
	
	\paragraph{The LPN prediction assumption.}
	We use the standard prediction-hardness form of learning parity with noise.
	Fix a constant noise rate $\eta\in(0,1/2)$. The assumption says that there is
	a constant $\rho\in(0,1/2)$ such that no randomized polynomial-time algorithm
	has the following guarantee. Given polynomially many samples
	\[
	X\sim\operatorname{Unif}(\{0,1\}^m),
	\qquad
	Y=\langle a,X\rangle_{\mathbb F_2}\oplus N,
	\qquad
	N\sim\operatorname{Ber}(\eta),
	\]
	where $a\in\{0,1\}^m$ is unknown, the algorithm outputs a randomized
	polynomial-time evaluable classifier $h:\{0,1\}^m\to\{0,1\}$ satisfying
	\[
	\Pr_{X,h}\!\left[
	h(X)\ne \langle a,X\rangle_{\mathbb F_2}
	\right]
	\le
	\frac12-\rho
	\]
	with probability at least $2/3$, uniformly over $a$. This is the standard
	LPN prediction-hardness assumption; see Blum, Kalai and
	Wasserman~\cite{blumkalaiwasserman2003}.
	
	\paragraph{End-to-end learning guarantee.}
	An algorithm properly end-to-end learns $\mathcal F_{\sigma}(d)$
	at horizon $M$ with squared loss $\varepsilon$ and confidence $1-\delta$
	if the following holds. For every distribution $D$ over prompts and every
	target generator $g_{w_\star}\in\mathcal F_{\sigma}(d)$,
	given independent samples
	\[
	S\sim D,
	\qquad
	Y\mid S\sim \Ber(q_{g_{w_\star}}^{\etoe-M}(S)),
	\]
	the algorithm outputs $g_{\widehat w}\in\mathcal F_{\sigma}(d)$
	such that, with probability at least $1-\delta$,
	\[
	\mathbb E_{S\sim D}
	\left[
	\left(
	q_{g_{\widehat w}}^{\etoe-M}(S)
	-
	q_{g_{w_\star}}^{\etoe-M}(S)
	\right)^2
	\right]
	\le \varepsilon .
	\]
	In this subsection, we call such a learner efficient if its sample use, training running time, and
	time to approximately sample one generated trajectory from the output generator to inverse-polynomial total-variation precision are all bounded by a polynomial
	in $d$, $M$, $1/\varepsilon$, and $\log(1/\delta)$.
	
	\begin{lemma}
		\label{lem:lpn-direct-parity-implementation}
		For the fixed noise rate $\eta\in(0,1/2)$, there are constants
		$c_0,C>0$, depending at most on $\eta$, such that the following holds. For
		every $m\ge1$, every $a\in\{0,1\}^m$, and every $0<\zeta<1/2$ satisfying
		$\log(1/\zeta)\le m^C$, there are integers
		\[
		d\le m^C,
		\qquad
		M=m+1,
		\]
		a polynomial-time computable prompt map
		$\phi_m:\{0,1\}^m\to\{0,1\}^\star$
		which is independent of $a$, and a vector $w_a\in\mathbb R^d$, such that $d\ge M$ and, for every $x\in\{0,1\}^m$,
		\[
		\left|
		q_{g_{w_a}}^{\etoe-M}(\phi_m(x))
		-
		\left(
		\eta+(1-2\eta)\langle a,x\rangle_{\mathbb F_2}
		\right)
		\right|
		\le
		\zeta .
		\]
	\end{lemma}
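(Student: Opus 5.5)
The plan is to make the autoregressive process act as a threshold circuit of depth $M=m+1$, with one logistic gate per generated token. For $k=1,\dots,m$ the $k$-th generated token will compute the unary counter
\[
z_k=\mathbf 1\!\left[S\ge k\right],\qquad S:=\sum_{i=1}^m a_ix_i\in\{0,\dots,m\}.
\]
Because $z_1\ge z_2\ge\cdots\ge z_m$ and exactly $S$ of them equal $1$, the alternating sum $\sum_{k=1}^m(-1)^{k+1}z_k$ equals $S\bmod 2=\langle a,x\rangle_{\mathbb F_2}$. The last step will read this alternating sum with logit $\pm L$, where $L:=\log((1-\eta)/\eta)$. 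Then $\sigma(-L)=\eta$ and $\sigma(L)=1-\eta$, so if all intermediate bits are correct the final probability is exactly $\eta+(1-2\eta)\langle a,x\rangle_{\mathbb F_2}$. Each counting step will be made correct except with probability at most $\zeta/M$. A coupling and union-bound argument, as in the proof of Theorem~\ref{thm:base-to-cot-full}, then gives total error at most $\zeta$.

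The central difficulty is that the weights are tied: at step $k$ the window $\operatorname{tail}_d$ is shifted by $k-1$ positions relative to step $1$. I will therefore pick $d$ large enough, polynomial in $m$, that the window always contains the whole prompt and all generated bits, and design $\phi_m$ and $w_a$ by window position. Let $\pi_i$ be the position of $x_i$ at step $1$, with the $\pi_i$ spaced $m+1$ apart and all other prompt bits zero. I set $w_j:=\lambda a_i$ for $j\in[\pi_i,\pi_i+m-1]$ and $w_{\pi_i+m}:=0$. Then at steps $1,\dots,m$ the bit $x_i$ always meets weight $\lambda a_i$, and at step $M$ it meets weight $0$. This is the point where $M=m+1$ is used. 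The prompt also contains one constant marker bit $1$ at position $\mu$, away from the $x$-blocks. It gets weight $w_{\mu+k-1}:=-\lambda(k-\tfrac12)$ for $k\le m$, which encodes the threshold $k$, and weight $w_{\mu+m}:=-L$, which is the final bias. A generated bit $z_j$ sits at distance $k-j$ from the window's end at step $k$. I give the last positions the weight $c(-1)^{\delta+m+1}$ at distance $\delta$, with $c:=2L$. At step $M$ the generated-bit contribution is then $2L\sum_j(-1)^{j+1}z_j=2L\,(S\bmod 2)$, so the final logit is $-L+2L\,\langle a,x\rangle_{\mathbb F_2}=\pm L$, as needed. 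The layout must keep the $x$-intervals, the marker positions $\mu,\dots,\mu+m$, and the last $m$ positions pairwise disjoint. This requires only $d=O(m^2)$, and it gives $d\ge M$.

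The main obstacle is that the same distance-indexed $z$-weights also enter the counting steps $k\le m$. There they add $\pm c\sum_{j<k}(-1)^{j+1}z_j$, whose absolute value is at most $c$ when the earlier bits are correct. The counting logit is $\lambda(S-k+\tfrac12)$ plus this term, and its first part has absolute value at least $\lambda/2$. So choosing $\lambda:=2c+2\log(2M/\zeta)$ makes every counting logit at least $\log(M/\zeta)$ in magnitude with the correct sign. I will argue by induction on $k$, conditioning on correct earlier bits, that each counting step errs with probability at most $\zeta/M$. The final step has the exact probability computed above. Coupling with the ideal deterministic counting trajectory then gives final-token error at most $\zeta$. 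Finally I check the complexity claims. The weights have bit length $O(\log(m/\zeta)+\log L)$, which is polynomial because $\log(1/\zeta)\le m^C$. The map $\phi_m$ only places the $x_i$ and the marker in fixed positions with zero padding, so it is independent of $a$ and computable in polynomial time.
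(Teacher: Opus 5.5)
Your proposal is correct up to one sign slip. It follows the paper's strategy: the first $m$ tokens are the unary threshold bits $\mathbf 1[S\ge k]$, and the last step reads their alternating sum with logit $\pm\log\frac{1-\eta}{\eta}$. The counting logits are scaled up to dominate the reused readout weights, and a union bound over the $m$ counting steps gives error at most $\zeta$.

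\emph{Where you differ is in how the weight tying is handled.} The paper reserves a separate lag for every pair $(k,i)$ and every threshold $k$. It places a dedicated copy of $x_i$ (or of the bias bit $1$) in the prompt, positioned so that this lag reads it at step $k$ only and reads zeros at all other steps. With $m^2+m+1$ windows of length $M$, this costs $d=O(m^3)$.

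You instead keep a single copy of each $x_i$ and a single marker bit, and let the sliding window carry each of them across a strip of weights: $\lambda a_i$ on $m$ positions and then $0$, respectively $-\lambda(k-\tfrac12)$ for $k\le m$ and then $-L$. This gives $d=O(m^2)$. The pairwise disjointness you require is exactly what makes this work. The length-$(m+1)$ strips are anchored at the nonzero prompt bits, and they are disjoint from each other and from the last $m$ positions. Hence a strip anchored at $\alpha$ only ever reads step-$1$ positions in $[\alpha-m,\alpha+m]$, which contain no foreign nonzero bit. It never reaches a generated bit, and the $z$-weights only read zero prompt bits before step $M$.

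You also bound the readout interference during counting more sharply: by $c$, via monotonicity of the correct earlier bits. The paper uses the crude bound $2m|\lambda|$ and compensates with $A=2+4m|\lambda|$. The paper's layout makes the bookkeeping trivial; yours is more economical.

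\emph{The slip.} You put weight $c(-1)^{\delta+m+1}$ at distance $\delta$. At step $M$ the bit $z_j$ sits at $\delta=m+1-j$, so it receives $c(-1)^{2m+2-j}=c(-1)^{j}$. The generated-bit contribution is then $-2L\sum_j(-1)^{j+1}z_j$, and the final logit is $-L$ or $-3L$ rather than $\pm L$. Already for $m=1$, $z_1$ gets weight $-c$. Use the weight $c(-1)^{\delta+m}$ instead. Your bound of $c$ on this term during the counting steps does not depend on the sign pattern, so the rest of the argument goes through unchanged.
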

	
	\begin{proof}
		We use lag notation, always relative to the current generation step. Index
		the coordinates of $\operatorname{tail}_d(s)$ from oldest to newest as
		$1,\ldots,d$. A weight placed on lag $\ell$ is placed on coordinate
		$d-\ell+1$ of $\operatorname{tail}_d$. Equivalently, at generation step
		$t$, before $Z_t$ is sampled, this coordinate reads the bit at absolute time
		$t-\ell$: it is $Z_{t-\ell}$ if $1\le \ell<t$, and it is an initial-prompt
		bit if $\ell\ge t$. We number the last bit of the initial prompt by
		absolute time $0$ and earlier prompt bits by negative absolute times. Thus
		specifying the prompt bit at absolute time $r\le0$ specifies the unique
		initial-prompt position read at step $t$ by any lag $\ell$ with $t-\ell=r$.
		This is only a re-indexing of $\operatorname{tail}_d$. We take the prompt long
		enough to include all reserved positions below, and then shift all
		absolute-time indices into ordinary prompt positions.
		
		Set $M:=m+1$. The first $m$ generated tokens will be threshold bits
		\[
		T_k(x):=\mathbf 1\!\left[
		2\sum_{i=1}^m a_i x_i-(2k-1)\ge0
		\right],
		\qquad
		1\le k\le m.
		\]
		Since $\sum_i a_i x_i$ is an integer, the quantity
		$2\sum_i a_i x_i-(2k-1)$ is an odd integer. Therefore it is never zero and
		has absolute value at least $1$. Also $T_k(x)=1$ exactly when
		$\sum_i a_i x_i\ge k$.
		
		For every pair $(k,i)\in[m]\times[m]$, choose a lag
		$L_{k,i}>M$
		and for every $k\in[m]$ choose a lag $B_k>M$. Choose all these lags so far
		apart that the sets
		\[
		\{t-L_{k,i}:1\le t\le M\},
		\qquad
		\{t-B_k:1\le t\le M\}
		\]
		are pairwise disjoint over all $k,i$. This is possible with
		$d\le C m^3$, after increasing $C$, and we also choose $d\ge M$. All
		reserved lags are chosen at most $d$, so every reserved coordinate lies inside
		$\operatorname{tail}_d$ at each generation step in which it is used. The prompt
		map $\phi_m$ sets the prompt bit at absolute time $k-L_{k,i}$ equal to
		$x_i$, sets the prompt bit at absolute time $k-B_k$ equal to $1$, and
		sets all other prompt bits in the above reserved locations equal to $0$.
		
		We also reserve one lag $B_{\rm out}>M$, disjoint from all previous reserved
		windows, and set the prompt bit at absolute time $M-B_{\rm out}$ equal to
		$1$, while all other bits in its reserved window are $0$. This is the
		constant-one coordinate used for the final bias. All non-reserved prompt bits
		among the last $d$ prompt positions are set to $0$. Thus $\phi_m$ is
		computable in polynomial time and is independent of $a$.
		
		Let
		$\lambda:=\log\frac{1-\eta}{\eta}$.
		Then $\sigma(\lambda)=1-\eta$ and $\sigma(-\lambda)=\eta$. Choose
		$A:=2+4m|\lambda|$, and choose $\Lambda:=C\log(M/\zeta)$ with $C$ large enough so that $Me^{-\Lambda}\le\zeta$. The coordinates of $w_a$ are defined as follows. For every $k,i$, put weight $2A\Lambda a_i$ on lag $L_{k,i}$. For every $k$, put weight $-A\Lambda(2k-1)$ on lag $B_k$. For the final readout, put weight $2\lambda(-1)^{k+1}$ on lag $M-k$, for each $1\le k\le m$, and put weight $-\lambda$ on lag $B_{\rm out}$. All other coordinates have weight $0$.
		
		We now check the logits. At a threshold step $k\le m$, all threshold-reserved coordinates except the intended coordinates for this step read $0$, by the disjointness of the reserved windows and by the definition of the prompt. The intended threshold contribution is
		\[
		A\Lambda\left(2\sum_{i=1}^m a_i x_i-(2k-1)\right),
		\]
		whose absolute value is at least $A\Lambda$ and whose sign is the desired threshold sign. The final-readout weights are reused at every step, so they may also contribute during the threshold-writing steps. Their total absolute contribution is at most $2m|\lambda|$, and the final-bias coordinate $B_{\rm out}$ contributes $0$ before the final step. Since $A=2+4m|\lambda|$ and $\Lambda\ge1$, the full threshold logit has the desired sign and absolute value at least $A\Lambda-2m|\lambda|\ge\Lambda$. Hence, whenever the previous stochastic outputs agree with the deterministic outputs, the token at step $k$ disagrees with $T_k(x)$ with probability at most $e^{-\Lambda}$. By the union bound, the probability that any of the first $m$ stochastic tokens differs from its deterministic value is at most $Me^{-\Lambda}\le\zeta$.
		
		Condition on no such deviation. Before the final step, the generated tokens are $T_1(x),\ldots,T_m(x)$. At the final step, the threshold-reserved coordinates all read $0$, again by disjointness of the reserved windows. The lag $M-k$ reads $Z_k=T_k(x)$, and the coordinate $B_{\rm out}$ reads the constant prompt bit $1$. Therefore the final logit is
		\[
		2\lambda\sum_{k=1}^m(-1)^{k+1}T_k(x)-\lambda .
		\]
		If $s:=\sum_i a_i x_i$, then $T_k(x)=1$ for $k\le s$ and $T_k(x)=0$ for $k>s$, so $\sum_{k=1}^m(-1)^{k+1}T_k(x)$ equals $\langle a,x\rangle_{\mathbb F_2}$. Hence, on the no-deviation event, the final output probability is $\eta+(1-2\eta)\langle a,x\rangle_{\mathbb F_2}$. If a deviation occurred earlier, the final output probability can change by at most $1$, proving the displayed approximation.
		
		The reduction uses only the computable prompt map $\phi_m$ and never needs to compute or represent the target vector $w_a$.
	\end{proof}
	
	We now use the direct implementation to prove the computational lower bound.
	Here a proper end-to-end learner must output a generator
	$g_{\widehat w}\in\mathcal F_{\sigma}(d)$, rather than an
	arbitrary predictor for the final-token regression function. The running-time
	requirement below includes polynomial training time and polynomial time to
	approximately sample from the output generator to inverse-polynomial precision.
	
	\begin{theorem}
		\label{thm:lpn-proper-e2e-hardness-tied-sigmoid}
		Assume the \textnormal{LPN} prediction assumption. No randomized algorithm whose sample use, training time, and output approximate-sampling time to inverse-polynomial total-variation precision are polynomial in $d$, $M$, $1/\varepsilon$, and $\log(1/\delta)$ can, for every $d\ge1$, every horizon $M\le d$, every $0<\varepsilon<1$, and every $0<\delta<1/2$, properly end-to-end learn $\mathcal F_{\sigma}(d)$ at horizon $M$ with squared loss $\varepsilon$ and confidence $1-\delta$.
	\end{theorem}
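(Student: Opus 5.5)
We argue by contradiction, reducing from LPN prediction via Lemma~\ref{lem:lpn-direct-parity-implementation}. Suppose an efficient proper $\etoe$ learner $\mathcal A$ exists. Given $m$ and LPN samples $(X_i,Y_i)$ with $X_i$ uniform on $\{0,1\}^m$ and $Y_i=\langle a,X_i\rangle_{\mathbb F_2}\oplus N_i$, we set $M=m+1$ and $\zeta:=\varepsilon$, a small constant fixed below. We take $d$ and the map $\phi_m$ from Lemma~\ref{lem:lpn-direct-parity-implementation}; they depend only on $m$, so the reduction can compute them without knowing $a$, and $M\le d$ holds. We feed $\mathcal A$ the transformed samples $(S_i,Y_i):=(\phi_m(X_i),Y_i)$, with prompt distribution $D$ the pushforward of the uniform distribution under $\phi_m$.

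The main point to check is that these samples are valid $\etoe$ samples for the target $g_{w_a}\in\mathcal F_\sigma(d)$, and here we hit the obstacle. Conditional on $X=x$, the label $Y$ is exactly $\Ber(\eta+(1-2\eta)\langle a,x\rangle)$, whereas Lemma~\ref{lem:lpn-direct-parity-implementation} only gives $q_{g_{w_a}}^{\etoe-M}(\phi_m(x))$ within $\zeta$ of this value. We handle the gap by choosing $\zeta$ inverse-polynomially small relative to the sample size $n$ that $\mathcal A$ uses. Coupling each Bernoulli label with the true $\etoe$ label, the two sample laws are at total-variation distance at most $n\zeta$. If $\zeta\le 1/(100n)$, then $\mathcal A$ still succeeds with probability at least $1-\delta-1/100$. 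Since $n$ is polynomial in $d,M,1/\varepsilon,\log(1/\delta)$ and $d$ depends only polylogarithmically on $1/\zeta$ through the bound $\log(1/\zeta)\le m^C$, we can fix $\varepsilon,\delta$ as constants and take $\zeta=m^{-C'}$ consistently. If the dependence of $d$ on $\zeta$ creates circularity, we instead fix $d=m^C$ as an upper budget and pad.

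On the success event, $\mathcal A$ outputs $\widehat w$ with
\[
\mathbb E_{x}\bigl[(q_{g_{\widehat w}}^{\etoe-M}(\phi_m(x))-q^\star(x))^2\bigr]\le\varepsilon,
\]
where $q^\star(x)\in\{\eta,1-\eta\}$ up to $\zeta$. By Markov's inequality, the deviation $|q_{g_{\widehat w}}-q^\star|$ exceeds $(1/2-\eta)/2$ on at most a $4\varepsilon/(1/2-\eta)^2$ fraction of inputs. We define the classifier $h(x)$ as follows: sample $T=O(\log m)$ approximate trajectories of $g_{\widehat w}$ from $\phi_m(x)$, using the efficient sampler at inverse-polynomial TV precision, and output $1$ iff the empirical final-token mean exceeds $1/2$. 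On good inputs, Hoeffding makes the error $1/\mathrm{poly}(m)$. Hence $\Pr[h(X)\ne\langle a,X\rangle]\le 4\varepsilon/(1/2-\eta)^2+o(1)$.

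Choosing the constant $\varepsilon$ so that this is below $1/2-\rho$, and using $\delta=1/4$, we obtain success probability at least $2/3$ for the overall procedure. The procedure is randomized polynomial time and $h$ is polynomial-time evaluable, contradicting the LPN prediction assumption.
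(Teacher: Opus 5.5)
Your reduction is correct and is essentially the paper's proof. You fix constant $\varepsilon,\delta$, bound the learner's sample use polynomially in $m$ via $d\le m^C$ and $M=m+1$, choose $\zeta$ so that $n\zeta$ is small, and couple the LPN labels with genuine $\etoe$ labels; you then apply Markov plus Monte Carlo with the approximate sampler, thresholded at $1/2$ (the paper's choices of $\zeta=e^{-m}$ and a constant number $K_0$ of trajectories are cosmetic differences).

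Two small clean-ups. First, drop the stray ``$\zeta:=\varepsilon$'' in your first paragraph, which contradicts the inverse-polynomial $\zeta$ you actually use. Second, Lemma~\ref{lem:lpn-direct-parity-implementation} gives $d\le m^C$ independently of $\zeta$ (only the weight scale $\Lambda$ depends on $\zeta$), so there is no circularity to worry about and no padding is needed.
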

	
	\begin{proof}
		Assume, toward a contradiction, that there is an algorithm $A$ with the guarantee ruled out in the theorem statement. Thus $A$ is a proper end-to-end learner for $\mathcal F_{\sigma}(d)$ whose running time, sample use, and output approximate-sampling time to inverse-polynomial precision are polynomial in $d$, $M$, $1/\varepsilon$, and $\log(1/\delta)$, uniformly over all $d$, all horizons $M\le d$, and all requested accuracies and confidences. We use $A$ to violate LPN.
		
		Fix the LPN dimension $m$, the unknown parity vector $a\in\{0,1\}^m$, and the LPN distribution
		\[
		X\sim\operatorname{Unif}(\{0,1\}^m),
		\qquad
		Y=\langle a,X\rangle_{\mathbb F_2}\oplus N,
		\qquad
		N\sim\operatorname{Ber}(\eta).
		\]
		Let $\rho\in(0,1/2)$ be the advantage constant in the LPN assumption, and set $\Delta:=1/2-\eta$. Choose a constant $\beta\in(0,1/2)$. Choose fixed constants
		\[
		\varepsilon_0:=c_1\beta^2\Delta^2\left(\frac12-\rho\right),
		\qquad
		\delta_0:=c_1,
		\]
		where $c_1>0$ is sufficiently small.
		
		Since $A$ has polynomial sample use and Lemma~\ref{lem:lpn-direct-parity-implementation} gives $d\le m^C$ and $M=m+1$, there is a constant $R_A>0$ such that, for all sufficiently large $m$, the number of samples used by $A$ at accuracy $\varepsilon_0$ and confidence $1-\delta_0$ is at most $m^{R_A}$. Set $\zeta:=e^{-m}$. For all sufficiently large $m$, $m^{R_A}\zeta\le\delta_0$.
		
		Apply Lemma~\ref{lem:lpn-direct-parity-implementation} with this choice of $\zeta$. Since $\log(1/\zeta)=m$, the lemma applies. We obtain $d$, $M=m+1$, a prompt map $\phi_m$, and $g_{w_a}\in\mathcal F_{\sigma}(d)$ such that
		\[
		\left|
		q_{g_{w_a}}^{\etoe-M}(\phi_m(x))
		-
		\left(
		\eta+(1-2\eta)\langle a,x\rangle_{\mathbb F_2}
		\right)
		\right|
		\le
		\zeta
		\]
		for every $x\in\{0,1\}^m$. The map $\phi_m$ is computable without knowing $a$. Also $M\le d$, so the learner $A$ applies.
		
		For a fixed $x$, the LPN label satisfies
		\[
		\Pr[Y=1\mid X=x]
		=
		\eta+(1-2\eta)\langle a,x\rangle_{\mathbb F_2}.
		\]
		Therefore the total variation distance between one transformed LPN sample $(\phi_m(X),Y)$ and one genuine end-to-end sample $(\phi_m(X),Z_M)$ from $g_{w_a}$ is at most $\zeta$. Coupling the samples one by one, the probability that any sample seen by $A$ differs from a genuine end-to-end sample is at most $m^{R_A}\zeta\le\delta_0$.
		
		Run $A$ on the transformed LPN samples $(\phi_m(X_i),Y_i)$, with prompt distribution equal to the law of $\phi_m(X)$. On genuine end-to-end samples from $g_{w_a}$, the guarantee of $A$ says that, with probability at least $1-\delta_0$, it outputs a proper generator $g_{\widehat w}\in\mathcal F_{\sigma}(d)$ satisfying
		\[
		\mathbb E_{X\sim\operatorname{Unif}(\{0,1\}^m)}
		\left[
		\left(
		q_{g_{\widehat w}}^{\etoe-M}(\phi_m(X))
		-
		q_{g_{w_a}}^{\etoe-M}(\phi_m(X))
		\right)^2
		\right]
		\le
		\varepsilon_0 .
		\]
		By the coupling above, the same event occurs when $A$ is run on transformed LPN samples with probability at least $1-2\delta_0$.
		
		We now turn $g_{\widehat w}$ into a parity predictor. On input $x$, invoke the output sampler independently $K_0$ times at total-variation accuracy $\tau:=m^{-2}$ to obtain approximate $M$-step trajectories from $g_{\widehat w}$ starting at $\phi_m(x)$, where $K_0$ is a sufficiently large constant depending only on $\eta$, $\rho$, and $\beta$. Let $\widehat q(x)$ be the empirical average of the final bits, and output $h(x):=\mathbf 1[\widehat q(x)\ge1/2]$. This is a randomized polynomial-time evaluable classifier because $g_{\widehat w}$ is a proper generator, $M=m+1$, $K_0$ is constant, and $\tau$ is inverse-polynomial.
		
		Let $b(x):=\langle a,x\rangle_{\mathbb F_2}$. If
		\[
		\left|
		q_{g_{\widehat w}}^{\etoe-M}(\phi_m(x))
		-
		q_{g_{w_a}}^{\etoe-M}(\phi_m(x))
		\right|
		<
		\beta\Delta
		\]
		and if the Monte Carlo estimate $\widehat q(x)$ is within $\beta\Delta$ of $q_{g_{\widehat w}}^{\etoe-M}(\phi_m(x))$, then $h(x)=b(x)$ for all sufficiently large $m$. Indeed, since $\zeta=e^{-m}\to0$, for all sufficiently large $m$, $\zeta<(1-2\beta)\Delta$. The two cases $b(x)=0$ and $b(x)=1$ then lie on opposite sides of $1/2$ after the two $\beta\Delta$ errors.
		
		By Markov's inequality,
		\[
		\Pr_X\left[
		\left|
		q_{g_{\widehat w}}^{\etoe-M}(\phi_m(X))
		-
		q_{g_{w_a}}^{\etoe-M}(\phi_m(X))
		\right|
		\ge
		\beta\Delta
		\right]
		\le
		\frac{\varepsilon_0}{\beta^2\Delta^2}.
		\]
		Couple each approximate trajectory with an exact trajectory; the probability that any of the $K_0$ pairs differs is at most $K_0\tau$. Hence, by Hoeffding's inequality, after choosing the constant $K_0$ large enough and then taking $m$ sufficiently large,
		\[
		\Pr\left[
		|\widehat q(X)-q_{g_{\widehat w}}^{\etoe-M}(\phi_m(X))|
		\ge
		\beta\Delta
		\right]
		\le
		\frac{1/2-\rho}{2},
		\]
		where the probability is over $X$ and the private randomness of the approximate sampler. Choose $c_1>0$ so that $\varepsilon_0/(\beta^2\Delta^2)\le(1/2-\rho)/2$ and $1-2\delta_0\ge2/3$. Then, with probability at least $2/3$, the reduction outputs a randomized polynomial-time classifier whose error is at most $1/2-\rho$ for the unknown parity. The finitely many smaller values of $m$ can be ignored in the asymptotic LPN contradiction, or hardcoded into the reduction. This contradicts the LPN prediction assumption. Hence, no such learner $A$ exists.
	\end{proof}

	\section{Pathologies of other dimensions}
	\label{sec:pathologies-other-dimensions}
	
	This section records two obstructions showing that pseudo-dimension and VC
	dimension do not provide the right one-step control for stochastic
	autoregressive iteration.
	
	\subsection{Pathology of pseudo-dimension}
	One might hope to control the pseudo-dimension of the end-to-end probability
	class \(\cF^{\etoe-M}\) directly by the pseudo-dimension of the one-step class
	\(\cF^{\etoe-1}\). Pseudo-dimension is not stable under stochastic
	autoregressive iteration. This is why the results above use the
	scale-sensitive fat-shattering dimension.

	\begin{theorem}\label{thm:pdim-pathology}
		For every \(n\ge1\), there is a stochastic generator class \(\cF\) such that
		\(\Pdim(\cF^{\etoe-1})\le1\) but \(\Pdim(\cF^{\etoe-2})\ge n\). Consequently,
		there is no universal upper bound of the form
		\(\Pdim(\cF^{\etoe-M})\le C M\Pdim(\cF^{\etoe-1})\).
	\end{theorem}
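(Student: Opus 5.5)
The plan is to build a class whose one-step functions form a \emph{chain} (pointwise nondecreasing in a single real parameter $\theta$), but whose two-step end-to-end values move non-monotonically in $\theta$. A chain has pseudo-dimension at most $1$. Indeed, if two points $s,s'$ with thresholds $r,r'$ were pseudo-shattered, the patterns $(+,-)$ and $(-,+)$ would need functions $f,f'$ with $f(s)>r\ge f'(s)$ and $f'(s')>r'\ge f(s')$. But $f,f'$ are comparable: if $f\ge f'$ pointwise then $f(s')\ge f'(s')>r'$, contradicting $r'\ge f(s')$, and symmetrically if $f'\ge f$. So the whole difficulty is to make $q^{\etoe-2}$ oscillate enough along the chain to pseudo-shatter $n$ prompts.

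\textbf{Construction.} Fix roots $x_1,\ldots,x_n$, chosen so that the states $x_i,x_i0,x_i1$ are pairwise distinct. Let $K=2^n$ and enumerate the subsets $B_1,\ldots,B_K$ of $[n]$. Pick $r=9/10$, a small $\gamma>0$, and a ratio $\rho:=(1-r+\gamma)/(1-r-\gamma)>1$. Choose $\gamma$ small enough that the geometric sequence $\theta_k=\theta_1\rho^{k-1}$, $k=1,\ldots,K$, stays inside $[1/5,1]$.

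The generator $g_k$ is defined on the three state types as follows.
\begin{enumerate}
\item At the root, $p_{g_k}(x_i)=\theta_k$.
\item After emitting $0$, the chain is absorbed at $1$: $p_{g_k}(x_i0)=1$.
\item After emitting $1$, $p_{g_k}(x_i1)=h_{i,k}$.
\item Every other state gets a fixed probability independent of $k$.
\end{enumerate}
Then
\[
q^{\etoe-2}_{g_k}(x_i)=\theta_k h_{i,k}+(1-\theta_k)=1-\theta_k(1-h_{i,k}).
\]
Choose the target value $t_{i,k}=r+\gamma$ if $i\in B_k$ and $t_{i,k}=r-\gamma$ otherwise. Then solve
\[
h_{i,k}=1-\frac{1-t_{i,k}}{\theta_k}.
\]
Since $\theta_k\ge 1/5>1-t_{i,k}$, we get $h_{i,k}\in[0,1]$. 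By construction $x_1,\ldots,x_n$ are pseudo-shattered (indeed fat-shattered) by $\cF^{\etoe-2}$ with thresholds $r$, so $\Pdim(\cF^{\etoe-2})\ge n$.

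\textbf{Main step: monotonicity of the one-step class.} The key verification is that each coordinate $k\mapsto p_{g_k}(s)$ is nondecreasing. For $x_i$ this holds because $\theta_k$ increases, and for $x_i0$ and the fixed states it holds trivially. For $x_i1$ we need $h_{i,k}\le h_{i,k+1}$, i.e.
\[
\frac{1-t_{i,k+1}}{\theta_{k+1}}\le \frac{1-t_{i,k}}{\theta_k}.
\]
The worst case is $t_{i,k}=r+\gamma$ and $t_{i,k+1}=r-\gamma$. There the condition reads $\theta_{k+1}/\theta_k\ge (1-r+\gamma)/(1-r-\gamma)=\rho$, which holds with equality by the choice of $\theta_k$. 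All other cases have a numerator that is no larger, so they also hold. Hence $\{p_{g_k}\}$ is a chain and $\Pdim(\cF^{\etoe-1})\le1$.

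This balancing of the geometric growth of $\theta$ against the required drops of the two-step value is the only delicate point. Taking $\gamma$ tiny makes $\rho$ close enough to $1$ that $2^n$ steps fit inside $[1/5,1]$, which resolves it. The consequence follows with $M=2$: a universal bound $\Pdim(\cF^{\etoe-M})\le CM\Pdim(\cF^{\etoe-1})$ would give $n\le 2C$ for every $n$, which is impossible.
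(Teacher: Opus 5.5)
Your proof is correct and follows essentially the same route as the paper: $2^n$ generators whose one-step probabilities are pointwise nondecreasing in the index, so they form a chain and no two points can be pseudo-shattered, together with a two-step value at each $x_i$ that is a product of an increasing and a decreasing monotone factor, tuned to oscillate around a common threshold. The difference is only in parameterization. The paper puts the weight $1/(m+1)$ on the $0$-branch, uses a ratcheting value $c_{i,m}$ there, and sets $p(x_i1)=0$. You instead put a geometric weight $\theta_k$ on the $1$-branch, solve exactly for $h_{i,k}$, and check monotonicity through the ratio $\rho$.
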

	
	\begin{proof}
		Let \(K:=2^n\), list all subsets of \([n]\) as \(B_1,\ldots,B_K\), and set
		\(D:=K+2\), \(r:=1/D\). Choose prompts \(x_1,\ldots,x_n\) so that the sets
		\(\{x_i,x_i0,x_i1\}\), \(i\in[n]\), are pairwise disjoint; for instance, take
		all \(x_i\)'s to have the same length and be distinct. Define generators
		\(g_1,\ldots,g_K\). For every \(m\in[K]\), set
		\(g_m(x_i)(1):=1-1/(m+1)\) and \(g_m(x_i1)(1):=0\). For each \(i\), set
		\(c_{i,0}:=0\), and recursively define
		\[
		c_{i,m}:=
		\begin{cases}
			\frac{m+1}{D}+\frac{1}{3D}, & i\in B_m,\\
			c_{i,m-1}, & i\notin B_m .
		\end{cases}
		\]
		Set \(g_m(x_i0)(1):=c_{i,m}\), and set \(g_m(s)(1)=0\) on all remaining states.
		All values lie in \([0,1]\), since \(c_{i,m}\le (K+1)/D+1/(3D)<1\). Let
		\(\cF:=\{g_1,\ldots,g_K\}\).
		
		For every state \(s\), the sequence \(g_m(s)(1)\) is nondecreasing in \(m\).
		Thus, for any thresholds and any finite set of states, the realized threshold
		patterns form a chain under inclusion. In particular, no two points can be
		pseudo-shattered, since the two incomparable patterns \((1,0)\) and \((0,1)\)
		cannot both occur. Hence \(\Pdim(\cF^{\etoe-1})\le1\).
		
		Now consider the two-step probabilities. Starting from \(x_i\), the first bit is
		\(0\) with probability \(1/(m+1)\); if this happens, the second bit has
		probability \(c_{i,m}\), while if the first bit is \(1\), the second bit has
		probability \(0\). Hence \(q_{g_m}^{\etoe-2}(x_i)=c_{i,m}/(m+1)\). If
		\(i\in B_m\), then
		\[
		q_{g_m}^{\etoe-2}(x_i)
		=
		\frac{1}{m+1}\left(\frac{m+1}{D}+\frac{1}{3D}\right)
		>
		\frac1D=r .
		\]
		If \(i\notin B_m\), then
		\(c_{i,m}=c_{i,m-1}\le m/D+1/(3D)<(m+1)/D\), and therefore
		\(q_{g_m}^{\etoe-2}(x_i)<r\). Thus
		\(q_{g_m}^{\etoe-2}(x_i)>r\) if and only if \(i\in B_m\). Since
		\(B_1,\ldots,B_K\) list all subsets of \([n]\), the prompts \(x_1,\ldots,x_n\)
		are pseudo-shattered by \(\cF^{\etoe-2}\), using the common threshold \(r\).
		Therefore \(\Pdim(\cF^{\etoe-2})\ge n\).
	\end{proof}
	
	The second obstruction concerns majority classifiers: even a trivial one-step
	majority class can become arbitrarily rich after autoregressive iteration.
	
	\subsection{VC pathology}
	
	In the theorem below, we show that, unlike the case of deterministic functions,
	for stochastic next-token generators, the VC dimension of the one-step majority
	class is essentially useless for measuring the richness of the autoregressive
	majority class. We define this majority class as follows. 
	
	\paragraph{Majority classification.}
	For \(g\in\cF\), define the end-to-end majority classifier
	\(h_g^{\etoe-\iter}(x):=\mathbf 1[q_g^{\etoe-\iter}(x)\ge1/2]\),
	and for \(\iter=1\), define \(h_g^{\etoe-1}(s):=\mathbf 1[g(s)(1)\ge1/2]\). For
	\(\mathcal A\subseteq[0,1]^{\Sigma^\star}\), write
	\[
	\maj(\mathcal A):=\{x\mapsto\mathbf 1[f(x)\ge1/2]:f\in\mathcal A\}.
	\]
	A majority-classification learner has error \(\varepsilon\) and confidence
	\(1-\delta\) if, uniformly over \(P\) and \(g_\star\in\cF\), its output
	\(\widehat h:\Sigma^\star\to\{0,1\}\) satisfies
	\[
	\Pr_{X\sim P}[\widehat h(X)\ne h_{g_\star}^{\etoe-\iter}(X)]\le\varepsilon
	\]
	with probability at least \(1-\delta\).

	\begin{theorem}\label{thm:one-step-vc-fails}
		There is a single stochastic generator class \(\cF\) such that
		\(\VC(\maj(\cF^{\etoe-1}))=0\),
		but for every generation length \(\iter\ge 3\),
		\(\VC(\maj(\cF^{\etoe-\iter}))=\infty\).
		Moreover, for every \(\iter\ge 3\) and every \(d\ge 1\), even in the \(\ct\)
		model, any distribution-free PAC learner for the \(\iter\)-step end-to-end
		majority prediction problem with error \(1/16\) and confidence \(2/3\) requires
		more than \(d/8\) samples. Consequently, the same lower bound also holds in the
		\(\etoe\) model.
	\end{theorem}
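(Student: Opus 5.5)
The plan is to give an explicit class in which every one-step probability lies on the same side of $1/2$ for every generator, while products of such probabilities along a trajectory cross $1/2$. Fix distinct prompts $x_1,x_2,\ldots$ of equal length, so that their depth-$\iter$ subtrees are pairwise disjoint. For every finite $S\subseteq\mathbb N$, define $g_S$ as follows. For $i\in S$ set $a_i=1$, and for $i\notin S$ set $a_i=0.6$. Put $p_{g_S}(x_i)=a_i$ and $p_{g_S}(x_i1)=a_i$, and $p_{g_S}(x_i0)=0$. At every state $x_iz$ with $|z|\ge2$, the probability equals the last bit of $z$, so later tokens copy $Z_2$. On all remaining states set the probability to $1$, independently of $S$. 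Let $\cF=\{g_S\}$.

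\emph{One-step majority.} At $x_i$ and $x_i1$ the value is in $\{0.6,1\}$, hence $\ge1/2$ for every $S$. Every other state has a value that does not depend on $S$. Thus $\maj(\cF^{\etoe-1})$ is a single function, and its VC dimension is $0$.

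\emph{End-to-end majority.} From $x_i$, the token $Z_1\sim\Ber(a_i)$. If $Z_1=0$, the next token is $0$ and all later tokens copy $0$. If $Z_1=1$, then $Z_2\sim\Ber(a_i)$ and all later tokens copy $Z_2$. Hence for every $\iter\ge2$, and in particular for every $\iter\ge3$,
\[
q_{g_S}^{\etoe-\iter}(x_i)=a_i^2 .
\]
This equals $1$ when $i\in S$ and $0.36<1/2$ otherwise. So $h_{g_S}^{\etoe-\iter}(x_i)=\mathbf 1[i\in S]$, and every finite set of the $x_i$ is shattered. Therefore $\VC(\maj(\cF^{\etoe-\iter}))=\infty$. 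Both facts hold for a single class and all $\iter\ge3$, as required.

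\emph{Sample lower bound.} Fix $d$, let $P$ be uniform on $x_1,\ldots,x_d$, and draw $S\subseteq[d]$ uniformly at random. Because the subtrees are disjoint, a full $\ct$ trajectory from $x_j$ has a law depending on $S$ only through $\mathbf 1[j\in S]$. Consequently, given $n\le d/8$ samples, the memberships of unsampled indices remain independent fair coins, conditionally on the data and the learner's seed. At least $U\ge 7d/8$ indices are unsampled. The learner's label at each of them is wrong independently with probability $1/2$, so the number of mistakes $H$ stochastically dominates $\operatorname{Bin}(U,1/2)$. Error at most $1/16$ under $P$ means $H<d/16$ (or $H=0$ when $d<16$).

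The main obstacle is making $\Pr[H\ge\max(1,d/16)]>1/3$ hold uniformly in $d$, and I would split into two cases.
\begin{enumerate}
\item For small $d$ (including $d<8$, where no samples are allowed), it suffices that some mistake occurs. This has probability $1-2^{-U}\ge1/2$.
\item For large $d$, $\operatorname{Bin}(U,1/2)$ has mean at least $7d/16$. Hoeffding's inequality makes $\Pr[H<d/16]$ exponentially small in $d$, and the finitely many intermediate values of $d$ can be checked directly.
\end{enumerate}
Averaging over $S$ then shows that some fixed $g_S$ makes the learner fail with probability greater than $1/3$. Hence more than $d/8$ samples are needed. Since $\etoe$ samples are functions of $\ct$ samples, the same bound holds in the $\etoe$ model.
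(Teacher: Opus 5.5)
The construction as stated does not exist. Over the binary alphabet there are only $2^L$ strings of length $L$, so you cannot choose infinitely many distinct prompts $x_1,x_2,\ldots$ of equal length. For any fixed common length, your class shatters at most $2^L$ prompts. So neither $\VC(\maj(\cF^{\etoe-\iter}))=\infty$ nor the lower bound for every $d$ follows for a single class.

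The repair is to use an infinite prefix-free family, e.g.\ $x_i:=1^i0$; this is exactly why the paper takes $x_{\ell,i}=1^{N(\ell,i)}0$. If $x_iz=x_jz'$, the first $0$ occurs at the same position, so $i=j$ and $z=z'$. Hence every string has at most one decomposition $x_iz$, $g_S$ is well defined, and a trajectory from $x_i$ never leaves the subtree of $x_i$. Nothing else in your argument uses equal lengths. After this change, the construction, the constant one-step majority, and the computation $q^{\etoe-\iter}_{g_S}(x_i)=a_i^2$ are all correct.

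With that fix your proof is valid, and it differs from the paper's in both halves.

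\emph{Construction.} The paper indexes prompts by the horizon: prompts $x_{\ell,i}$, with generators indexed by sequences $(A_\ell)_{\ell\ge3}$. It keeps every sign-dependent transition below $1/2$ ($0.49$ versus $0.01/\ell$), lets $\ell-1$ such trials accumulate, and then a $0.9/0.1$ readout pushes the final probability across $1/2$. Your copy gadget instead keeps the sign-dependent values above $1/2$ and uses $0.6^2<1/2$. As a result, a single horizon-free prompt family serves every $\iter\ge2$ at once, including $\iter=2$, which the theorem does not even require.

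\emph{Lower bound.} The paper argues in expectation. Posterior symmetry on unseen prompts gives averaged expected error at least $\mathbb E[U]/(2d)\ge7/16$. A learner with error $1/16$ and confidence $2/3$ has averaged expected error at most $\tfrac23\cdot\tfrac1{16}+\tfrac13=\tfrac38$. This makes the case split you call the main obstacle unnecessary. Your high-probability route also works and gives failure probability tending to $1$. Note, however, that success means $H\le d/16$, so the failure event is $H>d/16$, not $H\ge d/16$. This is harmless. For $d\le15$, one mistake already gives error above $1/16$. For $d\ge16$, Hoeffding with $U\ge7d/8$ gives $\Pr[\operatorname{Bin}(U,1/2)\le d/16]\le e^{-9d/32}\le e^{-9/2}$. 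So no intermediate values of $d$ need a separate check.
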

	
	\begin{proof}
		Choose an injective map \(N:\{(\ell,i):\ell\ge3,\ i\in\mathbb N\}\to\mathbb N\),
		and set
		\(x_{\ell,i}:=1^{N(\ell,i)}0\).
		Then the sets
		\(\{x_{\ell,i}z:z\in\{0,1\}^{<\ell}\}\)
		are pairwise disjoint over all pairs \((\ell,i)\). Indeed, if
		\(x_{\ell,i}z=x_{\ell',i'}z'\), then the first \(0\) occurs in the same position
		on both sides, so \(N(\ell,i)=N(\ell',i')\). Since \(N\) is injective,
		\((\ell,i)=(\ell',i')\), and hence also \(z=z'\). Thus, every string of the form
		\(x_{\ell,i}z\), with \(z\in\{0,1\}^{<\ell}\), has a unique representation.
		
		A generator in the class will be indexed by a sequence
		\(A=(A_\ell)_{\ell\ge 3}\), where each \(A_\ell\subseteq\mathbb N\). For each
		such sequence \(A\), define a stochastic generator \(g_A\) as follows. For
		strings \(s\) not of the form \(x_{\ell,i}z\) with \(\ell\ge 3\),
		\(i\in\mathbb N\), and \(z\in\{0,1\}^{<\ell}\), set \(g_A(s)(1)=0\). For strings
		of the form \(x_{\ell,i}z\), define the transition probabilities as follows.
		
		If \(|z|<\ell-1\), set
		\[
		g_A(x_{\ell,i} z)(1)=
		\begin{cases}
			0.49, & i\in A_\ell,\\
			0.01/\ell, & i\notin A_\ell.
		\end{cases}
		\]
		If \(|z|=\ell-1\), set \(g_A(x_{\ell,i} z)(1)=0.9\) when \(z\) contains at least
		one \(1\), and set \(g_A(x_{\ell,i} z)(1)=0.1\) otherwise. Let
		\[
		\cF:=\{g_A:A=(A_\ell)_{\ell\ge 3},\ A_\ell\subseteq\mathbb N\}.
		\]
		
		First, consider the one-step majority class. For every \(A\), all transition
		probabilities that depend on \(A\) are strictly smaller than \(1/2\). The only
		states at which the one-step majority is \(1\) are the strings \(x_{\ell,i}z\)
		with \(|z|=\ell-1\) and \(z\) containing at least one \(1\). This set is
		independent of \(A\). Hence all generators \(g_A\) induce the same one-step
		majority classifier, so \(\maj(\cF^{\etoe-1})\) contains a single function and
		\(\VC(\maj(\cF^{\etoe-1}))=0\).
		
		We next show that \(\VC(\maj(\cF^{\etoe-\iter}))=\infty\) for every fixed
		\(\iter\ge 3\). Fix \(\iter\ge 3\) and \(d\ge 1\). We show that
		\(\maj(\cF^{\etoe-\iter})\) shatters \(x_{\iter,1},\ldots,x_{\iter,d}\). Let
		\(B\subseteq[d]\) be arbitrary. Choose a sequence \(A=(A_\ell)_{\ell\ge 3}\)
		such that \(A_\iter\cap[d]=B\). We claim that
		\(h_{g_A}^{\etoe-\iter}(x_{\iter,i})=1\) if and only if \(i\in B\).
		
		Fix \(i\in[d]\). During the first \(\iter-1\) steps, each transition outputs
		\(1\) with probability \(p\), where \(p=0.49\) if \(i\in A_\iter\), and
		\(p=0.01/\iter\) if \(i\notin A_\iter\). Let \(E\) be the event that at least one
		\(1\) appears among the first \(\iter-1\) generated bits. Then
		\(\Pr_{g_A}(E\mid X=x_{\iter,i})=1-(1-p)^{\iter-1}\).
		Conditioned on \(E\), the length-\((\iter-1)\) prefix contains at least one
		\(1\), so the final transition outputs \(1\) with probability \(0.9\).
		Conditioned on \(E^c\), the prefix is \(0^{\iter-1}\), so the final transition
		outputs \(1\) with probability \(0.1\). Therefore, by the law of total
		probability,
		\[
		\begin{aligned}
			\Pr_{g_A}(Z_\iter=1\mid X=x_{\iter,i})
			&=
			0.9\Pr_{g_A}(E\mid X=x_{\iter,i})
			+
			0.1\Pr_{g_A}(E^c\mid X=x_{\iter,i}) \\
			&=
			0.9\Pr_{g_A}(E\mid X=x_{\iter,i})
			+
			0.1\bigl(1-\Pr_{g_A}(E\mid X=x_{\iter,i})\bigr) \\
			&=
			0.1+0.8\Pr_{g_A}(E\mid X=x_{\iter,i}) \\
			&=
			0.1+0.8\bigl(1-(1-p)^{\iter-1}\bigr).
		\end{aligned}
		\]
		
		If \(i\in A_\iter\), then \(p=0.49\), so since \(\iter\ge3\), we have
		\(1-(1-p)^{\iter-1}\ge 1-0.51^2>1/2\). Thus
		\(\Pr_{g_A}(Z_\iter=1\mid X=x_{\iter,i})>1/2\). If \(i\notin A_\iter\), then
		\(p=0.01/\iter\), so by the union bound,
		\(1-(1-p)^{\iter-1}\le (\iter-1)0.01/\iter<0.01<1/2\).
		Thus \(\Pr_{g_A}(Z_\iter=1\mid X=x_{\iter,i})<1/2\). Hence
		\(h_{g_A}^{\etoe-\iter}(x_{\iter,i})=1\) iff \(i\in A_\iter\), and therefore iff
		\(i\in B\). Since \(B\subseteq[d]\) was arbitrary,
		\(\maj(\cF^{\etoe-\iter})\) shatters \(x_{\iter,1},\ldots,x_{\iter,d}\). Since
		\(d\) was arbitrary, \(\VC(\maj(\cF^{\etoe-\iter}))=\infty\).
		
		It remains to prove the \(\ct\)-sample lower bound. Fix \(\iter\ge3\) and
		\(d\ge1\). Let \(P\) be the uniform distribution on
		\(x_{\iter,1},\ldots,x_{\iter,d}\). Choose \(A_\iter\cap[d]\) uniformly at random
		from all subsets of \([d]\), and fix all other coordinates of \(A\) arbitrarily.
		
		Consider any learner receiving \(m\le d/8\) full \(\ct\) samples. Let \(U\) be
		the number of prompts among \(x_{\iter,1},\ldots,x_{\iter,d}\) that do not appear
		in the training sample. For any unobserved prompt \(x_{\iter,i}\), the training
		data is independent of whether \(i\in A_\iter\). Therefore, even after seeing the
		full \(\ct\) training sample, the learner's prediction on \(x_{\iter,i}\) has
		probability at least \(1/2\) of disagreeing with the true final majority label
		\(h_{g_A}^{\etoe-\iter}(x_{\iter,i})\), averaged over the random choice of
		\(A_\iter\cap[d]\) and the learner's internal randomness. Hence the expected test
		error is at least \(\mathbb E[U]/(2d)\).
		
		Since
		\(\mathbb E[U]=d(1-1/d)^m\ge d(1-m/d)\ge 7d/8\),
		the expected test error, averaged over the random choice of \(A_\iter\cap[d]\)
		and the training sample, is at least \(7/16\).
		
		Now suppose, toward a contradiction, that there were a \(\ct\) PAC learner with
		\(m\le d/8\) samples, error \(1/16\), and confidence \(2/3\), for every
		\(g_A\in\cF\). Then the same averaged expected test error would be at most
		\((2/3)(1/16)+(1/3)\cdot 1=3/8<7/16\),
		a contradiction. Therefore \(m>d/8\). Since \(\etoe\) samples contain no more
		information than \(\ct\) samples, the same lower bound also holds in the
		\(\etoe\) model.
	\end{proof}
	
	\section{The KL Loss and Sampling Trajectories}
	\label{sec:kl}
	
	The objective studied so far targets the final-token probability
	$q_{g_\star}^{\etoe-\iter}$ under squared loss: a single scalar per prompt. In
	this section we change the objective. Instead of predicting one number, we ask
	when a learner can output a generator $\widehat g$ whose \emph{generated
		distribution over trajectories} is close to that of the target $g_\star$, so
	that continuations sampled from $\widehat g$ are statistically faithful to those
	of $g_\star$. This is the relevant goal whenever one cares about sampling
	plausible continuations, running self-consistency, or scoring likelihoods,
	rather than predicting only a final answer. The natural yardstick for this
	sampling objective is not squared loss but Kullback--Leibler (KL) divergence,
	the quantity minimized by maximum-likelihood (log-loss) training.
	
	Our treatment has three parts. First, in Section~\ref{subsec:kl-loss}, we
	introduce a KL loss for one-token prediction: the log-loss analogue of the
	one-step squared risk that has driven the paper so far. Second, in
	Section~\ref{subsec:kl-chain}, we recall the chain rule for relative entropy and
	use it to show that controlling this one-step loss along the target's
	trajectories certifies faithful sampling of \emph{whole} length-$T$ sequences,
	with the horizon entering only \emph{linearly}. Third, in
	Sections~\ref{subsec:kl-margin}--\ref{subsec:kl-sampling-lower}, we compare the
	sampling objective to the paper's squared-loss theory.

	\paragraph{Which results use the margin assumption.}
	The $\kappa$-margin assumption, defined formally in Section~\ref{subsec:kl-margin}, requires all one-step probabilities to lie in $[\kappa,1-\kappa]$. It is used only to transfer squared-loss \emph{upper} bounds to the KL objective. All structural identities, faithful-sampling consequences from one-step KL, $\etoe$ impossibility results, and lower bounds are margin-free, with divergences interpreted in $[0,\infty]$. The only results that require the $\kappa$-margin assumption are the upper bridge from squared loss to KL (Lemma~\ref{lem:kl-bridge-upper}) and the resulting transfers of squared-loss upper bounds to KL-sampling (Proposition~\ref{prop:base-to-sampling}, Corollary~\ref{cor:logistic-sampling}, Proposition~\ref{prop:fat-sampling}). Proposition~\ref{prop:margin-necessary} shows that this requirement is not merely an artifact of the proofs. For the logistic class, we also isolate a margin-free positive result in total variation (Corollary~\ref{cor:logistic-sampling-tv}); only its KL-valued restatement needs the margin.

	\subsection{A KL loss for one-token prediction}
	\label{subsec:kl-loss}
	
	The base learning problem of Section~\ref{subsec:our-results} measures the
	one-step predictor $p_{\widehat g}$ against the truth $p_{g_\star}$ in squared
	loss, $(p_{g_\star}(s)-p_{\widehat g}(s))^2$. For the sampling objective the
	right one-step discrepancy is instead the KL divergence between the two Bernoulli
	next-token laws. We record it as a loss in its own right.
	
	\paragraph{One-step KL loss.}
	For a target $g_\star$, a hypothesis $\widehat g$, and a state
	$s\in\Sigma^\star$, define the \emph{one-step KL loss}
	\[
	\ell^{\KL}_{g_\star}(\widehat g;s)
	:=
	\KL\!\left(\Ber\!\left(p_{g_\star}(s)\right)\,\middle\|\,\Ber\!\left(p_{\widehat g}(s)\right)\right).
	\]
	This is exactly the excess log-loss (cross-entropy) of $\widehat g$ at $s$: if
	$Y\sim\Ber(p_{g_\star}(s))$ is the next bit, then
	\[
	\ell^{\KL}_{g_\star}(\widehat g;s)
	=
	\mathbb E_Y\!\left[-\log p_{\widehat g}(s)^{Y}(1-p_{\widehat g}(s))^{1-Y}\right]
	-
	\mathbb E_Y\!\left[-\log p_{g_\star}(s)^{Y}(1-p_{g_\star}(s))^{1-Y}\right],
	\]
	so a learner minimizing empirical log-loss (the standard likelihood objective)
	is minimizing an empirical version of $\ell^{\KL}$. Given a distribution $\mu$
	over states, the \emph{one-step KL risk} of $\widehat g$ is
	\[
	R^{\KL}_{\mu}(\widehat g)
	:=
	\mathbb E_{S\sim\mu}\!\left[\ell^{\KL}_{g_\star}(\widehat g;S)\right].
	\]
	By analogy with $m_{\rm base}^{\cF}$, let $m_{\rm base}^{\KL,\cF}(\varepsilon,\delta)$
	denote the least $n$ for which $n$ i.i.d.\ one-step samples $(S_i,Y_i)$ with
	$S_i\sim P$ and $Y_i\mid S_i\sim\Ber(p_{g_\star}(S_i))$ suffice to output
	$\widehat g$ with $R^{\KL}_{P}(\widehat g)\le\varepsilon$ with probability at
	least $1-\delta$, uniformly over $P$ and $g_\star\in\cF$. Throughout this section, \(T\) denotes a generic sampling horizon; when comparing to the \(M\)-step final-token learning problems above, take \(T=M\).
	
	\begin{remark}[Direction of the divergence]
		\label{rem:kl-direction}
		The loss $\ell^{\KL}_{g_\star}(\widehat g;s)$ orders its arguments as
		$\KL(g_\star\|\widehat g)$: ``truth relative to model''. This is the population
		version of the excess cross-entropy of $\widehat g$, the quantity controlled by
		maximum-likelihood learners, and it is the correct direction for sampling: a
		small value certifies that bits drawn from $g_\star$ are typical under
		$\widehat g$. The reverse loss $\KL(\widehat g\|g_\star)$ instead controls
		$p_{\widehat g}$ on states visited by $\widehat g$; it is an off-policy quantity
		not implied by the on-policy guarantees below (see
		Remark~\ref{rem:on-policy}).
	\end{remark}
	
	\begin{remark}[The margin]
		\label{rem:kl-unbounded}
		Unlike squared loss, which never exceeds $1$, the one-step KL loss is unbounded:
		$\ell^{\KL}_{g_\star}(\widehat g;s)\to\infty$ as $p_{\widehat g}(s)\to0$ while
		$p_{g_\star}(s)>0$. This unboundedness is exactly why the sampling objective can
		be strictly harder than squared-loss prediction, and it is what the margin
		assumption of Section~\ref{subsec:kl-margin} controls. Everything that does not
		involve converting a squared-loss guarantee into a KL guarantee --- the chain
		rule, the sampling reduction, the lower bounds, and the $\etoe$ impossibility ---
		is insensitive to this unboundedness and holds without a margin. Only the
		upper-bound transfers are affected, and
		Proposition~\ref{prop:margin-necessary} shows they genuinely fail without it.
	\end{remark}
	
	\subsection{The KL chain rule and sampling (margin-free)}
	\label{subsec:kl-chain}
	
	We now connect the one-step KL loss to the trajectory-level sampling objective.
	Nothing in this subsection uses a margin assumption; all divergences are
	interpreted as elements of $[0,\infty]$.
	
	Fix a prompt $x\in\Sigma^\star$ and a horizon $T\ge1$. For a generator $g$, let
	$\mathbb P_{g,x}^{T}$ denote the law of the length-$T$ trajectory
	$(Z_1,\ldots,Z_T)$ generated autoregressively from $x$ by $g$, as in
	Section~\ref{sec:prelim}. Given a target $g_\star$ and a hypothesis
	$\widehat g$, the sampling error at $x$ and its average under a prompt
	distribution $P$ are
	\[
	\KL_T\!\left(g_\star\,\middle\|\,\widehat g\;;\,x\right)
	:=
	\KL\!\left(\mathbb P_{g_\star,x}^{T}\,\middle\|\,\mathbb P_{\widehat g,x}^{T}\right),
	\qquad
	\overline{\KL}_T\!\left(g_\star\,\middle\|\,\widehat g\right)
	:=
	\mathbb E_{X\sim P}\!\left[\KL_T\!\left(g_\star\,\middle\|\,\widehat g\;;\,X\right)\right].
	\]
	
	For a generator $g$ and a prompt $x$, let $\nu_{g,x}^{T}$ denote the
	\emph{visitation measure}: the law of the state $S=xZ_1\cdots Z_{\Theta-1}$
	obtained by drawing a length-$T$ trajectory $Z_1,\ldots,Z_T$ from $g$ started at
	$x$ and an independent time $\Theta$ uniform on $\{1,\ldots,T\}$. When $x$ is
	itself drawn from $P$, write $\mu_{g,P}^{T}$ for the resulting mixture over
	states. This is exactly the state distribution used in the base-to-$\ct$
	reduction in the proof of Theorem~\ref{thm:base-to-cot-full}, and it is the
	distribution against which we measure the one-step KL risk of
	Section~\ref{subsec:kl-loss}.
	
	The starting point is the exact chain rule for relative entropy
	\cite[Theorem~2.5.3]{coverthomas2006}: for joint laws that factor into a
	marginal and a sequence of conditionals, the KL divergence decomposes as the KL
	of the marginals plus the expected KL of the conditionals, the expectation taken
	under the first (reference) argument. Because both $\mathbb P_{g_\star,x}^{T}$
	and $\mathbb P_{\widehat g,x}^{T}$ factor as products of their one-step Bernoulli
	conditionals, iterating this identity yields the following. It replaces the
	coupling and union-bound arguments used for the final-token target (as in
	Theorem~\ref{thm:base-to-cot-full} and Theorem~\ref{thm:fat-controls-e2e}) by an
	\emph{identity}, and it is the reason the horizon dependence improves from
	$\iter^2$ to $\iter$.
	
	\begin{lemma}
		\label{lem:kl-chain-rule}
		For every pair of generators $g_\star,\widehat g$, every prompt $x$, and every
		$T\ge1$, with both sides interpreted in $[0,\infty]$,
		\[
		\KL_T\!\left(g_\star\,\middle\|\,\widehat g\;;\,x\right)
		=
		T\cdot
		\mathbb E_{S\sim \nu_{g_\star,x}^{T}}
		\left[
		\ell^{\KL}_{g_\star}(\widehat g;S)
		\right]
		=
		T\cdot
		\mathbb E_{S\sim \nu_{g_\star,x}^{T}}
		\left[
		\KL\!\left(
		\Ber\!\left(p_{g_\star}(S)\right)
		\,\middle\|\,
		\Ber\!\left(p_{\widehat g}(S)\right)
		\right)
		\right].
		\]
		Consequently, for every prompt distribution $P$, by averaging over $X\sim P$,
		\[
		\overline{\KL}_T\!\left(g_\star\,\middle\|\,\widehat g\right)
		=
		T\cdot
		\mathbb E_{S\sim \mu_{g_\star,P}^{T}}
		\left[
		\ell^{\KL}_{g_\star}(\widehat g;S)
		\right]
		=
		T\cdot R^{\KL}_{\mu_{g_\star,P}^{T}}(\widehat g).
		\]
	\end{lemma}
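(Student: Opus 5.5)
The plan is to apply the chain rule for relative entropy \cite[Theorem~2.5.3]{coverthomas2006} to the two trajectory laws $\mathbb P_{g_\star,x}^{T}$ and $\mathbb P_{\widehat g,x}^{T}$ on the finite space $\{0,1\}^T$. I will then rewrite the resulting sum of expected conditional divergences as an average over the visitation measure. Both laws factor as products of their Bernoulli conditionals:
\[
\mathbb P_{g,x}^{T}(z_1,\ldots,z_T)=\prod_{t=1}^T \Ber\bigl(p_g(xz_1\cdots z_{t-1})\bigr)(z_t).
\]
Iterating the two-variable chain rule $T-1$ times (by induction on $T$) gives
\[
\KL_T\bigl(g_\star\,\|\,\widehat g\,;x\bigr)=\sum_{t=1}^T \mathbb E_{Z_{1:t-1}\sim \mathbb P_{g_\star,x}}\Bigl[\ell^{\KL}_{g_\star}\bigl(\widehat g;xZ_1\cdots Z_{t-1}\bigr)\Bigr],
\]
where the expectation at time $t$ is taken under the marginal law of the prefix under $g_\star$.

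Next I identify this sum with $T$ times the visitation expectation. Let $\Theta$ be uniform on $[T]$ and independent of the trajectory, and set $S=xZ_1\cdots Z_{\Theta-1}$. Then
\[
\mathbb E_{S\sim\nu_{g_\star,x}^T}\bigl[f(S)\bigr]=\frac1T\sum_{t=1}^T \mathbb E\bigl[f(xZ_1\cdots Z_{t-1})\bigr]
\]
for every nonnegative (possibly $+\infty$-valued) function $f$. This follows from the law of total expectation over $\Theta$, which is valid for $[0,\infty]$-valued integrands by Tonelli. Taking $f=\ell^{\KL}_{g_\star}(\widehat g;\cdot)$ gives the first display. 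The second equality there is just the definition of $\ell^{\KL}$.

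For the averaged statement, I integrate both sides over $X\sim P$. Since everything is nonnegative, Tonelli lets me exchange expectations freely. By construction $\mu_{g_\star,P}^T$ is the mixture $\int \nu_{g_\star,x}^T\,dP(x)$, so the right-hand side becomes $T\cdot R^{\KL}_{\mu_{g_\star,P}^T}(\widehat g)$.

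The main obstacle is the margin-free bookkeeping, namely making the chain rule correct when divergences can be infinite. Because the space is finite, I will handle this directly. $\mathbb P_{g_\star,x}^T\not\ll\mathbb P_{\widehat g,x}^T$ holds iff some sequence $z$ has positive $g_\star$-probability and zero $\widehat g$-probability. Take the first $t$ at which the $\widehat g$-conditional assigns $z_t$ probability zero. On the positive-$g_\star$-probability prefix $z_{1:t-1}$, the $g_\star$-conditional gives $z_t$ positive probability, so the one-step KL at that state is $+\infty$. That state has positive visitation mass, so the right-hand side is $+\infty$ too. Conversely, if the left side is finite, every term on the right with positive weight is finite, and the usual log-ratio telescoping $\log\frac{d\mathbb P_\star}{d\widehat{\mathbb P}}=\sum_t \log\frac{\Ber(p_{g_\star})(Z_t)}{\Ber(p_{\widehat g})(Z_t)}$ gives the identity term by term. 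Applying the tower property at each $t$ to the telescoped log-ratio then gives the conditional KL.
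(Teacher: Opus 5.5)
Your proposal is correct and follows essentially the same route as the paper: factor both trajectory laws into their Bernoulli conditionals, apply the Cover--Thomas chain rule inductively over the $T$ steps, convert the sum into $T$ times a visitation expectation via an independent uniform time $\Theta$, and average over $X\sim P$ using Tonelli. Your finite-space treatment of the $+\infty$ case spells out what the paper asserts in one line. For the converse direction, you only need one additional observation: every state with positive visitation mass is a prefix of some full-length sequence of positive $g_\star$-probability.
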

	
	\begin{proof}
		Write $Z_{<t}=(Z_1,\ldots,Z_{t-1})$ and $S_t=xZ_{<t}$. Both laws factor over
		steps as products of the conditional next-token laws
		$\Ber(p_{g_\star}(S_t))$ and $\Ber(p_{\widehat g}(S_t))$. By the chain rule for
		relative entropy \cite[Theorem~2.5.3]{coverthomas2006}, applied inductively over
		the $T$ steps,
		\[
		\KL\!\left(\mathbb P_{g_\star,x}^{T}\,\middle\|\,\mathbb P_{\widehat g,x}^{T}\right)
		=
		\sum_{t=1}^{T}
		\mathbb E_{Z_{<t}\sim g_\star}
		\left[
		\KL\!\left(
		\Ber\!\left(p_{g_\star}(S_t)\right)
		\,\middle\|\,
		\Ber\!\left(p_{\widehat g}(S_t)\right)
		\right)
		\right],
		\]
		where the outer expectation is taken under the true generator $g_\star$, since
		the chain rule expands the divergence along the reference (first) argument. The
		identity holds in $[0,\infty]$ with no restriction on the probabilities: if any
		one-step term is $+\infty$ on a set of positive $g_\star$-measure, both sides are
		$+\infty$. Introducing a uniform time $\Theta\in\{1,\ldots,T\}$ independent of
		the trajectory turns the sum into $T$ times an expectation over
		$S=S_\Theta\sim\nu_{g_\star,x}^{T}$, which is the first display. Taking the
		expectation over $X\sim P$ and using the definition of $\mu_{g_\star,P}^{T}$
		gives the second.
	\end{proof}
	
	The chain rule says that the trajectory-KL objective is \emph{exactly} $T$ times
	the one-step KL risk of Section~\ref{subsec:kl-loss}, evaluated on the target's
	own visitation measure. Two consequences follow immediately, both margin-free.
	First, sampling a longer sequence is no harder than controlling a single step,
	up to the linear factor $T$: it suffices to drive the one-step KL risk below
	$\varepsilon/T$. Second, a small trajectory KL certifies faithful sampling in
	total variation, which is what makes the generated sequences usable.
	
	\begin{proposition}
		\label{prop:kl-sampling-faithful}
		Fix a prompt distribution $P$, generators $g_\star,\widehat g$, $T\ge1$, and
		$\varepsilon>0$, and suppose $\widehat g$ has one-step KL risk
		$R^{\KL}_{\mu_{g_\star,P}^{T}}(\widehat g)\le\varepsilon/T$. Then the length-$T$
		sampling error satisfies
		\[
		\overline{\KL}_T\!\left(g_\star\,\middle\|\,\widehat g\right)\le\varepsilon,
		\qquad
		\mathbb E_{X\sim P}\,\TV\!\left(\mathbb P_{g_\star,X}^{T},\mathbb P_{\widehat g,X}^{T}\right)
		\le\sqrt{\varepsilon/2}.
		\]
		In particular, for any trajectory statistic $\varphi$ taking values in $[0,1]$,
		\[
		\left|
		\mathbb E_{X\sim P}\,\mathbb E_{\mathbb P_{\widehat g,X}^{T}}[\varphi]
		-
		\mathbb E_{X\sim P}\,\mathbb E_{\mathbb P_{g_\star,X}^{T}}[\varphi]
		\right|
		\le
		2\sqrt{\varepsilon/2}.
		\]
	\end{proposition}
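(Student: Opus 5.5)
The proof takes three short steps: the chain rule for the KL bound, Pinsker plus Jensen for total variation, and the definition of total variation for the statistic bound.

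\textbf{KL bound.} By Lemma~\ref{lem:kl-chain-rule},
\[
\overline{\KL}_T(g_\star\|\widehat g)=T\cdot R^{\KL}_{\mu_{g_\star,P}^{T}}(\widehat g)\le T\cdot\varepsilon/T=\varepsilon.
\]
This holds in $[0,\infty]$ with no margin assumption. In particular, $\KL_T(g_\star\|\widehat g;X)<\infty$ for $P$-almost every $X$.

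\textbf{Total variation bound.} For each fixed prompt $x$, Pinsker's inequality gives
\[
\TV\bigl(\mathbb P_{g_\star,x}^{T},\mathbb P_{\widehat g,x}^{T}\bigr)\le\sqrt{\KL_T(g_\star\|\widehat g;x)/2}.
\]
This also holds trivially when the right-hand side is infinite. We take the expectation over $X\sim P$. Since $u\mapsto\sqrt{u/2}$ is concave, Jensen's inequality gives
\[
\mathbb E_X\sqrt{\KL_T(\,\cdot\,;X)/2}\le\sqrt{\overline{\KL}_T/2}\le\sqrt{\varepsilon/2}.
\]

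\textbf{Statistic bound.} Fix $x$ and a statistic $\varphi$ with values in $[0,1]$. Then
\[
\bigl|\mathbb E_{\widehat g,x}\varphi-\mathbb E_{g_\star,x}\varphi\bigr|\le\sup_{0\le f\le 1}\Bigl|\int f\,d(\mathbb P-\mathbb Q)\Bigr|=\TV.
\]
The factor $2$ in the statement therefore leaves slack. It would be needed only if $\varphi$ ranged over $[-1,1]$, or under the $\ell_1$ normalization of total variation, so the stated bound holds a fortiori. We then move the absolute value inside the outer expectation over $X$ by the triangle inequality and apply the total variation bound.

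\textbf{Measurability.} The only point needing care is that $x\mapsto\KL_T(\,\cdot\,;x)$ and the total variation are measurable functions of $x$. This is immediate because the trajectory space is finite, $\{0,1\}^T$, so each quantity is a finite sum of measurable functions of $x$. Nothing else is an obstacle.
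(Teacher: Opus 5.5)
Your proposal is correct and follows the paper's proof: the chain rule (Lemma~\ref{lem:kl-chain-rule}) gives the KL bound, pointwise Pinsker plus Jensen gives the averaged total variation bound, and averaging the standard test-function bound over $X$ gives the statistic bound. Your remark that a statistic $\varphi$ with values in $[0,1]$ already satisfies $|\mathbb E_P\varphi-\mathbb E_Q\varphi|\le\TV(P,Q)$ is also correct, so the paper's factor $2\|\varphi\|_\infty$ is slack and the stated inequality holds a fortiori.
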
 Note that no margin assumption is used; the statement is a conditional guarantee that
	applies whenever a learner attains the hypothesized one-step KL risk.
	
	\begin{proof}
		The first inequality is Lemma~\ref{lem:kl-chain-rule}. For the second, Pinsker's
		inequality gives pointwise
		$$\TV(\mathbb P_{g_\star,x}^{T},\mathbb P_{\widehat g,x}^{T})\le
		\sqrt{\tfrac12\KL_T(g_\star\|\widehat g;x)}.$$ Taking
		$\mathbb E_{X\sim P}$ and applying Jensen's inequality (concavity of the square
		root) yields
		$$\mathbb E_X\TV\le\sqrt{\tfrac12\overline{\KL}_T}\le\sqrt{\varepsilon/2}$$ The last
		display is the standard bound
		$|\mathbb E_P\varphi-\mathbb E_Q\varphi|\le2\|\varphi\|_\infty\TV(P,Q)$ averaged
		over $X$.
	\end{proof}
	
	Thus a learner that attains one-step KL risk $\varepsilon/T$ can be sampled at
	horizon $T$ to produce trajectories whose \emph{entire} distribution is within
	total variation $\sqrt{\varepsilon/2}$ of the target's, so any downstream
	quantity computed from the samples --- a final answer, a self-consistency vote,
	a likelihood score --- is reproduced up to $O(\sqrt\varepsilon)$. The horizon
	enters only through the scale $\varepsilon/T$. This is the structural payoff of
	the chain rule: KL divergence adds over steps, so one bounds a \emph{sum} of
	per-step divergences, whereas the final-token squared-loss argument of
	Theorem~\ref{thm:base-to-cot-full} bounds the \emph{square of a sum} of per-step
	errors and pays an extra factor of $\iter$ through Cauchy--Schwarz. The remaining
	question, taken up next, is when a learner can actually attain small one-step KL
	risk --- and this is where the margin enters.
	
	\subsection{The margin assumption: one side is free, the other is not}
	\label{subsec:kl-margin}
	
	To attain the hypothesis of Proposition~\ref{prop:kl-sampling-faithful} from the
	paper's squared-loss learners we compare Bernoulli KL and squared distance. The
	comparison is genuinely one-sided: the lower bound on KL by squared distance is
	unconditional, whereas the upper bound requires a margin.
	
	\paragraph{Margin assumption.}
	\label{def:margin}
	We say that a stochastic generator class $\cF$ satisfies the
	\emph{$\kappa$-margin assumption} for some $\kappa\in(0,1/2)$ if
	$p_g(s)\in[\kappa,1-\kappa]$ for every $g\in\cF$ and every state
	$s\in\Sigma^\star$.
	
	\begin{lemma}
		\label{lem:kl-bridge-lower}
		For all $p,q\in[0,1]$,
		$\KL(\Ber(p)\|\Ber(q))\ge 2(p-q)^2$. Consequently, for all generators
		$g_\star,\widehat g$, every $P$, and every $T\ge1$,
		\[
		\overline{\KL}_T\!\left(g_\star\,\middle\|\,\widehat g\right)
		\;\ge\;
		2T\cdot
		\mathbb E_{S\sim\mu_{g_\star,P}^{T}}\!\left[(p_{g_\star}(S)-p_{\widehat g}(S))^2\right].
		\]
	\end{lemma}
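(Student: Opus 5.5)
The first claim is Pinsker's inequality specialized to Bernoulli laws, and the second follows from it together with Lemma~\ref{lem:kl-chain-rule}. For the pointwise bound, fix \(q\in[0,1]\) and consider \(\phi(p):=\KL(\Ber(p)\|\Ber(q))-2(p-q)^2\) as a function of \(p\in[0,1]\).

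First handle the degenerate cases. If \(q\in\{0,1\}\) and \(p\ne q\), the left side is \(+\infty\) and the inequality holds trivially. If \(p=q\), both sides vanish.

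For \(q\in(0,1)\), the function \(\phi\) is continuous on \([0,1]\) with the convention \(0\log 0=0\), and it is smooth on \((0,1)\). We have \(\phi(q)=0\). The derivative is
\[
\phi'(p)=\log\frac{p}{q}-\log\frac{1-p}{1-q}-4(p-q),
\]
so \(\phi'(q)=0\). The second derivative is \(\phi''(p)=\frac{1}{p(1-p)}-4\ge0\), because \(p(1-p)\le1/4\). Hence \(\phi\) is convex on \((0,1)\) with a critical point at \(q\), so \(\phi\ge\phi(q)=0\) on \((0,1)\). Continuity extends this to the endpoints \(p\in\{0,1\}\). Alternatively, one can cite Pinsker's inequality \(\KL\ge2\TV^2\) directly, since \(\TV(\Ber(p),\Ber(q))=|p-q|\).

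For the consequence, apply Lemma~\ref{lem:kl-chain-rule}, which gives
\[
\overline{\KL}_T(g_\star\|\widehat g)=T\cdot\mathbb E_{S\sim\mu^T_{g_\star,P}}\bigl[\KL(\Ber(p_{g_\star}(S))\|\Ber(p_{\widehat g}(S)))\bigr].
\]
Bound the integrand pointwise below by \(2(p_{g_\star}(S)-p_{\widehat g}(S))^2\) using the first part. Monotonicity of expectation then holds in \([0,\infty]\): if the right-hand side is infinite, the inequality is trivial. This yields the stated bound.

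The argument has no real obstacle. The only care needed is in the boundary cases \(p,q\in\{0,1\}\), where the divergence may be infinite, and these are dispatched by the conventions above.
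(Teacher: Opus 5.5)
Your proof is correct and follows the paper's argument: Pinsker's inequality with \(\TV(\Ber(p),\Ber(q))=|p-q|\) gives the pointwise bound, which is then substituted into the chain rule of Lemma~\ref{lem:kl-chain-rule} and integrated. Your convexity derivation of the Bernoulli case (via \(\phi''(p)=1/(p(1-p))-4\ge0\)) and your explicit handling of the infinite-divergence boundary cases only make self-contained what the paper obtains by citing Pinsker.
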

	
	\begin{proof}
		Pinsker's inequality gives $\KL(\Ber(p)\|\Ber(q))\ge 2\TV(\Ber(p),\Ber(q))^2
		=2(p-q)^2$, valid for all $p,q\in[0,1]$. Substitute into the chain rule
		(Lemma~\ref{lem:kl-chain-rule}) and take expectations.
	\end{proof}

	\begin{lemma}
		\label{lem:kl-bridge-upper}
		For every $0<\kappa<1/2$, there is a constant $C_\kappa=\Theta(1/\kappa)$
		such that for any generators $g_\star,\widehat g$ satisfying
		$p_{g_\star}(s),p_{\widehat g}(s)\in[\kappa,1-\kappa]$ for every state
		$s\in\Sigma^\star$, and every state $s$,
		$\ell^{\KL}_{g_\star}(\widehat g;s)\le C_\kappa (p_{g_\star}(s)-p_{\widehat g}(s))^2$,
		and hence, for every $T\ge1$ and every prompt distribution $P$,
		\[
		\overline{\KL}_T\!\left(g_\star\,\middle\|\,\widehat g\right)
		\;\le\;
		C_\kappa\, T\cdot
		\mathbb E_{S\sim\mu_{g_\star,P}^{T}}\!\left[(p_{g_\star}(S)-p_{\widehat g}(S))^2\right].
		\]
		In particular, if $\widehat g$ has one-step squared risk at most $\eta$ under
		$\mu_{g_\star,P}^{T}$, then
		$\overline{\KL}_T(g_\star\|\widehat g)\le C_\kappa\,T\,\eta$. 
	\end{lemma}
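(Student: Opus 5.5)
The plan is to prove a pointwise Bernoulli inequality first and then integrate it through the chain rule of Lemma~\ref{lem:kl-chain-rule}. Fix a state $s$ and write $p=p_{g_\star}(s)$ and $q=p_{\widehat g}(s)$, with both in $[\kappa,1-\kappa]$. The cleanest route is the $\chi^2$ bound
\[
\KL(\Ber(p)\|\Ber(q))\le \chi^2(\Ber(p)\|\Ber(q))=\frac{(p-q)^2}{q}+\frac{(p-q)^2}{1-q}=\frac{(p-q)^2}{q(1-q)}.
\]
The inequality $\KL\le\chi^2$ follows from $\log u\le u-1$ applied inside $\int \log(dP/dQ)\,dP$, or equivalently from Jensen applied to $\log\mathbb E_P[dP/dQ]$. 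Since $q\in[\kappa,1-\kappa]$ and $\kappa<1/2$, we have $q(1-q)\ge\kappa(1-\kappa)\ge\kappa/2$. This gives the claim with $C_\kappa=2/\kappa$, so $C_\kappa=O(1/\kappa)$.

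For the $\Omega(1/\kappa)$ direction, which is needed only to justify writing $\Theta(1/\kappa)$, I would take $q=\kappa$ and $p=\kappa+t$ with $t\to0$. The second-order expansion gives $\KL\approx t^2/(2\kappa(1-\kappa))$, so no constant of smaller order than $1/\kappa$ can work.

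Next I would integrate. By Lemma~\ref{lem:kl-chain-rule}, for each prompt $x$,
\[
\KL_T(g_\star\|\widehat g;x)=T\,\mathbb E_{S\sim\nu^T_{g_\star,x}}\bigl[\ell^{\KL}_{g_\star}(\widehat g;S)\bigr].
\]
Every term here is finite because of the margin. Applying the pointwise bound inside the expectation and then averaging over $X\sim P$ gives
\[
\overline{\KL}_T(g_\star\|\widehat g)\le C_\kappa T\,\mathbb E_{S\sim\mu^T_{g_\star,P}}\bigl[(p_{g_\star}(S)-p_{\widehat g}(S))^2\bigr].
\]
The \emph{in particular} clause then follows immediately by substituting the squared-risk bound $\eta$.

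There is no real obstacle. The only point needing care is that the margin must hold for $q=p_{\widehat g}(s)$, the second KL argument, on every state charged by $\mu^T_{g_\star,P}$; without it the $\chi^2$ denominator blows up. The margin on $p$ is only used for the symmetry of the constant, and the stated hypothesis covers both.
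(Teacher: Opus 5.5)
Your proof is correct and takes the same route as the paper. The paper cites the second bound of Lemma~\ref{lem:bernoulli-kl}, whose appendix proof is exactly your $\log u\le u-1$ reduction to $\chi^2=(p-q)^2/(q(1-q))$ followed by $q(1-q)\ge\kappa(1-\kappa)$, and then substitutes into the chain rule of Lemma~\ref{lem:kl-chain-rule}. Your local-expansion argument for the $\Omega(1/\kappa)$ direction is an optional extra, since exhibiting $C_\kappa=2/\kappa$ already satisfies the statement as written, and you are right that only the margin on $p_{\widehat g}$ is actually used.
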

	
	Note that the margin cannot be
	dropped: as $q\to0$ with $p$ fixed, $\KL(\Ber(p)\|\Ber(q))\to\infty$ while
	$(p-q)^2$ stays bounded, so no finite constant works without it.
	\begin{proof}
		The second bound of Lemma~\ref{lem:bernoulli-kl} gives, for $p,q\in[\kappa,1-\kappa]$,
		$\KL(\Ber(p)\|\Ber(q))\le C_\kappa(p-q)^2$ with $C_\kappa=\Theta(1/\kappa)$.
		Substitute into the chain rule (Lemma~\ref{lem:kl-chain-rule}) and take
		expectations. The final claim is the displayed limit.
	\end{proof}
	
	Together the two halves say that, \emph{under the margin}, the one-step KL and
	squared losses are equivalent up to $\Theta(1/\kappa)$; without it, only the
	lower half survives. Translated to sample complexity, one direction of the
	base-learning equivalence is unconditional and the other needs the margin.
	
	\begin{corollary}[Base equivalence: one direction free, one conditional]
		\label{cor:base-kl-equiv}
		For every class $\cF$, every $0<\varepsilon<1$, and every $0<\delta<1$,
		\[
		m_{\rm base}^{\cF}(\varepsilon,\delta)
		\;\le\;
		m_{\rm base}^{\KL,\cF}(2\varepsilon,\delta)
		\qquad\text{(no margin assumption).}
		\]
		If in addition $\cF$ satisfies the $\kappa$-margin assumption, then also
		\[
		m_{\rm base}^{\KL,\cF}(\varepsilon,\delta)
		\;\le\;
		m_{\rm base}^{\cF}\!\left(\varepsilon/C_\kappa,\,\delta\right)
		\qquad\text{(requires the margin),}
		\]
		with $C_\kappa=\Theta(1/\kappa)$.
	\end{corollary}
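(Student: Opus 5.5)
Both inequalities follow by direct learner-to-learner reductions, with no new statistical argument: each converts a learner for one loss into a learner for the other, using the same samples and the same confidence. Everything is pointwise in the state $s$, and averaging over the state distribution $P$ is linear.

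For the first, margin-free direction, take a KL learner achieving $R^{\KL}_P(\widehat g)\le 2\varepsilon$ with probability $1-\delta$ from $m^{\KL,\cF}_{\rm base}(2\varepsilon,\delta)$ samples. Output the one-step predictor $\widehat p:=p_{\widehat g}$.

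By the first statement of Lemma~\ref{lem:kl-bridge-lower}, Pinsker gives $\KL(\Ber(p)\|\Ber(q))\ge 2(p-q)^2$ for all $p,q\in[0,1]$. Hence, pointwise,
\[
(p_{g_\star}(s)-\widehat p(s))^2\le \tfrac12\,\ell^{\KL}_{g_\star}(\widehat g;s).
\]
Averaging over $S\sim P$ gives squared risk at most $\tfrac12\cdot 2\varepsilon=\varepsilon$ on the same event. This also holds when $\ell^{\KL}$ is infinite at some states, since then the risk bound is vacuous only if $R^{\KL}_P=\infty$, which is excluded. So $m^{\cF}_{\rm base}(\varepsilon,\delta)\le m^{\KL,\cF}_{\rm base}(2\varepsilon,\delta)$.

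For the second direction, assume the $\kappa$-margin. Run a squared-loss base learner at accuracy $\varepsilon/C_\kappa$ and confidence $\delta$, obtaining $\widehat p$ with $\mathbb E_P[(\widehat p-p_{g_\star})^2]\le \varepsilon/C_\kappa$.

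The main obstacle is that $\widehat p$ need not lie in $[\kappa,1-\kappa]$, and the base learner may be improper, so Lemma~\ref{lem:kl-bridge-upper} does not apply directly. I would fix this by clipping: let $\tilde p:=\min\{1-\kappa,\max\{\kappa,\widehat p\}\}$. Since $p_{g_\star}(s)\in[\kappa,1-\kappa]$ by the margin, projection onto this interval is a contraction toward every point of it, so
\[
|\tilde p(s)-p_{g_\star}(s)|\le|\widehat p(s)-p_{g_\star}(s)|
\]
pointwise. Define $\widehat g$ by $\widehat g(s)(1)=\tilde p(s)$. Then both $p_{g_\star}(s)$ and $p_{\widehat g}(s)$ lie in $[\kappa,1-\kappa]$, so the second bound of Lemma~\ref{lem:bernoulli-kl} (the pointwise bound in Lemma~\ref{lem:kl-bridge-upper}) gives
\[
\ell^{\KL}_{g_\star}(\widehat g;s)\le C_\kappa(\tilde p(s)-p_{g_\star}(s))^2 .
\]
Averaging over $P$ yields $R^{\KL}_P(\widehat g)\le C_\kappa\cdot\varepsilon/C_\kappa=\varepsilon$ with probability at least $1-\delta$. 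Hence $m^{\KL,\cF}_{\rm base}(\varepsilon,\delta)\le m^{\cF}_{\rm base}(\varepsilon/C_\kappa,\delta)$, with $C_\kappa=\Theta(1/\kappa)$ inherited from Lemma~\ref{lem:kl-bridge-upper}.

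This reduction requires that the KL learner's output $\widehat g$ may be improper, in the sense of being an arbitrary generator; the definition of $m^{\KL,\cF}_{\rm base}$ allows this.
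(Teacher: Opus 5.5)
Your proof is correct and follows the paper's argument. For the margin-free direction you use the pointwise Pinsker bound $\KL(\Ber(p)\|\Ber(q))\ge 2(p-q)^2$ (Lemma~\ref{lem:kl-bridge-lower} at $T=1$); for the other you clip the squared-loss learner's output into $[\kappa,1-\kappa]$ and then apply the margin-dependent quadratic upper bound (Lemma~\ref{lem:kl-bridge-upper} at $T=1$), exactly as the paper does, and your remarks on infinite pointwise KL and on improper outputs being allowed by the definition of $m_{\rm base}^{\KL,\cF}$ only make explicit what the paper leaves implicit.
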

	
	\begin{proof}
		For the first inequality, run a KL base learner to risk $2\varepsilon$; by
		Lemma~\ref{lem:kl-bridge-lower} at $T=1$ its squared risk is at most
		$\tfrac12\cdot2\varepsilon=\varepsilon$. For the second, run a squared-loss base
		learner to risk $\varepsilon/C_\kappa$ and clip its output into $[\kappa,1-\kappa]$
		(which does not increase squared error against a target in that range); by
		Lemma~\ref{lem:kl-bridge-upper} at $T=1$ its KL risk is at most
		$C_\kappa\cdot\varepsilon/C_\kappa=\varepsilon$.
	\end{proof}
	
	The next proposition shows that the second, margin-dependent direction is not
	merely a limitation of our argument. Without a margin, squared-loss control
	cannot be converted into KL control at the same scale.
	
	\begin{proposition}
		\label{prop:margin-necessary}
		There is no universal constant $C<\infty$ such that, for all
		$p,q\in(0,1)$,
		\[
		\KL(\Ber(p)\|\Ber(q))\le C(p-q)^2 .
		\]
		Moreover, the failure is quantitatively sharp near the boundary: there is a
		universal constant $\eta_0>0$ such that for every $0<\eta\le\eta_0$, there
		exist $p,q\in(0,1)$ such that
		\[
		(p-q)^2\le \eta
		\qquad\text{but}\qquad
		\KL(\Ber(p)\|\Ber(q))\ge c\sqrt{\eta},
		\]
		where $c>0$ is a universal constant. If hypotheses are allowed to put zero mass
		on outcomes to which the target assigns positive mass, the KL can even be
		infinite.
	\end{proposition}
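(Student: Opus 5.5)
The proof is by explicit two-point examples. Bernoulli KL blows up when $q$ goes to the boundary while $p$ stays positive, but $(p-q)^2$ stays small.

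For the first claim, suppose such a $C$ exists. Fix $p=\eta'$ and $q=\eta'^2$ with $\eta'\to0$. Then
\[
\KL(\Ber(p)\|\Ber(q))\ge p\log(p/q)+(1-p)\log\frac{1-p}{1-q}.
\]
The first term is $\eta'\log(1/\eta')$. The second term is at least $\log(1-p)\ge -2p=-2\eta'$ for small $\eta'$, since $1-q\le1$. So the KL is at least $\eta'(\log(1/\eta')-2)$. On the other hand, $(p-q)^2\le\eta'^2$. The ratio is therefore at least $(\log(1/\eta')-2)/\eta'\to\infty$, which rules out any universal $C$.

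For the quantitative statement, given $\eta\le\eta_0$, take $p=\sqrt\eta$ and $q=\sqrt\eta/e^2$. Then $(p-q)^2\le p^2=\eta$. Expanding the KL:
\begin{itemize}
\item The first term is $p\log(p/q)=2\sqrt\eta$.
\item The second term is $(1-p)\log\frac{1-p}{1-q}$. Using $\log\frac{1-p}{1-q}\ge \log(1-p)\ge -p-p^2$ for $p\le1/2$, it is at least $-p-p^2\ge -\tfrac32\sqrt\eta$ once $\sqrt\eta\le 1/2$.
\end{itemize}
Hence the KL is at least $\tfrac12\sqrt\eta$, giving $c=1/2$ and, say, $\eta_0=1/4$.

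The key step is choosing the ratio $p/q$ to be a fixed constant large enough that $p\log(p/q)$ dominates the negative correction term, which is of order $p$. Taking $p/q=e^2$ makes the leading term $2p$ against a correction of at most $p(1+p)$, so only elementary inequalities for $\log(1-x)$ are needed.

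For the last sentence, if $q=0$ and $p>0$, then $\Ber(p)\not\ll\Ber(q)$. By the definition of KL in Section~\ref{sec:prelim}, this gives $\KL=\infty$, while $(p-q)^2=p^2$ is finite and can be as small as desired.
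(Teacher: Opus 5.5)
Your proof is correct and takes essentially the same route as the paper: explicit two-point examples near the boundary, with $q$ a constant fraction of $p$, $p\asymp\sqrt{\eta}$, and $q=0$ for the infinite case. The paper takes $q=p/2$, which forces it to use the expansion $(1-t)\log\frac{1-t}{1-t/2}=-t/2+O(t^2)$ together with $\log 2>1/2$. Your larger ratio $p/q=e^2$ lets the crude bound $\log\frac{1-p}{1-q}\ge\log(1-p)$ suffice and gives explicit constants $c=1/2$, $\eta_0=1/4$; your separate $q=p^2$ example for the first claim is also fine, though that claim already follows from the quantitative one.
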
 Consequently, without a lower margin on the hypothesis probabilities, a
	squared-loss guarantee at scale $\eta$ cannot be converted into a KL guarantee
	at the same scale. Any KL upper bound obtained by first learning one-step
	probabilities in squared loss and then transferring to trajectory KL must either
	impose a margin condition, clip the output away from $0$ and $1$, or lose
	additional scale-dependent factors.
	
	\begin{proof}
		Take $t\in(0,1/2)$ and set
		$
		p=t$ and $q=t/2.
		$
		Then
		$
		(p-q)^2=\frac{t^2}{4}.
		$
		On the other hand,
		\[
		\KL(\Ber(t)\|\Ber(t/2))
		=
		t\log 2+(1-t)\log\frac{1-t}{1-t/2}.
		\]
		As $t\downarrow0$,
		\[
		(1-t)\log\frac{1-t}{1-t/2}
		=
		-\frac{t}{2}+O(t^2).
		\]
		Therefore
		\[
		\KL(\Ber(t)\|\Ber(t/2))
		=
		t\left(\log 2-\frac12\right)+O(t^2).
		\]
		Since $\log 2>1/2$, there is a universal constant $c>0$ such that, for all
		sufficiently small $t$,
		\[
		\KL(\Ber(t)\|\Ber(t/2))\ge ct.
		\]
		Hence
		\[
		\frac{\KL(\Ber(t)\|\Ber(t/2))}{(p-q)^2}
		\ge
		\frac{ct}{t^2/4}
		=
		\frac{4c}{t}
		\to\infty
		\qquad\text{as }t\downarrow0.
		\]
		Thus no universal quadratic upper bound can hold without a margin.
		
		Now set $t=2\sqrt{\eta}$, so that $(p-q)^2=t^2/4=\eta$. The same computation
		gives
		\[
		\KL(\Ber(p)\|\Ber(q))\ge c t = 2c\sqrt{\eta}.
		\]
		After adjusting the universal constant, this proves the quantitative lower
		bound.
		
		Finally, if $p>0$ and $q=0$, then
		\[
		\KL(\Ber(p)\|\Ber(0))=+\infty,
		\]
		even though $(p-q)^2=p^2$ can be arbitrarily small. The same one-step example
		embeds into the autoregressive trajectory setting by taking a one-state,
		one-step process. Hence a rate-preserving squared-loss-to-KL transfer requires
		a margin assumption, or else clipping with constants depending on the clipping
		level.
	\end{proof}
	
	We now record the consequences of the bridge for each of the paper's one-step
	bounds, writing $m^{\mathrm{samp}\text{-}\ct}_{T}(\varepsilon,\delta)$ for the
	least number of $\ct$ trajectories that suffice to output $\widehat g$ with
	$\overline{\KL}_T(g_\star\|\widehat g)\le\varepsilon$ with probability at least
	$1-\delta$, and analogously for base and $\etoe$ supervision. By
	Proposition~\ref{prop:margin-necessary} these upper bounds must invoke the margin.
	
	\subsection{Upper bounds for squared loss transfer to sampling}
	\label{subsec:kl-upper}
	
	\paragraph{Base-to-sampling.}
	The base-to-$\ct$ upper bound (Theorem~\ref{thm:base-to-cot-full}, part~1) is
	proved by running a base learner on the state distribution $\mu_{g_\star,P}^{M}$
	and controlling the resulting one-step squared risk. The same base learner,
	targeted at the finer scale dictated by the upper half of the bridge, controls
	trajectory KL.
	
	\begin{proposition}
		\label{prop:base-to-sampling}
		Assume $\cF$ satisfies the $\kappa$-margin assumption. There is a universal
		constant $C>0$ such that for every $\iter\ge1$, every $1\le T\le \iter$, every
		$0<\varepsilon<1$, and every $0<\delta<1$,
		\[
		m^{\mathrm{samp}\text{-}\ct}_{T}(\varepsilon,\delta)
		\;\le\;
		m_{\rm base}^{\cF}\!\left(\frac{\kappa\varepsilon}{C\,T},\ \delta\right).
		\]
	\end{proposition}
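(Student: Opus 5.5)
The plan is to reuse the reduction from the proof of Theorem~\ref{thm:base-to-cot-full}, part~1, and replace the final-token coupling step with the KL bridge of Lemma~\ref{lem:kl-bridge-upper}. Let $n:=m_{\rm base}^{\cF}(\kappa\varepsilon/(CT),\delta)$, which we may take to be finite. Draw $n$ full $\ct$ trajectories of length $\iter$ and truncate each one to its first $T\le\iter$ tokens; this is legitimate because the first $T$ tokens of an $\iter$-step trajectory are distributed as a $T$-step trajectory. From each truncated trajectory, pick an independent uniform time $\Theta\in\{1,\ldots,T\}$ and form the one-step pair $(S,Y)=(XZ_1\cdots Z_{\Theta-1},Z_\Theta)$. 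These pairs are i.i.d., $S\sim\mu^{T}_{g_\star,P}$, and $Y\mid S\sim\Ber(p_{g_\star}(S))$, so they are valid base samples for the state distribution $\mu^T_{g_\star,P}$. Since the base sample complexity is distribution-free, running the base learner on them gives, with probability at least $1-\delta$, a predictor $\widehat p$ with $\mathbb E_{S\sim\mu^T_{g_\star,P}}[(\widehat p(S)-p_{g_\star}(S))^2]\le\kappa\varepsilon/(CT)$.

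Next, clip $\widehat p$ into $[\kappa,1-\kappa]$ and define $\widehat g$ by $p_{\widehat g}=\widehat p$. The target lies in this interval by the margin assumption, so clipping cannot increase squared error, and now both $p_{g_\star}$ and $p_{\widehat g}$ lie in $[\kappa,1-\kappa]$. Lemma~\ref{lem:kl-bridge-upper}, which rests on the chain rule (Lemma~\ref{lem:kl-chain-rule}), then gives
\[
\overline{\KL}_T(g_\star\|\widehat g)\le C_\kappa T\cdot\frac{\kappa\varepsilon}{CT}=\frac{C_\kappa\kappa}{C}\varepsilon\le\varepsilon,
\]
where we choose the universal $C$ at least the constant implicit in $C_\kappa=\Theta(1/\kappa)$. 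The confidence $\delta$ carries over unchanged.

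The main obstacle is the constant. Lemma~\ref{lem:bernoulli-kl} only supplies an unspecified $C_\kappa$, while the statement requires $C_\kappa\kappa$ to be bounded by a universal constant. I would check this directly with the $\chi^2$ bound $\KL(\Ber(p)\|\Ber(q))\le (p-q)^2/(q(1-q))\le (p-q)^2/(\kappa(1-\kappa))\le 2(p-q)^2/\kappa$, valid for $q\in[\kappa,1-\kappa]$ with $\kappa<1/2$. This makes $C=2$ suffice, up to rescaling, and needs only the margin on the hypothesis $q$, which clipping guarantees. A second point to verify is that ``state distribution'' in the definition of $m_{\rm base}$ allows arbitrary distributions over $\Sigma^\star$, so that $\mu^T_{g_\star,P}$ is admissible; this is true by definition.
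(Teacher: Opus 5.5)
Your proposal is correct and follows the paper's proof essentially verbatim: draw a uniform time $\Theta\in\{1,\ldots,T\}$ from each trajectory to obtain base samples under $\mu^T_{g_\star,P}$, base-learn at scale $\kappa\varepsilon/(CT)$, clip to $[\kappa,1-\kappa]$, and apply Lemma~\ref{lem:kl-bridge-upper}. Your explicit $\chi^2$ check $C_\kappa\le 2/\kappa$ is a useful addition, since it justifies the step where the paper absorbs $C_\kappa=\Theta(1/\kappa)$ into a universal constant.
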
 Consequently, a learner that observes $\ct$ trajectories (equivalently, one-step
	samples drawn along $g_\star$-trajectories) can output a generator whose
	length-$T$ samples have averaged KL at most $\varepsilon$, using no more samples
	than are required to base-learn the one-step class at squared-loss scale
	$\Theta(\kappa\varepsilon/T)$.
	
	\begin{proof}
		Given a full $\ct$ trajectory $(X,Z_1,\ldots,Z_M)$, form a one-step example
		$(S,Y)$ by choosing a uniform time $\Theta\in\{1,\ldots,T\}$, setting
		$S=XZ_1\cdots Z_{\Theta-1}$ and $Y=Z_\Theta$. As in the proof of
		Theorem~\ref{thm:base-to-cot-full}, the pairs $(S,Y)$ are i.i.d.\ with
		$S\sim\mu_{g_\star,P}^{T}$ and $Y\mid S\sim\Ber(p_{g_\star}(S))$. Run the base
		learner at squared-loss scale $\eta:=\kappa\varepsilon/(C T)$ to obtain
		$\widehat p$ with
		$\mathbb E_{S\sim\mu_{g_\star,P}^{T}}[(p_{g_\star}(S)-\widehat p(S))^2]\le\eta$
		with probability $1-\delta$. 
		If the squared-loss learner outputs $\widehat p:\Sigma^\star\to[0,1]$, replace it by
		$\widehat p_\kappa(s)=\min\{1-\kappa,\max\{\kappa,\widehat p(s)\}\}$ and define
		$\widehat g_\kappa(s)(1)=\widehat p_\kappa(s)$, $\widehat g_\kappa(s)(0)=1-\widehat p_\kappa(s)$.
		Since $p_{g_\star}(s)\in[\kappa,1-\kappa]$, this clipping can only decrease the
		pointwise squared error.
		By
		Lemma~\ref{lem:kl-bridge-upper}, $\overline{\KL}_T(g_\star\|\widehat g_\kappa)\le
		C_\kappa T\eta\le\varepsilon$ for a suitable universal $C$ absorbing
		$C_\kappa=\Theta(1/\kappa)$.
	\end{proof}
	
	\begin{remark}[Horizon dependence]
		\label{rem:horizon}
		The scale in Proposition~\ref{prop:base-to-sampling} is $\varepsilon/T$, linear
		in the horizon, rather than the $\varepsilon/\iter^2$ appearing in the
		final-token bound of Theorem~\ref{thm:base-to-cot-full}. The improvement is the
		structural one already noted after
		Proposition~\ref{prop:kl-sampling-faithful}: KL chain-rules \emph{additively}
		over steps (Lemma~\ref{lem:kl-chain-rule}), so one bounds a sum of per-step
		divergences directly, whereas the final-token argument bounds the square of a sum
		and loses a factor of $\iter$ to Cauchy--Schwarz.
	\end{remark}
	
	\paragraph{Efficient proper CoT sampling for the logistic class.}
	For chain-of-thought supervision this is not merely a re-derivation: the
	efficient proper learner of
	Theorem~\ref{thm:tail-linear-sigmoid-CoT-efficient} already targets the
	trajectory law. Its analysis (Lemma~\ref{lem:finite-likelihood-net-hellinger})
	controls the squared Hellinger distance
	$h^2(\mathbb P_{\widehat w}^{M},\mathbb P_{w_\star}^{M})$ of the full trajectory
	laws via an empirical negative-log-likelihood objective, i.e.\ empirical
	trajectory KL. Since $\TV\le h$, this already yields faithful sampling in total
	variation, \emph{with no margin assumption}.
	
	\begin{corollary}
		\label{cor:logistic-sampling-tv}
		For every $d\ge1$, $M\ge1$, $0<\varepsilon<1$, $0<\delta<1/2$, prompt
		distribution $P$, and target $w_\star\in\mathbb R^d$, for the logistic class
		$\cF_\sigma(d)$ with no restriction on the weights, the learner of
		Theorem~\ref{thm:tail-linear-sigmoid-CoT-efficient} outputs $g_{\widehat w}$ with
		$\mathbb E_{X\sim P}\,\TV(\mathbb P_{w_\star,X}^{M},\mathbb P_{\widehat w,X}^{M})
		\le\varepsilon$, with probability at least $1-\delta$, using
		$\widetilde O\!\left((d^2\log M+\log(1/\delta))/\varepsilon^2\right)$ full
		$\ct$ trajectories.
	\end{corollary}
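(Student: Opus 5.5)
The plan is to reuse the analysis of Theorem~\ref{thm:tail-linear-sigmoid-CoT-efficient} verbatim, only changing the accuracy parameter fed to the learner, and then convert its Hellinger guarantee on trajectory laws into a total-variation guarantee. The proof of that theorem establishes, with probability at least $1-\delta$, a bound $h^2(P_{\widehat w},P_{w_\star})\le c'\varepsilon'$ on the squared Hellinger distance between the joint laws of one full $\ct$ sample $(X,Z_1,\ldots,Z_M)$ under $g_{\widehat w}$ and under $g_{w_\star}$, whenever the learner is run with accuracy parameter $\varepsilon'$ and uses $N=C(d^2\log(CdM/(\varepsilon'\delta))+\log(1/\delta))/\varepsilon'$ trajectories. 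No margin assumption entered that argument: the compression Lemma~\ref{lem:finite-precision-logistic-compression} handles saturated logits via the sets $A_\pm$, and Lemma~\ref{lem:finite-likelihood-net-hellinger} is a pure likelihood-ratio argument valid for arbitrary densities.

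I will run the learner with $\varepsilon':=\varepsilon^2$ (up to a constant). The key step is disintegration over the common prompt marginal: since both joint laws have $X$-marginal $P$ and conditional laws $\mathbb P^M_{w,X}$, the squared Hellinger distance decomposes as $h^2(P_{\widehat w},P_{w_\star})=\mathbb E_{X\sim P}\,h^2(\mathbb P^M_{\widehat w,X},\mathbb P^M_{w_\star,X})$. This is the same identity used in the final step of the theorem for $(X,Z_M)$, and it follows because the densities factor as $dP\otimes$ conditional density. Next I apply the pointwise bound $\TV(Q,Q')\le h(Q,Q')$, valid under the normalization $h^2=\int(\sqrt{q}-\sqrt{q'})^2$: by Cauchy--Schwarz, $\int|q-q'|\le h\cdot\bigl(\int(\sqrt q+\sqrt{q'})^2\bigr)^{1/2}\le 2h$, so $\TV\le h$. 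Then Jensen gives
\[
\mathbb E_X\TV(\mathbb P^M_{w_\star,X},\mathbb P^M_{\widehat w,X})\le\mathbb E_X h\le\bigl(\mathbb E_X h^2\bigr)^{1/2}\le\sqrt{c'\varepsilon'}\le\varepsilon
\]
once the constant in $\varepsilon'$ is chosen appropriately.

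Finally, I track the sample size. With $\varepsilon'=\Theta(\varepsilon^2)$, we have $\log(CdM/(\varepsilon'\delta))=O(\log(CdM/(\varepsilon\delta)))$, so $N=\widetilde O((d^2\log M+\log(1/\delta))/\varepsilon^2)$. Efficiency and properness are inherited unchanged. The only point needing care, and the one I would check first, is that the proof of Theorem~\ref{thm:tail-linear-sigmoid-CoT-efficient} really yields the Hellinger bound at the trajectory level for the generic accuracy $\varepsilon'$, before the final data-processing step to $(X,Z_M)$. Inspecting that proof, the bound $h^2(P_{\widehat w},P_{w_\star})\le Cc\varepsilon$ is derived explicitly at that level, and all parameter choices ($\eta_0,B,\rho$) depend on $\varepsilon$ only through logarithms or the additive optimization slack. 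Substituting $\varepsilon'$ is therefore legitimate.
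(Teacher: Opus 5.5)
Your proposal is correct and matches the paper's proof: rerun the likelihood learner of Theorem~\ref{thm:tail-linear-sigmoid-CoT-efficient} at Hellinger accuracy $\Theta(\varepsilon^2)$, pass to $\mathbb E_X h^2$ of the per-prompt trajectory laws, and finish with $\TV\le h$ pointwise and Jensen. The paper leaves three points implicit that you check explicitly: the disintegration over the common prompt marginal, the normalization giving $\TV\le h$, and the fact that substituting $\varepsilon^2$ only inflates logarithmic factors in the sample size.
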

	
	\begin{proof}
		The learner controls trajectory Hellinger,
		$\mathbb E_X\,h^2(\mathbb P_{w_\star,X}^{M},\mathbb P_{\widehat w,X}^{M})\le
		\varepsilon^2$, at the stated sample size by
		Lemma~\ref{lem:finite-likelihood-net-hellinger} and the compression argument of
		Theorem~\ref{thm:tail-linear-sigmoid-CoT-efficient} (run at Hellinger accuracy
		$\varepsilon^2$). Since $\TV\le h$ pointwise, Jensen gives
		$\mathbb E_X\,\TV\le\sqrt{\mathbb E_X h^2}\le\varepsilon$. No boundedness of the
		one-step probabilities is used.
	\end{proof}
	
	Only the \emph{KL}-valued restatement needs the margin. There the margin is used
	locally, at the one-step log-likelihood level, together with the KL chain rule;
	we do not use any bound on the full trajectory density ratio.

	\begin{lemma}[Finite energy of generated tail blocks]
		\label{lem:logistic-tail-finite-energy}
		Fix $W>0$, put $\kappa=\sigma(-dW)$, and let a trajectory be generated by
		some $w_\star\in[-W,W]^d$. For every $t>d$, with
		$Y_t=(Z_{t-d},\ldots,Z_{t-1})\in\{0,1\}^d$, one has
		\[
		\mathbb E[Y_tY_t^\top]\succeq \frac{v_\kappa}{d}I_d,
		\]
		where $v_\kappa>0$ depends only on $\kappa$.
	\end{lemma}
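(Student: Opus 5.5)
The plan is to prove the bound in quadratic-form language. For every unit vector $u\in\mathbb R^d$ I will show $\mathbb E[(u^\top Y_t)^2]\ge v_\kappa/d$. The basic input is a uniform margin. Since $w_\star\in[-W,W]^d$ and every covariate lies in $\{0,1\}^d$, every logit satisfies $|\langle w_\star,y\rangle|\le dW$. Hence at every generated step, conditionally on the whole past, $\Pr(Z_s=1\mid\mathcal F_{s-1})\in[\kappa,1-\kappa]$ with $\kappa=\sigma(-dW)$. Because $t>d$, all coordinates $Y_{t,k}=Z_{t-d+k-1}$, $k\in[d]$, are generated tokens. The prompt therefore enters only through the conditioning.

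\emph{Step 1 (martingale decomposition).} Let $\mathcal F_k$ be the $\sigma$-field generated by the prompt and the tokens up to coordinate $k$ of the block. Write $u^\top Y_t-\mathbb E[u^\top Y_t\mid\mathcal F_0]=\sum_{k=1}^d D_k$ with $D_k=\mathbb E[u^\top Y_t\mid\mathcal F_k]-\mathbb E[u^\top Y_t\mid\mathcal F_{k-1}]$. Orthogonality of martingale differences gives $\mathbb E[(u^\top Y_t)^2]\ge\sum_k\mathbb E D_k^2\ge\mathbb E D_j^2$ for any fixed $j$. Since $Y_{t,j}$ is binary, $D_j=(Y_{t,j}-p_j)\,\Gamma_j$ with $p_j\in[\kappa,1-\kappa]$, where
\[
\Gamma_j=u_j+\sum_{k>j}u_k\bigl(\mathbb E[Y_{t,k}\mid\mathcal F_{j-1},Y_{t,j}=1]-\mathbb E[Y_{t,k}\mid\mathcal F_{j-1},Y_{t,j}=0]\bigr).
\]
Hence $\mathbb E D_j^2\ge\kappa(1-\kappa)\,\mathbb E\Gamma_j^2$.

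\emph{Step 2 (choosing $j$).} The case $j=d$ is clean: $\Gamma_d=u_d$, so $\mathbb E[(u^\top Y_t)^2]\ge\kappa(1-\kappa)u_d^2$. In general I will pick $j$ so that the tail sum in $\Gamma_j$ cannot cancel $u_j$. A natural first choice is the largest index with $|u_j|\ge 1/\sqrt d$ up to constants; then I need to bound the influence terms. Each influence term is at most $1$ in absolute value.

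\emph{Step 3 (the main obstacle).} The crude bound $\sum_{k>j}|u_k|$ on the tail of $\Gamma_j$ is too weak to give a $1/d$ rate. Using it forces geometrically decaying thresholds and loses a factor exponential in $d$. Similarly, conditioning on the future bits gives ratios $((1-\kappa)/\kappa)^{d-j}$. What I would try first is to avoid cancellation by working with the deviation of the whole vector rather than a single coordinate. Conditionally on $\mathcal F_{j-1}$, I would couple the two continuations with $Y_{t,j}=1$ and $Y_{t,j}=0$ step by step. The two continuation laws share the same margin $\kappa$. I would then show that the conditional covariance of $(Y_{t,j},\dots,Y_{t,d})$ given $\mathcal F_{j-1}$ dominates a constant times its diagonal. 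This would be an induction from the last coordinate backwards, using at each step only the one-step conditional variance $\ge\kappa(1-\kappa)$ together with the Schur-complement identity
\[
\operatorname{Cov}(A,B)\succeq\mathbb E\operatorname{Cov}(A,B\mid\mathcal F)+\operatorname{Cov}\bigl(\mathbb E[(A,B)\mid\mathcal F]\bigr).
\]
Summing the resulting diagonal bound over coordinates would give $\mathbb E[YY^\top]\succeq(v_\kappa/d)I_d$. I expect this backward induction to be the hard part: the constant must be kept free of exponential dependence on $d$.
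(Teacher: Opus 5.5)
Your proposal has a genuine gap. Steps~1--2 are correct, but they only control the last coordinate. Step~3, where the whole difficulty lies, is a plan rather than an argument. The total-covariance decomposition you quote is an identity and does not by itself give diagonal dominance. Bounding its cross terms brings you straight back to the influence terms in $\Gamma_j$ that you already could not control. You are also aiming at a stronger statement than needed: $\operatorname{Cov}\succeq c\,\mathrm{diag}$ would remove the $1/d$ factor altogether. Finally, by reducing everything to the per-step margin $\kappa$, you discarded the logistic structure that actually makes the lemma work.

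The missing idea is the one you dismissed: condition on \emph{all} other tokens, past and future.

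Fix $s=t-d+j-1$ and condition on $X$ and $(Z_r)_{r\ne s}$. Flipping $Z_s$ changes its own log-likelihood factor by at most $|\langle w_\star,y_s\rangle|\le dW$.

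It also changes each of the at most $d$ subsequent factors, but only through one coordinate of that factor's tail. So each such logit moves by a single weight $w_{\star,k}$, and the affected coordinates $k$ are distinct across these steps. Since the logistic log-likelihood is $1$-Lipschitz in the logit, the total change from future factors is at most $\sum_k|w_{\star,k}|\le dW$.

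Hence the two-sided conditional log-odds of $Z_s$ is at most $2dW=2\log((1-\kappa)/\kappa)$. This is not the $(d-j)\log((1-\kappa)/\kappa)$ that your margin-only bound produces. It gives $\operatorname{Var}(Z_s\mid X,(Z_r)_{r\ne s})\ge v_\kappa:=\sigma(-2dW)(1-\sigma(-2dW))$, which depends only on $\kappa$.

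Given all other tokens, $\langle a,Y_t\rangle$ is exactly affine in $Z_s$ with slope $a_j$, so no influence terms arise. The law of total variance then gives $\mathbb E\langle a,Y_t\rangle^2\ge v_\kappa a_j^2$. Choosing $j$ with $|a_j|=\|a\|_\infty\ge\|a\|_2/\sqrt d$ proves the lemma. The $1/d$ factor in the statement comes precisely from this last step.
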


	\begin{proof}
		Set $B=dW=\log((1-\kappa)/\kappa)$. Fix a generated token $Z_s$ and
		condition on the prompt and on all the other generated tokens. Flipping $Z_s$
		changes its own log-likelihood contribution by at most $B$. It can also change
		at most the next $d$ log-likelihood contributions. In each of them it changes
		the logit by at most $W$, and the Bernoulli logistic log-likelihood of a fixed
		label is $1$-Lipschitz in the logit. Since the margin makes every finite
		trajectory have positive probability, these conditional probabilities are
		well-defined, and
		\[
		\left|
		\log\frac{\Pr(Z_s=1\mid X,(Z_r)_{r\ne s})}
		{\Pr(Z_s=0\mid X,(Z_r)_{r\ne s})}
		\right|
		\le B+dW=2B.
		\]
		Consequently,
		\[
		\operatorname{Var}\!\left(Z_s\mid X,(Z_r)_{r\ne s}\right)
		\ge
		v_\kappa
		:=
		\sigma(-2B)\bigl(1-\sigma(-2B)\bigr).
		\]
		Now fix $a\in\mathbb R^d$, and choose $j$ with
		$|a_j|=\|a\|_\infty$. Conditional on $X$ and on all generated tokens except
		$Z_{t-d+j-1}$, the random variable $\langle a,Y_t\rangle$ is an affine
		function of this token with slope $a_j$. Therefore,
		\[
		\mathbb E\langle a,Y_t\rangle^2
		\ge
		\mathbb E\operatorname{Var}\!\left(
		\langle a,Y_t\rangle
		\,
		\middle|
		\,
		X,(Z_r)_{r\ne t-d+j-1}
		\right)
		\ge
		v_\kappa a_j^2
		\ge
		\frac{v_\kappa}{d}\|a\|_2^2.
		\]
		This is the claimed matrix inequality.
	\end{proof}

	\begin{lemma}[Bounded logistic trajectory likelihood]
		\label{lem:finite-likelihood-net-kl}
		Fix $W>0$, let $\kappa=\sigma(-dW)$, and let $P$ be any prompt
		distribution. For $w\in[-W,W]^d$, let
		\[
		L(w)
		:=
		\mathbb E\left[
		\sum_{t=1}^T
		\left(
		\log(1+e^{\langle w,Y_t\rangle})-Z_t\langle w,Y_t\rangle
		\right)
		\right],
		\]
		where the trajectory is generated by a target $w_\star\in[-W,W]^d$ and
		$Y_t=\operatorname{tail}_d(XZ_1\cdots Z_{t-1})$. Let $\widehat L_n$ be the
		corresponding empirical average over $n$ independent full trajectories. There
		is $C_\kappa>0$ such that the following holds. For every
		$0<\eta<1$ and $0<\delta<1/2$, if
		\[
		n
		\ge
		C_\kappa
		\frac{
			d^2\log(CdT/(\eta\delta))+\log(1/\delta)
		}{\eta},
		\]
		then every $\bar w\in[-W,W]^d$ satisfying
		\[
		\widehat L_n(\bar w)
		\le
		\inf_{w\in[-W,W]^d}\widehat L_n(w)+c_\kappa\eta
		\]
		satisfies, with probability at least $1-\delta$,
		\[
		\overline{\KL}_T(g_{w_\star}\|g_{\bar w})
		=
		L(\bar w)-L(w_\star)
		\le \eta.
		\]
	\end{lemma}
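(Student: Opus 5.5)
The plan is to treat this as well-specified maximum likelihood for a strongly convex, smooth, bounded-parameter logistic model, and to use the chain rule (Lemma~\ref{lem:kl-chain-rule}) to identify the excess population log-loss with trajectory KL. The identity $\overline{\KL}_T(g_{w_\star}\|g_{\bar w})=L(\bar w)-L(w_\star)$ is immediate from Lemma~\ref{lem:kl-chain-rule}. Each one-step KL equals the expected excess logistic loss at the state $Y_t$, because conditional on the history $\mathbb E[Z_t\mid Y_t]=\sigma(\langle w_\star,Y_t\rangle)$. Summing over $t$ and taking the expectation under the target gives $L(w)-L(w_\star)$. So it remains to show that an approximate empirical minimizer over the box has excess population risk at most $\eta$.

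\textbf{Step 1 (curvature).} On $[-W,W]^d$, every logit satisfies $|\langle w,Y_t\rangle|\le dW$, so $\sigma'\ge \kappa(1-\kappa)$. Hence $\nabla^2L(w)\succeq \kappa(1-\kappa)\sum_t\mathbb E[Y_tY_t^\top]$. For $t>d$, Lemma~\ref{lem:logistic-tail-finite-energy} gives $\mathbb E[Y_tY_t^\top]\succeq (v_\kappa/d)I$. I do not expect to need strong convexity in a global form, though. The cleaner route is a local quadratic comparison: the excess loss $L(w)-L(w_\star)$ is comparable, up to constants depending on $\kappa$, to $\mathbb E\sum_t(\langle w-w_\star,Y_t\rangle)^2$, since on bounded logits $\KL(\Ber(\sigma(a))\|\Ber(\sigma(b)))\asymp_\kappa (a-b)^2$. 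This also follows from Lemma~\ref{lem:bernoulli-kl} together with the Lipschitz bound on $\sigma$.

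\textbf{Step 2 (fast-rate concentration over a net).} Discretize $[-W,W]^d$ with a grid $\mathcal V$ of mesh $\rho\asymp c_\kappa\eta/(Td)$. Then $\log|\mathcal V|\le d\log(CWTd/\eta)$, which is within the stated $d^2\log(CdT/(\eta\delta))$ budget because $W$ is absorbed into $C_\kappa$ through $\kappa=\sigma(-dW)$. Both $\widehat L_n$ and $L$ are $T\sqrt d$-Lipschitz in $\ell_2$, so rounding costs at most $c_\kappa\eta$. For each fixed $v\in\mathcal V$, consider the per-trajectory excess loss $\xi_v=\sum_t[\ell_v-\ell_{w_\star}]$. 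It is bounded by $C T dW$ and has mean $L(v)-L(w_\star)$. Its variance is at most $C_\kappa T\cdot\mathbb E\sum_t\langle v-w_\star,Y_t\rangle^2\le C'_\kappa T\,(L(v)-L(w_\star))$, which is the Bernstein condition supplied by Step~1.

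Bernstein's inequality with a union bound over $\mathcal V$ then gives the following. With probability $1-\delta$, every $v$ with $L(v)-L(w_\star)>\eta$ has empirical excess $\widehat L_n(v)-\widehat L_n(w_\star)>2c_\kappa\eta$, provided $n\gtrsim_\kappa (\log|\mathcal V|+\log(1/\delta))\cdot\mathrm{poly}(T\text{-scale})/\eta$. Finally, round $\bar w$ to its nearest grid point and compare it to $w_\star$, using the approximate-minimizer hypothesis to conclude.

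\textbf{Main obstacle.} The main obstacle is the $T$-dependence in the Bernstein step. The bound $\|\xi_v\|_\infty\lesssim TdW$ and the variance factor $T$ naively give $n\gtrsim T/\eta$, but the statement has only $\log T$. To fix this I would normalize by $T$ in the right way. The variance of $\xi_v$ should be controlled by a martingale argument: the per-step centered terms form a martingale difference sequence in $t$, so the conditional variance sums to $\asymp_\kappa(L(v)-L(w_\star))$ without an extra factor of $T$. The range should be handled by Freedman's inequality, applied either across the $nT$ steps jointly or by citing Lemma~\ref{lem:finite-likelihood-net-hellinger} on the trajectory laws. Of these, the route I would try first is the Hellinger one. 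It gives $h^2(P_{\bar w},P_{w_\star})\lesssim (\log|\mathcal V|+\log(1/\delta))/n$ with no $T$ factor. I would then convert Hellinger to KL using the chain rule, which is valid because the margin bounds each one-step likelihood ratio, so that one-step KL $\le C_\kappa\cdot$ one-step $h^2$. The remaining difficulty is that trajectory-level $h^2$ does not add over steps the way KL does. The catch is that trajectory $h^2$ saturates at $2$, so controlling the sum of one-step Hellinger terms from the trajectory Hellinger requires a bound $\sum_t \mathbb E h^2_t\lesssim_\kappa h^2(P_{\bar w},P_{w_\star})$. That bound plausibly holds only when the latter is small, and establishing it via a local expansion is where I expect the real work to be.
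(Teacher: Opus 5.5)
Your identity $\overline{\KL}_T(g_{w_\star}\|g_{\bar w})=L(\bar w)-L(w_\star)$ is right. Your Step~2 (grid of mesh $\asymp\eta/(Td)$, Bernstein with variance $\le C_\kappa T\,D(v)$ and range $\le C_\kappa T$, union bound) is exactly the paper's argument for $T\le 2d$, where the resulting requirement $n\gtrsim_\kappa T\,(d\log(C_\kappa Td/\eta)+\log(1/\delta))/\eta$ is harmless because $T\lesssim d$. For $T\gg d$, however, the statement allows only a $\log T$ dependence, and your proposal stops exactly there. The long-horizon case is therefore a genuine gap, and neither suggested fix closes it as sketched.

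\emph{Martingale route.} The per-step excess losses are not martingale differences. Write each one as $k_t+m_t$, where $k_t$ is the one-step KL at $Y_t$ and $m_t=-(Z_t-\sigma(\langle w_\star,Y_t\rangle))\langle v-w_\star,Y_t\rangle$. Freedman controls $\sum_t m_t$ only relative to its predictable variation, which is $\asymp_\kappa\sum_t k_t$. Everything therefore reduces to a lower-tail bound for the random compensator $\frac1n\sum_i\sum_t k_{i,t}$, which you never prove. The only a priori bound on one trajectory's $\sum_t k_t$ is $\lesssim Td\|v-w_\star\|_2^2$ (or $C_\kappa T$). That reintroduces $T$ unless you also know $D(v)\gtrsim_\kappa (T/d)\|v-w_\star\|_2^2$.

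\emph{Hellinger route.} It hits the same wall. Passing from trajectory Hellinger to trajectory KL through a bounded likelihood ratio costs the logarithm of that ratio, which is $\asymp_\kappa T$ here.

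The missing ingredient is a $T$-free comparison between worst-case and expected curvature of a single trajectory. The paper gets it from Lemma~\ref{lem:logistic-tail-finite-energy}, which you cite in Step~1 but then set aside. Let $A(U)=\sum_t p_t(1-p_t)Y_tY_t^\top$ be the per-trajectory Hessian at $w_\star$, with $p_t=\sigma(\langle w_\star,Y_t\rangle)$. For $T>2d$ the lemma gives $H:=\mathbb E A(U)\succeq c_\kappa (T/d)I_d$, while deterministically $A(U)\preceq (Td/4)I_d$. So the whitened matrices $H^{-1/2}A(U)H^{-1/2}$ are bounded by $C_\kappa d^2$, independently of $T$. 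The paper then proceeds in three steps.
\begin{itemize}
\item Matrix Chernoff gives $\frac12H\preceq\widehat H_\star:=\frac1n\sum_iA(U_i)\preceq\frac32H$ uniformly over all directions once $n\gtrsim_\kappa d^2\log(d/\delta)$. A grid union bound for this lower tail would cost an extra factor of $d$.
\item On that event, Freedman over a $1/2$-net of the sphere bounds the whitened score: $\|H^{-1/2}\nabla\widehat L_n(w_\star)\|_2^2\lesssim_\kappa (d+\log(1/\delta))/n$. This uses increments $\|H^{-1/2}Y_t\|_2\lesssim_\kappa d/\sqrt T$ and quadratic variation at most $3n/2$.
\item The box-wide curvature comparisons $\nabla^2\widehat L_n\succeq 4\kappa(1-\kappa)\widehat H_\star$ and $\nabla^2L\preceq H/(4\kappa(1-\kappa))$ turn the approximate-minimizer condition into $(\bar w-w_\star)^\top H(\bar w-w_\star)\lesssim_\kappa (d+\log(1/\delta))/n+c_\kappa\eta$. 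Hence $D(\bar w)\le\eta$.
\end{itemize}
Without the finite-energy lower bound on $H$, none of these concentration steps is $T$-free.
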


	\begin{proof}
		For one trajectory $U=(X,Z_1,\ldots,Z_T)$, write
		\[
		\ell_w(U)
		=
		\sum_{t=1}^T
		\left[
		\log(1+e^{\langle w,Y_t\rangle})
		-
		Z_t\langle w,Y_t\rangle
		\right],
		\qquad
		D(w):=L(w)-L(w_\star).
		\]
		The likelihood identity and the KL chain rule give
		\[
		D(w)
		=
		\mathbb E_{X\sim P}
		\KL\!\left(
		\mathbb P_{w_\star,X}^{T}
		\middle\|
		\mathbb P_{w,X}^{T}
		\right).
		\]
		We treat short and long horizons separately.

		\smallskip
		\noindent\emph{Short horizons: $T\le 2d$.}
		For fixed $w$, put $Q_w(U)=\ell_w(U)-\ell_{w_\star}(U)$. If
		$p,q\in[\kappa,1-\kappa]$ and $Z\sim\operatorname{Ber}(p)$, then compactness
		of $[\kappa,1-\kappa]^2$ and the second-order behavior at $p=q$ imply
		\[
		\mathbb E\left[
		\log^2\frac{\operatorname{Ber}(p)(Z)}{\operatorname{Ber}(q)(Z)}
		\right]
		\le
		C_\kappa\KL(\operatorname{Ber}(p)\|\operatorname{Ber}(q)),
		\qquad
		\left|
		\log\frac{\operatorname{Ber}(p)(Z)}{\operatorname{Ber}(q)(Z)}
		\right|
		\le C_\kappa.
		\]
		Applying the first inequality conditionally at every step, and using
		$(\sum_{t=1}^T a_t)^2\le T\sum_{t=1}^T a_t^2$, yields
		\[
		\mathbb E Q_w(U)^2\le C_\kappa T D(w),
		\qquad
		|Q_w(U)|\le C_\kappa T.
		\]
		Bernstein's inequality therefore gives, whenever $D(w)\ge 3\eta/4$,
		\[
		\Pr\!\left(
		\frac1n\sum_{i=1}^n Q_w(U_i)\le\frac{\eta}{4}
		\right)
		\le
		\exp\!\left(-c_\kappa\frac{n\eta}{T}\right).
		\label{eq:kl-short-bernstein}
		\]
		Let $\mathcal N$ be an $\ell_\infty$-grid of $[-W,W]^d$ with mesh
		$\rho=\eta/(32Td)$. Since the logistic negative log-likelihood is
		$1$-Lipschitz in the logit,
		\[
		|\ell_u(U)-\ell_w(U)|\le Td\|u-w\|_\infty.
		\label{eq:kl-loss-lipschitz}
		\]
		Moreover,
		\[
		\log|\mathcal N|
		\le
		d\log(C_\kappa Td/\eta).
		\]
		A union bound in \eqref{eq:kl-short-bernstein} shows that, with
		probability at least $1-\delta$, every $u\in\mathcal N$ with
		$D(u)\ge3\eta/4$ has empirical excess larger than $\eta/4$. Round $\bar w$
		to a nearest $u\in\mathcal N$. By \eqref{eq:kl-loss-lipschitz} and
		the approximate-minimization assumption, if $c_\kappa$ is sufficiently small,
		\[
		\widehat L_n(u)-\widehat L_n(w_\star)
		\le c_\kappa\eta+Td\rho
		<\eta/4.
		\]
		Hence $D(u)<3\eta/4$, and another application of
		\eqref{eq:kl-loss-lipschitz} gives $D(\bar w)<\eta$. The displayed
		sample-size assumption implies the union-bound requirement because $T\le2d$.

		\smallskip
		\noindent\emph{Long horizons: $T>2d$.}
		For one trajectory define
		\[
		A(U)
		:=
		\nabla^2\ell_{w_\star}(U)
		=
		\sum_{t=1}^T
		p_t(1-p_t)Y_tY_t^\top,
		\qquad
		H:=\mathbb E A(U),
		\]
		where $p_t=\sigma(\langle w_\star,Y_t\rangle)$. By
		Lemma~\ref{lem:logistic-tail-finite-energy},
		\[
		H
		\succeq
		\kappa(1-\kappa)\sum_{t=d+1}^T\mathbb E[Y_tY_t^\top]
		\succeq
		c_\kappa\frac{T}{d}I_d.
		\label{eq:kl-fisher-lower}
		\]
		Also $0\preceq A(U)\preceq (Td/4)I_d$, and therefore
		\[
		0\preceq H^{-1/2}A(U)H^{-1/2}\preceq C_\kappa d^2I_d.
		\]
		These whitened matrices are independent across trajectories and have expectation
		$I_d$. The matrix Chernoff inequality~\cite{tropp2012userfriendly} therefore
		implies that,
		with probability at least $1-\delta/2$,
		\[
		\frac12H
		\preceq
		\widehat H_\star
		:=
		\frac1n\sum_{i=1}^n A(U_i)
		\preceq
		\frac32H,
		\label{eq:kl-hessian-concentration}
		\]
		provided $n\ge C_\kappa d^2\log(Cd/\delta)$.

		At $w_\star$, the score of trajectory $i$ is
		\[
		G_i
		:=
		\nabla\ell_{w_\star}(U_i)
		=
		\sum_{t=1}^T(p_{i,t}-Z_{i,t})Y_{i,t}.
		\]
		After ordering the $nT$ transitions trajectory by trajectory, the summands are
		martingale differences and their predictable quadratic-variation matrix is
		$\sum_{i=1}^nA(U_i)$. On the event
		\eqref{eq:kl-hessian-concentration}, for every unit vector $a$, the scalar
		martingale obtained by projecting with $a^\top H^{-1/2}$ has predictable
		quadratic variation at most $3n/2$. By
		\eqref{eq:kl-fisher-lower}, each increment is bounded by
		\[
		\|H^{-1/2}Y_{i,t}\|_2
		\le
		C_\kappa\frac{d}{\sqrt T}
		\le C_\kappa\sqrt d.
		\]
		For each vector in a fixed $1/2$-net of the unit sphere, apply Freedman's
		inequality~\cite{freedman1975tail} in its event form, intersected with the
		upper inequality in \eqref{eq:kl-hessian-concentration}, and then take a
		union bound over the net. Together with
		\eqref{eq:kl-hessian-concentration}, this gives
		\[
		\left\|
		H^{-1/2}\nabla\widehat L_n(w_\star)
		\right\|_2^2
		\le
		C_\kappa\frac{d+\log(1/\delta)}{n}
		\label{eq:kl-score-concentration}
		\]
		with probability at least $1-\delta$. Here the linear term in Freedman's bound
		is absorbed because the displayed sample-size assumption implies
		$n\ge C_\kappa d(d+\log(1/\delta))$.

		For every $w\in[-W,W]^d$, all logistic curvatures lie in
		$[\kappa(1-\kappa),1/4]$. Consequently, along the segment from $w_\star$ to
		$\bar w$,
		\[
		\nabla^2\widehat L_n(w)
		\succeq
		4\kappa(1-\kappa)\widehat H_\star,
		\qquad
		\nabla^2L(w)
		\preceq
		\frac{1}{4\kappa(1-\kappa)}H.
		\label{eq:kl-curvature-comparison}
		\]
		Put $h=\bar w-w_\star$ and
		$r=\sqrt{h^\top Hh}$. Taylor's theorem, the approximate-minimization
		assumption, and \eqref{eq:kl-hessian-concentration}--
		\eqref{eq:kl-curvature-comparison} give
		\[
		a_\kappa r^2
		\le
		\left\|H^{-1/2}\nabla\widehat L_n(w_\star)\right\|_2r
		+c_\kappa\eta,
		\]
		where $a_\kappa>0$ depends only on $\kappa$.
		Hence, after choosing the tolerance constant in the lemma sufficiently small,
		\[
		r^2
		\le
		C_\kappa\frac{d+\log(1/\delta)}{n}
		+c_\kappa\eta.
		\label{eq:kl-localization}
		\]
		Finally, $\nabla L(w_\star)=0$, and the second inequality in
		\eqref{eq:kl-curvature-comparison} gives
		\[
		D(\bar w)
		=L(\bar w)-L(w_\star)
		\le C_\kappa r^2
		\le\eta
		\]
		under the displayed sample-size condition, after increasing $C_\kappa$ and
		decreasing $c_\kappa$ if necessary. This proves the claim.
	\end{proof}

	\begin{corollary}
		\label{cor:logistic-sampling}
		Fix a positive rational $W$. Suppose the logistic class $\cF_\sigma(d)$ is restricted to weight
		vectors with $\|w\|_\infty\le W$, so that the $\kappa$-margin assumption holds
		with $\kappa=\sigma(-dW)$. Then there is a randomized polynomial-time proper
		learner that, for every $d\ge1$, $M\ge2$, $0<\varepsilon<1$, $0<\delta<1/2$,
		prompt distribution $P$, and target $w_\star$ with $\|w_\star\|_\infty\le W$,
		outputs $g_{\widehat w}\in\cF_\sigma(d)$ with
		\(
		\overline{\KL}_M\!\left(g_{w_\star}\,\middle\|\,g_{\widehat w}\right)
		\le\varepsilon
		\)
		with probability at least $1-\delta$, using
		\[
		\widetilde O\!\left(
		\frac{d^2\log M+\log(1/\delta)}{\kappa'\varepsilon}
		\right)
		\]
		full $\ct$ trajectories, where $\kappa'>0$ depends only on $\kappa$.
	\end{corollary}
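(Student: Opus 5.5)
The plan is to run the same convex maximum-likelihood learner as in Theorem~\ref{thm:tail-linear-sigmoid-CoT-efficient}, with the feasible region changed from the large box $[-R,R]^d$ to the box $[-W,W]^d$. We then invoke Lemma~\ref{lem:finite-likelihood-net-kl} with $T=M$ and $\eta=\varepsilon$. Given $n$ full $\ct$ trajectories, the empirical objective $\widehat L_n(w)$ is exactly the per-trajectory negative log-likelihood with the prompt marginal removed. It is convex in $w$, so we compute a $c_\kappa\varepsilon$-approximate minimizer $\bar w$ over $[-W,W]^d$. Because $W$ is a fixed positive rational, this box is a rational polytope of constant description length, and it contains the target $w_\star$.

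The optimization step follows the efficiency argument of Theorem~\ref{thm:tail-linear-sigmoid-CoT-efficient}. The objective is $M\sqrt d$-Lipschitz, and $\log(1+e^a)$ and its derivative can be evaluated stably. The ellipsoid method, or any weak convex optimization routine, therefore returns an additive $c_\kappa\varepsilon$-minimizer in time polynomial in $d$, $M$, $n$, and $\log(1/\varepsilon)$. To make the output a rational vector of polynomial bit length, we round $\bar w$ coordinatewise to a grid of mesh $\rho=c_\kappa\varepsilon/(Md)$ inside $[-W,W]$. By the Lipschitz bound this raises $\widehat L_n$ by at most $Md\rho$, so the rounded vector $\widehat w$ is still a $2c_\kappa\varepsilon$-approximate minimizer. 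Halving the tolerance constant beforehand keeps the hypothesis of Lemma~\ref{lem:finite-likelihood-net-kl} intact. Rounding to the nearest grid point in $[-W,W]$ keeps $\widehat w$ inside the box, so the output $g_{\widehat w}$ is proper and satisfies the same margin $\kappa=\sigma(-dW)$.

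The statistical guarantee then comes directly from Lemma~\ref{lem:finite-likelihood-net-kl}. Take $n=C_\kappa(d^2\log(CdM/(\varepsilon\delta))+\log(1/\delta))/\varepsilon$. With probability at least $1-\delta$, the lemma gives $\overline{\KL}_M(g_{w_\star}\|g_{\widehat w})=L(\widehat w)-L(w_\star)\le\varepsilon$. The identity $L(w)-L(w_\star)=\overline{\KL}_M$ is the likelihood form of Lemma~\ref{lem:kl-chain-rule} and already appears inside that lemma. Writing $C_\kappa=1/\kappa'$ gives the claimed $\widetilde O((d^2\log M+\log(1/\delta))/(\kappa'\varepsilon))$ bound. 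The $\log(d/\varepsilon\delta)$ factors are absorbed into $\widetilde O$.

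Almost all the difficulty lies in Lemma~\ref{lem:finite-likelihood-net-kl}, which we take as given. What remains to check carefully is mild. First, the rounding and optimization tolerance must be set below the constant $c_\kappa$ demanded by the lemma. Second, polynomial time must be measured with $W$, and hence $\kappa$, held fixed, so that constants like $C_\kappa$ do not enter the running time. Third, the lemma's probability-$1-\delta$ event is stated for any approximate minimizer in the box. We should confirm that it holds uniformly over all such minimizers, so that it covers our data-dependent rounded $\widehat w$ and is unaffected by any internal randomness of the optimizer. This follows because the optimizer's seed is independent of the trajectories.
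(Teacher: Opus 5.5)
Your proposal is correct and is essentially the paper's proof: optimize the same convex trajectory log-likelihood over $[-W,W]^d$, invoke Lemma~\ref{lem:finite-likelihood-net-kl} with $T=M$, and round to a grid inside the box. The paper handles the rounding step differently. It applies the lemma to the unrounded $\bar w$ at accuracy $\varepsilon/2$ and then bounds the population KL increase from rounding by $Md\rho\le\varepsilon/2$, using that the per-step log-loss is $1$-Lipschitz in the logit. You instead absorb the rounding error into the empirical objective and re-apply the lemma to the rounded $\widehat w$; both routes are valid.

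One small correction to your efficiency remark. Holding $W$ fixed does not fix $\kappa=\sigma(-dW)$, which decays exponentially in $d$. So $C_\kappa$ does enter through the possibly exponential-in-$d$ sample size, and "polynomial time" must be read, as the paper does, as polynomial in the encoded training input.
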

	
	\begin{proof}
		Here ``polynomial-time'' refers to time polynomial in the total encoded
		training input, $d$, $M$, the bit length of the fixed rational $W$, and the
		requested numerical precision. The displayed sample bound itself may be
		exponential in $d$ when $W$ is fixed, because
		$\kappa=\sigma(-dW)$; this corollary does not claim polynomial sample
		complexity uniformly in $d$. The stronger efficiency convention introduced
		in the LPN subsection is used only for the learners considered there.

		Run the same empirical full-trajectory log-likelihood learner as in
		Theorem~\ref{thm:tail-linear-sigmoid-CoT-efficient}, but optimize over the box
		$[-W,W]^d$. Let $\bar w$ be an additive $c_\kappa\varepsilon$ minimizer of the
		empirical trajectory negative log-likelihood on this box. By
		Lemma~\ref{lem:finite-likelihood-net-kl}, after changing constants depending
		only on $\kappa$,
		\[
		\overline{\KL}_M(g_{w_\star}\|g_{\bar w})
		\le \varepsilon/2
		\]
		with probability at least $1-\delta$, provided
		\[
		N
		\ge
		C_\kappa
		\frac{
			d^2\log(CdM/(\varepsilon\delta))+\log(1/\delta)
		}{\varepsilon}.
		\]

		It remains only to make the output rational while keeping it in the restricted
		class. Set $\rho:=2^{-\lceil\log_2(2Md/\varepsilon)\rceil}$ and round each coordinate of
		$\bar w$ to a nearest point of the finite grid
		\[
		\mathcal G_\rho
		:=
		\bigl(\rho\mathbb Z\cap[-W,W]\bigr)\cup\{-W,W\}.
		\]
		Denote the rounded vector by $\widehat w$. Then
		$\widehat w\in[-W,W]^d$ and
		$\|\widehat w-\bar w\|_\infty\le\rho$. For every observed state $s$ and
		label $z$, the Bernoulli logistic negative log-likelihood is $1$-Lipschitz in
		the logit, and therefore
		\[
		\left|
		-\log g_{\widehat w}(s)(z)
		+\log g_{\bar w}(s)(z)
		\right|
		\le d\rho.
		\]
		Summing over the $M$ steps and taking expectation under the target law gives
		\[
		\overline{\KL}_M(g_{w_\star}\|g_{\widehat w})
		\le
		\overline{\KL}_M(g_{w_\star}\|g_{\bar w})+Md\rho
		\le \varepsilon.
		\]
		The grid has polynomial bit complexity, the box-constrained empirical objective
		is convex, and an additive polynomial-accuracy minimizer can be computed in
		randomized polynomial time exactly as in
		Theorem~\ref{thm:tail-linear-sigmoid-CoT-efficient}. Thus the learner is proper
		and has the stated running time and sample complexity.
	\end{proof}
	
	\paragraph{General classes via fat-shattering.}
	For KL-sampling, the fat-shattering theory can be used directly at the
	one-step level. By the KL chain rule, the trajectory objective is exactly $T$
	times the one-step KL risk under the true visitation measure. Thus, under a
	margin assumption, it suffices to learn the one-step probability class in
	squared loss at scale $\Theta(\kappa\varepsilon/T)$ under this visitation
	measure. Applying the standard fat-shattering regression bound directly to the
	one-step class gives the following sharper bound.
	
	\begin{proposition}
		\label{prop:fat-sampling}
		Assume $\cF$ satisfies the $\kappa$-margin assumption. There are universal
		constants $c,C>0$ such that for every $\iter\ge1$, every $1\le T\le\iter$, every
		$0<\varepsilon<1$, every $0<\delta<1/2$, every prompt distribution $P$, and every
		target $g_\star\in\cF$, a possibly improper learner observing $\ct$ trajectories
		from $g_\star$ outputs a generator $\widehat g$ satisfying
		\(
		\overline{\KL}_T(g_\star\|\widehat g)\le\varepsilon
		\)
		with probability at least $1-\delta$, using
		\[
		C\,
		\frac{
			\fat_{\cF}\!\left(c\sqrt{\kappa\varepsilon/T}\right)
			\log^C\!\left(C T/(\kappa\varepsilon)\right)
			+\log(1/\delta)}
		{\kappa\varepsilon/T}
		\]
		$\ct$ trajectories.
	\end{proposition}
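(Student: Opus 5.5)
We reduce to one-step squared-loss regression under the visitation measure, exactly as in Proposition~\ref{prop:base-to-sampling}, and then invoke the fat-shattering regression bound of Lemma~\ref{lem:fat-regression-upper} instead of an abstract base learner.

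First, given $n$ full $\ct$ trajectories $(X_i,Z_{i,1},\ldots,Z_{i,\iter})$, draw independent uniform times $\Theta_i\in\{1,\ldots,T\}$. Set $S_i=X_iZ_{i,1}\cdots Z_{i,\Theta_i-1}$ and $Y_i=Z_{i,\Theta_i}$; this uses $T\le\iter$, so the needed prefix is available. As in the proof of Theorem~\ref{thm:base-to-cot-full}, the pairs $(S_i,Y_i)$ are i.i.d. with $S_i\sim\mu_{g_\star,P}^{T}$ and $\mathbb E[Y_i\mid S_i]=p_{g_\star}(S_i)$. This is a well-specified bounded regression problem whose target $p_{g_\star}$ lies in $\mathcal A:=\{p_g:g\in\cF\}$.

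Next, apply Lemma~\ref{lem:fat-regression-upper} to $\mathcal A$ at accuracy $\eta:=\kappa\varepsilon/(C'T)$, where $C'$ is chosen below. Since $\fat_{\mathcal A}=\fat_{\cF}$ by definition, this uses
\[
C\,\frac{\fat_{\cF}(c\sqrt{\eta})\log^C(C/\eta)+\log(1/\delta)}{\eta}
\]
samples and yields $\widehat p$ with $\mathbb E_{S\sim\mu^T_{g_\star,P}}[(\widehat p(S)-p_{g_\star}(S))^2]\le\eta$ with probability at least $1-\delta$. Substituting $\eta$ and absorbing $C'$ into the universal constants gives the displayed bound: $c\sqrt\eta$ becomes $c\sqrt{\kappa\varepsilon/T}$ after shrinking $c$, and $\log(C/\eta)$ becomes $\log(CT/(\kappa\varepsilon))$. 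One trajectory is used per regression sample.

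Then clip $\widehat p$ to $[\kappa,1-\kappa]$. Because $p_{g_\star}(s)\in[\kappa,1-\kappa]$ by the margin assumption, clipping can only decrease the pointwise squared error. Define $\widehat g(s)(1)$ to be the clipped value. Both $\widehat g$ and $g_\star$ now have one-step probabilities in $[\kappa,1-\kappa]$, so Lemma~\ref{lem:kl-bridge-upper} applies, although $\widehat g$ need not lie in $\cF$. It gives $\overline{\KL}_T(g_\star\|\widehat g)\le C_\kappa T\eta$. Since $C_\kappa=\Theta(1/\kappa)$, choosing $C'$ larger than the implied constant makes this at most $\varepsilon$.

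The main point requiring care is this last constant bookkeeping. The lemma's constant $C_\kappa=\Theta(1/\kappa)$ must be universal times $1/\kappa$ so that $c,C$ in the proposition stay universal; the second bound of Lemma~\ref{lem:bernoulli-kl} should be checked to give $C_\kappa\le C_0/\kappa$. Two smaller conditions also need checking. Lemma~\ref{lem:fat-regression-upper} requires $\eta<1$, which holds since $\kappa<1/2$ and $\varepsilon<1$. It also requires $\delta<1/2$, which the hypotheses grant.
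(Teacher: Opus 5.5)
Your proposal is correct and follows the paper's proof essentially step for step. You use the uniform-time reduction to one-step regression under $\mu_{g_\star,P}^{T}$, apply Lemma~\ref{lem:fat-regression-upper} at squared-loss scale $\Theta(\kappa\varepsilon/T)$, clip to $[\kappa,1-\kappa]$, and conclude with Lemma~\ref{lem:kl-bridge-upper}; your check that $C_\kappa\le C_0/\kappa$, via $\KL(\Ber(p)\|\Ber(q))\le(p-q)^2/(q(1-q))$, is exactly what the paper's substitution $\eta=\varepsilon/(2C_\kappa T)=\Theta(\kappa\varepsilon/T)$ implicitly relies on.
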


	%
	%Equivalently, up to logarithmic factors,
	%\[
	%\widetilde O\!\left(
	%\frac{
		%	\fat_{\cF}\!\left(c\sqrt{\kappa\varepsilon/T}\right)
		%	+\log(1/\delta)}
	%{\kappa\varepsilon/T}
	%\right)
	%\]
	%trajectories suffice.
	
	\begin{proof}
		Given a full $\ct$ trajectory $(X,Z_1,\ldots,Z_\iter)$, choose a uniform time
		$\Theta\in\{1,\ldots,T\}$, set
		\[
		S=XZ_1\cdots Z_{\Theta-1},
		\qquad
		Y=Z_\Theta .
		\]
		For independent input trajectories, the resulting pairs $(S,Y)$ are i.i.d. and
		satisfy
		\[
		S\sim\mu_{g_\star,P}^{T},
		\qquad
		Y\mid S\sim\Ber(p_{g_\star}(S)).
		\]
		Thus these are valid base samples for the one-step probability class
		$\{p_g:g\in\cF\}$ under the distribution $\mu_{g_\star,P}^{T}$.
		
		Let $C_\kappa$ be the constant from Lemma~\ref{lem:kl-bridge-upper}, and set
		\[
		\eta:=\frac{\varepsilon}{2C_\kappa T}.
		\]
		Applying Lemma~\ref{lem:fat-regression-upper} to the one-step probability class
		gives, with probability at least $1-\delta$, a predictor
		$\widehat p:\Sigma^\star\to[0,1]$ such that
		\[
		\mathbb E_{S\sim\mu_{g_\star,P}^{T}}
		\left[
		\left(p_{g_\star}(S)-\widehat p(S)\right)^2
		\right]
		\le \eta
		\]
		using
		\[
		C\,
		\frac{
			\fat_{\cF}\!\left(c\sqrt{\eta}\right)\log^C(C/\eta)
			+\log(1/\delta)}
		{\eta}
		\]
		samples.
		
		Clip $\widehat p$ to the interval $[\kappa,1-\kappa]$ and define the generator
		$\widehat g$ by
		\[
		p_{\widehat g}(s)
		=
		\min\{1-\kappa,\max\{\kappa,\widehat p(s)\}\}.
		\]
		Since $p_{g_\star}(s)\in[\kappa,1-\kappa]$, clipping cannot increase the
		pointwise squared error. Therefore
		\[
		\mathbb E_{S\sim\mu_{g_\star,P}^{T}}
		\left[
		\left(p_{g_\star}(S)-p_{\widehat g}(S)\right)^2
		\right]
		\le \eta .
		\]
		By Lemma~\ref{lem:kl-bridge-upper},
		\[
		\overline{\KL}_T(g_\star\|\widehat g)
		\le
		C_\kappa T
		\mathbb E_{S\sim\mu_{g_\star,P}^{T}}
		\left[
		\left(p_{g_\star}(S)-p_{\widehat g}(S)\right)^2
		\right]
		\le
		C_\kappa T\eta
		\le
		\varepsilon .
		\]
		Finally, since $C_\kappa=\Theta(1/\kappa)$, substituting
		$\eta=\varepsilon/(2C_\kappa T)=\Theta(\kappa\varepsilon/T)$ into the sample
		bound and absorbing universal constants gives
		\[
		C\,
		\frac{
			\fat_{\cF}\!\left(c\sqrt{\kappa\varepsilon/T}\right)
			\log^C\!\left(C T/(\kappa\varepsilon)\right)
			+\log(1/\delta)}
		{\kappa\varepsilon/T}.
		\]
		This proves the proposition.
	\end{proof}
	
	\begin{remark}[On-policy versus off-policy]
		\label{rem:on-policy}
		All positive results above control the divergence $\KL(g_\star\|\widehat g)$,
		whose one-step decomposition (Lemma~\ref{lem:kl-chain-rule}) is weighted by the
		\emph{true} visitation measure $\mu_{g_\star,P}^{T}$. This is exactly the
		distribution from which base and $\ct$ samples are drawn, so the empirical
		objective is well specified and the guarantees are on-policy. Controlling the
		reverse divergence $\KL(\widehat g\|g_\star)$ (Remark~\ref{rem:kl-direction}), or
		sampling at a horizon $T>\iter$ beyond the training visitation, requires accuracy
		of $p_{\widehat g}$ on states that $g_\star$ visits rarely or not at all; this
		off-policy control does not follow from one-step closeness under
		$\mu_{g_\star,P}^{T}$ and is left to future work.
	\end{remark}
	
	\subsection{End-to-end supervision cannot sample (margin-free)}
	\label{subsec:kl-e2e}
	
	The positive results above use base or $\ct$ supervision. We now show that no
	analogous statement holds for $\etoe$ supervision. End-to-end samples reveal only
	the final token, hence only the scalar $q_{g_\star}^{\etoe-\iter}$, which does not
	determine the trajectory law. The obstruction does not rely on a margin; we state
	it \emph{under} a margin only to stress that it is not caused by degenerate
	probabilities.
	
	\begin{proposition}
		\label{prop:e2e-no-sampling}
		Fix any $\kappa\in(0,1/4]$ and any $\iter\ge2$. There is a finite stochastic
		generator class $\cF$ under the $\kappa$-margin assumption, a prompt
		distribution $P$, and a universal constant $c_0>0$ such that:
		\begin{enumerate}
			\item every $g\in\cF$ induces the same final-token predictor, so in particular
			$m_{\etoe}^{\cF,\iter}(\varepsilon)=0$ for every $\varepsilon>0$ .
			\item for every $\etoe$ learner, regardless of sample size, there exists
$g_\star\in\cF$ such that
$$
\Pr\!\left[
\overline{\KL}_2(g_\star\|\widehat g)>c_0
\right]\ge \frac12.
$$
		\end{enumerate}
	\end{proposition}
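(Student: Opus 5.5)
The plan is a two-point construction in which the intermediate token carries all the information about the generator while the final token carries none. Fix a single prompt $x$, take $P=\delta_x$, and use a class $\cF=\{g_+,g_-\}$ of two generators. Both generators set $p_{g_\pm}(x)=1/2\pm 1/4$; since $\kappa\le 1/4$, both values $1/4$ and $3/4$ lie in $[\kappa,1-\kappa]$. On every longer state $s=xz$ with $|z|\ge 1$, both generators use the same transition probability $1/2$. More generally, every state other than $x$ receives the value $1/2$ under both generators. This respects the $\kappa$-margin assumption.

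For the first item, observe that for $\iter\ge 2$ the $\iter$-th token is always drawn at a state strictly below $x$ when starting from $x$, so $q_{g_\pm}^{\etoe-\iter}(x)=1/2$. For any other prompt $y\ne x$, the trajectory may pass through the state $x$ only if $x$ is an extension of $y$. In that case the next token is drawn at $x$, with probability $1/2\pm1/4$, but I still need the final token to be generator-independent. To guarantee this, I would instead require that $x$ be unreachable: choose $x=b1$ with a blocker $b$, and set $p_{g_\pm}(b)=0$ exactly as in Lemma~\ref{lem:blocked-roots-common-tails}. The blocker only forces certain tokens, so the margin is violated only at $b$. To keep the margin everywhere, I would avoid blockers and argue directly instead. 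For a prompt $y$ that is a strict prefix of $x$, the trajectory reaches $x$ with some generator-independent probability, because transitions before $x$ are $1/2$ for both generators. If it reaches $x$ at step $t<\iter$, the final token is drawn later at a state below $x$ with probability $1/2$. If it reaches $x$ at step $\iter$ itself, which happens only when $|x|-|y|=\iter$, the final token is drawn at $x$ and the two generators do differ. I would fix this by making both generators put probability $1/2$ at $x$ in the first coordinate and moving the sign dependence one level down: set $p_{g_\pm}(x1)=1/2\pm1/4$, with $p(x0)=1/2$ and all other states $1/2$. Then for any prompt the final token is drawn at a state where the two generators agree, except possibly at $x1$ for prompts $y$ with $x1$ reachable at exactly the last step. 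Handling this residual case is the main obstacle. The cleanest resolution is to let the difference sit in the second token from $x$ while the final token from every prompt remains $1/2$: make all transitions after depth $|x|+2$ equal to $1/2$, and choose $x$ so that every state of the form $x1$ is drawn at a non-final step. This is impossible for prompts $y=x$ truncated appropriately when $\iter$ is large. So, instead of forcing equality, I would accept a finite class in which $q^{\etoe-\iter}$ agrees on $\operatorname{supp}P$ only. That would give $m_{\etoe}=0$ for this $P$ but not uniformly over $P$, which the statement does not allow. The honest fallback is a symmetric construction: make the intermediate bit differ while the law of the final bit is an average that is identical. For example, $g_\pm$ have the tokens $Z_1,Z_2,\dots$ form a stationary two-state Markov chain with transition matrices sharing the stationary law $(1/2,1/2)$, one with persistence $3/4$ and one with persistence $1/4$, depending only on the last token. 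Then from every prompt, after one step, the marginal of each token is $1/2$ provided the first step uses probability $1/2$. Hence $q^{\etoe-\iter}\equiv 1/2$ for $\iter\ge2$ and for every prompt, and all probabilities lie in $\{1/4,1/2,3/4\}$.

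For the second item, with $P=\delta_x$, the $\etoe$ data have the same law under both generators, so the output $\widehat g$ has the same law under $g_+$ and $g_-$. By Lemma~\ref{lem:kl-chain-rule}, $\overline{\KL}_2(g_\pm\|\widehat g)\ge \frac12\KL(\Ber(p_{g_\pm}(xZ_1))\|\Ber(p_{\widehat g}(xZ_1)))$ averaged over $Z_1$. By Lemma~\ref{lem:kl-bridge-lower} this is at least $(p_{g_\pm}-p_{\widehat g})^2$ at the relevant states. Since the two persistences differ by $1/2$, for each fixed $\widehat g$ at least one of $g_\pm$ gives squared gap at least $1/16$ at every reached state, so its trajectory KL is at least a constant $c_0$. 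Averaging over the common law of $\widehat g$, one of the two targets suffers KL larger than $c_0$ with probability at least $1/2$.
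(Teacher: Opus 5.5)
There is a genuine gap: your final construction does not satisfy item~1, and item~1 is where the whole difficulty lies. A generator is a single map on $\Sigma^\star$ and cannot tell prompt bits from generated bits. The first token from a prompt $y$ is therefore drawn at the state $y$, with the same probability the generator uses whenever $y$ is reached as an intermediate state. So the proviso ``the first step uses probability $1/2$'' cannot be met by a chain that depends only on the last bit. Demanding it at every prompt forces $p\equiv 1/2$ on every state, and then $g_+=g_-$.

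Concretely, take $p_{g_+}(s)=3/4$ or $1/4$ according as the last bit of $s$ is $1$ or $0$, and $p_{g_-}$ reversed. From a prompt ending in bit $b$ you get $\Pr(Z_t=b)=\frac12+\frac12\lambda^t$, with $\lambda=1/2$ under $g_+$ and $\lambda=-1/2$ under $g_-$. Hence $q^{\etoe-M}\not\equiv 1/2$. For every odd $M$ the two final-token predictors differ, e.g.\ $9/16$ versus $7/16$ at $M=3$; they happen to coincide for even $M$, but not for the reason you give. This breaks item~1 for all odd horizons. It also breaks your item~2 claim that the $\etoe$ data have the same law under $P=\delta_x$, unless $x$ is the empty prompt.

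The missing idea is to perturb both children of one state antisymmetrically, which is what the paper does. Keep $p(x)=1/2$ and every other state at $1/2$, and set $p_{g_\pm}(x1)=\frac12\pm\Delta$ and $p_{g_\pm}(x0)=\frac12\mp\Delta$. The paper takes the values $1-\kappa$ and $\kappa$; $\Delta=1/4$ also respects the margin. This is exactly the repair your one-child attempt needed, and it closes the residual case you correctly identified:
\begin{itemize}
\item If the final token is drawn at $x0$ or $x1$, the previous token was a fair coin drawn at $x$. The state $x$ is reached with generator-independent probability, since all strict prefixes carry $1/2$. So the conditional final-token probability is $\frac12\bigl(p(x0)+p(x1)\bigr)=\frac12$ under both generators.
\item If the prompt is $x0$ or $x1$, the final token is drawn strictly below it, because $M\ge 2$.
\end{itemize}
Hence $q^{\etoe-M}\equiv\frac12$ for every prompt and every $M\ge2$, while the two-step laws from $x$ are at total variation $2\Delta$.

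Your item~2 argument then goes through, with one small correction. Do not assert that one fixed $g_\pm$ is far ``at every reached state''; which of the two is far can change between $x0$ and $x1$. Instead, bound $\overline{\KL}_2(g_+\|\widehat g)+\overline{\KL}_2(g_-\|\widehat g)$ from below, or use Pinsker plus the triangle inequality in total variation as the paper does.
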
 Since the witnessing class satisfies the margin, dropping the margin assumption
	only enlarges the family of admissible classes and preserves the impossibility;
	the margin makes the statement stronger, not weaker.
	
	\begin{proof}
		Fix a single prompt $x$ and let $P$ be the point mass at $x$. Only the first two
		transitions from $x$ matter. All other transition probabilities are fixed to
		$1/2$, independently of the class index.
		
		Define two generators by
		\[
		g_A:\quad p(x)=\tfrac12,\qquad p(x1)=1-\kappa,\qquad p(x0)=\kappa,
		\]
		and
		\[
		g_B:\quad p(x)=\tfrac12,\qquad p(x1)=\kappa,\qquad p(x0)=1-\kappa.
		\]
		All transition probabilities lie in $[\kappa,1-\kappa]$, so
		$\cF:=\{g_A,g_B\}$ satisfies the $\kappa$-margin assumption.
		
		The two-step final-token probabilities coincide:
		\[
		q_{g_A}^{\etoe-2}(x)
		=
		\tfrac12(1-\kappa)+\tfrac12\kappa
		=
		\tfrac12
		=
		\tfrac12\kappa+\tfrac12(1-\kappa)
		=
		q_{g_B}^{\etoe-2}(x).
		\]
In fact, a direct check gives $q_{g_A}^{\etoe-2}(s)=q_{g_B}^{\etoe-2}(s)=1/2$ for every prompt $s$; the recursion for $q^{\etoe-t}$ then gives the same equality for every $t\ge2$. Hence every $\etoe$ sample has the same law under the two targets.
		
		It remains to show that the trajectory laws are separated. The length-$2$
		trajectory law under $g_A$ is
		\[
		\mathbb P_{g_A,x}^{2}(1,1)=\tfrac12(1-\kappa),\qquad
		\mathbb P_{g_A,x}^{2}(1,0)=\tfrac12\kappa,
		\]
		\[
		\mathbb P_{g_A,x}^{2}(0,1)=\tfrac12\kappa,\qquad
		\mathbb P_{g_A,x}^{2}(0,0)=\tfrac12(1-\kappa),
		\]
		whereas under $g_B$ the two diagonal probabilities and the two off-diagonal
		probabilities are swapped:
		\[
		\mathbb P_{g_B,x}^{2}(1,1)=\tfrac12\kappa,\qquad
		\mathbb P_{g_B,x}^{2}(1,0)=\tfrac12(1-\kappa),
		\]
		\[
		\mathbb P_{g_B,x}^{2}(0,1)=\tfrac12(1-\kappa),\qquad
		\mathbb P_{g_B,x}^{2}(0,0)=\tfrac12\kappa.
		\]
		Thus both laws have full support. Their total variation distance is
		\[
		\TV(\mathbb P_{g_A,x}^{2},\mathbb P_{g_B,x}^{2})
		=
		1-2\kappa
		\ge \tfrac12 .
		\]
		
		Fix any possible learner output $\widehat g$, and let
		$Q=\mathbb P_{\widehat g,x}^{2}$. If both
		\[
		\KL(\mathbb P_{g_A,x}^{2}\|Q)<\eta
		\qquad\text{and}\qquad
		\KL(\mathbb P_{g_B,x}^{2}\|Q)<\eta,
		\]
		then Pinsker's inequality gives
		\[
		\TV(\mathbb P_{g_A,x}^{2},Q)\le\sqrt{\eta/2},
		\qquad
		\TV(\mathbb P_{g_B,x}^{2},Q)\le\sqrt{\eta/2}.
		\]
		By the triangle inequality,
		\[
		\TV(\mathbb P_{g_A,x}^{2},\mathbb P_{g_B,x}^{2})
		\le
		2\sqrt{\eta/2}.
		\]
		Taking any universal $\eta<1/8$ contradicts
		$\TV(\mathbb P_{g_A,x}^{2},\mathbb P_{g_B,x}^{2})\ge1/2$. Therefore, for every
		possible output $\widehat g$, at least one of the two targets satisfies
		\[
		\overline{\KL}_2(g_\star\|\widehat g)
		=
		\KL(\mathbb P_{g_\star,x}^{2}\|\mathbb P_{\widehat g,x}^{2})
		\ge \eta .
		\]
		
		Since the learner has the same output distribution under $g_A$ and $g_B$, the
		two failure probabilities sum to at least $1$. Hence at least one of the two
		targets has failure probability at least $1/2$. 
		Taking $c_0=\eta/2$ proves the claim.
		\end{proof}
	
	\begin{remark}
		The class in Proposition~\ref{prop:e2e-no-sampling} is trivial for every
		squared-loss objective of the paper: base, $\ct$, and $\etoe$ final-token
		learning all have sample complexity $O(1)$ or $0$. The separation is created
		entirely by the change of objective from the final-token scalar to the
		trajectory law, and it is information-theoretic rather than a matter of rate:
		$\etoe$ supervision provides \emph{no} information about the generative path.
		Thus the ability to sample with small KL is a genuine feature of base and $\ct$
		supervision.
	\end{remark}
	
	\subsection{Lower bounds for KL-sampling (margin-free)}
	\label{subsec:kl-sampling-lower}
	
	The upper bounds of Section~\ref{subsec:kl-upper} were obtained by instantiating
	the paper's squared-loss upper bounds through the (margin-dependent) upper half
	of the bridge. The lower bounds are different: they cannot be inherited from the
	squared-loss theory, because the classes that make KL-sampling hard are often
	\emph{trivial} for every scalar objective of the paper, and they need no margin,
	because their hard instances live at transition probabilities near $\tfrac12$,
	where KL and squared loss are comparable regardless of any margin. We imitate the
	Assouad-type technique used for the paper's lower bounds
	(Lemma~\ref{lem:assouad-probability}), replacing only its final, loss-specific
	step: instead of ``a wrong sign costs squared loss $\Theta(\varepsilon)$'' we use
	``a wrong sign costs trajectory-KL $\Theta(\varepsilon)$'', quantified through
	the chain rule of Lemma~\ref{lem:kl-chain-rule}.
	
	The key device is a \emph{decoupled root}: a state at which a hidden sign is
	placed symmetrically about the final token, so that the induced final-token
	probability is independent of the sign, while the trajectory law is not. On such
	a root the scalar targets carry no signal, but the sampling objective does.
	
	\begin{lemma}
		\label{lem:decoupled-root}
		Let $0<\Delta\le1/2$. Consider a root $r$ with
		\[
		p_{g_\sigma}(r)=\tfrac12,
		\qquad
		p_{g_\sigma}(r1)=\tfrac12+\sigma\Delta,
		\qquad
		p_{g_\sigma}(r0)=\tfrac12-\sigma\Delta,
		\qquad \sigma\in\{-1,1\},
		\]
		and $p_{g_\sigma}(rz)=\tfrac12$ for every longer suffix $z$. Then, for every
		horizon $\iter\ge2$, the final-token probability is sign-independent,
		\[
		q_{g_\sigma}^{\etoe-\iter}(r)=\tfrac12,
		\]
		whereas the length-$2$ trajectory laws from $r$ satisfy, for the two signs,
		\[
		\KL\!\left(\mathbb P_{g_{+1},r}^{2}\,\middle\|\,\mathbb P_{g_{-1},r}^{2}\right)
		\;\ge\; 2\Delta^2 .
		\]
		If one additionally wishes the enclosing class to satisfy the $\kappa$-margin
		assumption, it suffices to take $\Delta\le\tfrac12-\kappa$; the identities above
		do not otherwise depend on any margin.
	\end{lemma}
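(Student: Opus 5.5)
The plan has two parts: a direct computation of the final-token probability, and a chain-rule bound on the trajectory KL. Both use only the definitions and Lemma~\ref{lem:kl-chain-rule} together with the Pinsker-type bound of Lemma~\ref{lem:kl-bridge-lower}.

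\emph{Final-token probability.} For $\iter=2$ we condition on the first bit, which is $\Ber(1/2)$. This gives
\[
q_{g_\sigma}^{\etoe-2}(r)=\tfrac12\left(\tfrac12+\sigma\Delta\right)+\tfrac12\left(\tfrac12-\sigma\Delta\right)=\tfrac12 .
\]
For $\iter\ge3$ the $\iter$-th token is generated at a state $rz$ with $|z|=\iter-1\ge2$. By definition every such state has transition probability $1/2$, so the final bit is $\Ber(1/2)$ regardless of the history. Hence $q_{g_\sigma}^{\etoe-\iter}(r)=1/2$ for every $\iter\ge2$ and both signs. We should check that the phrase ``every longer suffix'' covers exactly the suffixes with $|z|\ge2$; that is the reading used here.

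\emph{KL lower bound.} We apply the chain rule of Lemma~\ref{lem:kl-chain-rule} at horizon $2$ in its unnormalized sum form. The first-step term vanishes because both signs give $\Ber(1/2)$ at $r$. The second-step term averages over $Z_1\sim\Ber(1/2)$:
\[
\tfrac12\,\KL\!\left(\Ber(\tfrac12+\Delta)\,\middle\|\,\Ber(\tfrac12-\Delta)\right)+\tfrac12\,\KL\!\left(\Ber(\tfrac12-\Delta)\,\middle\|\,\Ber(\tfrac12+\Delta)\right).
\]
By Lemma~\ref{lem:kl-bridge-lower}, each Bernoulli KL here is at least $2(2\Delta)^2=8\Delta^2$. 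So the total is at least $8\Delta^2\ge2\Delta^2$. The only care needed is that $\Delta\le1/2$ keeps the parameters in $[0,1]$. If $\Delta=1/2$ some KL terms may be $+\infty$, which is still consistent with the bound.

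\emph{Margin remark.} If $\Delta\le 1/2-\kappa$, then $1/2\pm\Delta\in[\kappa,1-\kappa]$. The other transitions equal $1/2$, which also lies in $[\kappa,1-\kappa]$ since $\kappa<1/2$. Therefore the root's transitions respect the margin. The computations above never use the margin.

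No step is a real obstacle. The only point needing attention is the indexing check in the final-token argument: that for $\iter\ge3$ the last transition falls at suffix length at least $2$.
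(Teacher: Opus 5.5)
Your proposal is correct and follows the paper's proof essentially verbatim. For the final token you condition on the first bit when $M=2$ and note that the last transition is $1/2$ when $M\ge3$. For the KL bound you apply the horizon-$2$ chain rule: the first-step term vanishes, and the Pinsker bound $\KL(\Ber(p)\|\Ber(q))\ge2(p-q)^2$ gives $8\Delta^2\ge2\Delta^2$.
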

	
	\begin{proof}
		For the final token, conditioning on the first step gives
		$q_{g_\sigma}^{\etoe-2}(r)=\tfrac12(\tfrac12+\sigma\Delta)
		+\tfrac12(\tfrac12-\sigma\Delta)=\tfrac12$; for $\iter>2$ all subsequent
		transitions are $\tfrac12$, which preserves the value $\tfrac12$. For the
		trajectory KL, apply the chain rule of Lemma~\ref{lem:kl-chain-rule} at $T=2$:
		the first step is identical ($p=\tfrac12$) under both signs and contributes $0$,
		and the second step contributes
		$\tfrac12\KL(\Ber(\tfrac12+\Delta)\|\Ber(\tfrac12-\Delta))
		+\tfrac12\KL(\Ber(\tfrac12-\Delta)\|\Ber(\tfrac12+\Delta))$, since the first bit
		is $1$ or $0$ each with probability $\tfrac12$. By the elementary bound
		$\KL(\Ber(p)\|\Ber(q))\ge 2(p-q)^2$ (Pinsker together with
		$\TV(\Ber(p),\Ber(q))=|p-q|$), each of these two Bernoulli KL terms is at least
		$2(2\Delta)^2=8\Delta^2$, so their average is at least $8\Delta^2\ge 2\Delta^2$.
		This gives the stated inequality.
	\end{proof}
	
	\begin{theorem}[KL-sampling lower bound under $\ct$ supervision; margin-free]
		\label{thm:kl-sampling-lower-cot}
		There is a universal constant $c>0$ such that for every integer $K\ge1$, every
		horizon $\iter\ge2$, and every fixed $0<\kappa\le1/4$, there is
		$\varepsilon_0>0$ such that for every $0<\varepsilon\le\varepsilon_0$, there is a
		finite stochastic generator class $\cF$ (which may be taken to satisfy the
		$\kappa$-margin assumption, but need not) for which
		\[
		m_{\etoe}^{\cF,\iter}(\varepsilon)
		=
		m_{\ct}^{\cF,\iter}(\varepsilon)
		=0
		\qquad\text{\emph{(scalar final-token targets)}},
		\]
		yet any $\ct$-supervised learner that outputs $\widehat g$ with
		$\overline{\KL}_2(g_\star\|\widehat g)\le\varepsilon$ uniformly over
		$g_\star\in\cF$ requires at least $cK/\varepsilon$ trajectories.
	\end{theorem}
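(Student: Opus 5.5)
We build $\cF$ as a disjoint union of $K$ decoupled roots in the sense of Lemma~\ref{lem:decoupled-root}. Choose roots $r_1,\ldots,r_K$ with distinct blockers, so that their depth-$\iter$ subtrees are pairwise disjoint. Index generators by $\sigma\in\{-1,1\}^K$. At each root set $p_{g_\sigma}(r_k)=1/2$, $p_{g_\sigma}(r_k1)=1/2+\sigma_k\Delta$ and $p_{g_\sigma}(r_k0)=1/2-\sigma_k\Delta$, with $\Delta=a\sqrt\varepsilon$ for a large universal constant $a$. All other states (blockers excepted) get probability $1/2$. For $\varepsilon\le\varepsilon_0$ we have $\Delta\le 1/2-\kappa$, so the class can be made to satisfy the $\kappa$-margin assumption (after also setting blockers to $\kappa$ rather than $0$, or simply not requiring the margin there).

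\emph{Step 1: the scalar targets are trivial.} By Lemma~\ref{lem:decoupled-root}, $q^{\etoe-\iter}_{g_\sigma}(r_k)=1/2$ for every $\sigma$. For a non-root prompt $x$, we use the recursion $q^{\etoe-\iter}(x)=\sum_z p(z\mid x)\,q^{\etoe-(\iter-1)}(xz)$. Every strict descendant of $r_k$ has $M$-step value $1/2$: from $r_k z$ either the next transition is the sign transition, whose average against a child value of $1/2$ is $1/2$, or all later steps are $1/2$. Lemma~\ref{lem:blocked-roots-common-tails}, with $\alpha$ constant, then shows that the whole final-token function is independent of $\sigma$. Hence the zero-sample learner that outputs this common function is exact, and $m_\etoe=m_\ct=0$.

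\emph{Step 2: Assouad on trajectory samples.} Let $P$ be uniform on the roots. Flipping $\sigma_k$ changes a $\ct$ trajectory's law only when $X=r_k$, which happens with probability $1/K$. In that case only the second transition changes, so by the chain rule and Lemma~\ref{lem:bernoulli-kl} the per-trajectory adjacent KL is at most $C\Delta^2/K=Ca^2\varepsilon/K$. Summing over $n\le cK/\varepsilon$ trajectories gives total adjacent KL at most $1/50$ once $c$ is small relative to $a$. Lemma~\ref{lem:assouad-probability} then gives probability at least $31/80$ that any sign estimator makes at least $K/16$ errors.

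\emph{Step 3: decoding signs from $\widehat g$.} This is the main obstacle: we must turn small $\overline{\KL}_2$ into correct signs without a margin on $\widehat g$. At root $r_k$ the chain rule gives
\[
\KL_2(g_\sigma\|\widehat g;r_k)\ge \tfrac12\KL\bigl(\Ber(\tfrac12+\sigma_k\Delta)\|\Ber(p_{\widehat g}(r_k1))\bigr).
\]
By Lemma~\ref{lem:kl-bridge-lower}, which needs no margin, this is at least $(\tfrac12+\sigma_k\Delta-p_{\widehat g}(r_k1))^2$. Decode $\widehat\sigma_k=\mathrm{sign}(p_{\widehat g}(r_k1)-1/2)$, with ties broken arbitrarily. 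A wrong sign forces squared distance at least $\Delta^2$, hence $\KL_2(\cdot;r_k)\ge a^2\varepsilon$. On the event of at least $K/16$ errors, averaging under $P$ gives $\overline{\KL}_2\ge a^2\varepsilon/16>\varepsilon$ once $a>4$. Since this event has probability greater than $1/3$ on average over $\sigma$, some fixed $\sigma$ makes the learner fail with probability greater than $1/3$. Therefore at least $cK/\varepsilon$ trajectories are required.
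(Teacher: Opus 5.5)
Your proof is correct and follows the paper's argument: the same $K$ decoupled roots with $\Delta\asymp\sqrt\varepsilon$, the same Assouad bound with adjacent KL $O(\Delta^2/K)$, and the same conversion of sign errors into trajectory KL. Your decoder (threshold $p_{\widehat g}(r_k1)$ at $1/2$, keep one chain-rule term, apply Bernoulli Pinsker) is a more explicit version of the paper's nearest-candidate-in-total-variation decoding. The only wrinkle is the blockers, which the paper does not use and you do not need either: since $q^{\etoe-t}_{g_\sigma}(r_k)=1/2$ for every $t\ge1$, you can set them to $1/2$ (or $\kappa$) in the margin version and check $\sigma$-independence by the direct recursion, whereas Lemma~\ref{lem:blocked-roots-common-tails} as stated requires $p_{g_\theta}(b_r)=0$.
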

	
	\begin{proof}
		Choose $K$ distinct roots $r_1,\ldots,r_K$ of one common length, so their
		descendant subtrees are disjoint. For each
		$\sigma\in\{-1,1\}^K$, define $g_\sigma$ by placing the decoupled-root gadget
		of Lemma~\ref{lem:decoupled-root} at $r_j$ with sign $\sigma_j$, and set every
		other one-step probability to $1/2$. Let
		\[
		\Delta=A\sqrt{\varepsilon},
		\]
		where $A>0$ is a sufficiently large universal constant. For all sufficiently
		small $\varepsilon$, we have $\Delta\le1/2$; if the margin form is desired for a
		fixed $\kappa\le1/4$, take $\varepsilon$ small enough that
		$\Delta\le1/2-\kappa$. Then every one-step probability of every generator lies
		in $[\kappa,1-\kappa]$.
		
		We claim that, for every prompt $x$ and every horizon $\iter\ge2$,
		\[
		q_{g_\sigma}^{\etoe-\iter}(x)=\tfrac12
		\]
		independently of $\sigma$. A trajectory can visit at most one of the equal-length
		roots. If it visits no root, all of its transitions have probability $1/2$. If
		it visits a root, then either the final token is generated before the
		sign-dependent second transition, in which case its probability is $1/2$; it is
		generated at that transition, in which case the two children cancel exactly as
		in Lemma~\ref{lem:decoupled-root}; or it is generated later, in which case all
		subsequent transitions again have probability $1/2$. Thus the scalar
		final-token class is a singleton, and
		\[
		m_{\etoe}^{\cF,\iter}(\varepsilon)
		=
		m_{\ct}^{\cF,\iter}(\varepsilon)
		=
		0
		\]
		for the scalar final-token targets.
		
		For the KL-sampling lower bound, put the prompt distribution uniform on
		$r_1,\ldots,r_K$ and draw $\sigma$ uniformly from $\{-1,1\}^K$. Flipping
		$\sigma_j$ changes the trajectory law only when the prompt is $r_j$, an event of
		probability $1/K$. Conditional on that event, the two length-$\iter$ trajectory
		laws differ only at the sign-dependent second step. Since both Bernoulli
		parameters are $\tfrac12\pm\Delta$, the adjacent one-sample KL is at most
		\[
		C\frac{\Delta^2}{K}
		=
		C\frac{A^2\varepsilon}{K}
		\]
		for a universal constant $C$. Hence, if
		\[
		n\le c\frac{K}{A^2\varepsilon},
		\]
		with $c>0$ sufficiently small, the total adjacent KL is below the constant
		required by Lemma~\ref{lem:assouad-probability}. Therefore every estimator of
		the sign vector leaves a constant fraction of signs wrong with probability
		greater than $1/3$.
		
		It remains to convert sign errors into trajectory-KL loss. Fix any output
		$\widehat g$. For each root $r_j$, compare the length-$2$ law generated by
		$\widehat g$ from $r_j$ to the two candidate laws corresponding to
		$\sigma_j=+1$ and $\sigma_j=-1$, and estimate $\sigma_j$ by choosing the nearer
		candidate in total variation. If this estimate is wrong, then the law generated
		by $\widehat g$ is closer to the wrong candidate than to the true one. Since the
		two candidate laws are separated in total variation by at least $c\Delta$, the
		triangle inequality implies that
		\[
		\TV(\mathbb P_{g_\sigma,r_j}^{2},\mathbb P_{\widehat g,r_j}^{2})
		\ge c'\Delta
		\]
		for a universal constant $c'>0$. Pinsker's inequality in the form
		$\KL(P\|Q)\ge 2\TV(P,Q)^2$ gives
		\[
		\KL(\mathbb P_{g_\sigma,r_j}^{2}\|\mathbb P_{\widehat g,r_j}^{2})
		\ge c''\Delta^2
		=
		c''A^2\varepsilon .
		\]
		Under the uniform prompt distribution, a constant fraction of wrong signs
		therefore forces
		\[
		\overline{\KL}_2(g_\sigma\|\widehat g)
		\ge c'''A^2\varepsilon .
		\]
		Choose the universal constant $A$ large enough that $c'''A^2>1$. Then the loss
		exceeds $\varepsilon$. Averaging over the random signs yields a fixed sign vector
		$\sigma$ on which the learner fails with probability greater than $1/3$. Thus
		any $\ct$-supervised learner that achieves
		$\overline{\KL}_2(g_\star\|\widehat g)\le\varepsilon$ uniformly requires
		\[
		n\ge c'\frac{K}{\varepsilon},
		\]
		after absorbing the fixed factor $A^2$ into the universal constant.
	\end{proof}
	
	Since every scalar sample complexity in Theorem~\ref{thm:kl-sampling-lower-cot}
	is zero, the bound $\Omega(K/\varepsilon)$ is not a consequence of any
	squared-loss result in the paper; it is obtained only by imitating the lower-bound
	technique with the trajectory-KL loss conversion, and it uses no margin. The same
	construction gives the sharpest form of the $\etoe$ impossibility.
	
	\begin{theorem}
		\label{thm:kl-sampling-e2e-zero-info}
		For the classes $\cF$ constructed in Theorem~\ref{thm:kl-sampling-lower-cot},
		under the uniform prompt distribution on their roots, there is a universal
		constant $c_0'>0$ such that the $\etoe$ observation law is $\Ber(\tfrac12)$ from
		every root under every sign. Hence the adjacent KL between neighboring $\etoe$
		data laws is exactly $0$, and no $\etoe$-supervised learner---using any finite
		number of samples---can output $\widehat g$ with
		$\overline{\KL}_2(g_\star\|\widehat g)\le c_0'\varepsilon$ uniformly over
		$g_\star\in\cF$.
	\end{theorem}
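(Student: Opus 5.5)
The argument reuses the decoupled-root classes $\cF$ of Theorem~\ref{thm:kl-sampling-lower-cot} unchanged, and exploits the fact that their final-token law is sign-independent at every root. First we check that the $\etoe$ observation law is $\Ber(\tfrac12)$ from every root $r_j$ under every sign vector $\sigma$. This is exactly the claim proved inside Theorem~\ref{thm:kl-sampling-lower-cot}: $q_{g_\sigma}^{\etoe-\iter}(x)=\tfrac12$ for every prompt $x$ and every $\iter\ge2$, which itself follows from Lemma~\ref{lem:decoupled-root}. It follows that the law of an $\etoe$ sample $(X,Z_\iter)$, with $X$ uniform on $r_1,\ldots,r_K$, is $\mathrm{Unif}\otimes\Ber(\tfrac12)$ for every $\sigma$. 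Hence the $n$-sample data law is identical for all $\sigma$, and the adjacent KL is exactly $0$.

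Next we convert sign error into loss, as in Theorem~\ref{thm:kl-sampling-lower-cot}. Fix any output $\widehat g$. For each $j$, the two candidate length-$2$ laws from $r_j$ (signs $\pm1$) are at total variation distance at least $c\Delta$; concretely it equals $\tfrac12\cdot 2\cdot 2\Delta\cdot\ldots=2\Delta$ up to constants, computed from the second-step Bernoulli parameters $\tfrac12\pm\Delta$. By the triangle inequality and Pinsker, for each $j$ at least one sign $s\in\{\pm1\}$ satisfies
\[
\KL\!\left(\mathbb P^2_{g_{s},r_j}\,\middle\|\,\mathbb P^2_{\widehat g,r_j}\right)\ge c''\Delta^2=c''A^2\varepsilon,
\]
where $g_s$ denotes the gadget with sign $s$ at $r_j$. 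We define $c_0'$ as a small constant below $c''A^2/2$, say $c_0'=c''A^2/4$.

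The conclusion is an averaging argument, which is easier than Assouad here because the data carries zero information. Draw $\sigma$ uniformly. Since the learner's output distribution does not depend on $\sigma$, $\widehat g$ is independent of $\sigma$. For each $j$, conditional on $\widehat g$, the event that $\sigma_j$ is the bad sign at $r_j$ therefore has probability at least $1/2$. The number of bad roots is then at least $K/2$ in expectation, and by Lemma~\ref{lem:hamming-expectation-to-probability}-type reasoning (or simply because this count is symmetric Binomial-dominated) it is at least $K/4$ with probability bounded below by a constant. On that event
\[
\overline{\KL}_2(g_\sigma\|\widehat g)\ge \frac14 c''A^2\varepsilon > c_0'\varepsilon
\]
after adjusting constants. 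Averaging over $\sigma$ then yields a fixed $\sigma$ for which the learner fails with constant probability, and this holds for any sample size.

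The only delicate step is the probability bound on the number of bad roots. To avoid it, we can instead take the expectation directly: $\mathbb E\,\overline{\KL}_2\ge \tfrac12 c''A^2\varepsilon$. Combined with the uniform guarantee $\overline{\KL}_2\le c_0'\varepsilon$ holding with probability $\ge2/3$, this gives a contradiction, but the KL may be infinite on the failure event, so the expectation argument needs care. I would therefore use the Hamming-count route. Alternatively, I would present it per root: each root independently has bad sign with probability $1/2$ given $\widehat g$, so a Chernoff bound on the count, together with $K\ge1$ handled by taking the threshold $K/4$ rounded, settles it.
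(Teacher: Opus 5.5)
Your proposal is correct and follows the paper's proof. The steps match: identical $\etoe$ data laws make $\widehat g$ independent of $\sigma$, Pinsker plus the triangle inequality force at least one sign per root to cost $\Omega(A^2\varepsilon)$ in rootwise length-$2$ KL, and the count of bad roots dominates $\mathrm{Bin}(K,1/2)$; you were right to drop the expectation route, since unbounded KL on the failure event defeats it. To get failure probability strictly above $1/3$, as the PAC contradiction requires, use $\Pr[\mathrm{Bin}(K,1/2)\ge K/2]\ge 1/2$, or apply Lemma~\ref{lem:hamming-expectation-to-probability} with threshold $K/16$ and choose $c_0'<c''A^2/16$.
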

	
	\begin{proof}
		Use the class and the uniform prompt distribution on the roots
		$r_1,\ldots,r_K$ from Theorem~\ref{thm:kl-sampling-lower-cot}. For every root
		$r_j$, Lemma~\ref{lem:decoupled-root} gives
		\[
		q_{g_\sigma}^{\etoe-2}(r_j)=\tfrac12
		\]
		for both values of $\sigma_j$. Therefore an $\etoe$ sample from $r_j$ has law
		$\Ber(\tfrac12)$, independently of $\sigma_j$. Since this holds at every root,
		the joint law of any finite number of $\etoe$ samples is identical for all sign
		vectors $\sigma\in\{-1,1\}^K$. Thus the learner's output distribution is
		independent of the hidden sign vector.
		
		We now show that the sampling objective still determines the signs up to
		constant error. Fix a root $r_j$ and keep all signs except $\sigma_j$ fixed. Let
		$P_{j,+}$ and $P_{j,-}$ be the two length-$2$ trajectory laws from $r_j$
		corresponding to $\sigma_j=+1$ and $\sigma_j=-1$. In the construction of
		Theorem~\ref{thm:kl-sampling-lower-cot}, these two laws are separated in total
		variation by at least $c\Delta$, where $\Delta=A\sqrt{\varepsilon}$. Hence, for
		any candidate trajectory law $Q$, at least one of
		$\TV(P_{j,+},Q)$ and $\TV(P_{j,-},Q)$ is at least $c\Delta$. By Pinsker's
		inequality, for every $Q$,
		\[
		\max\{\KL(P_{j,+}\|Q),\KL(P_{j,-}\|Q)\}
		\ge c'\Delta^2
		=
		c'A^2\varepsilon .
		\]
		
		Condition on any possible learner output $\widehat g$. For each root $r_j$, at
		least one of the two signs gives rootwise trajectory KL at least
		$c'A^2\varepsilon$. If the hidden sign vector is drawn uniformly from
		$\{-1,1\}^K$, then, conditional on the learner output, these bad sign choices
		occur independently across $j$, each with probability at least $1/2$. Therefore,
		with constant probability over the random sign vector, at least a constant
		fraction of roots contribute at least $c'A^2\varepsilon$ to the rootwise
		trajectory KL. Under the uniform prompt distribution on the roots, this implies
		\[
		\overline{\KL}_2(g_\sigma\|\widehat g)\ge c''A^2\varepsilon
		\]
		with constant probability.
		
		Choose the universal constant $c_0'>0$ smaller than $c''A^2$. Then the guarantee
		\[
		\overline{\KL}_2(g_\sigma\|\widehat g)\le c_0'\varepsilon
		\]
		fails with constant probability under a uniformly random sign vector. Hence
		there exists a fixed sign vector $\sigma$ on which the learner fails with
		probability greater than $1/3$. Since the $\etoe$ observation law is identical
		for all signs, this holds for every finite sample size.
	\end{proof}
	
	\begin{remark}
		\label{rem:truncated-cot}
		The lower bound of Theorem~\ref{thm:kl-sampling-lower-cot} is $\Omega(K/\varepsilon)$,
		with no horizon factor. A horizon-amplified bound $\Omega(KT/\varepsilon)$ is
		obtainable, but only in a supervision model that decouples the observed trajectory
		length from the evaluation length. Concretely, suppose the hidden sign acts on
		\emph{every} step of an unabsorbed chain, so that the length-$T$ trajectory KL of
		a wrong sign is $\Theta(T\Delta^2)$, while the learner observes only length-$m_0$
		prefixes (``truncated $\ct$''). Then the observed adjacent KL is
		$\Theta(m_0\Delta^2/K)$ but the objective at evaluation length $T$ is
		$\Theta(T\Delta^2)$; setting $T\Delta^2=\Theta(\varepsilon)$ gives an Assouad bound
		of $\Omega\!\big(KT/(m_0\varepsilon)\big)$, i.e.\ $\Omega(KT/\varepsilon)$ for
		$m_0=O(1)$. Under standard $\ct$ supervision one has $m_0=T$ and the horizon
		cancels, recovering $\Omega(K/\varepsilon)$. Thus the extra factor $T$ is a
		statement about a genuinely different (truncated-supervision) model and is not
		implied by the results above; we record it as a direction rather than a theorem.
	\end{remark}

	\section{Discussion and future work}
	\label{sec:future-work}
	
	This paper introduces a stochastic PAC model for autoregressive learning that generalizes the deterministic model of Joshi et al.~\cite{joshi2025theory}.
	%The model separates three statistical objects: the one-step base rule, the final-token predictor learned from full $\ct$ trajectories, and the same final-token predictor learned only from $\etoe$ samples. 
	Our results show that the stochastic setting fundamentally changes the deterministic picture: exact-scale comparisons between the three supervision models base, $\ct,\etoe$ can fail in the worst case. Nevertheless, we give nearly tight comparisons with the right accuracy shifts. Finally, we study the class of logistic functions autoregressively and show that natural classes can still be learned much better in the stochastic model compared to the worst-case behavior, and show that for this class the higher quality of $\ct$ samples enables efficient and proper learners, which we rule out for $\etoe$ samples. 
	
	Several questions remain open. First, what is the correct fat-shattering upper bound for $\ct$ learning? For $\etoe$ learning, we show that ordinary fat-shattering at scale $\sqrt{\varepsilon}/M$ gives a general upper bound and our lower bounds show that this scale is essentially necessary. For $\ct$ learning, the right scale is less clear: full trajectories reveal intermediate samples, so one may hope for a better scale than $\sqrt{\varepsilon}/M$ or losing the extra dependence on $M$, though this appears challenging to prove.   
	
	Second, for the logistic class $\mathcal F_\sigma(d)$, can the autoregressive sample complexity be reduced to the natural $d$-dimensional rate for arbitrary horizons $M$? The base problem has a $\widetilde O(d/\varepsilon)$ sample complexity bound, while our current $\ct$ and $\etoe$ upper bounds have a roughly $d^2 \log M$ dependence.
	
	There are also future directions that are not fully resolved even in the deterministic setting of~\cite{joshi2025theory}. One notable example is to generalize the model to the more realistic case where $|\Sigma| > 2$, and when the $\etoe$ output is not just a single token.

	\appendix
	
	\section{Proofs of auxiliary preliminary tools}
	\label{app:standard-prelim-proofs}
	
	This appendix gives short proofs of the auxiliary lemmas used in the paper.

	\begin{proof}[Proof of Lemma~\ref{lem:finite-fast-rate}]
		Apply the high-probability Q-aggregation oracle inequality of
		Lecu\'e and Rigollet~\cite[Theorem~A]{lecue2014optimal} to the finite
		dictionary $\mathcal A$, with the uniform prior and squared loss. Since
		$Y$ and every $f(X)$ lie in $[0,1]$, the boundedness, Lipschitzness, and
		strong-convexity assumptions of that theorem hold with universal
		constants. The resulting aggregate $\widehat f\in\operatorname{conv}(\mathcal A)$
		satisfies, with probability at least $1-\delta$,
		\begin{equation*}
			\mathbb E[(\widehat f(X)-Y)^2]
			\le
			\min_{f\in\mathcal A}\mathbb E[(f(X)-Y)^2]
			+
			C\frac{\log|\mathcal A|+\log(1/\delta)}{n}.
		\end{equation*}
		For every measurable $f:\mathcal X\to[0,1]$, the conditional-expectation
		identity gives
		\begin{equation*}
			\mathbb E[(f(X)-Y)^2]
			=
			\mathbb E[(f_\star(X)-Y)^2]
			+
			\mathbb E[(f(X)-f_\star(X))^2].
		\end{equation*}
		Using this identity for $\widehat f$ and for the assumed
		$f^\circ\in\mathcal A$ yields
		\begin{equation*}
			\mathbb E[(\widehat f(X)-f_\star(X))^2]
			\le
			\alpha+
			C\frac{\log|\mathcal A|+\log(1/\delta)}{n}.
		\end{equation*}
		The claimed sample size now follows after increasing the universal
		constant $C$.
	\end{proof}

	\begin{proof}[Proof of Lemma~\ref{lem:fat-regression-upper}]
		For a probability measure $Q$ on $\mathcal X$, write
		$d_Q(f,g)^2:=\mathbb E_Q[(f-g)^2]$, and let
		$\mathcal N_2(u,\mathcal A,Q)$ be the corresponding covering number.
		The entropy theorem of Mendelson and
		Vershynin~\cite[Theorem~1]{mendelsonvershynin2003entropy}, after rescaling
		from $[-1,1]$ to $[0,1]$, gives universal constants $c,C>0$ such that
		for every $0<u<1$,
		\begin{equation}
			\sup_Q\log \mathcal N_2(u,\mathcal A,Q)
			\le
			C\fat_{\mathcal A}(cu)\log^C(C/u),
			\label{eq:mv-l2-entropy}
		\end{equation}
		where it is enough to take the supremum over finitely supported $Q$.

		We first record the empirical-to-population net consequence needed below.
		There are universal constants $c_0,C_0>0$ such that, for every
		$0<r<1$, if
		\begin{equation}
			m\ge
			C_0\frac{
			\fat_{\mathcal A}(c_0r)\log^{C_0}(C_0/r)+\log(2/\delta)
			}{r^2},
			\label{eq:empirical-net-sample}
		\end{equation}
		then, with probability at least $1-\delta/2$ over
		$S=(X_1,\ldots,X_m)\sim P^m$,
		\begin{equation}
			d_S(f,g)\le r/4
			\quad\Longrightarrow\quad
			d_P(f,g)\le r
			\qquad\text{for all }f,g\in\mathcal A.
			\label{eq:empirical-net-transfer}
		\end{equation}
		Here and below, the standing pointwise-measurability convention permits
		working with measurable almost-maximal nets.

		For completeness, we prove this consequence. Let $S'$ be an independent
		ghost sample of size $m$. If $d_P(f,g)>r$, then for
		$H=(f-g)^2\in[0,1]$ one has $PH>r^2$. The multiplicative Bernstein
		inequality implies
		$P_{S'}H>r^2/2$ with conditional probability at least $1/2$, after
		increasing the constant in~\eqref{eq:empirical-net-sample}. Hence the
		probability that~\eqref{eq:empirical-net-transfer} fails is at most twice
		the probability that, for some $f,g\in\mathcal A$,
		\begin{equation}
			d_S(f,g)\le r/4,
			\qquad
			d_{S'}(f,g)>r/\sqrt2.
			\label{eq:ghost-net-bad}
		\end{equation}
		Condition on the pooled $2m$ observations and randomize which member of
		each pair is assigned to $S$ by independent swaps. Let
		$Q=(P_S+P_{S'})/2$, and take an $L_2(Q)$ cover of $\mathcal A$ at radius
		$r/128$. By~\eqref{eq:mv-l2-entropy}, its logarithmic cardinality is at
		most
		\begin{equation*}
			C\fat_{\mathcal A}(cr)\log^C(C/r).
		\end{equation*}
		If~\eqref{eq:ghost-net-bad} holds and $u,v$ are cover points for $f,g$,
		respectively, then the triangle inequality on each half of the pooled
		sample gives
		\begin{equation*}
			d_S(u,v)<r/3,
			\qquad
			d_{S'}(u,v)>2r/3.
		\end{equation*}
		For a fixed pair $u,v$, put $G=(u-v)^2$. Under the independent swaps,
		$P_SG$ is the average of independent $[0,1]$-valued variables with
		conditional mean $(P_SG+P_{S'}G)/2$. The last display implies that this
		mean is larger than $2r^2/9$ while $P_SG$ is less than one half of the
		mean. Multiplicative Bernstein therefore bounds this event by
		$\exp(-cmr^2)$. A union bound over all ordered pairs of cover points,
		followed by~\eqref{eq:mv-l2-entropy}, proves
		\eqref{eq:empirical-net-transfer} under
		\eqref{eq:empirical-net-sample}.

		We now construct the learner. Put $r=\sqrt\varepsilon/2$ and split the
		available sample into two independent blocks of equal size $m$. Using only
		the inputs in the first block, choose a maximal $r/4$-separated subset
		$\mathcal V\subseteq\mathcal A$ in the empirical metric $d_S$. Maximality
		and~\eqref{eq:empirical-net-transfer} imply that $\mathcal V$ is an
		$r$-net of $\mathcal A$ in $L_2(P)$. Moreover, every $r/8$ empirical
		$L_2$ ball contains at most one point of $\mathcal V$, so
		\eqref{eq:mv-l2-entropy} gives
		\begin{equation*}
			\log|\mathcal V|
			\le
			C\fat_{\mathcal A}(cr)\log^C(C/r).
		\end{equation*}
		Since $f_\star\in\mathcal A$, on the event above there is
		$v^\circ\in\mathcal V$ with
		$\mathbb E[(v^\circ(X)-f_\star(X))^2]\le r^2=\varepsilon/4$.
		Conditionally on the first block, apply Lemma~\ref{lem:finite-fast-rate}
		to the finite dictionary $\mathcal V$, using the second sample block,
		confidence $1-\delta/2$, and
		excess accuracy $3\varepsilon/4$. The resulting aggregate satisfies
		\begin{equation*}
			\mathbb E[(\widehat f(X)-f_\star(X))^2]\le\varepsilon
		\end{equation*}
		with probability at least $1-\delta$. Combining the two block sizes and
		absorbing constants and the substitution $r=\sqrt\varepsilon/2$ into the
		polylogarithmic factor proves the lemma.
	\end{proof}

	\begin{proof}[Proof of Lemma~\ref{lem:assouad-testing}]
		Fix a coordinate \(r\) and the remaining signs \(\eta\in\{-1,1\}^{d-1}\). Let \(P_+\) and \(P_-\) be the two laws obtained by setting \(\theta_r=1\) and \(\theta_r=-1\), respectively, with the other signs equal to \(\eta\). Any estimate of \(\theta_r\) is a test between \(P_+\) and \(P_-\), so its average error under these two equiprobable alternatives is at least \((1-\TV(P_+,P_-))/2\). Pinsker's inequality and the KL assumption give \(\TV(P_+,P_-)\le\sqrt{(1/50)/2}=1/10\). Hence the average error in coordinate \(r\), after averaging over \(\eta\), is at least \(9/20\). Summing this bound over all coordinates gives expected Hamming error at least \(9d/20\).
	\end{proof}
	
	\begin{proof}[Proof of Lemma~\ref{lem:assouad-probability}]
		Take \(\alpha=1/50\). Lemma~\ref{lem:assouad-testing} gives \(\mathbb E d_H(\widehat\sigma,\sigma)\ge9D/20\). Let \(H=d_H(\widehat\sigma,\sigma)\) and set \(\tau=1/16\). Since \(0\le H\le D\),
		\[
		\frac{9D}{20}
		\le
		\mathbb E H
		\le
		\frac{D}{16}\Pr[H<D/16]+D\Pr[H\ge D/16]
		\le
		\frac D{16}+D\Pr[H\ge D/16].
		\]
		Thus \(\Pr[H\ge D/16]\ge31/80\). The lemma follows with \(\zeta=31/80\).
	\end{proof}

	\begin{proof}[Proof of Lemma~\ref{lem:blocked-roots-common-tails}]
		Let \(\ell\) be the common length of the roots. First fix a prompt \(x\notin R\)
		which is not a strict prefix of a root. If \(x\) is a strict descendant of a
		root, then \(x=ru\) for some \(r\in R\) and nonempty suffix \(u\), and condition
		3 gives the claim. If \(x\) is neither a strict prefix nor a strict descendant of
		any root, condition 1 with \(t=M\) gives the claim.
		
		It remains to consider strict prefixes of roots. We prove that, for every
		strict prefix \(y\) of a root and every \(1\le t\le M\), the value
		\(q_{g_\theta}^{\etoe-t}(y)\) depends on \(\theta\) only through
		\(\alpha(\theta)\). The proof is by induction on \(t\). For \(t=1\), this is
		condition 2, because \(q_{g_\theta}^{\etoe-1}(y)=p_{g_\theta}(y)\). Assume the
		claim holds for \(t-1\), where \(2\le t\le M\), and let \(y\) be a strict prefix
		of a root. By conditioning on the first generated bit,
		\[
		q_{g_\theta}^{\etoe-t}(y)
		=(1-p_{g_\theta}(y))q_{g_\theta}^{\etoe-(t-1)}(y0)
		+p_{g_\theta}(y)q_{g_\theta}^{\etoe-(t-1)}(y1).
		\]
		The coefficient \(p_{g_\theta}(y)\) depends on \(\theta\) only through
		\(\alpha(\theta)\) by condition 2. Consider each child \(yc\), where
		\(c\in\{0,1\}\). If \(yc\) is a strict prefix of a root, then
		\(q_{g_\theta}^{\etoe-(t-1)}(yc)\) depends on \(\theta\) only through
		\(\alpha(\theta)\) by the induction hypothesis. If \(yc\) is not a root, not a
		strict prefix of a root, and not a strict descendant of a root, then
		\(q_{g_\theta}^{\etoe-(t-1)}(yc)\) depends on \(\theta\) only through
		\(\alpha(\theta)\) by condition 1. The only remaining possibility is that
		\(yc\in R\). Since \(y\) is a strict prefix and all roots have length \(\ell\),
		this can happen only when \(|y|=\ell-1\). Then \(y=b_{yc}\) and \(c=1\), so
		\(p_{g_\theta}(y)=0\). Hence the term involving the root \(yc\) is multiplied by
		zero and contributes nothing. Thus each nonzero term in the displayed recursion
		depends on \(\theta\) only through \(\alpha(\theta)\), so
		\(q_{g_\theta}^{\etoe-t}(y)\) also depends on \(\theta\) only through
		\(\alpha(\theta)\). This completes the induction. Taking \(t=M\) proves the
		lemma for strict prefixes, and the proof is complete.
	\end{proof}

	\begin{proof}[Proof of Lemma~\ref{lem:bernoulli-kl} (Bernoulli KL bounds)]
		For \(p,q\in(0,1)\), use \(\log x\le x-1\). Writing \(P=\Ber(p)\) and
		\(Q=\Ber(q)\),
		\[
		\KL(P\|Q)
		=\sum_{y\in\{0,1\}} P(y)\log\frac{P(y)}{Q(y)}
		\le
		\sum_{y\in\{0,1\}} P(y)\left(\frac{P(y)}{Q(y)}-1\right)
		=
		\sum_{y\in\{0,1\}}\frac{(P(y)-Q(y))^2}{Q(y)} .
		\]
		For Bernoulli laws this last expression equals
		\[
		\frac{(p-q)^2}{q}+\frac{((1-p)-(1-q))^2}{1-q}
		=
		\frac{(p-q)^2}{q(1-q)} .
		\]
		The four displayed bounds follow by substituting the corresponding lower bounds
		on \(q(1-q)\).
	\end{proof}
	
	\begin{proof}[Proof of Lemma~\ref{lem:hamming-expectation-to-probability} (From expected to likely Hamming error)]
		Since \(H<d/16\) on the event \(\{H<d/16\}\) and \(H\le d\) always,
		\[
		\mathbb E H
		\le
		\frac d{16}\Pr[H<d/16]+d\Pr[H\ge d/16]
		\le
		\frac d{16}+d\Pr[H\ge d/16] .
		\]
		If \(\Pr[H\ge d/16]\le1/3\), then \(\mathbb E H\le d/16+d/3=19d/48<9d/20\),
		contradicting the hypothesis. Hence \(\Pr[H\ge d/16]>1/3\).
	\end{proof}
	
\section*{Acknowledgments}
Idan Mehalel is supported by the European Research Council (ERC) under the European Union’s Horizon 2022 research and innovation program (grant agreement No. 101041711), the Israel Science Foundation (grant number 2258/19), and the Simons Foundation (as part of the Collaboration on the Mathematical and Scientific Foundations of Deep Learning).
Ilan Doron-Arad is supported by grant NSF DMS-2031883 and Vannevar Bush Faculty Fellowship ONR-N00014-20-1-2826 (PI Mossel). 
	Elchanan Mossel is partially supported by NSF DMS-2031883, Vannevar Bush Faculty Fellowship ONR-N00014-20-1-2826, MURI N000142412742, and a Simons Investigator Award.

\end{document}